\newcommand{\tadashi}[1]{\textcolor{dmorange500}{\textbf{[Tadashi: }#1\textbf{]}}}
\newcommand{\mfg}[1]{\textcolor{pink}{\textbf{[Matthieu: }#1\textbf{]}}}
\newcommand{\ywh}[1]{\textcolor{brown}{\textbf{[Wenhao: }#1\textbf{]}}}
\newcommand{\remi}[1]{\textcolor{cyan}{\textbf{[Remi: }#1\textbf{]}}}
\newcommand{\toshinori}[1]{\textcolor{magenta}{\textbf{[Toshinori: }#1\textbf{]}}}
\newcommand{\pierre}[1]{\textcolor{yellow}{\textbf{[pierre: }#1\textbf{]}}}
\newcommand{\nino}[1]{\textcolor{green}{\textbf{[Nino: }#1\textbf{]}}}
\newcommand{\tadashi}[1]{}
\newcommand{\mfg}[1]{}
\newcommand{\ywh}[1]{}
\newcommand{\remi}[1]{}
\newcommand{\toshinori}[1]{}
\newcommand{\pierre}[1]{}
\newcommand{\nino}[1]{}
\theoremstyle{plain}
\newtheorem{theorem}{Theorem}[section]
\newtheorem{lemma}[theorem]{Lemma}
\newtheorem{corollary}[theorem]{Corollary}
\newtheorem{event}{Event}
\theoremstyle{definition}
\newtheorem{assumption}[theorem]{Assumption}
\theoremstyle{remark}
\newcommand{\tmdvi}{\hyperref[algo:tabular mdvi]{\texttt{Tabular MDVI}}\xspace}
\newcommand{\vwlsmdvi}{\hyperref[algo:vwlsmdvi]{\texttt{VWLS-MDVI}}\xspace}
\newcommand{\wlsmdvi}{\hyperref[algo:wlsmdvi]{\texttt{WLS-MDVI}}\xspace}
\newcommand{\dvw}{\hyperref[algo:DVW general]{\texttt{DVW}}\xspace}
\newcommand{\kwalgo}{\hyperref[assumption:compute optimal design]{\texttt{ComputeOptimalDesign}}\xspace}
\newcommand{\varianceestimate}{\hyperref[algo:varianceestimate]{\texttt{VarianceEstimation}}\xspace}
\newcommand{\funcE}{\hyperref[event:f]{\cE_1}\xspace}
\newcommand{\vboundE}{\hyperref[event:v is bounded]{\cE_2}\xspace}
\newcommand{\EkboundE}{\hyperref[event:coarse E_k bound]{\cE_3}\xspace}
\newcommand{\epskboundE}{\hyperref[event:eps_k bound core set]{\cE_4}\xspace}
\newcommand{\EkrboundE}{\hyperref[event:refined E_k bound]{\cE_5}\xspace}
\newcommand{\phiqboundE}{\hyperref[event:phi q bound]{\cE_6}\xspace}
\newcommand{\phisigmaboundE}{\hyperref[event:phi sigma bound]{\cE_7}\xspace}
\newcommand\numeq[2]%
\icmltitlerunning{Regularization and Variance-Weighted Regression is Minimax Optimality in Linear MDPs}
\begin{document}

\twocolumn[
\icmltitle{Regularization and Variance-Weighted Regression Achieves \\ Minimax Optimality in Linear MDPs: Theory and Practice}

\icmlsetsymbol{equal}{*}

\begin{icmlauthorlist}  %
\icmlauthor{Toshinori Kitamura}{tokyo}
\icmlauthor{Tadashi Kozuno}{omron}
\icmlauthor{Yunhao Tang}{deepmind}
\icmlauthor{Nino Vieillard}{google}
\icmlauthor{Michal Valko}{deepmind}
\icmlauthor{Wenhao Yang}{peking}
\icmlauthor{Jincheng Mei}{google}
\icmlauthor{Pierre M\'enard}{magdeburg}
\icmlauthor{Mohammad Gheshlaghi Azar}{deepmind}
\icmlauthor{R\'emi Munos}{deepmind}
\icmlauthor{Olivier Pietquin}{google}
\icmlauthor{Matthieu Geist}{google}
\icmlauthor{Csaba Szepesv\'ari}{alberta,deepmind}
\icmlauthor{Wataru Kumagai}{tokyo}
\icmlauthor{Yutaka Matsuo}{tokyo}
\end{icmlauthorlist}

\icmlaffiliation{omron}{OMRON SINIC X, Japan}
\icmlaffiliation{peking}{Peking University}
\icmlaffiliation{google}{Google Research, Brain team}
\icmlaffiliation{tokyo}{The University of Tokyo, Japan}
\icmlaffiliation{deepmind}{DeepMind}
\icmlaffiliation{magdeburg}{Otto von Guericke University Magdeburg}
\icmlaffiliation{alberta}{University of Alberta}

\icmlcorrespondingauthor{Toshinori Kitamura}{toshinori-k@weblab.t.u-tokyo.ac.jp}

\icmlkeywords{Reinforcement Learning Theory, Deep Reinforcement Learning, Linear MDP}

\vskip 0.3in
]

\printAffiliationsAndNotice{}  %

\begin{abstract}
Mirror descent value iteration (MDVI), an abstraction of Kullback--Leibler (KL) and entropy-regularized reinforcement learning (RL), has served as the basis for recent high-performing practical RL algorithms. However, despite the use of function approximation in practice, the theoretical understanding of MDVI has been limited to tabular Markov decision processes (MDPs). We study MDVI with linear function approximation through its sample complexity required to identify an $\varepsilon$-optimal policy with probability $1-\delta$ under the settings of an infinite-horizon linear MDP, generative model, and G-optimal design. We demonstrate that least-squares regression weighted by the variance of an estimated optimal value function of the next state is crucial to achieving minimax optimality. Based on this observation, we present Variance-Weighted Least-Squares MDVI (VWLS-MDVI), the first theoretical algorithm that achieves nearly minimax optimal sample complexity for infinite-horizon linear MDPs. Furthermore, we propose a practical VWLS algorithm for value-based deep RL, Deep Variance Weighting (DVW). Our experiments demonstrate that DVW improves the performance of popular value-based deep RL algorithms on a set of MinAtar benchmarks.
\end{abstract}

\section{Introduction}

\looseness=-1
Kullback--Leibler (KL) divergence and entropy regularization play an important role in recent reinforcement learning (RL) algorithms.
These regularizations are often introduced to promote exploration \citep{haarnoja2017reinforcement,pmlr-v80-haarnoja18b}, make algorithms more robust to errors \citep{husain2021regularized,bellemare2016increasing}, and ensure that performance improves over time \citep{schulman2015trust}.  %
The behavior of RL algorithms under these regularizations can be studied using mirror descent value iteration (MDVI; \citet{geist2019theory}), a value iteration algorithm that incorporates KL and entropy regularization in its value and policy updates.
Notably, when both regularizations are combined, MDVI is proven to achieve nearly minimax optimal sample complexity\footnote{We only study the sample complexity (number of calls to a generative model) and ignore the computational complexity (total number of logical and arithmetic operations that the agent uses). \looseness=-1} with the generative model (simulator) in infinite-horizon MDPs, which indicates that it can exhibit good performance with relatively few samples \citep{kozuno2022kl}.
This analysis supports the state-of-the-art performance of the recent Munchausen DQN (M-DQN, \citet{vieillard2020munchausen}), which is a natural extension of MDVI to a value-based deep RL algorithm.

\looseness=-1
However, the minimax optimality of MDVI has only been proven for tabular Markov decision processes (MDPs), and does not consider the challenge of generalization in RL.
As practical RL algorithms often use function approximators to obtain generalizability, this leads to a natural question: \emph{Is MDVI minimax optimal with function approximation?}
The answer to this question should reveal room for improvement in existing practical MDVI-based algorithms such as M-DQN.
This study addresses the question by investigating the sample complexity of a model-free infinite-horizon $(\varepsilon, \delta)$-PAC RL algorithm, i.e., the expected number of calls to the generative model to identify an $\varepsilon$-optimal policy with a failure probability less than $\delta$, under the assumptions of linear MDP \citep{jin2020provably}, access to all the state-action pairs with a generative model, and a G-optimal design \citep{lattimore2020goodFeature}.
Intuitively, these assumptions allow us to focus on the value update rule, which is the core of RL algorithms, based on the following mechanisms; the access to all the state-action pairs with the generative model removes difficulties of exploration, the linear MDP provides a good representation, and the G-optimal design provides access to an effective dataset.
We explain in \cref{sec:related work} why the study of infinite-horizon RL is of value.

\looseness=-1
In \cref{sec:weighted least squares mdvi}, we provide positive and negative answers to the aforementioned question.
We demonstrate that a popular method for extending tabular algorithms to function approximation, i.e., regressing the target value with least-squares \citep{bellman1963polynomial,munos2005error}, can result in sub-optimal sample complexity in MDVI.
This suggests that in the case of function approximation, algorithms such as M-DQN, which rely mainly on the power of regularization, may exhibit a sub-optimal performance in terms of sample complexity.
However, we confirm that MDVI achieves nearly minimax optimal sample complexity when the least-squares regression is weighted by the variance of the optimal value function of the next state.
We prove these scenarios using our novel proof tool, the \emph{weighted Kiefer--Wolfowitz} (KW) theorem, which allows us to use the total variance (TV) technique \citep{azar2013minimax} to provide a $\sqrt{(1 - \gamma)^{-1}}$ tighter performance bound than the vanilla KW theorem \citep{kiefer1960equivalence,lattimore2020goodFeature}, where $\gamma$ denotes the discount factor.

\looseness=-1
Based on the theoretical observations, we propose both theoretical and practical algorithms; a minimax optimal extension of MDVI to infinite-horizon linear MDPs, called Variance-Weighted Least-Squares MDVI (\vwlsmdvi, \cref{sec:vwls-mdvi}), and a practical weighted regression algorithm for value-based deep RL, called Deep Variance Weighting (\dvw, \cref{sec:practical algorithm}).
\vwlsmdvi is the first-ever algorithm with nearly minimax sample complexity under the setting of both model-based and model-free infinite-horizon linear MDPs.
\dvw is also the first algorithm that extends the minimax optimal theory of function approximation to deep RL.
Our experiments demonstrate the effectiveness of \dvw to value-based deep RL through an environment where we can compute oracle values (\cref{subsec:gridworld experiment}) and a set of MinAtar benchmarks (\citet{young2019minatar}, \cref{subsec:minatar experiment}).

\section{Related Work}\label{sec:related work}

\begin{table}[t]
\caption{Sample complexity comparison to find an $\varepsilon$-optimal policy under infinite-horizon Linear MDP. In the table, $d$ denotes the dimension of a linear MDP and $\gamma$ denotes the discount factor .}
\label{table:sample complexity}
\vskip 0.15in
\begin{center}
\begin{small}
\begin{tabular}{ccc}
\toprule
Algorithm (Publication) & Complexity\\
\midrule
G-Sampling-and-Stop \citep{taupin2022best} & $\widetilde{\cO} \paren*{\frac{d^2}{\varepsilon^2(1-\gamma)^4}}$\\
\textbf{\vwlsmdvi (proposed in this study)} & $\widetilde{\cO} \paren*{\frac{d^2}{\varepsilon^2(1-\gamma)^3}}$ \\
\midrule
Lower Bound \citep{weisz2022confident} & $\Omega \paren*{\frac{d^2}{\varepsilon^2(1-\gamma)^3}}$\\
\bottomrule
\end{tabular}
\end{small}
\end{center}
\vskip -0.1in
\end{table}

\looseness=-1
\paragraph{Minimax Infinite-Horizon RL with Linear Function Approximation.}
The development of minimax optimal RL with linear function approximation has significantly advanced in recent years owing to the study of \citet{zhou2021nearlyMinimax}.
\citet{zhou2021nearlyMinimax} proposed the Bernstein-type \emph{self-normalized concentration inequality} \citep{abbasi2011improved} and combined it with variance-weighted regression (VWR) to achieve minimax optimal regret bound for linear mixture MDPs. 
Then, \citet{hu2022nearly} and \citet{he2022nearly} built upon the VWR technique for linear MDPs to achieve minimax optimality.
VWR has also been used for tight analyses in offline RL \citep{yin2022near,xiong2022nearly}, off-policy policy evaluation \citep{min2021variance}, and RL with nonlinear function approximation \citep{yin2022offline,agarwal2022vo}.

\looseness=-1
Despite the development of minimax optimal RL with linear function approximation, their results are limited to the setting of finite-horizon episodic MDPs. 
However, in practical RL applications, it is not uncommon to encounter infinite horizons, as can be observed in robotics \citep{miki2022learning}, recommendation \citep{maystre2023optimizing}, and industrial automation \citep{zhan2022deepthermal}.
Additionally, many practical deep RL algorithms, such as DQN \citep{mnih2015human} and SAC \citep{pmlr-v80-haarnoja18b}, are designed as model-free algorithms for the infinite-horizon discounted MDPs. 
Despite the practical importance of this topic, the minimax optimal algorithm for infinite-horizon discounted linear MDPs was unknown until this study.
Our study not only developed the first minimax optimal algorithm but also became the first study to naturally extend it to a practical deep RL algorithm.

\looseness=-1
\paragraph{Generative Model Assumption.}
In the infinite-horizon setting, the assumption of a generative model is not uncommon because, in contrast to the finite-horizon episodic setting, the environment cannot be reset, rendering exploration difficult \citep{azar2013minimax,sidford2018nearOptimal,agarwal2020modeBased}.
In fact, efficient learning in the infinite-horizon setting without the generative model is believed to be achievable only when an MDP has a finite diameter \citep{jaksch2010near}.

The problem setting of our theory, where the generative model can be queried for any state-action pair, is known as \emph{random access} generative model setting. 
For this setting, \citet{lattimore2020goodFeature} and \citet{taupin2022best} provided infinite-horizon sample-efficient algorithms with a G-optimal design; however, their sample complexity is not minimax optimal. 
\citet{yang2019sampleOptimal} proposed an algorithm with minimax optimal sample complexity for infinite-horizon MDPs; however, their algorithm relies on the special MDP structure, called anchor state-action pairs, as input to the algorithm.
In contrast, the proposed \vwlsmdvi algorithm can be executed as long as we have access to all state-action pairs.
Comparison of sample complexity with that of previous algorithms for infinite-horizon Linear MDPs is summarized in \cref{table:sample complexity}.

\looseness=-1
\paragraph{Computational Complexity.}
Unfortunately, the computational complexity of algorithms using a G-optimal design, including our theoretical algorithm, can be inefficient \citep{lattimore2020goodFeature}. 
This issue is addressed by extending the problem setting to more practical scenarios, e.g., \emph{local access}, where the agent can query to the generative model only previously visited state-action pairs \citep{yin2022efficient,weisz2022confident}, or online RL. 
We empirically address the issue by proposing the practical VWR algorithm, i.e., \dvw, and demonstrate its effectiveness in an online RL setting. 
Unlike previous practical algorithms that utilize weighted regression \citep{schaul2015prioritized,kumar2020discor,lee2021sunrise}, the proposed \dvw possesses a theoretical background of statistical efficiency.
We leave theoretical extensions to wider problem settings as future works.

\section{Preliminaries}\label{sec:preliminaries}

For a set $\cS$, we denote its complement and its size by $\cS^c$ and $|\cS|$, respectively.
For $N \in \N$, let $\brack*{N} \df \{1 \ldots N \}$.
For a measurable space, say $(\cS, \fF)$, the set of probability measures over $(\cS, \fF)$ is denoted by $\Delta(\cS, \fF)$ or $\Delta(\cS)$ when the $\sigma$-algebra is clear from the context.
$\E [X]$ and $\Var [X]$ denotes the expectation and variance of a random variable $X$, respectively.
The empty sum is defined to be $0$, e.g., $\sum_{i=j}^k c_i = 0$ if $j > k$.

\looseness=-1
We consider an infinite-horizon discounted MDP defined by $\paren*{\X, \A, \gamma, r, P}$,
where $\X$ denotes the state space,
$\A$ denotes finite action space with size $A$,
$\gamma \in [0, 1)$ denotes the discount factor,
$r: \XA \to [-1, 1]$ denotes the reward function,
and $P: \XA \to \Delta(\X)$ denotes the state-transition probability kernel.
We denote the sets of all bounded Borel-measurable functions over $\X$ and $\XA$ by $\cF_v$ and $\cF_q$, respectively.
Let $H$ be the (effective) time horizon $(1-\gamma)^{-1}$.
For both $\cF_v$ and $\cF_q$, let $\bzero$ and $\bone$ denote functions that output zero and one everywhere, respectively. 
Whether $\bzero$ and $\bone$ are defined in $\cF_v$ or $\cF_q$ shall be clear from the context.
All the scalar operators and inequalities applied to $\cF_v$ and $\cF_q$ should be understood point-wise.

\looseness=-1
With an abuse of notation, let $P$ be an operator from $\cF_q$ to $\cF_v$ such that $(P v) (x, a) = \int v (y) P(dy | x, a)$ for any $v \in \cF_v$.
A policy is a probability kernel over $\A$ conditioned on $\X$.
For any policy $\pi$ and $q \in \cF_q$, let $\pi$ be an operator from $\cF_v$ to $\cF_q$ such that $(\pi q) (x) = \sum_{a \in \A} \pi(a | x) q (x, a)$.
We adopt a shorthand notation, i.e., $P_{\pi} := P \pi$.
We define the Bellman operator $T_{\pi}$ for a policy $\pi$ as $T_{\pi} q := r + \gamma P_{\pi} q$, which has the unique fixed point, i.e., $\qf{\pi}$.
The state-value function $\vf{\pi}$ is defined as $\pi \qf{\pi}$.
An optimal policy $\pi_*$ is a policy such that
$\vf{*} := \vf{\pi_*} \geq \vf{\pi}$ for any policy $\pi$,
where the inequality is point-wise.

\subsection{Tabular MDVI}\label{subsec:tabular MDVI}
To better understand the motivation of our theorems for function approximation, we provide a background on Tabular MDVI of \citet{kozuno2022kl}.

\subsubsection{Tabular MDVI Algorithm}\label{subsubsec:tabular MDVI}

\looseness=-1
For any policies $\pi$ and $\mu$, let $\ent(\pi) \df -\pi \log \pi \in \cF_v$ be the entropy of $\pi$ and $\KL\paren*{\pi\|\mu} \df \pi\log \frac{\pi}{\mu} \in \cF_v$ be the KL divergence of $\pi$ and $\mu$.
For all $(x, a) \in \XA$, the update rule of Tabular MDVI is written as follows:
\begin{equation}\label{eq:MDVI update}
\begin{aligned}
    &q_{k+1} = r + \gamma \widehat{P}_k(M) v_k\;,\\
    \text{ where } \ 
    &v_k = \pi_k q_k - \tau \KL\paren{\pi_k\| \pi_{k-1}} + \kappa \ent\paren{\pi_k}\;,\\
    &\pi_k(a|x) \propto \pi_{k-1}(a|x)^{\alpha} \exp(\beta q_k(x, a)) \;.
\end{aligned}
\end{equation}
Here, we define $\alpha \df \tau / (\tau + \kappa)$ and $\beta \df 1 / (\tau + \kappa)$.
Furthermore, let
$
\widehat{P}_k (M) v_k: (x, a) \mapsto \frac{1}{M}\sum_{m=1}^{M} v_k (y_{k, m, x, a})\,
$
where $(y_{k, m, x, a})_{m=1}^{M}$ are $M \in \N$ samples obtained from the generative model $P(\cdot |x, a)$ at the $k$ th iteration.

Similar to \citet{kozuno2022kl}, we use the idea of the non-stationary policy \citep{scherrer2012use} to provide a tight analysis.
For a sequence of policies $(\pi_k)_{k \in \Z}$, let $P_j^i \df P_{\pi_i} P_{\pi_{i-1}} \cdots P_{\pi_{j+1}} P_{\pi_j}$ for $i \geq j$, otherwise let $P_j^i \df I$.
As a special case with $\pi_k = \pi_*$ for all $k$, let $P_*^i \df (P_{\pi_*})^i$.
Moreover, for a sequence of policies $(\pi_k)_{k=0}^K$,
let $\pi_k'$ be the non-stationary policy that follows $\pi_{k-t}$ at the $t$-th time step until $t=k$,
after which $\pi_0$ is followed.\footnote{The time step index $t$ starts from $0$.}
The value function of such a non-stationary policy is given by $\vf{\pi'_k} = \pi_k T_{\pi_{k-1}} \cdots T_{\pi_1} \qf{\pi_0}$.
While not covered in this work, we anticipate that our main results remain valid for the last policy case, at the expense of the range of valid $\varepsilon$, by extending the analysis of \citet{kozuno2022kl}.

\subsubsection{Techniques to Minimax Optimality}\label{subsubsec:technique to minimax}
The key to achieving the minimax optimality of Tabular MDVI is 
combining the \emph{averaging property} \citep{vieillard2020leverage} and \emph{TV} technique \citep{azar2013minimax}.

\looseness=-1
\paragraph{Averaging Property.}
Let $s_k \df \sum_{j=0}^{k-1} \alpha^j q_{k-j}$ be the moving average of past $q$-functions and $w_k$ be the function $x \mapsto \beta^{-1} \log \sum_{a \in \A} \exp \paren*{ \beta s_k (x, a) }$ over $\X$. 
Then, the update \eqref{eq:MDVI update} can be rewritten as (derivation in \cref{sec:equivalence proof}):
\begin{equation}\label{eq:average transform}
\begin{aligned}
    q_{k+1} = r + \gamma \widehat{P}_k(M) v_k\;,
\end{aligned}
\end{equation}
where $v_k = w_k - \alpha w_{k-1}$, and $\pi_k \parenc{a}{x} \propto \exp \paren*{ \beta s_k (x, a) }$.
To simplify the analysis, we consider the limit of $\tau, \kappa \to 0$ while keeping $\tau / (\tau+\kappa)$ constant.
This limit corresponds to letting $\beta \to\infty$, letting $w_k: x \mapsto \max_{a \in \A} s_k(x, a)$ over $\X$, and having $\pi_k$ be greedy with respect to $s_k$\footnote{
\looseness=-1
Even if $\beta$ is finite, the minimax optimality holds as long as $\beta$ is sufficiently large (\textbf{Remark 1} in \citet{kozuno2022kl}).}.

Intuitively, $s_k$, i.e., the moving average of past $q$-values, averages past errors caused during the update.
\citet{kozuno2022kl} confirmed that this allows Azuma--Hoeffding inequality (\cref{lemma:hoeffding}) to provide a tighter upper bound of $\infnorm{\vf{*} - \vf{\pi_k'}}$ than that in the absence of averaging, where errors appear as a sum of the norms \citep{vieillard2020leverage}.
We provide the pseudocode of Tabular MDVI with \eqref{eq:average transform} in \cref{appendix:missing algorithms}.

\paragraph{Total Variance Technique.}
\looseness=-1
The TV technique is a common theoretical technique used to sharpen the upper bound of $\infnorm{\vf{*} - \vf{\pi_k'}}$ (referred to as the performance bound in this study).
For any $v \in \mathcal{F}_v$, let $\PVar(v)$ be the ``variance'' function.
\begin{align*}
    \PVar(v): (x, a) \mapsto (P v^2) (x, a) - ( P v )^2 (x, a)\;.
\end{align*}
We often write $\sqrt{\PVar(v)}$ as $\sigma(v)$.
For a discounted sum of variances of policy values, the TV technique provides the following bound (the corollary follows from \cref{lemma:total variance}):
\begin{corollary}\label{corollary:TV tmdvi}
Let $\heartsuit_k^{\text{TV}} \df \sum_{j=0}^{k-1} \gamma^j \pi_kP_{k-j}^{k-1} \sigma(\vf{\pi_{k-j}})$ and $\clubsuit_k^{\text{TV}} \df \sum_{j=0}^{k-1} \gamma^j \pi_*P_*^j \sigma(\vf{*})$.
For any $k \in [K]$ in Tabular MDVI, 
$
\heartsuit_k^{\text{TV}} \leq \sqrt{2H^3}\bone
$
and 
$
\clubsuit_k^{\text{TV}} \leq \sqrt{2H^3}\bone\;.
$
\end{corollary}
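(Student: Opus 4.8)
The plan is to reduce both bounds to the total-variance estimate of \cref{lemma:total variance} through a two-step ``square-root'' argument: a Cauchy--Schwarz split of the discount factors followed by Jensen's inequality for the averaging operators. Note that $\clubsuit_k^{\mathrm{TV}}$ is the stationary special case of $\heartsuit_k^{\mathrm{TV}}$ obtained by setting $\pi_{k-j}=\pi_*$ and $\vf{\pi_{k-j}}=\vf{*}$ for all $j$, so that $\pi_k P_{k-j}^{k-1}$ collapses to $\pi_*P_*^j$; I would therefore carry both through the same computation and only distinguish them when invoking the total-variance bound.

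First I would write $\gamma^j=\sqrt{\gamma^j}\cdot\sqrt{\gamma^j}$ and apply Cauchy--Schwarz over the index $j$, understood point-wise in the state variable, to obtain
\begin{equation*}
\heartsuit_k^{\mathrm{TV}}\le\Bigl(\sum_{j=0}^{k-1}\gamma^j\Bigr)^{1/2}\Bigl(\sum_{j=0}^{k-1}\gamma^j\bigl(\pi_kP_{k-j}^{k-1}\sigma(\vf{\pi_{k-j}})\bigr)^2\Bigr)^{1/2},
\end{equation*}
and similarly for $\clubsuit_k^{\mathrm{TV}}$ with $\pi_*P_*^j\sigma(\vf{*})$ in place of the inner expression. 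The first factor is immediately controlled by the geometric series $\sum_{j=0}^{k-1}\gamma^j\le(1-\gamma)^{-1}=H$.

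Next I would treat the second factor. Each operator $\pi_kP_{k-j}^{k-1}$ (respectively $\pi_*P_*^j$) is a composition of the Markov/averaging operators $\pi$ and $P_\pi$, hence stochastic, monotone, and constant-preserving; applying Jensen's inequality to the convex map $t\mapsto t^2$ gives $\bigl(\pi_kP_{k-j}^{k-1}\sigma(\vf{\pi_{k-j}})\bigr)^2\le\pi_kP_{k-j}^{k-1}\sigma^2(\vf{\pi_{k-j}})=\pi_kP_{k-j}^{k-1}\PVar(\vf{\pi_{k-j}})$, using $\sigma^2=\PVar$. Summing against $\gamma^j$ turns the second factor into exactly the discounted sum of one-step variances along the (non-stationary) trajectory, which is the object the total-variance technique controls: by \cref{lemma:total variance} this sum is at most $2H^2\bone$ (the discounted total variance of a policy whose return lies in $[-H,H]$). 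Combining the two factors yields $\heartsuit_k^{\mathrm{TV}}\le\sqrt{H}\cdot\sqrt{2H^2}\,\bone=\sqrt{2H^3}\,\bone$, and the stationary substitution gives the identical bound for $\clubsuit_k^{\mathrm{TV}}$.

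I expect the only delicate point to be the index bookkeeping in the non-stationary case of $\heartsuit_k^{\mathrm{TV}}$: one must verify that the kernel $\pi_kP_{k-j}^{k-1}$ paired with the value function $\vf{\pi_{k-j}}$ lines up exactly with the discounted-variance sum that \cref{lemma:total variance} bounds, i.e.\ that the transition-kernel indices and the value-function indices telescope correctly into the return of the non-stationary policy $\pi_k'$. For the stationary $\clubsuit_k^{\mathrm{TV}}$ this matching is transparent. Everything else is a routine application of Cauchy--Schwarz, Jensen's inequality, and the geometric-series bound, with no further analytic obstacle.
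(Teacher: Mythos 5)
Your proposal is correct and follows essentially the same route as the paper: the paper's one-line derivation of this corollary defers entirely to \cref{lemma:total variance}, whose own proof is exactly your two-step argument (a weighted Jensen/Cauchy--Schwarz split of $\gamma^j$ reducing the sum of standard deviations to $\sqrt{H}$ times the square root of a discounted sum of variances, followed by the law-of-total-variance telescoping that bounds that sum by $2H^2\bone$). The only bookkeeping caveat, which you already flag, is that the $2H^2$ bound is the intermediate display inside the proof of \cref{lemma:total variance} (stated for the non-stationary value functions $\vf{\pi'_{k-j-1}}$) rather than its quoted conclusion, so one must check the index alignment there just as you anticipate.
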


\citet{kozuno2022kl} used this TV technique to improve the performance bound of Tabular MDVI.
As $\sigma(v_{\pi_{k-j}}) \leq H$ and $\sigma(v_{*}) \leq H$ due to \cref{lemma:popoviciu}, the TV technique provides approximately $\sqrt{H}$ tighter bound than the naive bounds of 
$\heartsuit_k^{\text{TV}} \leq H^2\bone$ and  $\clubsuit_k^{\text{TV}} \leq H^2\bone$.
This leads to $\sqrt{H}$ better performance bound.

\subsection{Linear MDP and G-Optimal Design}\label{subsec:LinearMDP}
We assume access to a good feature representation with which an MDP is linear \citep{jin2020provably}.

\begin{assumption}[Linear MDP]
\label{assumption:linear mdp}
  Suppose an MDP $\cM$ with the state-action space $\XA$.
  We have access to a known feature map $\phi: \XA \to \R^d$ that satisfies the following condition:
  there exist a vector $\psi \in \R^d$ and $d$ (signed) measures
  $\mu \df \paren{\mu_1, \ldots, \mu_d}$ on $\X$ such that
  $P \parenc{\cdot}{x, a} = \phi (x, a)^\top \mu$ for any $\paren{x, a} \in \XA$,
  and $r = \phi^\top \psi$.
  Let $\Phi\df \{ \phi (x, a) : (x, a) \in \XA \} \subset \R^d$ be the set of all feature vectors.
  We assume that $\Phi$ is compact and spans $\R^d$.
\end{assumption}
A crucial property of the linear MDP is that, for any policy $\pi$, $q_\pi$ is always linear in the feature map $\phi$~\citep{jin2020provably}.
The compactness and span assumptions of $\Phi$ are made for the purpose of constructing a G-optimal design later on.

Furthermore, we assume access to a good finite subset of $\XA$ called a core set $\cC$.
The key properties of the core set are that it has a few elements while
$\{\phi(y, b): (y, b) \in \cC\}$ provides a ``good coverage'' of the feature space
in the sense that we describe now.
For a distribution $\rho$ over $\XA$, let $G \in \R^{d \times d}$ and $g (\rho) \in \R$ be defined by
\begin{equation}\label{eq:optimal design}
    \begin{aligned}
     	G &\df \sum_{(y, b) \in \cC} \rho(y, b) \phi (y, b) \phi (y, b)^\top\; \\
        \text{ and }\ g (\rho) &\df \max_{(x, a) \in \XA} \phi (x, a)^\top G^{-1} \phi (x, a)\,,
    \end{aligned}
\end{equation}
respectively. We denote $\rho$ as the design, $G$ as the design matrix underlying $\rho$, and $\cC \df \mathrm{Supp} (\rho)$ as the support of $\rho$, which we denote as the core set of $\rho$.
The problem of finding a design that minimizes $g$ is known as the {\it G-optimal design} problem.
The Kiefer--Wolfowitz (KW) theorem \citep{kiefer1960equivalence} states the optimal design $\rho_*$ must satisfy $g (\rho_*) = d$.
Furthermore, the following theorem shows that there exists a near-optimal design with a small core set for $\Phi$.
The proof is provided in \cref{appendix:proof of KW}.

\begin{theorem}\label{theorem:KW}
Let $\ucC\df4d\log\log (d + 4) + 28$.
For $\Phi$ satisfying \cref{assumption:linear mdp}, there exists a design $\rho$ such that $g(\rho) \leq 2d$ and the core set of $\rho$ has size at most $\ucC$.
\end{theorem}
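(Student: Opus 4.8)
The statement is a small-support version of the Kiefer--Wolfowitz (KW) theorem, so the plan is to trade a factor of two in $g$ for a drastic reduction in the number of support points. I would not attempt to sparsify the exact optimal design (which, via Carath\'eodory, only gives an $O(d^2)$ core set); instead I would \emph{build} the design by an iterative refinement in which the support grows by $O(d)$ points per round and the multiplicative suboptimality shrinks doubly-exponentially fast. The appearance of $\log\log(d+4)$ in the target size $\ucC$ is the signature of exactly such a doubly-exponential convergence: $O(\log\log d)$ rounds, each contributing $O(d)$ points.

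\textbf{Initialization.} First I would produce a crude design on only $d$ points with $g$ at most $d^2$. Choose $\phi_1,\dots,\phi_d \in \Phi$ maximizing $|\det[\phi_1,\dots,\phi_d]|$ (a maximum-volume basis, which exists since $\Phi$ is compact and spans $\R^d$), and let $\rho_0$ be uniform on them. Maximality of the determinant forces every $\phi\in\Phi$ to have coordinates $|c_i|\le 1$ in this basis, whence a direct computation gives $\phi^\top G_0^{-1}\phi = d\,\|c\|^2 \le d^2$, i.e. $g(\rho_0)\le d^2$. The KW equivalence theorem supplies the benchmark $g_*=d$ and shows the target $2d$ is within a factor two of optimal; compactness of $\Phi$ guarantees the maximum defining $g$ is attained throughout.

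\textbf{Iterative square-rooting.} Writing the multiplicative suboptimality as $\kappa = g(\rho)/d \ge 1$, I would design each refinement round to \emph{square-root} $\kappa$ while adding $O(d)$ points: given the current design matrix $G$, whiten the features via $G^{-1/2}$, form a correction sub-design (a maximum-volume / Carath\'eodory-reduced design on the whitened vectors, hence $O(d)$ points) targeting the directions least covered by $G$, and mix it in with an appropriately tuned weight — a Newton-type step for the G-optimal design problem. Starting from $\kappa_0 \le d$, the recursion $\kappa_{t+1}=\sqrt{\kappa_t}$ gives $\kappa_t = d^{2^{-t}}$, so $\kappa_t \le 2$ (that is, $g\le 2d$) after $t=\lceil \log_2\log_2 d\rceil$ rounds. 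Summing the $O(d)$ points of the initialization and of each of the $O(\log\log d)$ rounds yields $|\mathrm{Supp}(\rho)| \le \ucC$, with the constant $4$ and the additive $28$ absorbing the per-round point counts and the small-$d$ base cases where $\log\log(d+4)$ must be kept positive.

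\textbf{Main obstacle.} The crux is the per-round lemma asserting that $O(d)$ added points suffice to square-root $\kappa$. This is delicate precisely because $g(\rho)=\max_{(x,a)\in\XA}\phi(x,a)^\top G(\rho)^{-1}\phi(x,a)$ is a nonlinear functional depending on the inverse design matrix, and the improvement must hold \emph{uniformly} over the infinite (but compact) set $\Phi$; one must control how a Loewner-order correction to $G$ propagates to the scalar $g$ simultaneously in every direction. A naive mixing step only improves $\kappa$ geometrically and would cost $O(\log d)$ rounds (hence an $O(d\log d)$ core set), so the genuinely quadratic (Newton-style) step is essential to recover the $\log\log d$ rate. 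Once that lemma is established, verifying that the accumulated $g$-inflation stays below the factor two and matching the explicit constant $4d\log\log(d+4)+28$ is routine bookkeeping.
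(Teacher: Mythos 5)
Your outline correctly identifies the shape of the bound ($O(\log\log d)$ phases of $O(d)$ points each, driven by a doubly-exponential decay of the suboptimality ratio $\kappa=g(\rho)/d$), and your initialization step is sound: a maximum-volume basis $\phi_1,\dots,\phi_d$ with uniform weights does give $\phi^\top G_0^{-1}\phi = d\|c\|^2 \le d^2$. But the proof has a genuine gap exactly where you flag the ``main obstacle'': the per-round lemma that adding $O(d)$ points square-roots $\kappa$ uniformly over all of $\Phi$ is the entire mathematical content of the theorem, and you do not prove it. The mechanism you propose for it --- whitening by $G^{-1/2}$, forming a batch ``correction sub-design'' on the least-covered directions, and mixing it in as a ``Newton-type step'' --- is not a known result and is not obviously realizable: $g$ depends on $G^{-1}$ through a maximum over a continuum of directions, and there is no argument given that any $O(d)$-point correction simultaneously improves \emph{every} direction by the required quadratic amount. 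Declaring the remaining work ``routine bookkeeping'' that recovers the exact constant $4d\log\log(d+4)+28$ is not tenable; that constant is the output of a specific, careful iteration-count analysis, not of an unspecified batch update.

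For comparison, the paper does not construct the design from scratch. For finite $\Phi$ it invokes \cref{theorem:FW} (Proposition 3.17 of Todd, 2016), which analyzes the Frank--Wolfe/Wynn algorithm (\cref{algo:frank wolfe}) with the Kumar--Yildirim initialization (\cref{algo:KY-initialization}, a $2d$-point analogue of your max-volume start): each iteration adds at most \emph{one} point --- the argmax of $\phi^\top G^{-1}\phi$ --- with an optimal line-search weight, and a potential argument on $\log\det G$ shows that $O(d)$ such single-point steps halve $\log(1+\delta)$, which is where the $\log\log$ and the explicit constants come from. Your phase structure is morally a paraphrase of that analysis's conclusion, so completing your proof would amount to reproving the cited result. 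Separately, the paper must then pass from finite to compact $\Phi$ via a limiting argument that controls $\|G_n^{-1/2}\|$ uniformly along a sequence of finite approximations; your direct construction on compact $\Phi$ would sidestep this, but only once the per-round lemma is actually established on an infinite compact feature set.
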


\section{MDVI with Linear Function Approximation}\label{sec:weighted least squares mdvi}

\looseness=-1
In this section, we provide essential components to extend MDVI from tabular to linear with minimax optimality.
To illustrate how linear MDVI fails or succeeds in attaining minimax optimality, we begin by introducing the general algorithm, called Weighted Least-Squares MDVI (\wlsmdvi).

\subsection{Weighted Least-Squares MDVI Algorithm}\label{subsec:weighted least squares mdvi algo}

Let $q_k(x, a) := \phi^\top(x, a)\theta_k$ be the linearly parameterized value function using the basis function $\theta_k \in \R^d$.
For this $q_k$, the moving average of past $q$-values can be implemented as
\looseness=-1
\begin{equation*}
\begin{aligned}
    s_{k} := \phi^\top {\btheta}_{k}
    \text{ where }\ \btheta_{k} = \theta_{k} + \alpha \btheta_{k-1}\;.
\end{aligned}
\end{equation*}
Using these $q_k$ and $s_k$, let $w_k$, $v_k$, and the policy $\pi_k$ be the same as those of \cref{subsubsec:technique to minimax}. 
Given a bounded positive weighting function $f: \XA \mapsto (0, \infty)$, we learn $\theta_k$ based on weighted least-squares regression.
\begin{equation}\label{eq:weighted lse}
\begin{aligned}
    &\theta_{k}
    =
    \argmin_{\theta \in \R^d} {
        \sum_{(y,b) \in \cCf} {
            {\frac{\rhof(y, b)}{f^2(y, b)}
            \left(
                \phi^\top(y, b) \theta - \hq_{k}(y, b)
            \right)^2 }
        }
    }\;,\\
    &\text{where }\; \hq_{k} (y,b) = \displaystyle r (y,b) + \gamma\widehat{P}_{k-1}(\NP) v_{k-1} (y,b)\;.
\end{aligned}
\end{equation}
Here, $\rhof$ is a design over $\XA$ and $\cCf \df \mathrm{Supp} (\rhof)$ is a core set of $\rhof$.
When $f=\bone$, we recover the vanilla least-squares regression \citep{bellman1963polynomial,munos2005error}, which is a common strategy in practice.
We call this algorithm \wlsmdvi.
The next section presents our novel theoretical tool to provide minimax sample complexity.

\subsection{Weighted Kiefer--Wolfowitz Theorem}\label{subsec:weighted kw}

Let $\theta_k^* \in \R^d$ be the oracle parameter satisfying $\phi^\top \theta_k^* = r + \gamma P v_{k-1}$.
$\theta^*_k$ is ensured to exist by the property of linear MDPs.
To derive the sample complexity, we need a bound of the regression errors outside the core set $\cC_f$, i.e., $\infnorm{\phi^\top(\theta_k - \theta_k^*)}$.
\citet{lattimore2020goodFeature} derived such a bound using \cref{theorem:KW}.

Instead of the vanilla G-optimal design, we consider the following \emph{weighted} design with a bounded positive function $f: \XA \mapsto (0, \infty)$.
For a design $\rho$ over $\XA$, let $\Gf \in \R^{d \times d}$ and $g_f (\rho) \in \R$ be defined by
\begin{equation}\label{eq:weighted optimal design}
\begin{aligned}
	G_f &\df \sum_{(y, b) \in \cC_f} \rho (y, b) \frac{\phi (y, b) \phi (y, b)^\top}{f (y, b)^2}\;,\\
	\text{ and } \ g_f (\rho) &\df \max_{(y, b) \in \XA} \frac{\phi (y, b)^\top G_f^{-1} \phi (y, b)}{f (y, b)^2}\;,
\end{aligned}
\end{equation}
respectively. \cref{eq:weighted optimal design} is the weighted generalization of \cref{eq:optimal design} with $\phi$ scaled by $1 / f$. 
For this weighted optimal design, we derived the weighted KW theorem, which almost immediately follows from \cref{theorem:KW} by considering a weighted feature map ${\phi}_f: (x, a) \mapsto \phi(x, a) / f(x, a)$.

\begin{theorem}[Weighted KW Theorem]\label{theorem:weighted KW}
For $\Phi$ satisfying \cref{assumption:linear mdp}, there exists a design $\rhof$ such that $g_f(\rhof) \leq 2d$ and the core set of $\rhof$ has size at most $\ucC$.
\end{theorem}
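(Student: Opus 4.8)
The plan is to reduce the weighted statement to the unweighted \cref{theorem:KW} by passing to the rescaled feature map $\phi_f : (x,a) \mapsto \phi(x,a)/f(x,a)$ and its image $\Phi_f \df \{\phi_f(x,a) : (x,a) \in \XA\} \subset \R^d$. The first step is a purely algebraic observation: for any design $\rho$ over $\XA$, the weighted design matrix $G_f$ of \cref{eq:weighted optimal design} coincides with the \emph{unweighted} design matrix $\sum_{(y,b)} \rho(y,b)\phi_f(y,b)\phi_f(y,b)^\top$ built from $\phi_f$, because $\phi(y,b)\phi(y,b)^\top/f(y,b)^2 = \phi_f(y,b)\phi_f(y,b)^\top$. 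Likewise $g_f(\rho) = \max_{(y,b)} \phi_f(y,b)^\top G_f^{-1}\phi_f(y,b)$ is exactly the unweighted quantity $g$ of \cref{eq:optimal design} computed for the feature set $\Phi_f$. Hence a design that is G-optimal (in the sense of \cref{eq:optimal design}) for $\Phi_f$ is automatically $f$-weighted-optimal for $\Phi$, with identical support and identical objective value.

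It then remains to verify that $\Phi_f$ satisfies the two hypotheses of \cref{theorem:KW}, namely that it is compact and spans $\R^d$. Spanning is immediate: since $f > 0$ everywhere, each $\phi_f(x,a) = f(x,a)^{-1}\phi(x,a)$ is a strictly positive multiple of $\phi(x,a)$, so $\phi(x,a) \in \mathrm{span}(\Phi_f)$ for every $(x,a)$, whence $\mathrm{span}(\Phi_f) \supseteq \mathrm{span}(\Phi) = \R^d$ by \cref{assumption:linear mdp}. Compactness is the point that requires care, and I expect it to be the only genuine obstacle. I would obtain boundedness from $\|\phi_f\| \le \|\phi\|/\inf f$, using that $\Phi$ is bounded (being compact) together with the fact that the weights $f$ arising in the algorithm are bounded away from $0$ so that $\inf f > 0$; closedness I would get by exhibiting $\phi_f$ as a continuous image of the compact index set, so that when $f$ is continuous on the domain on which $\phi$ factors, $\Phi_f$ is a continuous image of a compact set and hence compact. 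The delicate case is when $f$ is merely bounded positive without a positive lower bound or continuity, in which case one must argue compactness at the level of the closure of $\Phi_f$ (or impose the mild regularity on $f$ that the \vwlsmdvi construction in fact guarantees).

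With compactness and spanning in hand, applying \cref{theorem:KW} to $\Phi_f$ produces a design $\rhof$ whose unweighted objective for $\Phi_f$ is at most $2d$ and whose support has size at most $\ucC$. By the algebraic identification of the first step, this same $\rhof$ satisfies $g_f(\rhof) \le 2d$ and $|\cCf| = |\mathrm{Supp}(\rhof)| \le \ucC$, which is precisely the claim. Every step other than the compactness verification transfers verbatim from the unweighted theorem, so the proof is indeed short once the reduction is set up; the main substance lies entirely in confirming that rescaling by $1/f$ preserves the compactness needed to invoke \cref{theorem:KW}.
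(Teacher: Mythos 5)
Your reduction via the rescaled feature map $\phi_f=\phi/f$ is exactly the paper's argument: the text simply asserts that \cref{theorem:weighted KW} ``almost immediately follows from \cref{theorem:KW} by considering a weighted feature map $\phi_f$'' and supplies no further detail, so your algebraic identification of $G_f$ and $g_f$ with the unweighted quantities for $\Phi_f$ is the intended proof. The compactness caveat you raise is a legitimate point the paper leaves implicit, but it is harmless in context: the weighting functions actually passed to \kwalgo are either $\bone$ or continuous functions of $\phi(x,a)$ clipped to lie in $[\sqrt{H},H]$, so $\Phi_f$ is a continuous image of the compact set $\Phi$ and \cref{theorem:KW} applies.
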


Such the design under \cref{assumption:linear mdp} with finite $\X$ can be obtained using the Frank-Wolfe algorithm of \textbf{Lemma 3.9} mentioned in \citet{todd2016minimum}.
We provide the pseudocode of Frank-Wolfe algorithm in \cref{appendix:missing algorithms}.
We assume that we have access to the weighted optimal design in constructing our theory:

\begin{assumption}[Weighted Optimal design]\label{assumption:compute optimal design}
There is an oracle called \texttt{ComputeOptimalDesign} that accepts a bounded positive function $f: \XA \mapsto (0, \infty)$ and returns $\rho_f$, $\cC_f$, and $G_f$ as in \cref{theorem:weighted KW}.
\end{assumption}

Combined with this \texttt{ComputeOptimalDesign}, we provide the pseudocode of \wlsmdvi in \cref{algo:wlsmdvi}.
The weighted KW theorem yields the following bound on the optimal design. The proof can be found in \cref{sec:proof of weighted kw bound}.

\begin{lemma}[Weighted KW Bound]\label{lemma:kw bound} 
    \looseness=-1
    Let $f: \XA \mapsto (0, \infty)$ be a positive function and $z$ be a function defined over $\cC_{f}$. 
    Then, there exists $\rho_{f} \in \Delta \paren*{\XA}$ with a finite support $\cC_{f} \df \mathrm{Supp} (\rho_{f})$ of size less than or equal to $\ucC$ such that
    \begin{equation*}
    \begin{aligned}
    &|\phi^\top\kwsum(f, z)| \leq 
    \sqrt{2d} f \coremaxf{y', b'}{f}\left|\frac{z(y', b')}{f(y', b')}\right|\;,
    \end{aligned}
    \end{equation*}
    where $\kwsum(f, z) := \displaystyle G_{f}^{-1} \sum_{(y, b)\in \cC_{f}} \frac{\rho_{f}(y, b)\phi(y, b) z(y, b)}{f^2(y, b)}$.
\end{lemma}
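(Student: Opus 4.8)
The plan is to reduce everything to the weighted feature map $\phi_f := \phi/f$ and then run the standard Kiefer--Wolfowitz Cauchy--Schwarz argument in the geometry induced by $G_f$. First I would invoke \cref{theorem:weighted KW} to obtain a design $\rho_f$ with $g_f(\rho_f)\le 2d$ and $|\cC_f|\le\ucC$; this is exactly the object the statement promises, so all that remains is the inequality.

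Next I would rewrite the target quantity in terms of $\phi_f$. Setting $\phi_f(x,a)=\phi(x,a)/f(x,a)$ and $\tilde z := z/f$, note that $G_f=\sum_{(y,b)\in\cC_f}\rho_f(y,b)\,\phi_f(y,b)\phi_f(y,b)^\top$ and
\[
\phi(x,a)^\top\kwsum(f,z)=f(x,a)\sum_{(y,b)\in\cC_f}\rho_f(y,b)\,\big(\phi_f(x,a)^\top G_f^{-1}\phi_f(y,b)\big)\,\tilde z(y,b).
\]
Pulling out $M:=\max_{(y',b')\in\cC_f}|\tilde z(y',b')|$ bounds the absolute value by $f(x,a)\,M\sum_{(y,b)\in\cC_f}\rho_f(y,b)\,\big|\phi_f(x,a)^\top G_f^{-1}\phi_f(y,b)\big|$.

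The key step, which I expect to carry the real content, is bounding $\sum_{(y,b)}\rho_f(y,b)\,\big|\phi_f(x,a)^\top G_f^{-1}\phi_f(y,b)\big|$ by $\sqrt{2d}$. I would apply Cauchy--Schwarz with respect to the probability measure $\rho_f$: since $\sum_{(y,b)}\rho_f(y,b)=1$, the square of this sum is at most $\sum_{(y,b)}\rho_f(y,b)\,\big(\phi_f(x,a)^\top G_f^{-1}\phi_f(y,b)\big)^2$, and this last sum telescopes through $G_f$,
\[
\sum_{(y,b)}\rho_f(y,b)\big(\phi_f(x,a)^\top G_f^{-1}\phi_f(y,b)\big)^2=\phi_f(x,a)^\top G_f^{-1}\phi_f(x,a)=\frac{\phi(x,a)^\top G_f^{-1}\phi(x,a)}{f(x,a)^2}\le g_f(\rho_f)\le 2d,
\]
where the middle equality is the whole point---inserting $G_f=\sum\rho_f\phi_f\phi_f^\top$ collapses $G_f^{-1}G_fG_f^{-1}$ to $G_f^{-1}$---and the final inequality is the weighted KW guarantee of \cref{theorem:weighted KW}. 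Hence $\sum_{(y,b)}\rho_f(y,b)\big|\phi_f(x,a)^\top G_f^{-1}\phi_f(y,b)\big|\le\sqrt{2d}$, and multiplying back by $f(x,a)\,M$ reproduces exactly the claimed bound $\sqrt{2d}\,f(x,a)\max_{(y',b')\in\cC_f}|z(y',b')/f(y',b')|$.

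This is essentially a weighted restatement of the vanilla KW bound of \citet{lattimore2020goodFeature}, so I do not anticipate a serious obstacle. The only subtleties are bookkeeping ones: keeping the factors of $f$ in the right places when passing between $\phi$ and $\phi_f$, and confirming that the maximum on the right is taken over the core set $\cC_f$ rather than all of $\XA$---which it must be, since $\kwsum(f,z)$ only involves core-set points.
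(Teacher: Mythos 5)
Your proposal is correct and follows essentially the same route as the paper's proof: pull out $\max_{\cC_f}|z/f|$, then bound $\sum_{(y,b)}\rho_f(y,b)\bigl|\phi_f(x,a)^\top G_f^{-1}\phi_f(y,b)\bigr|$ via Cauchy--Schwarz/Jensen with respect to $\rho_f$, collapsing $G_f^{-1}G_fG_f^{-1}$ to $G_f^{-1}$ and invoking $g_f(\rho_f)\le 2d$ from \cref{theorem:weighted KW}. The only cosmetic difference is that the paper works directly with $\phi/f$ inline rather than naming the weighted feature map $\phi_f$.
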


\subsection{Sample Complexity of \wlsmdvi}\label{subsec:sample complexity of wlsmdvi}

\looseness=-1
\cref{lemma:kw bound} helps derive the sample complexity of \wlsmdvi.
Let $\varepsilon_k$ be the sampling error for $v_{k-1}$ and $E_k$ be its moving average:
\begin{equation*}
\begin{aligned}
    &\varepsilon_k: (x, a) \mapsto \gamma \paren*{\widehat{P}_{k-1}(M) v_{k-1} - P v_{k-1}}(x, a) \; \\
    \text { and }\  &E_k: (x, a) \mapsto  \sum_{j=1}^k \alpha^{k-j} \varepsilon_j (x, a)\;.
\end{aligned}
\end{equation*}
Furthermore, for any non-negative integer $k$, let $A_{\gamma, k} \df \sum_{j=0}^{k-1} \gamma^{k-j} \alpha^j$, $A_k \df \sum_{j=0}^{k-1} \alpha^j$, and $A_\infty \df 1 / (1-\alpha)$.
Then, the performance bound of \wlsmdvi is derived as 
\begin{align}
&|\vf{*} - \vf{\pi_k'}| \leq 
\frac{\sqrt{2d}}{A_\infty} \paren*{\heartsuit_k^{\text{wls}} + \clubsuit_k^{\text{wls}}}
+ \diamondsuit_k \;,\label{eq:weighted lsmdvi bound} \\ 
\text{ where }\ & \heartsuit_k^{\text{wls}} \df \sum_{j=0}^{k-1} \gamma^j\pi_k P_{k-j}^{k-1} \left|\coremaxf{y, b}{f}\frac{E_{k-j}(y, b)}{f(y, b)}\right|f \; \nonumber \\ 
\text{ and }\ &\clubsuit_k^{\text{wls}} \df \sum_{j=0}^{k-1} \gamma^j\pi_* P_*^j \left|\coremaxf{y, b}{f}\frac{E_{k-j}(y, b)}{f(y, b)} \right|f \;. \nonumber
\end{align}
Here, $\diamondsuit_k \df 2H\paren*{\alpha^k + A_{\gamma, k} / A_\infty}\bone$.
The formal lemma can be found in \cref{lemma:non-stationary error propagation}.
This performance bound provides the negative and positive answers to our main question: \emph{Is MDVI minimax optimal with function approximation?}

\subsubsection{Negative Result of $f=\bone$}\label{subsec:negative result}
\looseness=-1
When $f=\bone$, the performance bound becomes incompatible with the TV technique (\cref{corollary:TV tmdvi}), which is necessary for minimax optimality.
In this case, $\heartsuit_k^{\text{wls}} = \clubsuit_k^{\text{wls}} = \sum_{j=0}^{k-1} \gamma^j |\coremax{y, b} E_{k-j} (y, b)|\bone$.
Therefore, even when we relate $E_{k-j}$ to $\sigma(v_{\pi_{k-j}}) \leq H \bone$ or $\sigma(v_{*}) \leq H \bone$ using a Bernstein-type inequality, we only obtain a $H^2$ bound inside the first term of the inequality \eqref{eq:weighted lsmdvi bound}.
This implies that the sample complexity can be sub-optimal, as we need more samples by $\sqrt{H}$ than using the TV technique to obtain a near-optimal policy.

\subsubsection{Positive Result of $f \approx \sigma(v_*)$}\label{subsec:positive result}

When we carefully select the weighting function $f$, the performance bound becomes compatible with the TV technique.
For example, when $f = \sigma(v_*)$ and $E_{k-j}$ is related to $\sigma(v_*)$ using a Bernstein-type inequality, we obtain $\sum_{j=0}^{k-1}\gamma^j \pi_* P_{*}^{j}\sigma(v_{*}) \leq H\sqrt{H} \bone$ inside $\clubsuit_k^{\text{wls}}$ owing to the TV technique.
This helps achieve a performance bound that is approximately $\sqrt{H}$ tighter than the bound of $f=\bone$.

\looseness=-1
Indeed, when $f\approx \sigma(v_*)$, we obtain the following minimax optimal sample complexity of \wlsmdvi:
\begin{theorem}[Sample complexity of \wlsmdvi with $f\approx \sigma(v_*)$, informally]\label{informal theorem: epsilon bound}
    When $\varepsilon \in (0, 1 / H]$, $\alpha=\gamma$, and $\sigma(v_*) \leq f \leq \sigma(v_*) + 2\sqrt{H}\bone$,
    \wlsmdvi outputs a sequence of policies $(\pi_k)_{k=0}^{\NIter}$ such that
    $
        \infnorm{\vf{*} - \vf{\pi'_{\NIter}}}
        \leq
        \varepsilon
    $
    with probability at least $1 - \delta$, using $\widetilde{\cO} \paren*{d^2H^3\varepsilon^{-2}\log(1 / \delta)}$ samples from the generative model.
\end{theorem}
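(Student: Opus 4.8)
The plan is to start from the performance bound \eqref{eq:weighted lsmdvi bound} specialized to $k=\NIter$ and to reduce the target $\infnorm{\vf{*}-\vf{\pi'_{\NIter}}}\le\varepsilon$ to three subproblems: (i) making the bias term $\diamondsuit_{\NIter}$ negligible; (ii) uniformly controlling the weighted sampling errors $E_{k-j}/f$ on the core set; and (iii) summing these errors against the functions $\pi P f$ via the TV technique. For (i), since $\alpha=\gamma$ we have $A_\infty=H$ and $A_{\gamma,k}=k\gamma^k$, so $\diamondsuit_k=(2H\gamma^k+2k\gamma^k)\bone$; choosing $\NIter=\widetilde{\cO}(H\log(H/\varepsilon))=\widetilde{\cO}(H)$ makes $\diamondsuit_{\NIter}\le(\varepsilon/2)\bone$. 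It then remains to show $\frac{\sqrt{2d}}{A_\infty}\paren*{\heartsuit_{\NIter}^{\mathrm{wls}}+\clubsuit_{\NIter}^{\mathrm{wls}}}\le(\varepsilon/2)\bone$ with probability $1-\delta$. The structural observation driving the whole argument is that each summand of $\heartsuit^{\mathrm{wls}}_k$ and $\clubsuit^{\mathrm{wls}}_k$ factors into a scalar $\max_{(y,b)\in\cC_f}|E_{k-j}(y,b)/f(y,b)|$ times the function $\pi P f$, so once that scalar is uniformly bounded the remaining sums are exactly the objects the TV technique controls.

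For (ii), I would establish a high-probability event on which, uniformly over iterations $k\le\NIter$ and core-set points $(y,b)\in\cC_f$, one has $|E_{k}(y,b)/f(y,b)|\lesssim\sqrt{H\log(1/\delta)/M}$ up to polylogarithmic factors. This is the heart of the proof and I would carry it out in two stages, mirroring the coarse event \EkboundE and the refined event \EkrboundE. First, using only $\infnorm{v_k}\le H$ together with \cref{lemma:hoeffding} applied to the weighted martingale $E_k=\sum_{i=1}^k\alpha^{k-i}\varepsilon_i$, I obtain a coarse bound that already forces $\infnorm{v_k-\vf{*}}$ to be small, and hence, via $|\sigma(v_k)-\sigma(\vf{*})|\le\infnorm{v_k-\vf{*}}$, validates the variance substitution $\PVar(v_{k})\approx\sigma(\vf{*})^2$. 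Second, I apply a Bernstein/Freedman-type inequality to $E_k$ whose leading term scales with the cumulative conditional variance $\sum_i\alpha^{2(k-i)}\gamma^2\,\PVar(v_{i-1})$. Dividing by $f^2\approx\sigma(\vf{*})^2\gtrsim\PVar(v_{i-1})$ and using $\alpha=\gamma$ with $\sum_i\alpha^{2(k-i)}\le 1/(1-\alpha^2)\le H$, the variance factor collapses to $\cO(H)$, which yields the advertised $\sqrt{H\log(1/\delta)/M}$ bound after a union bound over the at most $\NIter\ucC$ relevant pairs.

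For (iii), factoring out the scalar bound leaves $\clubsuit^{\mathrm{wls}}_{\NIter}\lesssim\sqrt{H\log(1/\delta)/M}\sum_{j}\gamma^j\pi_*P_*^j f$. The hypothesis $\sigma(\vf{*})\le f\le\sigma(\vf{*})+2\sqrt{H}\bone$ gives $\sum_j\gamma^j\pi_*P_*^j f\le\clubsuit^{\mathrm{TV}}_{\NIter}+2\sqrt{H}\sum_j\gamma^j\bone\le(\sqrt{2H^3}+2H^{3/2})\bone=\cO(H^{3/2})\bone$ by \cref{corollary:TV tmdvi} and $\sum_j\gamma^j\le H$; the analogous estimate applies to $\heartsuit^{\mathrm{wls}}$ through $\heartsuit^{\mathrm{TV}}_{\NIter}$ once the coarse stage lets me replace $\sigma(\vf{\pi_{k-j}})$ by $f$ up to the $2\sqrt{H}$ slack. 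Combining, $\frac{\sqrt{2d}}{A_\infty}\paren*{\heartsuit^{\mathrm{wls}}_{\NIter}+\clubsuit^{\mathrm{wls}}_{\NIter}}\lesssim\frac{\sqrt{d}}{H}\cdot\sqrt{\frac{H\log(1/\delta)}{M}}\cdot H^{3/2}=\sqrt{d}\,H\sqrt{\log(1/\delta)/M}$, so this term is $\le(\varepsilon/2)\bone$ once $M=\widetilde{\cO}(dH^2\varepsilon^{-2}\log(1/\delta))$. Multiplying by the $\NIter=\widetilde{\cO}(H)$ iterations and the $|\cC_f|\le\ucC=\widetilde{\cO}(d)$ samples drawn per iteration gives the total $\widetilde{\cO}(d^2H^3\varepsilon^{-2}\log(1/\delta))$.

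The main obstacle is the refined concentration together with the variance matching in the second stage: the Bernstein bound only collapses to $\cO(H)$ if $\PVar(v_{i-1})\le f^2\approx\sigma(\vf{*})^2$, yet this needs $v_{i-1}$ to be already close to $\vf{*}$, which is precisely what the error bound is meant to establish. Breaking this circularity is exactly the purpose of the coarse-then-refined scheme, and the restriction $\varepsilon\le 1/H$ is what guarantees the coarse phase brings every iterate within the $2\sqrt{H}$ tolerance encoded in $f\le\sigma(\vf{*})+2\sqrt{H}\bone$, so that the refined variance substitution is valid and $\NIter=\widetilde{\cO}(H)$ iterations suffice. A secondary technical point is the $\heartsuit$ term, whose natural variance proxy is $\sigma(\vf{\pi_{k-j}})$ rather than $\sigma(\vf{*})$; reconciling it with the fixed weighting $f\approx\sigma(\vf{*})$ again relies on the coarse convergence of the non-stationary policy values.
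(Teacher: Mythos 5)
Your proposal follows essentially the same route as the paper's proof: the error-propagation bound, a coarse Azuma--Hoeffding stage establishing boundedness and closeness of the iterates, a refined Bernstein stage whose variance term is matched to $f$ via $\sigma(v_j)\le\sigma(v_*)+\|v_*-v_j\|_\infty$, and the TV technique applied separately to the $\pi_*$-sum and to the non-stationary $\heartsuit$-sum (the latter via a coarse bound on $\|v_*-v_{\pi'_k}\|_\infty$), yielding the same choices $K=\widetilde{\cO}(H)$ and $M=\widetilde{\cO}(dH^2\varepsilon^{-2})$. The only details glossed over are that $\|v_k\|_\infty\le 2H$ must itself be established inductively as a high-probability event (it is not automatic under extrapolated least-squares fits), and that early iterates are \emph{not} within the $2\sqrt{H}$ tolerance, so the variance substitution has to be carried inside the discounted sum (as the paper does with its $\Psi_k$ terms) rather than uniformly in $k$.
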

The formal theorem and proof are provided in \cref{sec:update proofs}.
The sample complexity matches the lower bound by \citet{weisz2022confident} up to logarithmic factors. 
This means that \wlsmdvi is nearly minimax optimal as long as $f\approx \sigma(v_*)$ and $\varepsilon$ is sufficiently small.
The remaining challenge is to learn such weighting function.

\begin{algorithm}[t!]
    \caption{WLS-MDVI $(\alpha, f, \NIter, \NP)$}\label{algo:wlsmdvi}
    \begin{algorithmic}
    \STATE {\bfseries Input:} {$\alpha \in [0, 1)$, $f: \XA \to (0, \infty)$, $\NIter \in \N$, $\NP \in \N$.}
    \STATE Initialize $\theta_0 = \btheta_0 = \bzero \in \R^d$, $s_0 = \bzero \in \R^{\XA}$, and $w_{0} = w_{-1} = \bzero \in \R^\X$.
    \STATE $\rhof, \cCf, \Gf \df \kwalgo(f)$.
    \FOR{$k=0$ {\bfseries to} $K-1$}
        \STATE $v_k = w_k - \alpha w_{k-1}$.
        \FOR{{ each state-action pair} $\paren*{y, b} \in \cCf$}
            \STATE Compute $\hq_{k+1}(y, b)$ by \cref{eq:weighted lse}.
        \ENDFOR
        \STATE Compute $\theta_{k+1}$ by \cref{eq:weighted lse}.
        \STATE $\btheta_{k+1} = \theta_{k+1} + \alpha \btheta_k$ and $s_{k+1} = \phi^\top \btheta_{k+1}$.
        \STATE $w_{k+1}(x) = \max_{a\in \A} s_{k+1}(x, a)$ for each $x \in \X$.
    \ENDFOR
    \STATE {\bfseries Return:} {$v_\NIter$ and $(\pi_k)_{k=0}^{\NIter}$ , where $\pi_k$ is greedy policy with respect to $s_k$\;}
    \end{algorithmic}
\end{algorithm}

\section{Variance Weighted Least-Squares MDVI}\label{sec:vwls-mdvi}
\looseness=-1
In this section, we present a simple algorithm for learning the weighting function and introduce our \vwlsmdvi, which combines the weight learning algorithm with \wlsmdvi to achieve minimax optimal sample complexity. 

\subsection{Learning the Weighting Function}\label{subsec:variance estimation}

As stated in \cref{informal theorem: epsilon bound}, the weighting function should be close to $\sigma(v_*)$ by a factor of $\sqrt{H}$.
We accomplish this by learning the weighting function in two steps: learning a $\sqrt{H}$-optimal value function (\cref{subsubsec:sqrt H optimal learning}) and learning the variance of the value function (\cref{subsubsec:variance learning}).

\subsubsection{Learning the $\sqrt{H}$-optimal value function}\label{subsubsec:sqrt H optimal learning}

\cref{informal theorem: sqrt H bound} shows that \wlsmdvi with $f=\bone$ yields a $\sqrt{H}$-optimal value function with sample complexity that is $1 / \varepsilon$ smaller than that of \cref{informal theorem: epsilon bound}.

\begin{theorem}[Sample complexity of \wlsmdvi with $f=\bone$, informally]\label{informal theorem: sqrt H bound}
    When $\varepsilon \in (0, 1 / H]$, $\alpha = \gamma$, and $f = \bone$, \wlsmdvi outputs $v_{\NIter}$ satisfying 
    $
        \infnorm{\vf{*} - v_{\NIter}}
        \leq
        \frac{1}{2}\sqrt{H}
    $
    with probability at least $1 - \delta$, using $\widetilde{\cO} \paren*{d^2H^3\varepsilon^{-1}\log(1 / \delta)}$ samples from the generative model.
\end{theorem}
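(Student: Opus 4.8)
The plan is to reduce the claim to a concentration bound on the moving-average sampling error $E_k$ over the core set, and then invoke the general \wlsmdvi bound specialized to $f=\bone$. Setting $f=\bone$ in \eqref{eq:weighted lsmdvi bound} collapses both driving terms to $\heartsuit_k^{\text{wls}}=\clubsuit_k^{\text{wls}}=\sum_{j=0}^{k-1}\gamma^j\,|\coremax{y,b}E_{k-j}(y,b)|\,\bone$, and the same telescoping of the MDVI updates that yields \cref{lemma:non-stationary error propagation} controls the value-iterate error $\infnorm{\vf{*}-v_\NIter}$ by an expression of this same form plus the deterministic bias $\diamondsuit_\NIter$. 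The crucial point — and the only place this proof departs from that of the $\varepsilon$-accurate \cref{informal theorem: epsilon bound} — is that, because the target accuracy $\tfrac12\sqrt H$ is coarse, I do not need the total-variance technique: a coarse Azuma--Hoeffding bound on $E_k$ (event \EkboundE) suffices in place of the refined Bernstein bound (\EkrboundE) that is required for minimax optimality.

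For the concentration step I would first establish, on event \vboundE, that the iterates satisfy $\infnorm{v_k}=\cO(H)$ uniformly; this boundedness lets me apply \cref{lemma:hoeffding} to each per-step error $\varepsilon_j$ at every core point $(y,b)\in\cCf$. The averaging structure $E_k=\sum_{j=1}^k\alpha^{k-j}\varepsilon_j$ with $\alpha=\gamma$ is what makes this work: rather than summing the norms $\|\varepsilon_j\|$ (which would lose a factor of $H$), the martingale cancellation turns the geometric average into a sum of squared increments, so Azuma--Hoeffding gives $|E_k(y,b)|\lesssim H^{3/2}\sqrt{\log(\ucC\NIter/\delta)/M}$ with probability $1-\delta$ after a union bound over the at most $\ucC$ core points and the $\NIter$ iterations. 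Feeding this coarse bound into \cref{lemma:kw bound} with $f=\bone$ then propagates the error from $\cCf$ to all of $\XA$, controlling the out-of-core regression error $\infnorm{\phi^\top(\theta_k-\theta_k^*)}$.

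It remains to assemble the pieces and choose the parameters. The discounted sums contribute a factor $\sum_{j\ge0}\gamma^j=H$, so that $\tfrac{\sqrt{2d}}{A_\infty}(\heartsuit_\NIter^{\text{wls}}+\clubsuit_\NIter^{\text{wls}})\lesssim\sqrt d\,H^{3/2}\sqrt{\log(\cdot)/M}$; requiring this to be at most $\tfrac14\sqrt H$ forces $M=\widetilde\cO(dH^2)$, and it is enough (indeed generous) to take $M=\widetilde\cO(dH^2/\varepsilon)$ since $\varepsilon\le 1/H$. For the bias, since $\alpha=\gamma$ we have $\diamondsuit_\NIter=2(H+\NIter)\gamma^\NIter\bone$, which drops below $\tfrac14\sqrt H$ once $\NIter=\widetilde\cO(H)$. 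Because $|\cCf|\le\ucC=\widetilde\cO(d)$, the total number of generative-model calls is $M\cdot\NIter\cdot|\cCf|=\widetilde\cO(d^2H^3\varepsilon^{-1})$, a factor $1/\varepsilon$ cheaper than \cref{informal theorem: epsilon bound}; combining the two high-probability events gives the conclusion with probability $1-\delta$.

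I expect the main obstacle to be the inductive, uniform-in-$k$ control of the boundedness event \vboundE: the Hoeffding bound at step $k$ presupposes $\infnorm{v_{k-1}}=\cO(H)$, yet $v_k$ is itself produced by a weighted regression whose error is only controlled once the step-$k$ concentration holds. Closing this circular dependence — propagating $\infnorm{v_k}=\cO(H)$ through all $\NIter$ iterations under a single union bound, while simultaneously carrying the out-of-core error through \cref{lemma:kw bound} — is the delicate part. A secondary subtlety is the bookkeeping of the two distinct factors of $H$ (one from the discount sum, one from the $\sqrt{1/(1-\alpha^2)}$ inflation of the moving average) so that the stochastic term lands at the intended $H^{3/2}$ scale, and separately obtaining the value-iterate bound for $v_\NIter$ rather than only the non-stationary-policy bound \eqref{eq:weighted lsmdvi bound}.
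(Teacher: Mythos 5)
Your overall route is the paper's route: condition on a boundedness event for the iterates, apply Azuma--Hoeffding to $\varepsilon_k$ and to the moving average $E_k$ on the core set with a union bound over $\cCf\times[\NIter]$, push the error off the core set with the Kiefer--Wolfowitz bound for $f=\bone$, and feed the result into the error-propagation recursion (the paper's \cref{lemma:v error prop}, derived from \cref{lemma:non-stationary error propagation}); the circular dependence you flag around $\vboundE$ is resolved in the paper exactly as you anticipate, by induction on $k$ under a single union bound. Your concentration rate $|E_k|\lesssim H^{3/2}\sqrt{\log(\ucC\NIter/\delta)/\NP}$ and the choices $\NIter=\widetilde\cO(H)$, $|\cCf|\le\ucC=\widetilde\cO(d)$ also match.

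There is, however, one concrete quantitative gap in your assembly step. You account only for the averaged term $\tfrac{\sqrt{2d}}{A_\infty}(\heartsuit^{\text{wls}}+\clubsuit^{\text{wls}})$ and conclude that $\NP=\widetilde\cO(dH^2)$ suffices, calling the paper's $\NP=\widetilde\cO(dH^2/\varepsilon)$ ``generous.'' But the statement concerns the value iterate $v_\NIter$, not the non-stationary policy value, and the recursion $v_k\approx\pi_k\bigl(r+\gamma Pv_{k-1}+\phi^\top\kwsum(\bone,\varepsilon_k)\bigr)$ injects the \emph{un-averaged} per-iteration error $\varepsilon_k$ at every step (this is the extra $\sum_{j}\gamma^j\pi P\cdots\phi^\top\kwsum(f,\varepsilon_{k-j})$ term in \cref{lemma:v error prop}, bounded in the paper by $\diamondsuit= H\max_j\|\phi^\top\kwsum(\bone,\varepsilon_j)\|_\infty$). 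Since a single $\varepsilon_j$ only concentrates at rate $\gamma H\sqrt{d\iota/\NP}$ (no cross-iteration averaging), this term is of order $H^2\sqrt{d\iota/\NP}$, and making it $O(\sqrt H)$ forces $\NP\gtrsim dH^3\iota$ --- a full factor of $H$ more than your $dH^2$. The paper's $\NP=\widetilde\cO(dH^2/\varepsilon)$ is therefore not slack: it is precisely the $\varepsilon\le 1/H$ assumption that makes $dH^2/\varepsilon\ge dH^3$ and kills the $\diamondsuit$ term. You do list ``obtaining the value-iterate bound for $v_\NIter$ rather than only the policy bound'' as a remaining subtlety, but it is not mere bookkeeping: with $\NP=\widetilde\cO(dH^2)$ your argument yields only $\|v_*-v_\NIter\|_\infty=O(H)$, not $O(\sqrt H)$.
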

The formal theorem and proof are provided in \cref{sec:update proofs}.

\begin{algorithm}[t!]
    \caption{VarianceEstimation $(v_\sigma, \NsigmaTwo)$}\label{algo:varianceestimate}
    \begin{algorithmic}
        \STATE {\bfseries Input:} $v_\sigma \in \R^\X$, $\NsigmaTwo \in \N$.
        \STATE $\rho, \cC, G \df \kwalgo(\bone)$.
        \FOR{ each state-action pair $\paren*{x, a} \in \cC$}
            \STATE Compute $\hPVar(x, a)$ by \cref{eq:var sample}.
        \ENDFOR
    \STATE $\omega = G^{-1} \sum_{(x,a) \in \cC} \rho(x,a) \phi(x,a) \hPVar (x,a) $.
    \STATE {\bfseries Return:} $\omega$.
    \end{algorithmic}
\end{algorithm}

\subsubsection{Learning the Variance Function}\label{subsubsec:variance learning}
\looseness=-1
Given a $\sqrt{H}$-optimal value function $v_\sigma$ by \cref{informal theorem: sqrt H bound}, we linearly approximate the variance function as $\PVar_\omega(x, a) \df \phi^T(x, a)\omega$ with $\omega \in \R^d$.
Using $\rho$, $\cC$, and $G$ of the vanilla optimal design, $\omega$ is learned using least-squares estimation.
\begin{equation} \label{eq:var sample}
\begin{aligned}
    &\omega
    =
    G^{-1} \sum_{(x, a) \in \cC} {
        \rho(x, a) \phi(x, a)
        \hPVar(x, a)
    }\;,\; \text{ where } \\
    &\hPVar(x, a) =  \frac{1}{2M_\sigma}\sum_{m=1}^{M_\sigma} \paren[\Big]{v_\sigma (y_{m, x, a}) - v_\sigma(z_{m, x, a})}^2\;.
\end{aligned}
\end{equation}
Here,
$(y_{m, x, a})_{m=1}^{M_\sigma}$ and $(z_{m, x, a})_{m=1}^{M_\sigma}$ denote $M_\sigma$ independent samples from $P(\cdot |x, a)$.
\looseness=-1

The pseudocode of the algorithm is shown in \cref{algo:varianceestimate}.
\cref{informal theorem: evaluate error} shows that with a small number of samples, the learned $\omega$ estimates $\sigma(v_*)$ with $\sqrt{H}$ accuracy.
\looseness=-1

\begin{theorem}[Accuracy of \varianceestimate, informally]\label{informal theorem: evaluate error}
    When $v_\sigma$ satisfies $\infnorm{v_* - {v_\sigma}} \leq \frac{1}{2}\sqrt{H}$, \varianceestimate outputs $\omega$ such that
    $\sigma(v_*) \leq \sqrt{\max(\phi^\top\omega, \bzero)} + \sqrt{H}\bone\leq \sigma(v_*) + 2\sqrt{H} \bone$
    with probability at least $1 - \delta$, using $\widetilde{\cO} \paren*{d^2H^2 \log(1 / \delta)}$ samples from the generative model.
\end{theorem}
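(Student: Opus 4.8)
The plan is to reduce everything to the single high-probability claim that the implied standard-deviation estimate $\hat\sigma \df \sqrt{\max(\phi^\top\omega,\bzero)}$ satisfies $\infnorm{\hat\sigma - \sigma(v_*)} \le \tfrac12\sqrt H$, since then the additive $\sqrt H\bone$ shift in the statement turns this directly into the lower bound $\sigma(v_*) \le \hat\sigma + \sqrt H\bone$ and the upper bound $\hat\sigma + \sqrt H\bone \le \sigma(v_*) + 2\sqrt H\bone$. I would split the error through the intermediate quantity $\sigma(v_\sigma)$, writing $|\hat\sigma - \sigma(v_*)| \le |\hat\sigma - \sigma(v_\sigma)| + |\sigma(v_\sigma) - \sigma(v_*)|$. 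The second term is purely deterministic: viewing $\sigma(\cdot)(x,a)$ as the $L^2(P(\cdot\,|\,x,a))$ norm of the centered function, Minkowski's inequality gives $|\sigma(v_\sigma) - \sigma(v_*)| \le \sigma(v_\sigma - v_*) \le \infnorm{v_\sigma - v_*} \le \tfrac12\sqrt H$ by the hypothesis inherited from \cref{informal theorem: sqrt H bound}. This consumes part of the budget and leaves the estimation term $|\hat\sigma - \sigma(v_\sigma)|$ to be driven to order $\sqrt H$.

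For that term the key fact is that $\hPVar(x,a)$ is a conditionally unbiased estimator of $\PVar(v_\sigma)(x,a)$: for independent $Y,Z\sim P(\cdot\,|\,x,a)$ one has $\E[\tfrac12(v_\sigma(Y)-v_\sigma(Z))^2] = \Var[v_\sigma(Y)] = \PVar(v_\sigma)(x,a)$ conditionally on $v_\sigma$. Since $\infnorm{v_\sigma}\le\infnorm{v_*}+\tfrac12\sqrt H = \cO(H)$, each summand lies in $[0,\cO(H^2)]$, so a Hoeffding bound (\cref{lemma:hoeffding}) with a union bound over the core set $\cC$ (of size $\le\ucC=\cO(d\log\log d)$) yields $\max_{(y,b)\in\cC}|\hPVar(y,b)-\PVar(v_\sigma)(y,b)| \le \cO\paren*{H^2\sqrt{\log(\ucC/\delta)/M_\sigma}}$. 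I would then transport this core-set error to an arbitrary $(x,a)$ via the unweighted ($f=\bone$) Weighted KW Bound of \cref{lemma:kw bound}: because $\omega = G^{-1}\sum_{(y,b)\in\cC}\rho(y,b)\phi(y,b)\hPVar(y,b) = \kwsum(\bone,\hPVar)$, the gap between $\phi^\top\omega$ and the population least-squares extension of $\PVar(v_\sigma)$ is at most $\sqrt{2d}\max_{\cC}|\hPVar-\PVar(v_\sigma)|$. Converting from the variance scale to the standard-deviation scale with the elementary inequality $|\sqrt{\max(a,0)}-\sqrt b|\le\sqrt{|a-b|}$ (valid for $b\ge0$), a variance accuracy of order $H$ becomes a $\tfrac12\sqrt H$ accuracy on $\hat\sigma$. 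Balancing $\sqrt{2d}\cdot H^2\sqrt{\log(\ucC/\delta)/M_\sigma}\lesssim H$ forces $M_\sigma=\widetilde{\cO}(dH^2)$, and since the algorithm draws $2M_\sigma$ samples at each of the $\widetilde{\cO}(d)$ core points, the total is $\widetilde{\cO}(d^2H^2\log(1/\delta))$ as claimed.

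The step I expect to be the main obstacle is the transport off the core set, because the conditional variance $\PVar(v_\sigma) = Pv_\sigma^2 - (Pv_\sigma)^2$ is \emph{not} an exactly linear function of $\phi$: under \cref{assumption:linear mdp} the term $Pv_\sigma^2$ is linear, but $(Pv_\sigma)^2 = (\phi^\top\theta_{v_\sigma})^2$ is a rank-one quadratic form, so the least-squares extension of $\PVar(v_\sigma)$ need not agree with $\PVar(v_\sigma)$ itself away from $\cC$. The crux is therefore to show this model-misspecification residual is dominated by the generous $\sqrt H$ slack in the statement, either by bounding the KW extension error of $(\phi^\top\theta_{v_\sigma})^2$ directly and absorbing it into the additive $\sqrt H\bone$, or by arranging the estimator so the nonlinear part is handled separately. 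A secondary bookkeeping point is to keep all high-probability events simultaneously valid — the event from \cref{informal theorem: sqrt H bound} that produces $v_\sigma$ and the fresh concentration event here — which I would manage with a single union bound and by drawing the variance samples independently of those used to learn $v_\sigma$.
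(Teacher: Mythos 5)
Your proposal follows essentially the same route as the paper's proof of \cref{theorem: evaluate error}: the same triangle inequality through $\sigma(v_\sigma)$ (the paper invokes \cref{lemma:variance decomposition} and \cref{lemma:popoviciu} to get the deterministic $\tfrac12\sqrt H$ term), the same unbiased pairwise-difference estimator, a conditional Azuma--Hoeffding bound with a union bound over the core set, the $f=\bone$ case of \cref{lemma:kw bound} to move off the core set, and the same variance-to-standard-deviation conversion (\cref{lemma:square gap bound}), with a variance accuracy of $\tfrac14 H$ yielding the required $\tfrac12\sqrt H$ on $\sqrt{\max(\phi^\top\omega,0)}$ and $M_\sigma=\widetilde{\cO}(dH^2)$ giving the stated sample count. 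The only divergence is the step you flag as the main obstacle: the paper bounds no misspecification residual at all, because it asserts that a parameter $\omega^*$ with $\phi^\top\omega^*=\PVar(v_\sigma)$ exists ``by \cref{assumption:linear mdp}'', so \cref{lemma:kw bound} is applied only to the martingale error $\hPVar-\PVar(v_\sigma)$ and the transport off the core set is exact. Your skepticism there is mathematically well founded --- under \cref{assumption:linear mdp} the term $Pv_\sigma^2$ is linear in $\phi$ but $(Pv_\sigma)^2$ is a quadratic form in $\phi$, so exact realizability of $\PVar(v_\sigma)$ does not obviously follow --- but relative to the paper's own argument this is the one place where your write-up is more cautious than the source rather than missing an idea the source supplies.
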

The formal theorem and proof are provided in \cref{sec:evaluate proof}.

\subsection{Put Everything Together}\label{subsec:proposal}

\looseness=-1
The proposed \vwlsmdvi algorithm consists of three steps: (1) executing \wlsmdvi with $f=\bone$, (2) performing \varianceestimate, and (3) executing \wlsmdvi again with the output from (2). 
The technical novelty of our theory lies in the ingenuity to run \wlsmdvi twice to use the TV technique, which was not seen in previous studies such as \citet{lattimore2020goodFeature} and \citet{kozuno2022kl}.
By combining these three steps, the \vwlsmdvi obtains an $\epsilon$-optimal policy within minimax optimal sample complexity.
\begin{theorem}[Sample complexity of \vwlsmdvi, informally]\label{informal theorem: sample complexity of vwls mdvi}
    When $\varepsilon \in (0, 1 / H]$ and $\alpha = \gamma$, 
    \vwlsmdvi outputs a sequence of policies $\pi'$ such that
    $
        \infnorm{\vf{*} - \vf{\pi'}}
        \leq
        \varepsilon
    $
    with probability at least $1 - \delta$, using 
    $\widetilde{\cO} \paren*{d^2H^3 \varepsilon^{-2} \log (1 / \delta)}$
    samples from the generative model.
\end{theorem}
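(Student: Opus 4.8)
The plan is to prove the theorem by composing the three guarantees already established for the three stages of \vwlsmdvi and chaining their high-probability events. First I would split the failure budget, allocating $\delta/3$ to each stage, so that a final union bound returns overall success probability at least $1-\delta$. \textbf{Stage 1.} Invoke \cref{informal theorem: sqrt H bound} with target accuracy $\varepsilon$, step size $\alpha=\gamma$, and weight $f=\bone$: on an event $\cA_1$ of probability at least $1-\delta/3$ this produces a value function $v_\sigma$ with $\infnorm{\vf{*}-v_\sigma}\le\tfrac12\sqrt{H}$, consuming $\widetilde{\cO}\paren*{d^2H^3\varepsilon^{-1}\log(1/\delta)}$ samples.

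\textbf{Stage 2.} Feed $v_\sigma$ into \varianceestimate. The hypothesis of \cref{informal theorem: evaluate error} is precisely the conclusion of Stage 1, so on $\cA_1$ there is an event $\cA_2$ of conditional probability at least $1-\delta/3$ on which the returned $\omega$ satisfies $\sigma(\vf{*})\le\sqrt{\max(\phi^\top\omega,\bzero)}+\sqrt{H}\bone\le\sigma(\vf{*})+2\sqrt{H}\bone$, using $\widetilde{\cO}\paren*{d^2H^2\log(1/\delta)}$ fresh samples. I then define the weighting function $f\df\sqrt{\max(\phi^\top\omega,\bzero)}+\sqrt{H}\bone$. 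Before using it I would check admissibility deterministically: $f\ge\sqrt{H}\bone>\bzero$ by construction and $f\le\sigma(\vf{*})+2\sqrt{H}\bone\le(H+2\sqrt{H})\bone$ since $\sigma(\vf{*})\le H\bone$ by \cref{lemma:popoviciu}, so $f$ is bounded and bounded away from zero and hence a valid input to \kwalgo and to the Stage-3 regression; moreover, on $\cA_1\cap\cA_2$ it satisfies the sandwich bound $\sigma(\vf{*})\le f\le\sigma(\vf{*})+2\sqrt{H}\bone$ demanded by the positive result.

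\textbf{Stage 3.} Conditioned on $\cA_1\cap\cA_2$, apply \cref{informal theorem: epsilon bound} with this admissible $f$ and fresh samples: on an event $\cA_3$ of conditional probability at least $1-\delta/3$ the returned policy $\pi'=\pi'_{\NIter}$ satisfies $\infnorm{\vf{*}-\vf{\pi'}}\le\varepsilon$, using $\widetilde{\cO}\paren*{d^2H^3\varepsilon^{-2}\log(1/\delta)}$ samples. On $\cA_1\cap\cA_2\cap\cA_3$, whose probability is at least $(1-\delta/3)^3\ge1-\delta$, the desired accuracy holds. Summing the three sample budgets and using $\varepsilon\le1/H$ (so $\varepsilon^{-1}\ge H$ and the Stage-3 term dominates both the $\varepsilon^{-1}$ term of Stage 1 and the $\varepsilon$-free term of Stage 2), the total is $\widetilde{\cO}\paren*{d^2H^3\varepsilon^{-2}\log(1/\delta)}$, as claimed.

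The main obstacle is the careful handling of conditioning and independence across stages: the weight $f$ supplied to Stage 3 is itself random, since it depends on the samples drawn in Stages 1 and 2, so the \emph{deterministic} hypotheses of \cref{informal theorem: evaluate error} and \cref{informal theorem: epsilon bound} hold only on the success events of the preceding stages. I would resolve this by drawing an independent sample pool for each stage and applying each theorem conditionally on the realized (admissible) input, exploiting that each guarantee is uniform over all inputs meeting its hypothesis; this makes the event-chaining and the product-of-probabilities union bound rigorous. A secondary, lighter point is verifying that $f$ lands inside the admissible window deterministically (the boundedness and strict positivity above), which guarantees the G-optimal design and weighted regression in Stage 3 are well-defined on every sample path.
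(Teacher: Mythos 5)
Your proposal is correct and follows essentially the same route as the paper's proof: chain \cref{informal theorem: sqrt H bound}, \cref{informal theorem: evaluate error}, and \cref{informal theorem: epsilon bound} with fresh samples per stage, handle the randomness of the learned weight by stating the later guarantees for random inputs whose hypothesis event holds with high probability, and observe that the Stage-3 budget dominates since $\varepsilon \le 1/H$. The only cosmetic difference is bookkeeping: the paper allocates failure probability via a constant $\pc=8$ with nested conditional bounds (and additionally clips the weight as $\min(\cdot, H\bone)$ per \cref{algo:vwlsmdvi}) rather than your $\delta/3$-per-stage product, but both yield $1-\delta$.
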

The formal theorem and proof are provided in \cref{sec:proof of vwls mdvi}, and the pseudocode of the algorithm is provided in \cref{algo:vwlsmdvi}.
The sample complexity of \vwlsmdvi matches the lower bound described by \citet{weisz2022confident} up to logarithmic factors as long as $\varepsilon$ is sufficiently small. 
This is the first algorithm that achieves nearly minimax sample complexity under inifinite-horizon linear MDPs.

\begin{algorithm}[t!]
    \caption{VWLS-MDVI $(\alpha, \NIter, \NP, \widetilde{\NIter}, \widetilde{\NP}, \NsigmaTwo)$}\label{algo:vwlsmdvi}
    \begin{algorithmic}
        \STATE {\bfseries Input:} {$\alpha \in [0, 1)$, $f : \XA \mapsto (0, \infty)$, $\NIter, \widetilde{\NIter} \in \N$, $\NP, \widetilde{\NP} \in \N$, $\NsigmaTwo \in \N$.}
        \STATE $v_\NIter, \_ = \wlsmdvi(\alpha, \bone, \NIter, \NP)$.
        \STATE $\omega = \varianceestimate(v_\NIter, \NsigmaTwo)$.
        \STATE $\tsigma = \min\paren*{\sqrt{\max(\phi^T\omega, \bzero)} + \sqrt{H}\bone, H\bone}$.
        \STATE $\_, \pi' = \wlsmdvi(\alpha, \tsigma, \widetilde{\NIter}, \widetilde{\NP})$.
        \STATE {\bfseries Return: } {$\pi'$}
    \end{algorithmic}
\end{algorithm}

\section{Deep Variance Weighting}\label{sec:practical algorithm}
\looseness=-1
Motivated on the theoretical observations, we propose a practical algorithm to re-weight the least-squares loss of value-based deep RL algorithms, called Deep Variance Weighting (\dvw).

\subsection{Weighted Loss Function for the $Q$-Network}\label{subsec:weighted Q loss}

As Munchausen DQN (M-DQN, \citet{vieillard2020munchausen}) is the effective deep extension of MDVI, we use it as our base algorithm to apply \dvw.
However, the proposed \dvw can be potentially applied to any DQN-like algorithms\footnote{
\citet{van2019use} stated that DQN may not be a completely model-free algorithm, which could potentially conflict with the model-free structure of \vwlsmdvi. 
Nevertheless, we do not consider such discrepancies from our theory to be problematic, as the primary aim of \dvw is to improve the popular algorithms rather than to validate the theoretical analysis.}.
We provide the pseudocode for the general case in \cref{algo:DVW general} and for online RL in \cref{appendix:missing algorithms}.

Similar to M-DQN, let $q_\theta$ be the $q$-network and $q_{\btheta}$ be its target $q$-network with parameters $\theta$ and $\btheta$, respectively. 
In this section, $\px$ denotes the next state sampled from $P(\cdot | x, a)$. $\widehat{\E}_\cB$ denotes the expectation over using samples $(x, a, r, x') \in \XA \times \R \times \X$ from some dataset $\cB$.
With a weighting function $f: \XA \to (0, \infty)$, we consider the following weighted version of M-DQN's loss function:
\begin{equation}\label{eq:weighted M-DQN loss}
    \begin{aligned}
        &\cL(\theta) = \widehat{\E}_\cB \left[\paren[\Big]{\frac{r_{\btheta}(x, a) + \gamma v_{\btheta}(\px) - q_\theta(x, a)}{f(x, a)}}^2\right]\,,
    \end{aligned}
\end{equation}
where $r_{\btheta} = r + \tau \log \pi_{\btheta}$, $\pi_{\btheta}(a | x) \propto \exp \paren*{\beta q_{\btheta}(x, a)}$, and $v_{\btheta}(\px) = \sum_{\pa \in \A} \pi_{\btheta}(\pa | \px) \paren*{q_{\btheta} (\px, \pa) - \frac{1}{\beta}\log \pi_{\btheta}(\pa|\px)}$.
\Cref{eq:weighted M-DQN loss} is equivalent to M-DQN when $f = \bone$.
Furthermore, when $\tau=\kappa=0$, we assume that $\tau \log \pi_{\btheta} = \frac{1}{\beta} \log \pi_{\btheta} = \bzero$ and $\sum_{\pa \in \A} \pi_{\btheta}(\pa | \px) q_{\btheta} (\px, \pa) = \max_{\pa \in \A} q_{\btheta} (\px, \pa)$.
This allows us to generalize \cref{eq:weighted M-DQN loss} to DQN's loss when $f=\bone$ and $\tau=\kappa=0$.

\looseness=-1
We update $\theta$ by stochastic gradient descent (SGD) with respect to $\cL(\theta)$. 
We replace $\btheta$ with $\theta$ for every $F$ iteration.

\begin{algorithm}[tb]
    \caption{DVW for (Munchausen-)DQN}\label{algo:DVW general}
    \begin{algorithmic}
    \STATE {\bfseries Input:} {$\theta$, $\omega$, $K \in \N$, $F \in \N$, $\cB$}
    \STATE Set $\btheta = \htheta = \theta$ and $\bomega = \omega$. $\eta = 1$.
    \FOR{$k=0$ {\bfseries to} $K-1$}
        \STATE Sample a random batch of transition $B_k \in \cB$. 
        \STATE On $B_k$, update $\omega$ by $\cL(\omega)$ of \eqref{eq:variance loss}.
        \STATE On $B_k$, update $\eta$ by $\cL(\eta)$ of \eqref{eq:weight scaler loss}.
        \STATE On $B_k$ and $f^{\text{DVW}}$ of \eqref{eq:practical weight function}, update $\theta$ by $\cL(\theta)$ of \eqref{eq:weighted M-DQN loss}.
        \IF{ $k \mod F = 0$}
            \STATE $\htheta \leftarrow \btheta$, $\btheta \leftarrow \theta$, $\bomega \leftarrow \omega$.
        \ENDIF
    \ENDFOR
    \STATE {\bfseries Return:} {A greedy policy with respect to $q_\theta$\;}
    \end{algorithmic}
\end{algorithm}

\subsection{Loss Function for the Variance Network}

\looseness=-1
Let $\PVar_\omega$ be the variance network with parameter $\omega$.
We also define $q_{\htheta}$ as the preceding target $q$-network to $q_{\btheta}$.
The parameter $\htheta$ of $q_{\htheta}$ is replaced with $\btheta$ for every $F$ iteration.

For sufficiently large $F$, we expect that $q_{\btheta}$ well approximates $q_{\btheta} \approx r_{\htheta} + \gamma P v_{\htheta}$.
Using this approximation and based on \varianceestimate, we construct the loss function for the variance network as
\begin{equation}\label{eq:variance loss}
    \begin{aligned}
        &\cL(\omega) = \widehat{\E}_\cB \left[ h\paren[\Big]{y^2 - \PVar_\omega (x, a)}\right]\,,
    \end{aligned}
\end{equation}
where $y = r_{\htheta}(x, a) + \gamma v_{\htheta}(\px) - q_{\btheta}(x, a)$.
Here, we use the Huber loss function $h$: for $x\in \R$, $h(x) = x^2$ if $x < 1$; otherwise, $h(x)=|x|$. 
This is to alleviate the issue with large $y^2$ in contrast to the L2 loss. 
We update $\omega$ by iterating SGD with respect to $\cL(\omega)$.

\subsection{Weighting Function Design}

\looseness=-1
According to \vwlsmdvi, the weighting function $f$ should be inversely proportional to the learned variance function with lower and upper thresholds.
Moreover, uniformly scaling $f$ with some constant variables does not affect the solution of weighted regression.
Therefore, we design the weighting function $f^{\text{DVW}}$ such that 
\begin{equation}\label{eq:practical weight function}
    \frac{1}{f^{\text{ DVW }}(x, a)^2} = \max\paren*{\frac{\eta}{\PVar_{\bomega}(x, a) + \overline{c}_f}, \underline{c}_f}\,,
\end{equation}
where $\eta \in (0, \infty)$ denotes a scaling constant, $\underline{c}_f$ and $\overline{c}_f \in (0, \infty)$ denote constants for the lower and upper thresholds, respectively.
Here, we use the frozen parameter $\bomega$, which is replaced with $\omega$ for every $F \in \N$ iteration, as we should use the weight learned via \cref{eq:variance loss}.

\looseness=-1
To further stabilize training, we automatically adjust $\eta$ so that $\widehat{\E}_\cB[f^{\text{ DVW }}(x, a)^{-2}] \approx 1$.
We adjust $\eta$ by SGD with respect to the following loss function: 
\begin{equation}\label{eq:weight scaler loss}
\cL(\eta) = \paren*{\widehat{\E}_\cB\brack*{
\frac{\eta}{\PVar_{\bomega}(x, a) + \overline{c}_f}}
 - 1}^2\,,
\end{equation}
where the term ${\eta} / \paren*{\PVar_{\bomega}(x, a) + \overline{c}_f}$ is the value inside the $\max$ of \cref{eq:practical weight function}. The $\max$ is removed to avoid zero gradient.
While the target value can be set to a value other than $1$, doing so would be equivalent to adjusting the learning rate in the standard SGD. 
To avoid introducing an unnecessary hyperparameter, we have fixed the target value to $1$.

\begin{figure}[tbh]
\vskip 0.2in
\centering
\includegraphics[width=\linewidth]{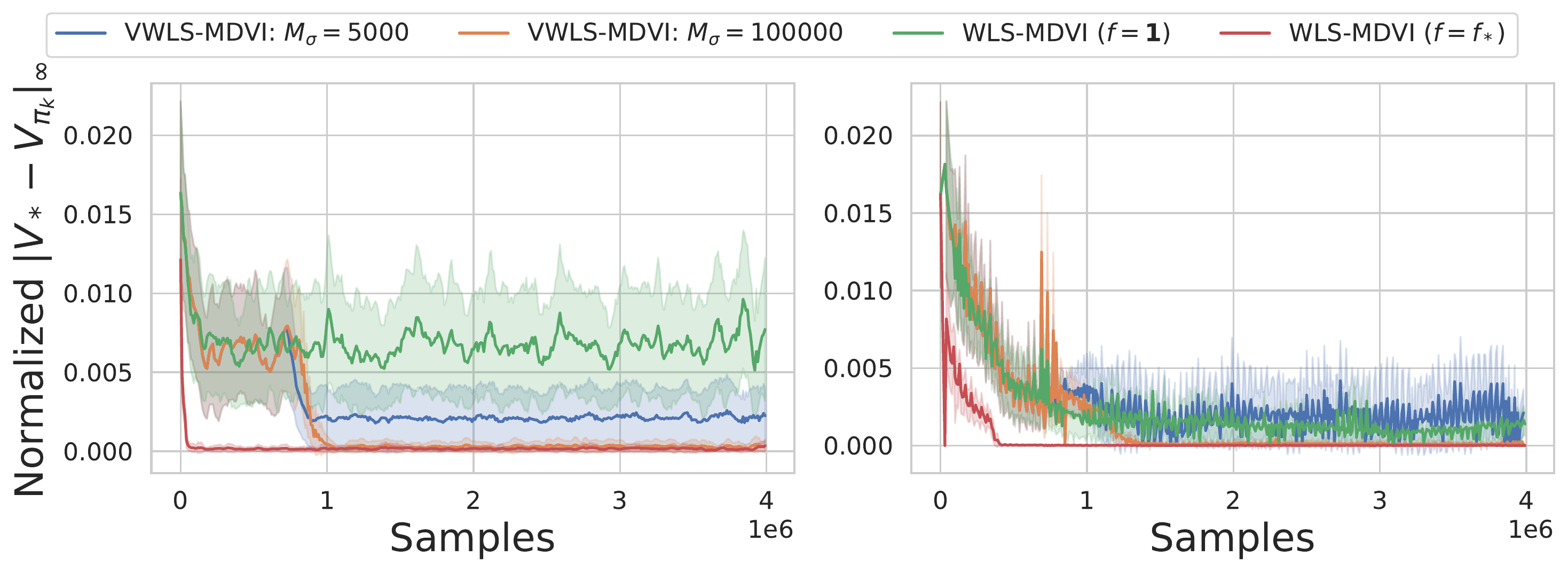}
\caption{Comparison of the normalized gaps. \vwlsmdvi switches to the second run of \wlsmdvi around $10^6$ samples. \textbf{Left:} $M=\widetilde{M}=100$ and \textbf{Right:} $M=\widetilde{M}=1000$.}
\label{fig:hard MDP experiment}
\vskip -0.2in
\end{figure}

\section{Experiments}\label{sec:experiments}

This section reports the experimental sample efficiency of the proposed \vwlsmdvi and deep RL with \dvw.

\looseness=-1
\subsection{Linear MDP Experiments}\label{subsec:linear mdp experiment}
To empirically validate the negative and positive claims made in \cref{subsec:sample complexity of wlsmdvi} and demonstrate the sample efficiency of \vwlsmdvi, we compare \vwlsmdvi to \wlsmdvi with two different weighting functions: $f=\bone$ and $f = f_*$, where $f_*\df \min\paren{\sigma(v_*) + \sqrt{H}\bone, H\bone}$ is the oracle weighting function from \cref{informal theorem: epsilon bound}.
The evaluation is conducted on randomly generated hard linear MDPs that are based on {\bf Theorem H.3} in \citet{weisz2022confident}. 
For simplicity, all algorithms use the last policy for evaluation. 
Specifically, for the $k \in [K]$ th iteration to update the parameter $\theta$, we report the normalized optimality gap $\infnorm{v_* - v_{\pi_k}} / \infnorm{v_*}$ in terms of the total number of samples used so far.
We normalize the gap by $\infnorm{v_*}$ as the maximum gap can vary depending on the MDPs.

\looseness=-1
\cref{fig:hard MDP experiment} compares algorithms under $M=\widetilde{M}=100$ (Left) and $M=\widetilde{M}=1000$ (Right).
The results are averaged over $300$ random MDPs. 
For WLS-MDVI ($f=1$), increasing $M$ from 100 to 1000 results in a smaller optimality gap, which is expected due to the increase in the number of samples. 
On the other hand, WLS-MDVI ($f=f_*$) achieves a gap very close to $0$ even with $M=100$, demonstrating the effectiveness of variance-weighted regression in improving sample efficiency, as claimed in \cref{subsec:sample complexity of wlsmdvi}.
Similarly, it is observed that the VWLS-MDVI ($M_\sigma=100000$) achieves a smaller gap with much fewer samples than that of WLS-MDVI. 
However, the gap of VWLS-MDVI ($M_\sigma=5000$) does not reach that of $f=f_*$. This suggests that the accuracy of the \varianceestimate is important for guaranteeing good performance.
Further experimental details are provided in \cref{subsec:linear mdp details}.

\begin{figure}[tb]
\vskip 0.2in
\centering
\includegraphics[width=\linewidth]{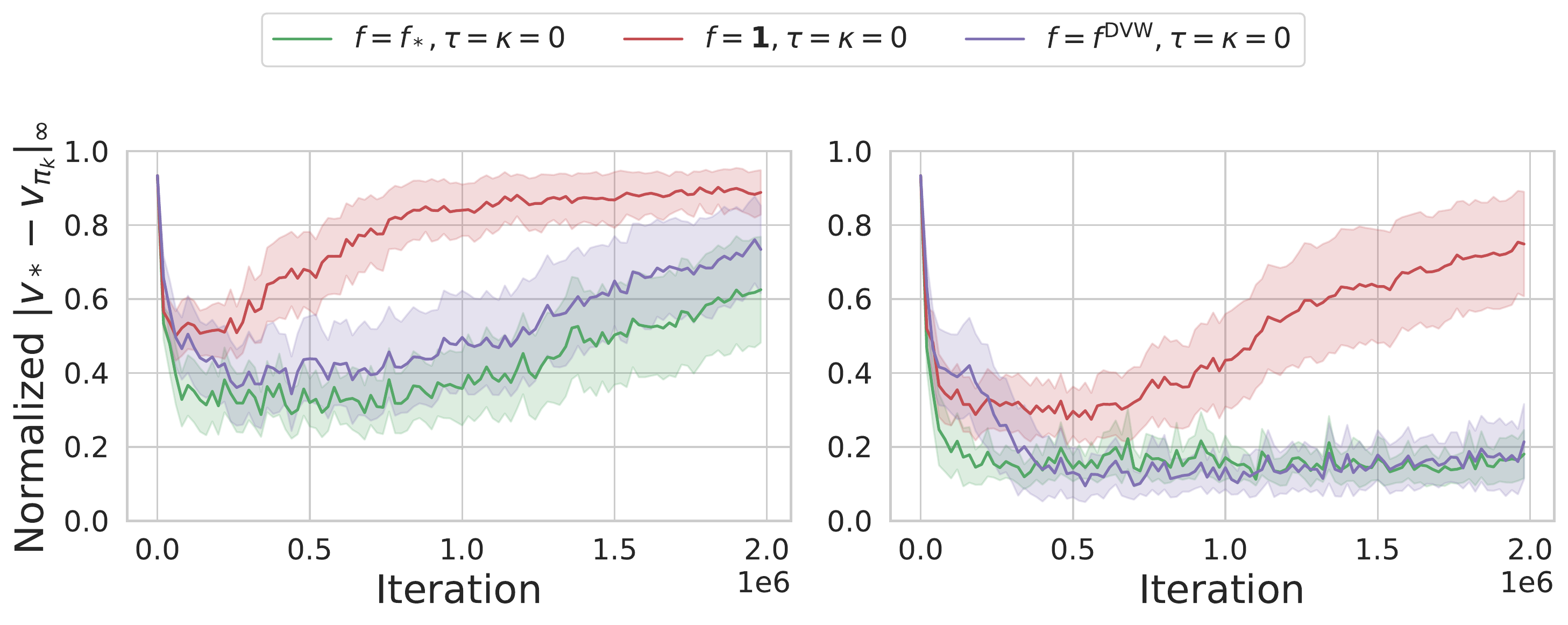}
\includegraphics[width=\linewidth]{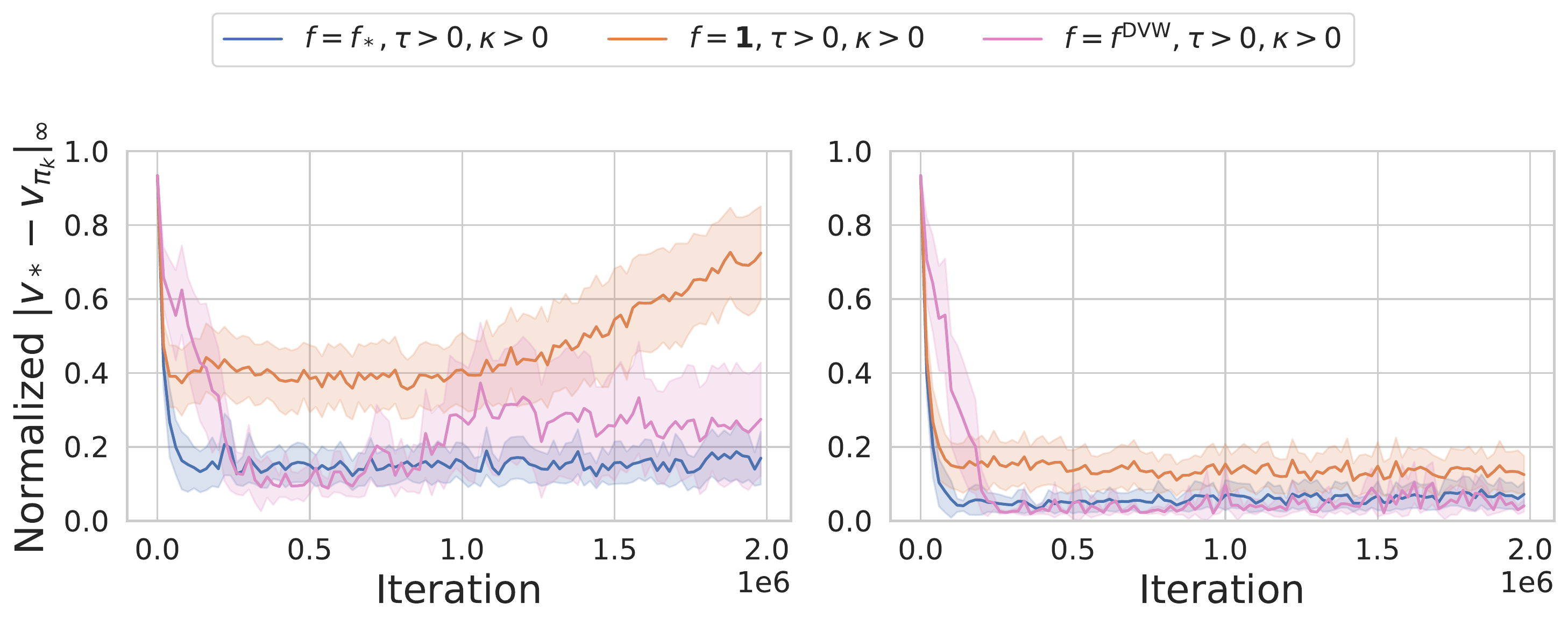}
\caption{Comparison of the normalized gaps.  \textbf{Top Row:} $\tau=\kappa=0$ and \textbf{Bottom Row:} $\tau> 0, \kappa > 0$. \textbf{Left Column:} $M=3$ and \textbf{Right Column:} $M=10$.}
\label{fig:full-state-action gridworld}
\vskip -0.2in
\end{figure}

\begin{figure*}[tb]
\vskip 0.2in
\centering
\includegraphics[width=0.3\linewidth]{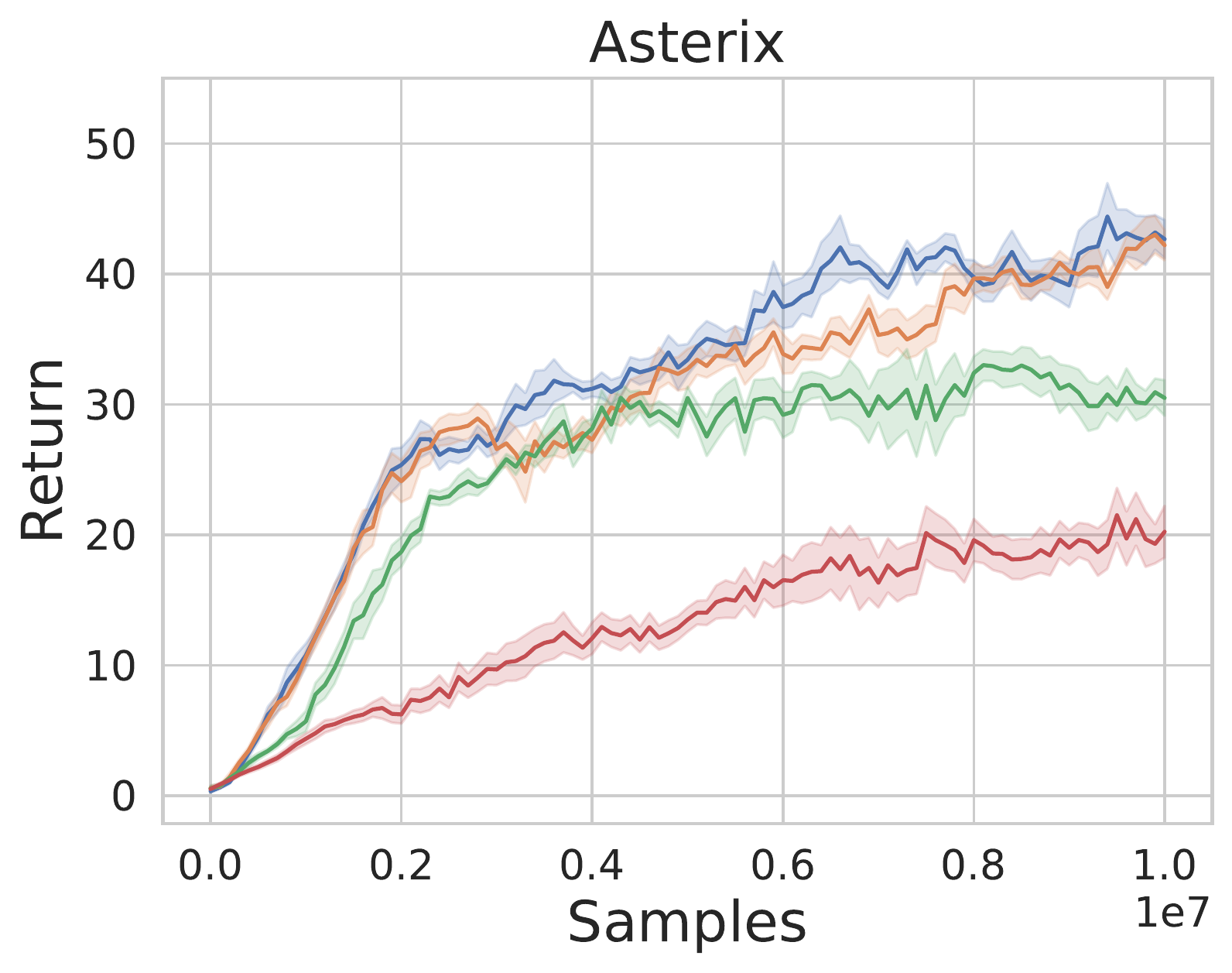}\quad\quad
\includegraphics[width=0.3\linewidth]{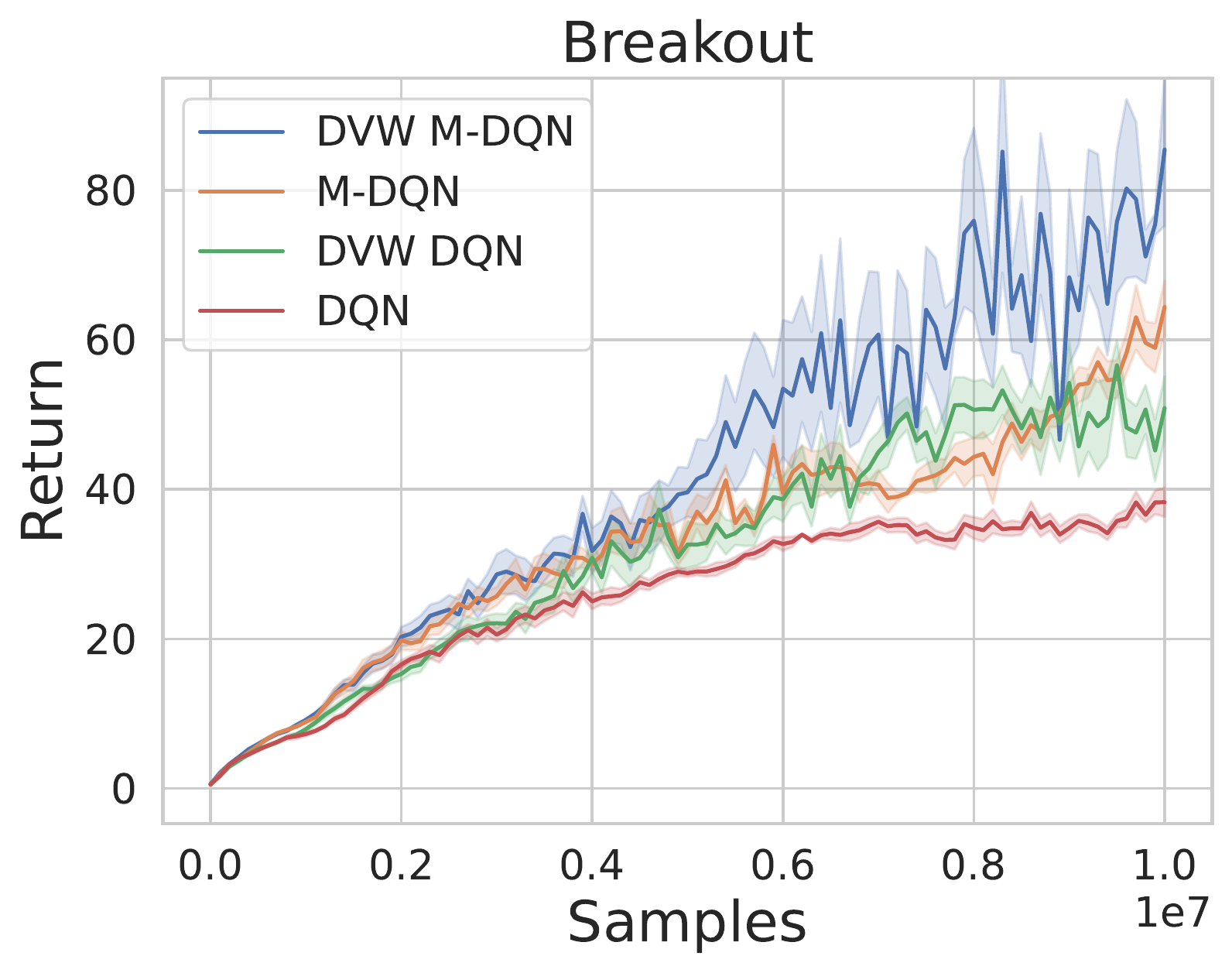}\quad\quad
\includegraphics[width=0.3\linewidth]{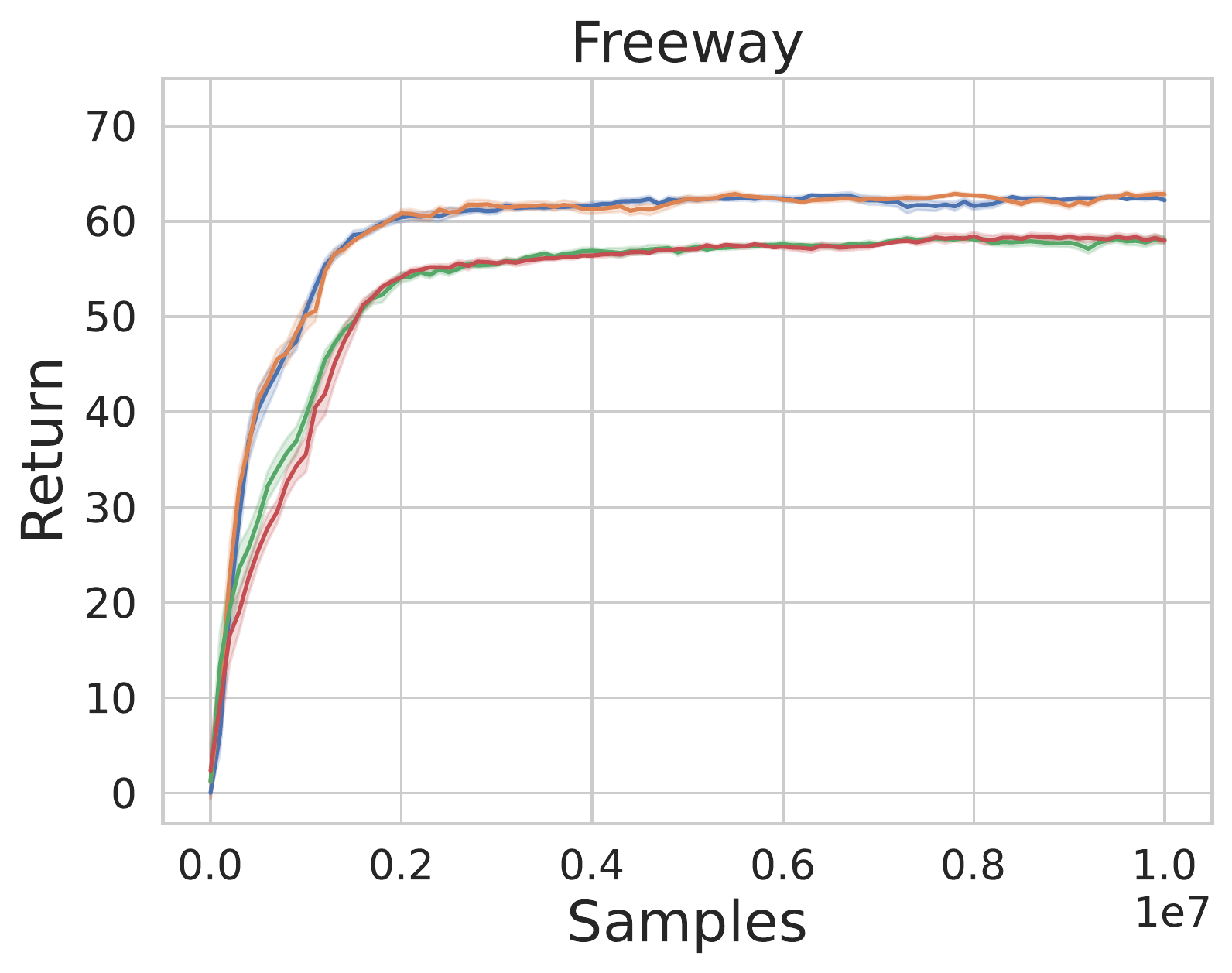}
\medskip
\includegraphics[width=0.3\linewidth]{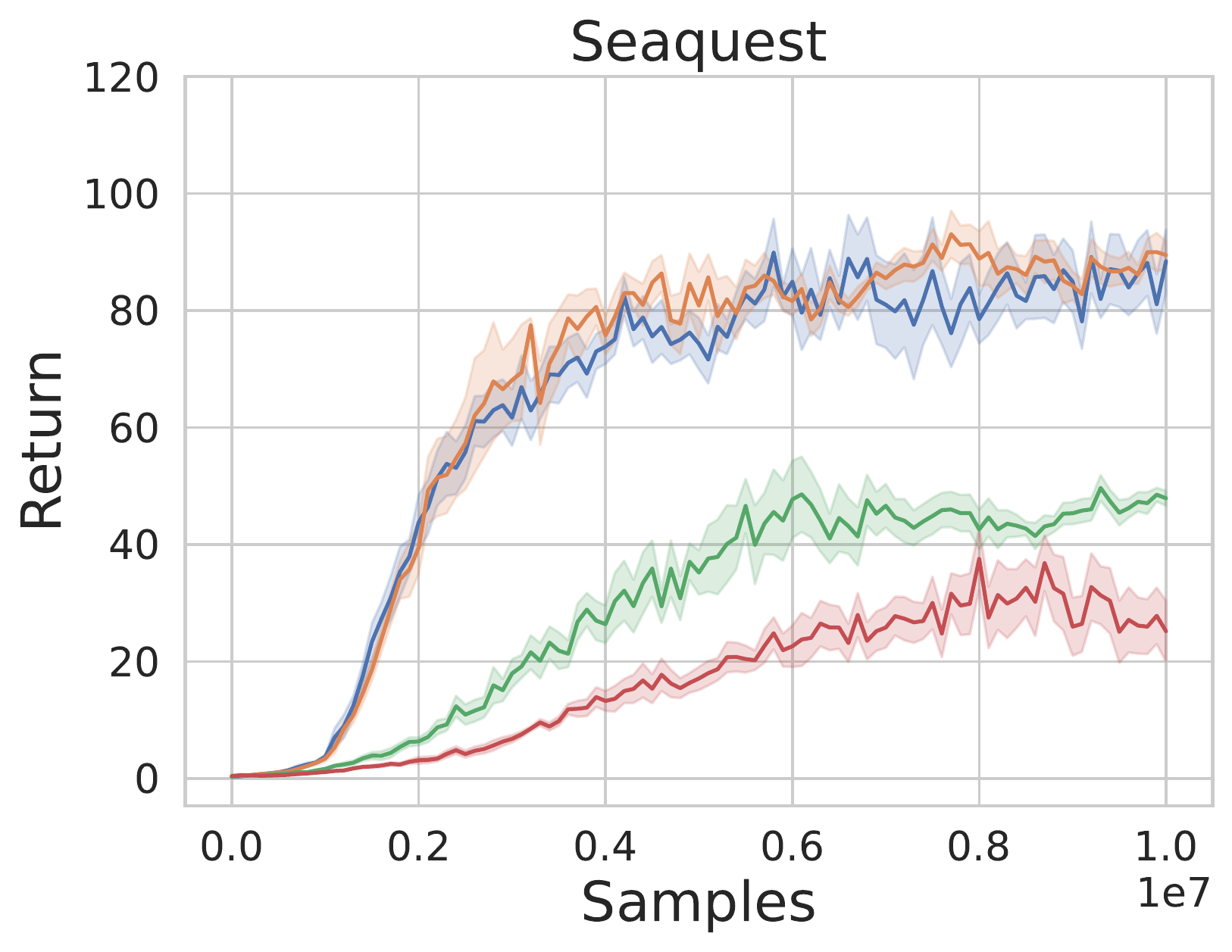}\quad\quad
\includegraphics[width=0.3\linewidth]{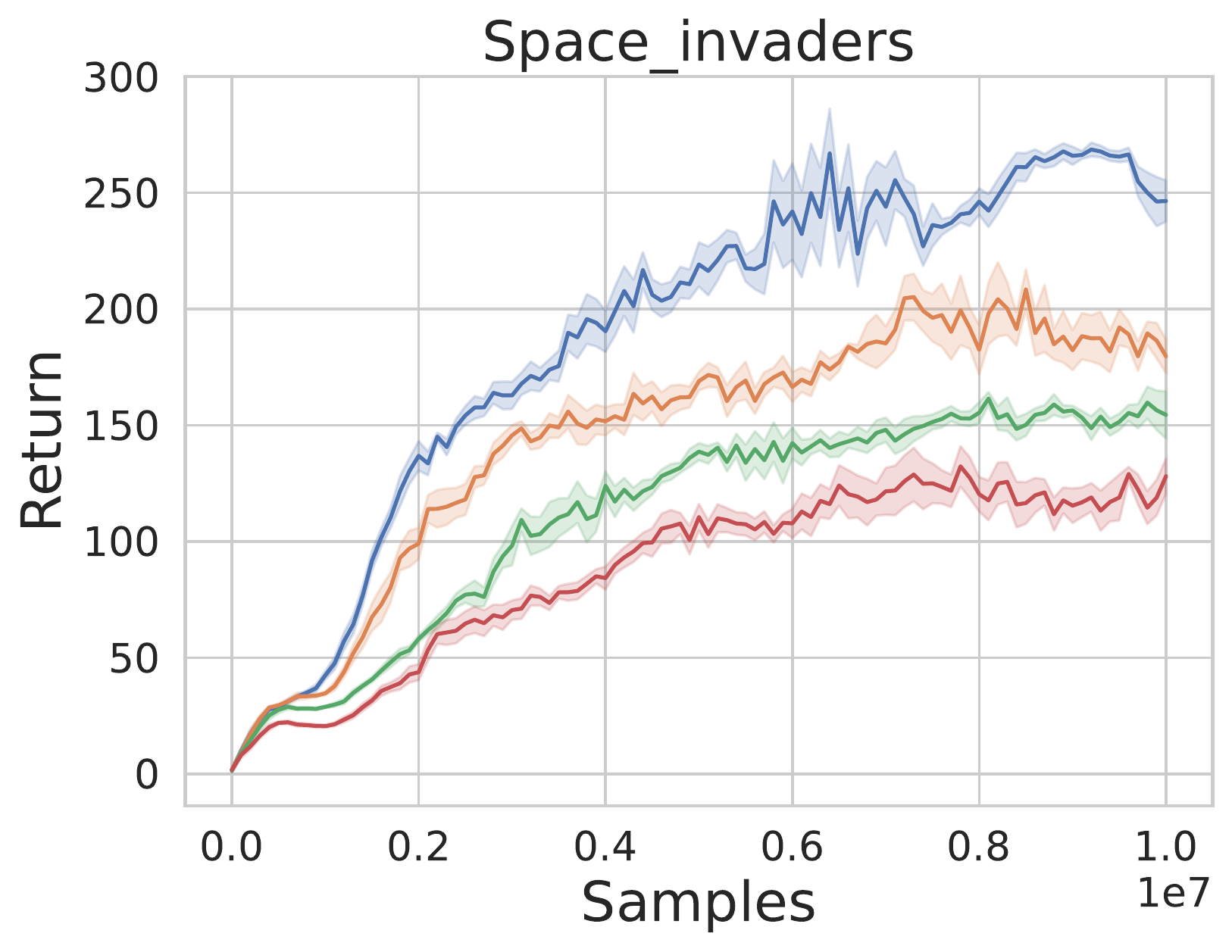}
\caption{Comparison of returns on MinAtar benchmarks. We report the return of the greedy policy with respect to $q_\theta$ for each algorithm.}
\label{fig:minatar}
\vskip -0.2in
\end{figure*}

\subsection{Deep RL Experiments}
We perform two deep RL experiments to evaluate the effectiveness of \dvw: one to compare \dvw with the oracle weighting function of \cref{informal theorem: epsilon bound}, and another to demonstrate the effectiveness of \dvw to online deep RL.
The details of the experiments are provided in \cref{subsec:gridworld experiment}.

\subsubsection{Comparison of $f = f^{\text{DVW}}$ with $f \approx \sigma(v_*)$}\label{subsec:gridworld experiment}

\looseness=-1
To investigate the effectiveness of \dvw, %
we evaluate the behavior of M-DQN with weighted regression \eqref{eq:weighted M-DQN loss} under three weighting functions: the oracle weighting ($f=f_*$), the uniform weighting ($f=\bone$), and the \dvw weighting ($f=f^{\text{DVW}}$).
Furthermore, for the purpose of ablation study, we compare the algorithms with and without regularization ($\tau > 0, \kappa > 0$ vs $\tau=0, \kappa=0$).
To remove the challenge of exploration for didactic purposes, we use a dataset $\cB$, which is constructed by pairs of $(x, a, r, x')$ for the entire state-action space with $M$ next-state samples. In other words, $\cB$ is a dataset of size $MXA$.

\looseness=-1
We evaluate them in randomly generated environments where we can compute oracle values. 
Specifically, we use a modified version of the gridworld environment described by \citet{fu2019diagnosing}.
For the $k$ th iteration to update the $q$-networks, we evaluate the normalized optimality gap averaged over $20$ environments and $3$ random seeds for each.

\looseness=-1
\cref{fig:full-state-action gridworld} compares algorithms under $M=3$ (Left Column) and $M=10$ (Right Column).
In both cases, \dvw consistently achieves a smaller gap compared to $f=\bone$, and moreover, the gap of \dvw is comparable to that of the oracle weighting $f=f_*$. 
In addition, the gap is smaller when $\tau > 0, \kappa > 0$ compared to when $\tau = \kappa = 0$. 
It can be inferred that \dvw weighting and KL-entropy regularization contribute to improving sample efficiency, and that performance is significantly improved when both are present. 

\subsubsection{DVW for Online RL}\label{subsec:minatar experiment}

\looseness=-1
We evaluate the effectiveness of DVW using a set of the challenging benchmarks for online RL.
Similar to \cref{subsec:gridworld experiment}, we evaluate four algorithms that varied with and without \dvw (DVW vs N/A), and with and without regularization (M-DQN vs DQN).
We compare their performance on the MinAtar environment \citep{young2019minatar}, which possesses high-dimensional features and more challenging exploration than \cref{subsec:gridworld experiment}, while facilitating fast training.
For a fair comparison, the algorithms use the same network architecture and same epsilon-greedy exploration strategy.
Each algorithm is executed five times with different random seeds for each environment.

\looseness=-1
\cref{fig:minatar} shows the average returns of the algorithms.
We observe that \dvw improves the performance of M-DQN and DQN in almost all the environments.
Although our theory does not cover online RL, this experiment suggests that the extension of \dvw to wider problem settings is effective.

\section{Conclusion}\label{sec:conclusion}

In this study, we proposed both a theoretical algorithm, i.e., \vwlsmdvi, and a practical algorithm, i.e., \dvw. 
\vwlsmdvi achieved the first-ever nearly minimax optimal sample complexity in infinite-horizon Linear MDPs by utilizing the combination of KL-entropy regularization and variance-weighted regression.
We extended our theoretical observations and developed the \dvw algorithm, which re-weights the least-squares loss of value-based RL algorithms using the estimated variance of the value function.
Our experiments demonstrated that \dvw effectively helps improve the performance of value-based deep RL algorithms.

\section*{Acknowledgements}
Csaba Szepesv\'ari gratefully acknowledges the funding from Natural Sciences and Engineering Research Council (NSERC) of Canada and the Canada CIFAR AI Chairs Program for Amii.

\bibliography{refs}
\bibliographystyle{icml2023}

\newpage
\appendix
\onecolumn
\section*{Contents}
{%
\begin{itemize}[topsep=2pt,partopsep=2pt,itemsep=4pt,parsep=4pt]
    \item \cref{app:notations} lists notations for the theoretical analysis and their meaning;
    \item \cref{sec:equivalence proof} proves the MDVI transformation stated in \cref{subsubsec:technique to minimax};
    \item \cref{sec:auxiliary} provides auxiliary lemmas necessary for proofs;
    \item \cref{app:total variance} provides the formal theorem and the proofs of the total variance technique;
    \item \cref{appendix:proof of KW} provides the proof of the existence of a small core set for a compact set (\cref{theorem:KW});
    \item \cref{sec:proof of weighted kw bound} provides the proof of the weighted Kiefer Wolfowitz bound (\cref{lemma:kw bound});
    \item \cref{sec:update proofs} provides the formal theorems and the proofs for sample complexity of \wlsmdvi (\cref{informal theorem: epsilon bound} and \cref{informal theorem: sqrt H bound})
    \item \cref{sec:evaluate proof} provides the formal theorem and the proof for sample complexity of \varianceestimate (\cref{informal theorem: evaluate error})
    \item \cref{sec:proof of vwls mdvi} provides the formal theorem and the proof for sample complexity of \vwlsmdvi (\cref{informal theorem: sample complexity of vwls mdvi});
    \item \cref{appendix:missing algorithms} provides the pseudocode of algorithms missed in the main pages;
    \item \cref{sec:experiment details} provides the details of experiments stated in \cref{sec:experiments}.
\end{itemize}
}

\section{Notations for Theoretical Analysis}
\label{app:notations}

\begin{table}[H]
	\centering
	\caption{Table of Notations for Theoretical Analysis}
	\begin{tabular}{@{}l|l@{}}
		\toprule
		\thead{Notation} & \thead{Meaning} \\ \midrule
	$\A$, $\X$ & action space of size $A$,  state space\\
	$\gamma$, $H$ & discount factor in $[0, 1)$ and effective horizon $H \df 1 / (1 - \gamma)$\\
	$\phi$, $d$ & feature map of a linear MDP and its dimension (\cref{assumption:linear mdp})\\
	$r$ & reward function bounded by $1$\\
	$P$, $P_\pi$ & transition kernel, $P_\pi \df P \pi$\\
        $\cF_v$, $\cF_q$ & the sets of all bounded Borel-measurable functions over $\X$ and $\XA$, respectively \\
        \midrule
	$\pi_k'$ & a non-stationary policy that follows $\pi_k, \pi_{k-1}, \ldots$ sequentially (\cref{subsec:tabular MDVI})\\
	$P^i_j$, $P_*^i$ & $P^i_j \df P_{\pi_i} P_{\pi_{i-1}} \cdots P_{\pi_{j+1}} P_{\pi_j}$ and $P_*^i \df (P_{\pi_*})^i$ (\cref{subsec:tabular MDVI})\\
        $T_{\pi}$, $T^i_j$ & Bellman operator for a policy $\pi$, $T^i_j \df T_{\pi_i} T_{\pi_{i-1}} \cdots T_{\pi_{j+1}} T_{\pi_j}$ (\cref{subsec:tabular MDVI})\\
        $\vf{\pi'_k}$ & value function of $\pi'_k$; $\vf{\pi'_k} = \pi_k T_{\pi_{k-1}} \cdots T_{\pi_1} \qf{\pi_0}$. \\
	\midrule
	$\varepsilon$, $\delta$ & admissible suboptimality, admissible failure probability\\
	\midrule
	$\varepsilon_k$ & $\varepsilon_k: (x, a) \mapsto \gamma \widehat{P}_{k-1}(M) v_{k-1} (x, a) - \gamma P v_{k-1} (x, a)$ in \wlsmdvi \\
	$E_k$ & $E_k : (x, a) \mapsto \sum_{j=1}^k \alpha^{k-j} \varepsilon_j (x, a)$\\
        \midrule
        $f$ & a bounded positive weighting function over $\XA$ \\
        $\rhof$ & a design over $\XA$ \\
        $\cCf$, $\ucC$ & core set, $\ucC\df4d\log\log (d + 4) + 28$ (\cref{subsec:LinearMDP})\\
        $\Gf$  & design matrix with respect to $f$, $\phi$, and $\rhof$ (\cref{theorem:weighted KW})\\
        $\uf, \lf$ & $\uf \df \max_{(x, a)\in \XA} f(x, a)$, $\lf \df \min_{(x, a)\in \XA} f(x, a)$ (\cref{sec:update proofs})\\
	$\kwsum(f_1, f_2)$ & solution of a weighted least-squares estimation (\cref{lemma:kw bound})\\
	\midrule
        $\widehat{P}_k$ & $\widehat{P}_k (M) v_k: (x, a) \mapsto \frac{1}{M}\sum_{m=1}^{M} v_k (y_{k, m, x, a})$ (\cref{subsec:tabular MDVI})\\
        $\theta_k$, $\btheta_k$ & parameter of $q_k$ in \wlsmdvi ($q_k = \phi^\top  \theta_k$), $\btheta_{k} = \theta_{k} + \alpha \btheta_{k-1} = \sum^{k}_{j=0} \alpha^{k-j}\theta_j$ \\
        $\theta_k^*$, $\btheta_k^*$ & parameter that satisfies  $\phi^\top  \theta^*_{k} = r + \gamma Pv_{k-1}$, ${\btheta}^*_k = \sum^{k}_{j=1} \alpha^{k-j}\theta^*_j$ (\cref{sec:update proofs})\\
 	$s_k$, $v_k$, $w_k$ & $s_k \df  \phi^\top (x, a) {\btheta}_{k}$, $v_k \df w_k - \alpha w_{k-1}$, $w_k (x) \df \max_{a \in \A} s_k (x, a)$ (\wlsmdvi) \\
        $\alpha, \beta$ & weights for MDVI updates $\alpha \df \tau / (\tau + \kappa)$, $\beta \df 1 / (\tau + \kappa)$ (\cref{subsec:tabular MDVI})\\
 	$K, M$ & the number of iterations and the number of samples from the generative model in \wlsmdvi \\
        \midrule
        $\hPVar$ & $\hPVar(x, a) =  \frac{1}{2M}\sum_{m=1}^{M} \paren[\Big]{v_\sigma (y_{m, x, a}) - v_\sigma(z_{m, x, a})}^2$ (\cref{subsec:variance estimation}) \\
        $M_\sigma$ & number of samples from the generative model in \varianceestimate \\
        $v_\sigma$ & the input value function to \varianceestimate \\
        $\omega$ & parameter for \varianceestimate \\
        \midrule
	$A_{k}, A_\infty, A_{\gamma, k}$ & $\sum_{j=0}^{k-1} \alpha^j$, $\sum_{j=0}^{\infty} \alpha^j$, $\sum_{j=0}^{k-1} \alpha^j \gamma^{k-j}$\\
	$\bF_{k, m}$ & $\sigma$-algebra in the filtration for \wlsmdvi (\cref{sec:update proofs})\\
	$\bF_{m}$ & $\sigma$-algebra in the filtration for \varianceestimate (\cref{sec:evaluate proof})\\
        $\iota_1$, $\iota_{2, n}$ & $\iota_1 = \log (2 \pc \ucC K \delta)$, $\iota_{2, n} = \log (2 \pc^2 \ucC K / (\pc - n) \delta)$ for $n \in \N$ (\cref{sec:update proofs}) \\
       $\xi_{2, n}$ & $\xi_{2, n} = \iota_{2, n} + \log \log_2 (16KH^2)$ (\cref{sec:update proofs}) \\
	$\square$ & an indefinite constant independent of $H$, $\aX$, $\aA$, $\varepsilon$, and $\delta$ (\cref{sec:update proofs})\\
        \midrule
 	$\funcE$ & event of $f$ close to $\sigma(v_*)$ \\
 	$\vboundE$ & event of $v_k$ bound for all $k$ \\
	$\EkboundE$ & event of small $E_k$ for all $k$ (not variance-aware)\\
	$\epskboundE$ & event of small $\varepsilon_k$ for all $k$ (not variance-aware)\\
	$\EkrboundE$ & event of small $E_k$ for all $k$ (variance-aware)\\
	$\phiqboundE$ & event of $v_\sigma$ close to $v_*$ (\cref{sec:evaluate proof})\\
	$\phisigmaboundE$ & event of learned $\phi^\top \omega$ close to $\sigma(v_*)$ (\cref{sec:evaluate proof})\\
	\bottomrule
	\end{tabular}
\end{table}

\section{Equivalence of MDVI Update Rules \citep{kozuno2022kl}}\label{sec:equivalence proof}

We show the equivalence of MDVI's updates \eqref{eq:MDVI update} to those used in \tmdvi.
The following transformation is identical to that of \citet{kozuno2022kl} but is included here for completeness.
We first recall MDVI's updates \eqref{eq:MDVI update}:
\begin{gather*}
    q_{k+1} = r + \gamma \widehat{P}_k \pi_k \paren*{
        q_k - \tau \log \frac{\pi_k}{\pi_{k-1}} - \kappa \log \pi_k
    }\,,
\end{gather*}
\begin{align*}
    \text{where }
    \pi_k \parenc*{\cdot}{x}
    =
    \argmax_{p \in \Delta(\A)}
    \sum_{a \in \A} p (a) \paren*{
        q_k (x, a) - \tau \log \frac{p (a)}{\pi_{k-1} \parenc*{a}{x}} - \kappa \log p (a)
    }
    \text{ for all } x \in \X\,,
\end{align*}

The policy update can be rewritten in a closed-form solution as follows (e.g., Equation~(5) of \citet{kozuno2019theoretical}):
\begin{gather*}
    \pi_k (a|x)
    =
    \frac{
        \pi_{k-1} (a|x)^\alpha \exp \paren*{ \beta q_k (x, a) }
    }{
        \sum_{b \in \A} \pi_{k-1} (b|x)^\alpha \exp \paren*{ \beta q_k (x, b) }
    }\,,
\end{gather*}
where $\alpha \df \tau / (\tau + \kappa)$, and $\beta \df 1 / (\tau + \kappa)$.
It can be further rewritten as, defining $s_k = q_k + \alpha s_{k-1}$,
\begin{gather*}
    \pi_k (a|x)
    =
    \frac{
        \exp \paren*{ \beta s_k (x, a) }
    }{
        \sum_{b \in \A} \exp \paren*{ \beta s_k (x, b) }
    }\,.
\end{gather*}
Plugging in this policy expression to $v_k$, we deduce that
\begin{align*}
    v_k (x)
    &=
    \frac{1}{\beta} \log \sum_{a \in \A} \exp \paren*{ \beta q_k (x, a) + \alpha\log \pi_{k-1} (a|x) }
    \\
    &=
    \frac{1}{\beta} \log \sum_{a \in \A} \exp \paren*{ \beta s_k (x, a) }
    -
    \frac{\alpha}{\beta} \log \sum_{a \in \A} \exp \paren*{ \beta s_{k-1} (x, a) }\,.
\end{align*}
\citet[Appendix~B]{kozuno2019theoretical} show that when $\beta \to \infty$,
$
    v_k (x)
    =
    w_k (x) - \alpha w_{k-1} (x)\,.
$
Furthermore, the Boltzmann policy becomes a greedy policy.
Accordingly, the update rules used in \tmdvi is a limit case of the original MDVI updates.
\section{Auxiliary Lemmas}\label{sec:auxiliary}

In this appendix, we prove some auxiliary lemmas used in the proof.
Some of the lemmas are identical to those of \citet{kozuno2022kl} but are included here for completeness.

\begin{lemma}\label{lemma:cond prob inequality}
    For any events $A$ and $B$, $\P (A \cap B) \geq \P(B) - \P(A^c | B)$.
\end{lemma}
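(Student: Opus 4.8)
The plan is to prove this by partitioning the event $B$ according to whether $A$ occurs and then exploiting the elementary bound $\P(B) \leq 1$. Since the statement involves the conditional probability $\P(A^c \mid B)$, I first implicitly assume $\P(B) > 0$ so that this quantity is well defined; the degenerate case $\P(B) = 0$ is excluded by the notation.

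First I would write $B$ as the disjoint union $B = (A \cap B) \cup (A^c \cap B)$, which holds because every outcome in $B$ either lies in $A$ or in $A^c$. By finite additivity of $\P$ over disjoint sets, this gives $\P(B) = \P(A \cap B) + \P(A^c \cap B)$, and hence
\begin{equation*}
\P(A \cap B) = \P(B) - \P(A^c \cap B)\,.
\end{equation*}

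Next I would rewrite $\P(A^c \cap B)$ using the definition of conditional probability, namely $\P(A^c \cap B) = \P(A^c \mid B)\,\P(B)$. Since $\P(B) \leq 1$ and $\P(A^c \mid B) \geq 0$, we have $\P(A^c \mid B)\,\P(B) \leq \P(A^c \mid B)$. Substituting this bound into the previous display yields
\begin{equation*}
\P(A \cap B) = \P(B) - \P(A^c \mid B)\,\P(B) \geq \P(B) - \P(A^c \mid B)\,,
\end{equation*}
which is the claimed inequality.

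I expect no genuine obstacle here: the statement is an elementary probability identity, and the only non-tautological step is discarding the factor $\P(B) \leq 1$ in the final line. The reason for stating it in this asymmetric form is presumably its downstream use, where $B$ will be a high-probability ``good event'' conditioned upon and $A$ a desired event whose probability one wishes to lower bound; this lemma then lets one combine an unconditional bound on $\P(B)$ with a conditional bound on the failure probability $\P(A^c \mid B)$.
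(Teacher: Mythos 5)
Your proof is correct and follows essentially the same route as the paper's: both reduce the claim to $\P(A\cap B)\geq \P(B)-\P(A^c\cap B)$ and then apply the identical final step $\P(A^c\cap B)=\P(A^c\mid B)\,\P(B)\leq \P(A^c\mid B)$. The only cosmetic difference is that you obtain the intermediate relation as an exact identity via additivity of $\P$ over the partition $B=(A\cap B)\cup(A^c\cap B)$, whereas the paper derives it as an inequality through a complement/union computation.
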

\begin{proof}
    $\P(A \cap B) = \P ( (A \cup B^c) \cap B)
        \geq 1 - \P (A^c \cap B) - \P (B^c) = \P(B) - \P(A^c \cap B)\,.
    $
    The claim holds by $\P (A^c \cap B) = \P (A^c | B) \P (B) \leq \P (A^c | B)$. 
\end{proof}
   
\begin{lemma}\label{lemma:sqrt inequality}
    For any positive real values $a$ and $b$, $\sqrt{a + b} \leq \sqrt{a} + \sqrt{b}$.
\end{lemma}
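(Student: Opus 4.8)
The plan is to reduce the inequality to an obviously true statement by squaring both sides. Since $a$ and $b$ are positive, both $\sqrt{a+b}$ and $\sqrt{a}+\sqrt{b}$ are non-negative, and for non-negative reals the map $t \mapsto t^2$ is monotone increasing; hence the claimed inequality $\sqrt{a+b} \leq \sqrt{a}+\sqrt{b}$ holds if and only if the inequality obtained by squaring both sides holds. First I would expand the square of the right-hand side to obtain $(\sqrt{a}+\sqrt{b})^2 = a + 2\sqrt{ab} + b$, while the square of the left-hand side is simply $a+b$. Comparing the two, the desired inequality becomes $a+b \leq a+b+2\sqrt{ab}$, which after cancelling the common terms is equivalent to $0 \leq 2\sqrt{ab}$.

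This last inequality is immediate: since $a, b > 0$, the product $ab$ is positive, so $\sqrt{ab} \geq 0$ and therefore $2\sqrt{ab} \geq 0$. Tracing the chain of equivalences back to the original statement then establishes $\sqrt{a+b} \leq \sqrt{a}+\sqrt{b}$, as required. There is essentially no obstacle here; the only point that warrants even minimal care is the justification that squaring preserves the direction of the inequality, and this is precisely where the non-negativity of both sides (guaranteed by the positivity of $a$ and $b$) is used.
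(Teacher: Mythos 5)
Your proof is correct and follows essentially the same route as the paper, which simply notes that $a+b \leq a + 2\sqrt{ab} + b = (\sqrt{a}+\sqrt{b})^2$ and takes square roots. Your additional remark about the monotonicity of squaring on non-negative reals is the (implicit) justification the paper omits.
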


\begin{proof}
    Indeed, $a + b \leq a + 2 \sqrt{ab} + b = (\sqrt{a} + \sqrt{b})^2$.
\end{proof}

\begin{lemma}\label{lemma:square gap bound}
    Let $a$, $b$, and $c$ are positive real values. If $|a^2 - b^2| \leq c^2$, then $|a - b| \leq c$.
\end{lemma}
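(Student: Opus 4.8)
The plan is to exploit the difference-of-squares factorization $a^2 - b^2 = (a-b)(a+b)$ together with the positivity of $a$ and $b$. Since the statement is symmetric under interchanging $a$ and $b$, I would first assume without loss of generality that $a \geq b$, so that $|a-b| = a-b \geq 0$ and $|a^2 - b^2| = a^2 - b^2 \geq 0$. This lets me drop the absolute values and work with signed quantities throughout.

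The key step is then the chain $(a-b)^2 \leq (a-b)(a+b) = a^2 - b^2 \leq c^2$. The first inequality holds because $a-b \geq 0$ and, since $b > 0$, we have $a - b \leq a + b$; multiplying the nonnegative factor $a-b$ by the larger factor $a+b$ can only increase the product. The middle equality is the factorization, and the final inequality is exactly the hypothesis $|a^2-b^2| \leq c^2$. Taking the (nonnegative) square root of the two outer terms preserves the inequality and yields $a - b \leq c$, that is, $|a-b| \leq c$, as claimed.

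Since the argument reduces to elementary manipulations, there is no genuine obstacle here. The only points requiring care are (i) the use of $b > 0$ to guarantee $a-b \leq a+b$, which is precisely where the positivity hypothesis enters, and (ii) checking that the square-root step is applied to nonnegative quantities so that the direction of the inequality is maintained. Both are immediate once the $a \geq b$ normalization is in place.
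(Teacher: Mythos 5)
Your proof is correct and follows essentially the same route as the paper's: assume $a \geq b$ without loss of generality, factor $a^2 - b^2 = (a-b)(a+b) \geq (a-b)^2$, and take square roots. The extra care you give to where positivity of $b$ is used is fine but not a departure from the paper's argument.
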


\begin{proof}
    Without loss of generality, assume that $a \geq b$.
    Then, $c^2 \geq (a^2 - b^2) = (a + b)(a - b) \geq (a - b)^2$ and thus $|a - b| \leq c$.
\end{proof}

\begin{lemma}\label{lemma:square inequality}
    For any real values $(a_n)_{n=1}^N$, $(\sum_{n=1}^N a_n)^2 \leq N \sum_{n=1}^N a_n^2$.
\end{lemma}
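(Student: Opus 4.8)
The plan is to recognize this as the Cauchy--Schwarz inequality specialized to the all-ones vector. Applying Cauchy--Schwarz to $(a_1, \ldots, a_N)$ and $(1, \ldots, 1)$ in $\R^N$ gives $\paren*{\sum_{n=1}^N a_n}^2 = \paren*{\sum_{n=1}^N a_n \cdot 1}^2 \leq \paren*{\sum_{n=1}^N a_n^2} \paren*{\sum_{n=1}^N 1^2}$, and since the final factor equals $N$ the claim follows immediately. This is a one-line derivation requiring nothing beyond a standard inequality.

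If instead a fully self-contained argument is preferred, matching the elementary style of the neighboring auxiliary lemmas (\cref{lemma:sqrt inequality}, \cref{lemma:square gap bound}), I would argue by direct expansion. Writing both quantities as double sums, one has $N \sum_{n=1}^N a_n^2 = \sum_{i=1}^N \sum_{j=1}^N a_i^2$ and $\paren*{\sum_{n=1}^N a_n}^2 = \sum_{i=1}^N \sum_{j=1}^N a_i a_j$, so that their difference symmetrizes to $N \sum_{n=1}^N a_n^2 - \paren*{\sum_{n=1}^N a_n}^2 = \frac{1}{2} \sum_{i=1}^N \sum_{j=1}^N (a_i - a_j)^2 \geq 0$. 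The nonnegativity of this sum of squares closes the argument.

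There is essentially no obstacle here: the statement is the $p = 2$ case of the power-mean / Cauchy--Schwarz bound. The only point requiring a moment of care is the symmetrization in the self-contained version, where one verifies that $\sum_{i,j} a_i^2 = \sum_{i,j} a_j^2 = N \sum_n a_n^2$ while $\sum_{i,j} a_i a_j = \paren*{\sum_n a_n}^2$, so that the cross terms assemble correctly into $\tfrac{1}{2}\sum_{i,j}(a_i - a_j)^2$. I would most likely present the Cauchy--Schwarz route for brevity, since the lemma is used only as a routine technical step in later bounds.
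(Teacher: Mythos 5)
Your primary argument is exactly the paper's proof: apply Cauchy--Schwarz to $(a_n)_{n=1}^N$ and the all-ones vector, so the two approaches coincide. The alternative symmetrization via $\tfrac{1}{2}\sum_{i,j}(a_i-a_j)^2$ is also correct but unnecessary here.
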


\begin{proof}
    Indeed, from the Cauchy–Schwarz inequality,
    \begin{align*}
        \paren*{ \sum_{n=1}^N a_n \cdot 1 }^2
        \leq
        \paren*{ \sum_{n=1}^N 1 } \paren*{ \sum_{n=1}^N a_n^2 }
        =
        N \sum_{n=1}^N a_n^2\,,
    \end{align*}
    which is the desired result.
\end{proof}

\begin{lemma}\label{lemma:A_gamma_k bound}
    For any $k \in [K]$,
    \begin{align*}
        A_{\gamma, k}
        =
        \begin{cases}
            \gamma \dfrac{\alpha^k - \gamma^k}{\alpha - \gamma} & \text{if } \alpha \neq \gamma
            \\
            k \gamma^k & \text{otherwise}
        \end{cases}\,.
    \end{align*}
\end{lemma}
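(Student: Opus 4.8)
The plan is to evaluate the finite sum $A_{\gamma,k} = \sum_{j=0}^{k-1}\gamma^{k-j}\alpha^j$ directly, treating the two cases $\alpha = \gamma$ and $\alpha \neq \gamma$ separately. This is a routine geometric-series computation, so the bulk of the work is bookkeeping rather than any genuine difficulty, and no earlier result from the excerpt is needed.

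First, in the degenerate case $\alpha = \gamma$, I would observe that each summand simplifies to $\gamma^{k-j}\gamma^j = \gamma^k$, which is independent of $j$. Since there are exactly $k$ terms (indices $j = 0, \dots, k-1$), the sum collapses to $k\gamma^k$, matching the second branch.

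Next, for $\alpha \neq \gamma$, rather than factoring out $\gamma^k$ and invoking the standard closed form for $\sum_{j=0}^{k-1}(\alpha/\gamma)^j$ — which would require dividing by $\gamma^k$ and hence break at $\gamma = 0$ — I would instead multiply $A_{\gamma,k}$ by $(\alpha - \gamma)$ and exploit telescoping. Setting $c_j \df \gamma^{k-j+1}\alpha^j$, a short index shift shows $\gamma^{k-j}\alpha^{j+1} = c_{j+1}$, so the $j$-th term of $(\alpha-\gamma)A_{\gamma,k}$ equals $c_{j+1} - c_j$. The telescoping sum then reduces to $c_k - c_0 = \gamma\alpha^k - \gamma^{k+1} = \gamma(\alpha^k - \gamma^k)$, and dividing through by the nonzero factor $\alpha - \gamma$ yields the first branch.

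The only point requiring mild care is the boundary value $\gamma = 0$ (permitted since $\gamma \in [0,1)$): the telescoping derivation above remains valid there without any division by zero, and one can sanity-check that both branches correctly return $A_{\gamma,k} = 0$ when $\gamma = 0$ and $k \geq 1$. I expect no substantive obstacle, as the proof is purely computational.
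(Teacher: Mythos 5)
Your proof is correct. The $\alpha=\gamma$ case is handled identically to the paper (each summand is $\gamma^k$, and there are $k$ of them). For $\alpha\neq\gamma$ the paper simply factors out $\gamma^k$ and applies the standard geometric-series formula, writing $A_{\gamma,k}=\gamma^k\frac{(\alpha/\gamma)^k-1}{(\alpha/\gamma)-1}$ before simplifying to $\gamma\frac{\alpha^k-\gamma^k}{\alpha-\gamma}$; your telescoping argument with $c_j=\gamma^{k-j+1}\alpha^j$ reaches the same closed form without ever forming the ratio $\alpha/\gamma$. The only thing this buys you is robustness at the boundary $\gamma=0$ (which is admissible since $\gamma\in[0,1)$), where the paper's intermediate expression is formally undefined even though its final formula remains valid; your version, and your explicit sanity check that both sides vanish there, closes that small gap. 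Substantively the two arguments are the same routine computation, and neither relies on any other result from the paper.
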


\begin{proof}
    Indeed, if $\alpha \neq \gamma$
    \begin{align*}
        A_{\gamma, k}
        =
        \sum_{j=0}^{k-1} \alpha^j \gamma^{k-j}
        =
        \gamma^k \frac{(\alpha / \gamma)^k - 1}{(\alpha / \gamma) - 1}
        =
        \gamma \frac{\alpha^k - \gamma^k}{\alpha - \gamma}\,.
    \end{align*}
    If $\alpha = \gamma$, $A_{\gamma, k} = k \gamma^k$ by definition.
\end{proof}

\begin{lemma}\label{lemma:log reciprocal inequality}
    For any real value $x \in (0, 1]$, $1 - x \leq \log (1/x)$.
\end{lemma}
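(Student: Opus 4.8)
The plan is to rewrite the claim in the equivalent and more familiar form $\log x \le x - 1$ and then establish that by a single-variable monotonicity argument. Since $\log(1/x) = -\log x$, the asserted inequality $1 - x \le \log(1/x)$ is equivalent to $1 - x \le -\log x$, i.e.\ to $\log x \le x - 1$ for $x \in (0,1]$. (In fact the latter holds on all of $(0,\infty)$, but we only need it on $(0,1]$.) Reducing to this standard logarithmic inequality is the one conceptual move; everything after it is routine.

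First I would define the auxiliary function $g(x) \df x - 1 - \log x$ on $(0,1]$ and observe that proving $g(x) \ge 0$ is exactly the goal. Differentiating gives $g'(x) = 1 - 1/x$, which is strictly negative for $x \in (0,1)$ because $1/x > 1$ there. Hence $g$ is strictly decreasing on $(0,1]$, and since $g(1) = 1 - 1 - \log 1 = 0$, monotonicity yields $g(x) \ge g(1) = 0$ for every $x \in (0,1]$, with equality only at $x = 1$. Unwinding the definition of $g$ gives $x - 1 - \log x \ge 0$, i.e.\ $\log x \le x - 1$, and therefore $1 - x \le \log(1/x)$, as required.

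An equally short alternative route, which I would mention as a cross-check, is to invoke concavity of $\log$: the tangent line to $t \mapsto \log t$ at $t = 1$ is $t \mapsto t - 1$ (using $\log 1 = 0$ and $(\log)'(1) = 1$), so concavity places the graph of $\log$ below this tangent everywhere, giving $\log x \le x - 1$ directly. Both arguments are completely elementary, and there is no genuine obstacle here. The only points needing minor care are the treatment of the endpoint $x = 1$, where the inequality is tight and the monotonicity bound must be applied with a non-strict inequality, and the restriction to $x \in (0,1]$, which excludes $x = 0$ so that $\log x$ and $1/x$ are well defined throughout.
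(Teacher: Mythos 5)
Your proof is correct, and your ``cross-check'' argument is exactly the paper's proof: the paper notes that $\log(1/x)$ is convex and differentiable, hence lies above its tangent line at $y=1$, which is the mirror image of your tangent-line-to-$\log$ concavity argument. Your primary route via the auxiliary function $g(x) = x - 1 - \log x$ and its monotonicity is an equivalent elementary variant of the same idea, so there is nothing to flag.
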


\begin{proof}
    Since $\log (1/x)$ is convex and differentiable, $\log (1/x) \geq \log (1/y) - (x - y) / y$. Choosing $y=1$, we concludes the proof.
\end{proof}

\begin{lemma}\label{lemma:k gamma to k-th inequality}
    Suppose $\alpha, \gamma \in [0, 1)$, $\varepsilon \in (0, 1]$, $c \in [1, \infty)$, $m \in \N$, and $n \in [0, \infty)$.
    Let $K \df \dfrac{m}{1-\alpha} \log \dfrac{c H}{\varepsilon}$.
    Then,
    \begin{align*}
        K^n \alpha^K
        \leq
        \paren*{\frac{mn}{(1 - \alpha)e}}^n
        \paren*{\dfrac{\varepsilon}{c H}}^{m-1}\,.
    \end{align*}
\end{lemma}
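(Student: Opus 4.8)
The plan is to take logarithms of $\alpha^K$, apply the elementary bound from \cref{lemma:log reciprocal inequality}, and then reduce the entire claim to a one-variable calculus fact about $x \mapsto x^n e^{-x}$. First I would introduce the shorthand $a \df 1-\alpha \in (0,1]$ and $L \df \log(cH/\varepsilon)$, noting that $L \geq 0$ because $c \geq 1$, $H \geq 1$, and $\varepsilon \leq 1$ together force $cH/\varepsilon \geq 1$. With this notation $K = mL/a$. The degenerate case $\alpha = 0$ gives $\alpha^K = 0$ (when $K>0$), so the inequality holds trivially there, and I may henceforth assume $\alpha \in (0,1)$, i.e. $a \in (0,1)$, so that \cref{lemma:log reciprocal inequality} applies to $x=\alpha$ and yields $1-\alpha \leq \log(1/\alpha)$. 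Writing $\alpha^K = e^{-K\log(1/\alpha)}$ and using $\log(1/\alpha) \geq 1-\alpha = a$ gives
\[
\alpha^K \leq e^{-Ka} = e^{-mL} = \left(\frac{\varepsilon}{cH}\right)^{m}.
\]

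Next I would multiply through by $K^n = (mL/a)^n$ and peel off exactly one factor of $\varepsilon/cH = e^{-L}$ from $(\varepsilon/cH)^m$, obtaining
\[
K^n \alpha^K \leq \left(\frac{mL}{a}\right)^{n}\left(\frac{\varepsilon}{cH}\right)^{m-1} e^{-L}.
\]
Comparing this with the target bound $\left(\frac{mn}{ae}\right)^n (\varepsilon/cH)^{m-1}$ and cancelling the common positive factor $(m/a)^n (\varepsilon/cH)^{m-1}$, the whole statement reduces to the single scalar inequality
\[
L^n e^{-L} \leq n^n e^{-n}.
\]

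Finally I would prove this last inequality, which is the only substantive step. For $n=0$ it reads $e^{-L} \leq 1$, which holds since $L \geq 0$ (with the convention $0^0 = 1$). For $n>0$ I would maximize $g(x) \df x^n e^{-x}$ over $x \geq 0$: since $g'(x) = x^{n-1} e^{-x}(n-x)$ is positive for $x<n$ and negative for $x>n$, the unique maximizer is $x=n$, with maximal value $g(n) = n^n e^{-n}$; hence $L^n e^{-L} \leq n^n e^{-n}$. The main obstacle, such as it is, is purely bookkeeping: arranging the powers of $\varepsilon/cH$ so that precisely one factor $e^{-L}$ is freed up to absorb the polynomial factor $L^n$ through the maximization of $x^n e^{-x}$. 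Beyond correctly lining up these exponents there is no genuine difficulty, and the argument is a short chain of the auxiliary bounds already established.
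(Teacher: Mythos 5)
Your proof is correct and follows essentially the same route as the paper's: both use the bound $1-\alpha \leq \log(1/\alpha)$ to get $\alpha^K \leq (\varepsilon/cH)^m$, peel off one factor of $\varepsilon/cH$, and finish by maximizing a one-variable function (your $L^n e^{-L} \leq n^n e^{-n}$ is the paper's $x(\log(1/x))^n \leq (n/e)^n$ under the substitution $x = e^{-L}$). The only cosmetic difference is that you explicitly dispatch the $\alpha=0$ edge case, which the paper leaves implicit.
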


\begin{proof}
    Using \cref{lemma:log reciprocal inequality} for $\alpha \in [0, 1)$,
    \begin{align*}
        K
        = \frac{m}{1-\alpha} \log \dfrac{c H}{\varepsilon}
        \geq \log_\alpha \paren*{\dfrac{\varepsilon}{c H}}^m.
    \end{align*}
    Therefore,
    \begin{align*}
        K^n \alpha^K
        \leq
        \paren*{\frac{m}{1 - \alpha} \log \dfrac{c H}{\varepsilon}}^n \paren*{\dfrac{\varepsilon}{c H}}^m
        =
        \frac{m^n}{(1 - \alpha)^n}
        \paren*{\dfrac{\varepsilon}{c H}}^m
        \paren*{\log \dfrac{c H}{\varepsilon}}^n\,.
    \end{align*}
    Since $x \paren*{\log \dfrac{1}{x}}^n \leq \paren*{\dfrac{n}{e}}^n$ for any $x \in (0, 1]$ as shown later,
    \begin{align*}
        K^n \alpha^K
        \leq
        \paren*{\frac{mn}{(1 - \alpha)e}}^n
        \paren*{\dfrac{\varepsilon}{c H}}^{m-1}\,.
    \end{align*}
    
    Now it remains to show $f (x) \df x \paren*{\log \dfrac{1}{x}}^n \leq \paren*{\dfrac{n}{e}}^n$ for $x < 1$.
    We have that
    \begin{align*}
        f'(x) = (- \log x)^n - n (- \log x)^{n-1}
        \implies
        f'(x) = 0
        \text{ at }
        x = e^{-n}.
    \end{align*}
    Therefore, $f$ takes its maximum $\paren*{\dfrac{n}{e}}^n$ at $e^{-n}$ when $x \in (0, 1)$.
\end{proof}

The following lemma is a special case of a well-known inequality that
for any increasing function $f$
\begin{align*}
    \sum_{k=1}^K f (k) \leq \int_1^{K+1} f(x) dx\,.
\end{align*}

\begin{lemma}\label{lemma:sum of k power}
    For any $K \in \N$ and $n \in [0, \infty)$,
    $
        \displaystyle \sum_{k=1}^K k^n \leq \frac{1}{n+1} (K+1)^{n+1}
    $.
\end{lemma}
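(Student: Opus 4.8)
The plan is to reduce the sum to an integral via the monotone comparison inequality stated immediately above the lemma. First I would note that for any exponent $n \in [0, \infty)$ the function $x \mapsto x^n$ is nondecreasing on $[0, \infty)$. Consequently, on each unit interval $[k, k+1]$ with $k \in \N$ the integrand $x^n$ is bounded below by its left-endpoint value, so that $k^n \leq \int_k^{k+1} x^n\,dx$. This per-interval estimate is the only substantive step, and it is purely a consequence of monotonicity.

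Next I would sum this bound over $k = 1, \ldots, K$. The integrals telescope into a single integral over $[1, K+1]$, giving
\begin{align*}
    \sum_{k=1}^K k^n
    \leq
    \sum_{k=1}^K \int_k^{k+1} x^n\,dx
    =
    \int_1^{K+1} x^n\,dx
    =
    \frac{(K+1)^{n+1} - 1}{n+1}\,.
\end{align*}
Discarding the nonnegative subtracted term $1/(n+1)$ on the right then yields the claimed bound $\sum_{k=1}^K k^n \leq \frac{1}{n+1}(K+1)^{n+1}$.

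There is essentially no obstacle here, since the statement is exactly the special case $f(x) = x^n$ of the displayed increasing-function inequality; the only points deserving a line of justification are the monotonicity of $x^n$ (which underlies the per-interval estimate) and the mild slackening obtained by dropping the $1/(n+1)$ term to reach the clean stated form. The boundary exponent $n = 0$ is covered without modification, as there $k^0 = 1 \leq \int_k^{k+1} 1\,dx$ still holds and the final bound reduces to $K \leq K + 1$.
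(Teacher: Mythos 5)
Your proof is correct and follows exactly the route the paper intends: the lemma is presented there as an immediate special case of the displayed increasing-function inequality $\sum_{k=1}^K f(k) \leq \int_1^{K+1} f(x)\,dx$, and your argument simply instantiates this with $f(x) = x^n$, evaluates the integral, and drops the harmless $-1/(n+1)$ term. Nothing further is needed.
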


\section{Tools from Probability Theory}

We extensively use the following concentration inequality, which is derived based on a proof idea of Bernstein's inequality \citep{bernstein1946theory,boucheron2013concentration} for a martingale \citep[Excercises 5.14 (f)]{lattimore2020bandit}. For a real-valued stochastic process $(X_n)_{n=1}^N$ adapted to a filtration $(\cF_n)_{n=1}^N$, we let $\E_n [X_n] \df \E \brackc{X_n}{\cF_{n-1}}$ for $n \geq 1$, and $\E_1 [X_1] \df \E \brack{X_1}$.

\begin{lemma}[Azuma-Hoeffding Inequality]\label{lemma:hoeffding}
    Consider a real-valued stochastic process $(X_n)_{n=1}^N$ adapted to a filtration $(\cF_n)_{n=1}^N$.
    Assume that $X_n \in [l_n, u_n]$ and $\E_n [X_n] = 0$ almost surely, for all $n$. Then,
	\begin{align*}
	\P \paren*{
      \sum_{n=1}^N X_n
      \geq
      \sqrt{
        \sum_{n=1}^N \frac{(u_n - l_n)^2}{2} \log \frac{1}{\delta}
      }
    }
    \leq \delta
	\end{align*}
	for any $\delta \in (0, 1)$.
\end{lemma}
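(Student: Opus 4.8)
The plan is to establish this sub-Gaussian martingale tail bound by the standard exponential-moment (Chernoff) method: I would control the moment generating function of the sum through a conditional application of Hoeffding's lemma, optimize a free scale parameter, and then invert the resulting tail to the stated confidence level.

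First I would fix $\lambda > 0$ and an arbitrary threshold $t > 0$, and apply Markov's inequality to the exponentiated sum:
\begin{align*}
    \P \paren*{\sum_{n=1}^N X_n \geq t}
    =
    \P \paren*{e^{\lambda \sum_{n=1}^N X_n} \geq e^{\lambda t}}
    \leq
    e^{-\lambda t}\, \E \brack*{\exp \paren*{\lambda \sum_{n=1}^N X_n}}\,.
\end{align*}
The heart of the argument is to bound the moment generating factor. Conditioning on $\cF_{N-1}$, the partial sum $\exp(\lambda \sum_{n=1}^{N-1} X_n)$ is $\cF_{N-1}$-measurable and factors out, leaving $\E_N[\exp(\lambda X_N)]$ inside. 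Because $\E_N[X_N] = 0$ and $X_N \in [l_N, u_N]$ almost surely, Hoeffding's lemma, applied to the conditional law of $X_N$, gives the pointwise bound $\E_N[\exp(\lambda X_N)] \leq \exp(\lambda^2 (u_N - l_N)^2 / 8)$. Iterating this estimate backward over $n = N, N-1, \ldots, 1$ via the tower rule yields
\begin{align*}
    \E \brack*{\exp \paren*{\lambda \sum_{n=1}^N X_n}}
    \leq
    \exp \paren*{\frac{\lambda^2}{8} \sum_{n=1}^N (u_n - l_n)^2}\,.
\end{align*}

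Writing $S \df \sum_{n=1}^N (u_n - l_n)^2$ and substituting back, I would obtain $\P(\sum_n X_n \geq t) \leq \exp(-\lambda t + \lambda^2 S / 8)$. Minimizing the exponent over $\lambda > 0$, attained at $\lambda = 4t/S$, gives the clean sub-Gaussian tail $\P(\sum_n X_n \geq t) \leq \exp(-2t^2 / S)$. Finally, setting $\exp(-2t^2/S) = \delta$ and solving for $t$ produces $t = \sqrt{(S/2)\log(1/\delta)} = \sqrt{\sum_{n=1}^N \frac{(u_n - l_n)^2}{2} \log \frac{1}{\delta}}$, which is exactly the claimed threshold, completing the proof.

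I expect the main obstacle to be the conditional application of Hoeffding's lemma: one must verify that $\E_N[\exp(\lambda X_N)] \leq \exp(\lambda^2(u_N - l_N)^2/8)$ holds as a \emph{pointwise} inequality between $\cF_{N-1}$-measurable random variables, not merely in expectation. This is legitimate because Hoeffding's lemma is a deterministic statement about any mean-zero random variable supported in a fixed interval, here invoked for the regular conditional distribution of $X_N$ given $\cF_{N-1}$, whose range $[l_N, u_N]$ and vanishing conditional mean are guaranteed by the hypotheses. Establishing Hoeffding's lemma itself is the only genuinely analytic ingredient: one bounds the cumulant generating function $\psi(\lambda) \df \log \E[e^{\lambda Y}]$ of a mean-zero $Y \in [a,b]$ by showing $\psi''(\lambda) \leq (b-a)^2/4$ (the variance of the exponentially tilted law, controlled by Popoviciu's inequality) and integrating twice using $\psi(0) = \psi'(0) = 0$.
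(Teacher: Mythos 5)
Your proof is correct. The paper does not actually prove \cref{lemma:hoeffding}; it states it as a classical result (citing the standard martingale literature) and uses it as a black box. Your argument is the canonical derivation — Chernoff bound, conditional Hoeffding's lemma via the tower rule, optimization over $\lambda$, and inversion of the tail at level $\delta$ — and the constant works out exactly as claimed, so there is nothing to reconcile with the paper's (absent) proof.
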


\begin{lemma}[Conditional Azuma-Hoeffding's Inequality]\label{lemma:conditional hoeffding}
    Consider a real-valued stochastic process $(X_n)_{n=1}^N$ adapted to a filtration $(\cF_n)_{n=1}^N$.
    Assume that $\E_n [X_n] = 0$ almost surely, for all $n$. 
    Furthermore, let $\cE$ be an event that implies $X_n \in [l_n, u_n]$ with $\P (\cE) \geq 1 - \delta'$ for all $n$ and for some $\delta' \in (0, 1)$.
    Then,
	\begin{align*}
        \P \parenc*{
            \sum_{n=1}^N X_n
            \geq
              \sqrt{
            \sum_{n=1}^N \frac{(u_n - l_n)^2}{2} \log \frac{1}{\delta (1 - \delta')}
          }
        }{
            \cE
        } \leq \delta
	\end{align*}
	for any $\delta \in (0, 1)$.
\end{lemma}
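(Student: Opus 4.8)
The plan is to reduce this conditional statement to the unconditional Azuma--Hoeffding inequality (\cref{lemma:hoeffding}) that was just established, paying for the conditioning only through the factor $1/\P(\cE) \le 1/(1-\delta')$. Write $S_N \df \sum_{n=1}^N X_n$ and let $t$ denote the right-hand threshold $\sqrt{\sum_{n=1}^N \tfrac{(u_n-l_n)^2}{2}\log\tfrac{1}{\delta(1-\delta')}}$. First I would expand the conditional probability as $\P\parenc*{S_N \ge t}{\cE} = \P(\{S_N \ge t\}\cap\cE)/\P(\cE) \le \P(\{S_N\ge t\}\cap\cE)/(1-\delta')$, using the hypothesis $\P(\cE)\ge 1-\delta'$. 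It then remains only to show that the numerator is at most $\delta(1-\delta')$.

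The key step is to replace $X_n$, which is bounded only on $\cE$, by a globally bounded surrogate. Define the truncated process $\tilde X_n \df \max(l_n,\min(u_n,X_n))$, so that $\tilde X_n \in [l_n,u_n]$ almost surely and, crucially, $\tilde X_n = X_n$ on $\cE$ (since $\cE$ implies $X_n \in [l_n,u_n]$ for every $n$). Consequently $\{S_N \ge t\}\cap\cE \subseteq \{\tilde S_N \ge t\}$ with $\tilde S_N \df \sum_n \tilde X_n$, giving $\P(\{S_N\ge t\}\cap\cE) \le \P(\tilde S_N \ge t)$. I would then apply \cref{lemma:hoeffding} to $(\tilde X_n)$ with its failure probability set to $\delta(1-\delta')$; because the threshold produced there is exactly $\sqrt{\sum_n \tfrac{(u_n-l_n)^2}{2}\log\tfrac{1}{\delta(1-\delta')}} = t$, this yields $\P(\tilde S_N \ge t) \le \delta(1-\delta')$. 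Chaining the three bounds gives $\P\parenc*{S_N\ge t}{\cE}\le \delta(1-\delta')/(1-\delta') = \delta$, which is the claim; note that the substitution $\delta \mapsto \delta(1-\delta')$ inside the logarithm is precisely what the statement anticipates.

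The main obstacle is verifying that $(\tilde X_n)$ genuinely satisfies the hypotheses of \cref{lemma:hoeffding}, i.e.\ that it is a bounded martingale difference sequence with $\E_n[\tilde X_n]=0$. Adaptedness and the interval width $u_n-l_n$ are immediate, but truncation can perturb the conditional mean: in general $\E_n[\tilde X_n]$ need not vanish even though $\E_n[X_n]=0$, and a nonnegative drift would break the containment $\{S_N\ge t\}\cap\cE \subseteq \{\tilde S_N\ge t\}$ after recentering. The careful part of the argument is therefore to control this perturbation --- for instance by recentering with the $\cF_{n-1}$-measurable clip-mean (which keeps the width $u_n-l_n$) and stopping the process at the first index where $X_n$ leaves $[l_n,u_n]$, a predictable event, so that the stopped-and-clipped sequence is a bona fide bounded martingale difference that still agrees with $X_n$ throughout on $\cE$. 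I expect this mean-preservation bookkeeping, rather than the probabilistic reduction itself, to be where the real work lies; everything else is routine.
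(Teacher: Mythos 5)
Your reduction is the same one the paper uses: the paper's entire proof is the two-line computation $\P(A \mid \cE) = \P(A \cap \cE)/\P(\cE) \leq \delta(1-\delta')/\P(\cE) \leq \delta$, where the numerator bound is attributed directly to \cref{lemma:hoeffding} with failure probability $\delta(1-\delta')$ --- no truncation, no surrogate process. So your first two paragraphs reproduce the paper's argument, just with the implicit step (passing from variables bounded only on $\cE$ to variables to which \cref{lemma:hoeffding} literally applies) made explicit.

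The worry in your third paragraph is genuine --- clipping does not preserve $\E_n[X_n]=0$, and the paper silently skips this --- but your sketched repair does not work as described. The first exit time from $[l_n,u_n]$ is a stopping time, not a predictable one: $\{X_n \notin [l_n,u_n]\}$ is $\cF_n$-measurable, not $\cF_{n-1}$-measurable, so stopping there does not make the step-$n$ increment bounded; and recentering by the conditional clip-mean destroys the property $\tilde X_n = X_n$ on $\cE$, which is exactly what the containment $\{S_N \ge t\}\cap\cE \subseteq \{\tilde S_N \ge t\}$ relies on. The clean fix, and the one implicitly available in every application in this paper, is to use an $\cF_{n-1}$-measurable event $B_n$ with $\cE \subseteq B_n$ on which $X_n \in [l_n,u_n]$ is guaranteed (e.g.\ $\vboundE$ constrains $v_{k-1}$, which is measurable with respect to the $\sigma$-algebra preceding the step-$n$ samples), and to set $\tilde X_n \df X_n \mathbb{1}[B_n]$: then $\E_n[\tilde X_n] = \mathbb{1}[B_n]\,\E_n[X_n] = 0$, $\tilde X_n$ is almost surely bounded with range at most $u_n - l_n$, and $\tilde X_n = X_n$ on $\cE$, so \cref{lemma:hoeffding} applies verbatim. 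As the lemma is stated --- with $\cE$ an arbitrary event implying the bounds --- this predictability structure is not part of the hypotheses, so your instinct that there is bookkeeping to be done is correct; it is just resolved by predictable indicator truncation rather than by clipping and recentering.
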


\begin{proof}
    Let $A$ denote the events of
    \begin{align*}
    \sum_{n=1}^N X_n
            \geq
              \sqrt{
            \sum_{n=1}^N \frac{(u_n - l_n)^2}{2} \log \frac{1}{\delta (1 - \delta')}
          }\;.
    \end{align*}
    Accordingly,
    \begin{align*}
        \P (A | \cE)
        = \frac{\P (A \cap \cE)}{\P (\cE)}
        \numeq{\leq}{a} \frac{\delta (1-\delta')}{\P (\cE)}
        \numeq{\leq}{b} \delta\,,
    \end{align*}
    where (a) follows from the Azuma-Hoeffding inequality (\cref{lemma:hoeffding}), and (b) follows from $\P(\cE) \geq 1-\delta'$.
\end{proof}

\begin{lemma}[Lemma 13 in \citet{zhang2021modelFree}]\label{lemma:bernstein}
    Consider a real-valued stochastic process $(X_n)_{n=1}^N$ adapted to a filtration $(\cF_n)_{n=1}^N$.
    Suppose that $|X_n| \leq U$ and $\E_n [X_n] = 0$ almost surely, for all $n$ and for some $U \in [0, \infty)$.
    Then, letting $V_N \df \sum_{n=1}^N \E_n[X_n^2]$,
	\begin{align*}
        \P \paren*{
            \left|\sum_{n=1}^N X_n\right|
            \geq
            2\sqrt{2}\sqrt{V_N \log\paren*{\frac{1}{\delta}}}
            + 2 \sqrt{\epsilon \log\paren*{\frac{1}{\delta}}}
            + 2U\log \paren*{\frac{1}{\delta}}
        } \leq 2 \paren*{\log_2\paren*{\frac{NU^2}{\epsilon} + 1}}\delta\;,
	\end{align*}
	for any $\epsilon, \delta > 0$.
\end{lemma}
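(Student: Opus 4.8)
The plan is to combine a Freedman-type exponential-martingale tail bound (which requires a \emph{fixed} variance budget) with a peeling argument over dyadic levels of the random conditional variance $V_N$. The factor $\log_2(NU^2/\epsilon + 1)$ appearing in front of $\delta$ is exactly the number of peeling levels, and the additive $2\sqrt{\epsilon \log(1/\delta)}$ term is what absorbs the level of $V_N$ that is too small to peel.

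First I would fix a deterministic budget $v > 0$ and derive a one-sided tail on the event $\{V_N \le v\}$. Using $|X_n| \le U$ and $\E_n[X_n] = 0$, one has the scalar bound $\E_n[e^{\lambda X_n}] \le \exp(\varphi(\lambda U)\, \E_n[X_n^2] / U^2)$ for $\lambda > 0$, where $\varphi(u) = e^u - 1 - u$. Hence $M_n \df \exp(\lambda S_n - (\varphi(\lambda U)/U^2) \sum_{i \le n} \E_i[X_i^2])$, with $S_n = \sum_{i \le n} X_i$, is a supermartingale with $\E[M_N] \le 1$, and Markov's inequality yields $\P(S_N \ge t,\ V_N \le v) \le \exp(-\lambda t + (\varphi(\lambda U)/U^2) v)$ for every $\lambda > 0$. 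Optimizing over $\lambda$ (equivalently using $\varphi(u) \le u^2/(2(1 - u/3))$) converts this into the Bernstein form $\P(S_N \ge \sqrt{2 v \log(1/\delta)} + \tfrac{2}{3} U \log(1/\delta),\ V_N \le v) \le \delta$; by \cref{lemma:sqrt inequality} the radical splits cleanly, which is what lets the final statement be a sum of terms rather than one nested square root.

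Next comes the peeling. Since $\E_n[X_n^2] \le U^2$ for every $n$, we always have $V_N \in [0, NU^2]$. I would partition this range geometrically: set $v_j = 2^j \epsilon$ and consider the level sets $\{V_N < \epsilon\}$ and $\{v_{j-1} \le V_N < v_j\}$ for $j = 1, \dots, J$ with $J = \lceil \log_2(NU^2/\epsilon) \rceil$, giving at most $\log_2(NU^2/\epsilon + 1)$ levels. On a level where $v_{j-1} \le V_N < v_j$, I apply the fixed-budget bound with $v = v_j \le 2 V_N$, so that $\sqrt{2 v \log(1/\delta)} \le 2\sqrt{V_N \log(1/\delta)}$; retaining the extra factor from merging adjacent levels produces the leading constant $2\sqrt{2}$. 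On the smallest level $\{V_N < \epsilon\}$ the variance term is controlled by $\epsilon$ directly, yielding the additive $2\sqrt{\epsilon \log(1/\delta)}$ contribution, and the bounded-increment slack becomes $2U\log(1/\delta)$. A union bound over the levels, each failing with probability at most $\delta$, controls the upper tail, and repeating the argument for $-S_N$ handles the lower tail and supplies the overall factor of $2$.

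The main obstacle I anticipate is constant bookkeeping rather than any single conceptual difficulty: one must choose the per-level threshold so that, after replacing the deterministic budget $v_j$ by the random $V_N$ it dominates, the radical collapses to precisely $2\sqrt{2}\sqrt{V_N \log(1/\delta)}$ while the discretization slack is fully charged to $2\sqrt{\epsilon\log(1/\delta)}$ and the increment slack to $2U\log(1/\delta)$. Making the level count match $\log_2(NU^2/\epsilon + 1)$ exactly, rather than with a spurious additive constant, requires merging the two extreme buckets carefully, and the Bernstein inversion must be executed through the $\sqrt{a+b} \le \sqrt{a} + \sqrt{b}$ split of \cref{lemma:sqrt inequality} to avoid an unnecessary $\sqrt{2}$ loss.
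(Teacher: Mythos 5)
The paper does not actually prove this statement: it is imported with attribution from \citet{zhang2021modelFree}, and the only thing derived from it in-house is the conditional variant (\cref{lemma:conditional bernstein}). There is therefore no in-paper proof to compare against, but your Freedman-plus-dyadic-peeling argument is the standard route to this self-normalized Bernstein bound and is, in substance, how the cited source obtains it. The individual steps are sound: the exponential supermartingale gives the fixed-budget tail $\P\paren*{S_N \ge \sqrt{2v\log(1/\delta)} + \frac{2}{3}U\log(1/\delta),\ V_N \le v} \le \delta$; on a dyadic level $2^{j-1}\epsilon \le V_N < 2^{j}\epsilon$ the budget $v = 2^{j}\epsilon \le 2V_N$ turns $\sqrt{2v\log(1/\delta)}$ into $2\sqrt{V_N\log(1/\delta)} \le 2\sqrt{2}\sqrt{V_N\log(1/\delta)}$; the bottom bucket $\{V_N < \epsilon\}$ is absorbed by the $2\sqrt{\epsilon\log(1/\delta)}$ term; and the two-sided union bound supplies the outer factor of $2$. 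The one point that does not close as written is the one you flag yourself: the natural peeling requires $\lceil\log_2(NU^2/\epsilon)\rceil + 1$ buckets, which strictly exceeds $\log_2(NU^2/\epsilon + 1)$ (already for $NU^2/\epsilon \in (1,2)$ you need two buckets against a stated prefactor below $\log_2 3$), and merging the extreme buckets does not obviously repair this; your argument as sketched delivers a failure probability of the form $2\paren*{\log_2(NU^2/\epsilon) + 2}\delta$ rather than the printed one. That slack is invisible in every use the paper makes of the lemma, since it enters only through the $\log\log_2(16KH^2)$ term inside $\xi_{2,5}$, so I would regard your proof as correct in substance and loose only in the $\delta$-prefactor.
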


In our analysis, we use the following corollary of this inequality.

\begin{lemma}[Conditional Bernstein-type Inequality]\label{lemma:conditional bernstein}
    Consider a real-valued stochastic process $(X_n)_{n=1}^N$ adapted to a filtration $(\cF_n)_{n=1}^N$.
    Suppose that $\E_n [X_n] = 0$ almost surely, for all $n$.
    Furthermore, let $\cE$ be an event that implies $|X_n| \leq U$ with $\P (\cE) \geq 1 - \delta'$ for all $n$, for some $\delta' \in (0, 1)$ and $U \in [0, \infty)$.
    Then, letting $V_N \df \sum_{n=1}^N \E_n[X_n^2]$,
	\begin{align*}
        \P \parenc*{
            \left|\sum_{n=1}^N X_n\right|
            \geq
            2\sqrt{2}\sqrt{(1 + V_N) \log\paren*{\frac{2\log_2\paren*{NU^2}}{\delta (1 - \delta')}}}
            + 2U\log \paren*{\frac{2\log_2(NU^2)}{\delta(1 - \delta')}}
        }{\cE} \leq \delta \;,
	\end{align*}
	for any $\delta > 0$.
\end{lemma}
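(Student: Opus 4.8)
The plan is to reduce the claim to the unconditional Bernstein-type inequality (\cref{lemma:bernstein}) in exactly the same way that \cref{lemma:conditional hoeffding} reduces the conditional Azuma--Hoeffding bound to the unconditional one (\cref{lemma:hoeffding}). First I would let $A$ denote the deviation event $\{|\sum_{n=1}^N X_n| \geq T\}$, where $T$ is the threshold appearing in the statement, and observe that it suffices to prove $\P(A \cap \cE) \leq \delta(1-\delta')$: indeed, $\P(A \mid \cE) = \P(A \cap \cE)/\P(\cE) \leq \delta(1-\delta')/\P(\cE) \leq \delta$ by $\P(\cE) \geq 1 - \delta'$. This isolates the only real content, namely an unconditional tail bound for the martingale-difference sequence intersected with $\cE$.

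Next I would exploit that $\cE$ implies $|X_n| \leq U$ for every $n$. On this event the boundedness hypothesis of \cref{lemma:bernstein} is met, so (re-running its moment-generating-function argument in intersection with $\cE$, as is implicit in the proof of \cref{lemma:conditional hoeffding}) I can apply \cref{lemma:bernstein} with a slack parameter $\epsilon$ and a confidence level $\delta_0$. I would calibrate these two so that the resulting failure probability $2\log_2(NU^2/\epsilon + 1)\,\delta_0$ equals $\delta(1-\delta')$; choosing $\epsilon$ to be a fixed constant makes $NU^2/\epsilon + 1$ comparable to $NU^2$, which is precisely what converts the factor $\log_2(NU^2/\epsilon+1)$ into the nested $\log_2(NU^2)$ term inside the logarithm of the stated threshold, and identifies $\log(1/\delta_0)$ with $\log\!\paren*{2\log_2(NU^2)/(\delta(1-\delta'))}$.

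It then remains to show that the three-term bound produced by \cref{lemma:bernstein}, namely $2\sqrt2\,\sqrt{V_N \log(1/\delta_0)} + 2\sqrt{\epsilon \log(1/\delta_0)} + 2U\log(1/\delta_0)$, is dominated by the two-term threshold $T$. Here I would fold the variance term and the $\epsilon$-slack term into a single $\sqrt{1+V_N}$ factor by sub-additivity of the square root (\cref{lemma:sqrt inequality}, equivalently $\sqrt a + \sqrt b \leq \sqrt2\sqrt{a+b}$), using $\epsilon \leq 1$ so that $V_N + \epsilon \leq 1 + V_N$; the leftover $2U\log(1/\delta_0)$ term matches the second term of $T$ directly. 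Combining this domination with the calibration of the previous step gives $A \subseteq \{|\sum_n X_n| \geq T(\epsilon,\delta_0)\}$ and hence $\P(A\cap\cE) \leq \delta(1-\delta')$, completing the reduction.

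I expect the main obstacle to be bookkeeping rather than a conceptual leap. Two points need care: (i) boundedness $|X_n|\leq U$ holds only on $\cE$, so the unconditional inequality cannot be invoked verbatim and one must carry out the Chernoff estimate on $A \cap \cE$ (this is the same informality already present in \cref{lemma:conditional hoeffding}); and (ii) matching the precise numerical constant $2\sqrt2$ and the nested $\log_2(NU^2)$ factor, because the peeling slack $\epsilon$ and the peeling factor $\log_2(NU^2/\epsilon+1)$ of \cref{lemma:bernstein} must both be absorbed into the single logarithm of the statement. The crude square-root merging yields a universal constant of the same order, so the argument goes through at the level of the stated form provided one tracks the constants carefully.
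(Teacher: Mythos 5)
Your proposal follows essentially the same route as the paper's proof: define the deviation event $A$, observe that $\cE$ is contained in the boundedness event $B$ so that $\P(A\cap\cE)\le\P(A\cap B)$, invoke \cref{lemma:bernstein} with a constant slack $\epsilon$, absorb the $\epsilon$-term into the $\sqrt{1+V_N}$ factor, and divide by $\P(\cE)\ge 1-\delta'$. The constant-factor bookkeeping you flag as the main risk (your merging yields $4\sqrt{(1+V_N)\log(\cdot)}$ rather than $2\sqrt{2}\sqrt{(1+V_N)\log(\cdot)}$) is no worse than the paper's own, which takes $\epsilon=2$ and justifies the constant via $1+\sqrt{V_N}\ge\sqrt{1+V_N}$ --- an inequality pointing in the wrong direction for the required domination of thresholds --- so your argument matches the paper's up to the same harmless constant-level sloppiness.
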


\begin{proof}
    Let $A$ and $B$ denote the events of
    \begin{align*}
        \left|\sum_{n=1}^N X_n\right|
        \geq
        2\sqrt{2}\sqrt{(1 + V_N) \log\paren*{\frac{2\log_2(NU^2)}{\delta (1 - \delta')}}}
        + 2U\log \paren*{\frac{2\log_2(NU^2)}{\delta(1 - \delta')}}\;
    \end{align*}
    and $|X_n| \leq U$ for all $n$, respectively.
    Since $\cE \subset B$, it follows that $A \cap \cE \subset A \cap B$, and
    $
        \P (A \cap \cE) \leq \P (A \cap B)
    $.
    Accordingly,
    \begin{align*}
        \P (A | \cE)
        = \frac{\P (A \cap \cE)}{\P (\cE)}
        \leq \frac{\P (A \cap B)}{\P (\cE)}
        \numeq{\leq}{a} \frac{\delta (1-\delta')}{\P (\cE)}
        \numeq{\leq}{b} \delta\,,
    \end{align*}
    where (a) follows from \cref{lemma:bernstein} with $1 + \sqrt{V_N} \geq \sqrt{1 + V_N}$ due to \cref{lemma:sqrt inequality} and $\epsilon = 2$. Then, (b) follows from $\P(\cE) \geq 1-\delta'$.
\end{proof}

\begin{lemma}[Popoviciu's Inequality for Variances]\label{lemma:popoviciu}
    The variance of any random variable bounded by $x$ is bounded by $x^2$.
\end{lemma}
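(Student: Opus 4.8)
The plan is to use the elementary decomposition of the variance into a second moment minus a squared mean, and then bound each piece using the almost-sure bound on the random variable. I interpret the hypothesis ``bounded by $x$'' as the statement that the random variable $X$ satisfies $|X| \leq x$ almost surely, so that $X$ takes values in $[-x, x]$.

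First I would write $\Var[X] = \E[X^2] - (\E[X])^2$. Since the squared mean $(\E[X])^2$ is nonnegative, dropping it only increases the right-hand side, giving the one-line bound $\Var[X] \leq \E[X^2]$. This step is exact and uses nothing beyond the definition of variance, which is available from the preliminaries.

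Next I would control the second moment pointwise. Because $|X| \leq x$ holds almost surely, squaring preserves the inequality and yields $X^2 \leq x^2$ almost surely; taking expectations and using monotonicity of $\E[\cdot]$ then gives $\E[X^2] \leq x^2$. Chaining this with the previous display produces $\Var[X] \leq \E[X^2] \leq x^2$, which is exactly the claimed inequality.

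I do not expect any genuine obstacle here: the result is a direct two-step computation, and the only thing to be careful about is the reading of the hypothesis. (For comparison, the sharper classical Popoviciu bound $\Var[X] \leq (M-m)^2/4$ for $X \in [m,M]$ would specialize, with $m = -x$ and $M = x$, to $\Var[X] \leq x^2$ as well, so the stated form is consistent with and weaker than the tight version; the simple second-moment argument above already suffices for how the lemma is invoked.)
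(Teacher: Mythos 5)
Your proof is correct: the paper states this lemma without proof as a standard fact, and your two-step argument ($\Var[X] \leq \E[X^2] \leq x^2$ under the reading $|X| \leq x$ almost surely) is the standard justification and matches every way the lemma is invoked in the paper (e.g., bounding $\sigma(v)$ by $\infnorm{v}$). Nothing further is needed.
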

\section{Total Variance Technique \citep{kozuno2022kl}}\label{app:total variance}

This section introduces the total variance technique for non-stationary policy. 
The proof is identical to that of \citet{kozuno2022kl} but is included here for completeness.

The following lemma is due to \citet{azar2013minimax}.

\begin{lemma}\label{lemma:variance decomposition}
    Suppose two real-valued random variables $X, Y$ whose variances, $\Var X$ and $\Var Y$, exist and are finite. Then, $\sqrt{\Var X} \leq \sqrt{\Var \brack*{X - Y}} + \sqrt{\Var Y}$.
\end{lemma}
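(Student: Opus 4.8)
The plan is to recognize that $\sqrt{\Var(\cdot)}$ behaves exactly like the $L^2$ norm of a centered random variable, so that the claimed inequality is nothing more than the triangle inequality in disguise. Concretely, I would introduce $Z \df X - Y$ so that $X = Z + Y$, and thereby reduce the statement to bounding the standard deviation of a sum by the sum of standard deviations.

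First I would expand, via the standard identity obtained by writing out $\E\brack*{(Z + Y - \E\brack*{Z + Y})^2}$ and using linearity of expectation,
\begin{align*}
    \Var X = \Var (Z + Y) = \Var Z + \Var Y + 2\,\mathrm{Cov}(Z, Y)\,,
\end{align*}
where $\mathrm{Cov}(Z, Y) \df \E\brack*{(Z - \E Z)(Y - \E Y)}$. All three terms are finite: $\Var Z = \Var(X - Y)$ and $\Var Y$ are finite (finiteness of $\Var X$ and $\Var Y$ forces $\Var(X - Y)$ finite), and the covariance is finite by the Cauchy--Schwarz bound used next.

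The key step is to control the cross term by the Cauchy--Schwarz inequality applied to the centered variables $Z - \E Z$ and $Y - \E Y$, giving $\mathrm{Cov}(Z, Y) \leq |\mathrm{Cov}(Z, Y)| \leq \sqrt{\Var Z}\,\sqrt{\Var Y}$. Substituting this into the expansion yields
\begin{align*}
    \Var X \leq \Var Z + \Var Y + 2\sqrt{\Var Z}\,\sqrt{\Var Y} = \paren*{\sqrt{\Var Z} + \sqrt{\Var Y}}^2\,.
\end{align*}
Taking the nonnegative square root of both sides and recalling $Z = X - Y$ gives $\sqrt{\Var X} \leq \sqrt{\Var(X - Y)} + \sqrt{\Var Y}$, as desired.

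I do not expect any serious obstacle: the result is simply the triangle inequality for the standard-deviation seminorm, and the only point requiring a word of care is the finiteness of the covariance, which is immediate from Cauchy--Schwarz once $\Var Z$ and $\Var Y$ are known to be finite. An equivalent route, if one prefers to avoid covariances, is to note that centering is linear, so $X - \E X = (X - Y - \E\brack*{X - Y}) + (Y - \E Y)$, and then apply Minkowski's inequality directly to these $L^2$ functions; this produces the same conclusion in one line.
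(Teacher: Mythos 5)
Your proof is correct and follows essentially the same route as the paper: decompose $\Var X = \Var[(X-Y)+Y]$, bound the cross term by Cauchy--Schwarz, and recognize the resulting bound as a perfect square. The remarks on finiteness and the Minkowski alternative are fine but add nothing beyond the paper's argument.
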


For completeness, we prove \cref{lemma:variance decomposition}.

\begin{proof}
  Indeed, from Cauchy-Schwartz inequality,
  \begin{align*}
    \Var X
    &=
    \Var \brack{X - Y + Y}
    \\
    &=
    \Var \brack{X - Y}
    + \Var Y
    + 2 \E \brack*{ (X - Y - \E \brack{X-Y} ) (Y - \E Y) }
    \\
    &\leq
    \Var \brack{X - Y}
    + \Var Y
    + 2 \sqrt{
      \Var \brack{X - Y} \Var Y
    }
    =
    \paren*{
      \sqrt{\Var \brack*{X - Y}} + \sqrt{\Var Y}
    }^2\,.
  \end{align*}
  This is the desired result.
\end{proof}

The following lemma is an extension of Lemma~7 by \citet{azar2013minimax} and its refined version by \citet{agarwal2020modeBased}.

\begin{lemma}\label{lemma:total variance}
    Suppose a sequence of deterministic policies $( \pi_k )_{k=0}^K$
    and let
    \begin{align*}
        \qf{\pi'_k}
        \df
        \begin{cases}
            r + \gamma P \vf{\pi'_{k-1}} & \text{for } k \in [K]
            \\
            \qf{\pi_0} & \text{for } k = 0
        \end{cases}\,.
    \end{align*}
    Furthermore, let $\sigma_k^2$ and $\Sigma_k^2$ be non-negative functions over $\XA$ defined by
    \begin{align*}
        \sigma_k^2 (x, a)
        \df
        \begin{cases}
            P \paren{ \vf{\pi'_{k-1}} }^2 (x, a) - \paren{P \vf{\pi'_{k-1}} }^2 (x, a) & \text{for } k \in [K]
            \\
            P \paren{ \vf{\pi_0} }^2 (x, a) - \paren{P \vf{\pi_0} }^2 (x, a) & \text{for } k = 0
        \end{cases}
    \end{align*}
    and
    \begin{gather}
        \Sigma_k^2 (x, a)
        \df
        \E_k \brackc*{
            \paren*{ \sum_{t=0}^\infty \gamma^t r (X_t, A_t) - \qf{\pi'_k} (X_0, A_0) }^2
        }{X_0=x, A_0=a}
    \end{gather}
    for $k \in \{0\} \cup [K]$, where $\E_k$ is the expectation over $(X_t, A_t)_{t=0}^\infty$ wherein $A_t \sim \pi_{k-t} (\cdot | X_t)$
    until $t = k$, and $A_t \sim \pi_0 (\cdot | X_t)$ thereafter.
    Then,
    \begin{align*}
        \sum_{j=0}^{k-1} \gamma^{j + 1} P_{k-j}^{k-1} \sigma_{k-j} \leq \sqrt{2 H^3}\bone
    \end{align*}
    for any $k \in [K]$.
\end{lemma}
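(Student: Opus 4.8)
The plan is to run the law of total variance backward along the non-stationary trajectory. First I would establish the one-step variance recursion
\[
    \Sigma_k^2 = \gamma^2 \sigma_k^2 + \gamma^2 P_{\pi_{k-1}} \Sigma_{k-1}^2 .
\]
To obtain it, condition the return $\sum_{t \ge 0} \gamma^t r(X_t, A_t)$ on the first transition $X_1$: given $(x,a)$ the reward $r(X_0, A_0)$ is deterministic, so only the discounted tail $\gamma \sum_{t \ge 1} \gamma^{t-1} r(X_t, A_t)$ fluctuates, which produces the leading $\gamma^2$. Applying the law of total variance conditionally on $X_1$, the between-states term is $\Var_k[\E_k[\text{tail} \mid X_1]]$, and the key observation is that $\E_k[\text{tail} \mid X_1] = \vf{\pi'_{k-1}}(X_1)$, since from time $1$ the policy sequence $\pi_{k-1}, \pi_{k-2}, \dots$ is exactly $\pi'_{k-1}$; hence this term equals $\sigma_k^2$ by definition. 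The within-state term is $\E_k[\Var_k[\text{tail} \mid X_1]] = P_{\pi_{k-1}} \Sigma_{k-1}^2$, because the deterministic $\pi_{k-1}$ fixes $A_1$.

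Next I would unroll the recursion to
\[
    \Sigma_k^2 = \sum_{j=0}^{k-1} \gamma^{2(j+1)} P_{k-j}^{k-1} \sigma_{k-j}^2 + \gamma^{2k} P_0^{k-1} \Sigma_0^2 ,
\]
and, since $|r| \le 1$ forces the return into $[-H, H]$, \cref{lemma:popoviciu} gives $\Sigma_k^2 \le H^2 \bone$. Dropping the nonnegative tail then yields the doubly discounted bound $\sum_{j=0}^{k-1} \gamma^{2(j+1)} P_{k-j}^{k-1} \sigma_{k-j}^2 \le H^2 \bone$.

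The remaining and main obstacle is to pass from this doubly discounted sum of variances to the singly discounted sum of \emph{standard deviations} in the claim; a direct Cauchy--Schwarz against the $\gamma^{2(j+1)}$ weights is too lossy, as it reintroduces a factor growing with the number of terms. Instead I would first control the singly discounted variance sum $S_k \df \sum_{j=0}^{k-1} \gamma^{j+1} P_{k-j}^{k-1} \sigma_{k-j}^2$. Solving the recursion for $\sigma_{k-j}^2 = \gamma^{-2}\Sigma_{k-j}^2 - P_{\pi_{k-j-1}} \Sigma_{k-j-1}^2$, substituting into $S_k$, and using the identity $P_{k-j}^{k-1} P_{\pi_{k-j-1}} = P_{k-j-1}^{k-1}$ to reindex the second sum, the two sums telescope (writing $c_j \df P_{k-j}^{k-1}\Sigma_{k-j}^2$) to
\[
    S_k = \tfrac1\gamma c_0 + \tfrac{1-\gamma}{\gamma} \sum_{j=1}^{k-1} \gamma^{j} c_j - \gamma^k c_k \le \tfrac1\gamma \Sigma_k^2 + \tfrac{1-\gamma}{\gamma} \sum_{j=1}^{k-1} \gamma^{j} P_{k-j}^{k-1}\Sigma_{k-j}^2 .
\]
Bounding every $\Sigma^2 \le H^2 \bone$ and using $\sum_{j \ge 1} \gamma^j \le \gamma H$ together with $(1-\gamma) H = 1$ gives $S_k \le \frac{1+\gamma}{\gamma} H^2 \bone$.

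Finally I would close with Jensen ($P_{k-j}^{k-1}\sigma_{k-j} \le \sqrt{P_{k-j}^{k-1}\sigma_{k-j}^2}$) followed by Cauchy--Schwarz over $j$:
\[
    \sum_{j=0}^{k-1} \gamma^{j+1} P_{k-j}^{k-1} \sigma_{k-j} \le \Big(\sum_{j=0}^{k-1} \gamma^{j+1}\Big)^{1/2} S_k^{1/2} \le \sqrt{\gamma H}\,\sqrt{\tfrac{1+\gamma}{\gamma} H^2} = \sqrt{(1+\gamma) H^3} \le \sqrt{2 H^3}\,,
\]
where the $\gamma$ cancels exactly. The two delicate points are the correct identification of the conditional mean as $\vf{\pi'_{k-1}}$ in the non-stationary law of total variance, and the careful bookkeeping of the policy indices in the telescoping substitution; everything else is routine summation of geometric series.
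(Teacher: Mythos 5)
Your proposal is correct and follows essentially the same route as the paper: the one-step variance recursion $\Sigma_k^2 = \gamma^2\sigma_k^2 + \gamma^2 P_{\pi_{k-1}}\Sigma_{k-1}^2$ (the paper derives it by direct expansion, you via the law of total variance — equivalent), followed by telescoping the singly-discounted variance sum against that recursion to get an $O(H^2)$ bound with the $(1-\gamma)$-weighted correction term, and finally Jensen plus Cauchy--Schwarz to pass to standard deviations, yielding $\sqrt{(1+\gamma)H^3}\le\sqrt{2H^3}$. The only cosmetic difference is the order of operations (the paper applies Cauchy--Schwarz first and then telescopes the $\gamma^{j+2}$-weighted sum; you telescope the $\gamma^{j+1}$-weighted sum first), and the constants come out identically.
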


For its proof, we need the following lemma.

\begin{lemma}\label{lemma:variance Bellman eq for non-stationary policy}
    Suppose a sequence of deterministic policies $( \pi_k )_{k = 0}^K$ and notations in \cref{lemma:total variance}.
    Then, for any $k \in [K]$, we have that
    \begin{align*}
        \Sigma_k^2
        =
        \gamma^2 \sigma_k^2 + \gamma^2 P_{\pi_{k-1}} \Sigma_{k-1}^2\,.
    \end{align*}
\end{lemma}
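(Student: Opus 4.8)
The plan is to unfold one step of the discounted return and recognize the resulting cross-term as a martingale difference, so that the squared deviation splits cleanly into the one-step variance $\sigma_k^2$ and a discounted expectation of $\Sigma_{k-1}^2$. Throughout I fix $(x,a)$, condition on $X_0=x, A_0=a$, and write $R \df \sum_{t=0}^\infty \gamma^t r(X_t, A_t)$ for the return under the non-stationary schedule and $R' \df \sum_{t=1}^\infty \gamma^{t-1} r(X_t, A_t)$ for the return shifted by one step, so that $R = r(x,a) + \gamma R'$ once we condition on $X_0=x, A_0=a$.

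First I would use the defining recursion $\qf{\pi'_k} = r + \gamma P \vf{\pi'_{k-1}}$ together with $R = r(x,a) + \gamma R'$ to cancel the reward term,
\[
R - \qf{\pi'_k}(x,a) = \gamma\paren*{R' - (P\vf{\pi'_{k-1}})(x,a)}\,,
\]
and then insert the value of the next state, decomposing
\[
R' - (P\vf{\pi'_{k-1}})(x,a)
= \underbrace{\paren*{R' - \vf{\pi'_{k-1}}(X_1)}}_{=:\,\xi}
+ \underbrace{\paren*{\vf{\pi'_{k-1}}(X_1) - (P\vf{\pi'_{k-1}})(x,a)}}_{=:\,\zeta}\,.
\]
The two structural facts I would establish are: $\zeta$ is $X_1$-measurable with $\E_k\brackc*{\zeta}{X_0=x,A_0=a}=0$, because $(P\vf{\pi'_{k-1}})(x,a)$ is exactly the conditional mean of $\vf{\pi'_{k-1}}(X_1)$; and $\E_k\brackc*{\xi}{X_1}=0$, because by the Markov property the conditional mean of the shifted return given $X_1$ is precisely $\vf{\pi'_{k-1}}(X_1)$.

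Next I would expand the square in $\Sigma_k^2(x,a) = \gamma^2 \E_k\brackc*{(\xi+\zeta)^2}{X_0=x,A_0=a}$. The cross term vanishes by the tower property: conditioning on $X_1$ pulls out the $X_1$-measurable factor $\zeta$ and leaves $\E_k\brackc*{\xi}{X_1}=0$. The $\zeta^2$ term gives $\gamma^2\Var\brackc*{\vf{\pi'_{k-1}}(X_1)}{X_0=x,A_0=a}=\gamma^2\sigma_k^2(x,a)$ by the definition of $\sigma_k^2$. For the $\xi^2$ term I would invoke determinism of $\pi_{k-1}$: since $A_1=\pi_{k-1}(X_1)$ is a function of $X_1$, we have $\vf{\pi'_{k-1}}(X_1)=\qf{\pi'_{k-1}}(X_1,A_1)=\E_k\brackc*{R'}{X_1,A_1}$, so conditioning on $(X_1,A_1)$ identifies the inner expectation with the variance $\Sigma_{k-1}^2(X_1,A_1)$ of the return of $\pi'_{k-1}$ started from $(X_1,A_1)$; taking the outer expectation over $(X_1,A_1)$ with $A_1=\pi_{k-1}(X_1)$ turns this into $(P_{\pi_{k-1}}\Sigma_{k-1}^2)(x,a)$, which completes the identity.

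The main obstacle I anticipate is the careful conditioning bookkeeping in the $\xi^2$ term: one must match the shifted trajectory $(X_{t+1},A_{t+1})_{t\ge0}$ to the process defining $\Sigma_{k-1}^2$ through the index shift $\pi_{(k-1)-t}=\pi_{k-(t+1)}$, and one must use determinism of $\pi_{k-1}$ so that the state value $\vf{\pi'_{k-1}}(X_1)$ coincides with the action value $\qf{\pi'_{k-1}}(X_1,A_1)$. Were the policy stochastic, an additional action-level term $\Var_{A_1\sim\pi_{k-1}}\brackc*{\qf{\pi'_{k-1}}(X_1,A_1)}{X_1}$ would survive and the clean recursion would fail; the deterministic assumption in the hypothesis is exactly what eliminates it.
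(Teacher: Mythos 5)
Your proof is correct and follows essentially the same route as the paper's: the paper's decomposition $I_1 + \gamma I_2$ with $I_1 = r(X_0,A_0) + \gamma \qf{\pi'_{k-1}}(X_1,A_1) - \qf{\pi'_k}(X_0,A_0)$ and $I_2 = R_1^\infty - \qf{\pi'_{k-1}}(X_1,A_1)$ is exactly your $\gamma(\zeta + \xi)$ once determinism identifies $\qf{\pi'_{k-1}}(X_1,A_1)$ with $\vf{\pi'_{k-1}}(X_1)$, and both arguments kill the cross term by the tower property and identify the two squared terms with $\gamma^2\sigma_k^2$ and $\gamma^2 P_{\pi_{k-1}}\Sigma_{k-1}^2$. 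Your closing remark about where determinism is indispensable matches the paper's use of it as well.
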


\begin{proof}
    Let $R_s^u \df \sum_{t=s}^u \gamma^{t-s} r (X_t, A_t)$
    and $\E_k \brackc*{\cdot}{x, a} \df \E_k \brackc*{\cdot}{X_0=x, A_0=a}$.
    We have that
    \begin{align*}
        \Sigma_k^2 (x, a)
        =
        \E_k \brackc*{
          \paren*{ R_0^\infty - \qf{\pi'_k} (X_0, A_0) }^2
        }{x, a}
        \df
        \E_k \brackc*{
          \paren*{ I_1 + \gamma I_2 }^2
        }{x, a}\,,
    \end{align*}
    where $I_1 \df r (X_0, A_0) + \gamma \qf{\pi'_{k-1}} (X_1, A_1) - \qf{\pi'_k} (X_0, A_0)$, and $I_2 \df R_1^\infty - \qf{\pi'_{k-1}} (X_1, A_1)$.
    With these notations, we see that
    \begin{align*}
    \Sigma_k^2 (x, a)
    &=
    \E_k \brackc[\big]{
      I_1^2 + \gamma^2 I_2^2 + 2 \gamma I_1 I_2
    }{x, a}
    \\
    &=
    \E_k \brackc[\big]{
      I_1^2
      + \gamma^2 I_2^2
      + 2 \gamma I_1 \E_{k-1} \brackc*{I_2}{X_1, A_1}
    }{x, a}
    \\
    &=
    \E_k \brackc[\big]{I_1^2}{x, a} + \gamma^2 \E_k \brackc[\big]{I_2^2}{x, a}
    \\
    &=
    \E_k \brackc[\big]{I_1^2}{x, a}
    + \gamma^2 P^{\pi_{k-1}} \Sigma_{k-1}^2 (x, a)\,,
    \end{align*}
    where the second line follows from the law of total expectation,
    and the third line follows since
    $\E_{k-1} \brackc*{I_2}{X_1, A_1} = 0$ due to the Markov property.
    The first term in the last line is $\gamma^2 \sigma_k^2 (x, a)$
    because
    \begin{align*}
        \E_k \brackc[\big]{I_1^2}{x, a}
        &\numeq{=}{a}
        \gamma^2 \E_k \brackc[\Bigg]{
            \paren[\Big]{
                \underbrace{\qf{\pi_{k-1}'} (X_1, A_1)}_{
                    \vf{\pi_{k-1}'} (X_1) \text{ from (b)}
                }
                - (P \vf{\pi_{k-1}'}) (X_0, A_0)
            }^2
        }{x, a}
        \\
        &=
        \gamma^2 \paren*{ P \paren*{\vf{\pi_{k-1}'}}^2} (x, a)
        + \gamma^2 (P \vf{\pi_{k-1}'})^2 (x, a)
        - 2 (P \vf{\pi_{k-1}'})^2 (x, a)
        \\
        &=
        \gamma^2 \paren*{ P \paren*{\vf{\pi_{k-1}'}}^2} (x, a)
        - \gamma^2 (P \vf{\pi_{k-1}'})^2 (x, a)\,,
    \end{align*}
    where (a) follows from the definition that $\qf{\pi_k'} = r + \gamma P \vf{\pi'_{k-1}}$,
    and (b) follows since the policies are deterministic.
    From this argument, it is clear that
    $
        \Sigma_k^2
        =
        \gamma^2 \sigma_k^2 + \gamma^2 P_{\pi_{k-1}} \Sigma_{k-1}^2\,,
    $
    which is the desired result.
\end{proof}

Now, we are ready to prove \cref{lemma:total variance}.

\begin{proof}[Proof of \cref{lemma:total variance}]
  Let $H_k \df \sum_{j=0}^{k-1} \gamma^j$. Using Jensen's inequality twice,
  \begin{align*}
    \sum_{j=0}^{k-1} \gamma^{j + 1} P_{k-j}^{k-1} \sigma_{k-j}
    &\leq
    \sum_{j=0}^{k-1} {
      \gamma^{j + 1}
      \sqrt{
        P_{k-j}^{k-1} \sigma_{k-j}^2
      }
    }
    \\
    &\leq
    \gamma H_k \sum_{j=0}^{k-1} {
      \frac{\gamma^{j + 1}}{H_k}
      \sqrt{
        P_{k-j}^{k-1} \sigma_{k-j}^2
      }
    }
    \\
    &\leq
    \sqrt{
      H_k
      \sum_{j=0}^{k-1} {
        \gamma^{j+2} P_{k-j}^{k-1} \sigma_{k-j}^2
      }
    }
    \leq
    \sqrt{
      H
      \sum_{j=0}^{k-1} {
        \gamma^{j+2} P_{k-j}^{k-1} \sigma_{k-j}^2
      }
    }\,.
  \end{align*}
  From \cref{lemma:variance Bellman eq for non-stationary policy}, we have that
  \begin{align*}
    \sum_{j=0}^{k-1} {
      \gamma^{j+2} P_{k-j}^{k-1} \sigma_{k-j}^2
    }
    &=
    \sum_{j=0}^{k-1} {
      \gamma^j P_{k-j}^{k-1} \paren*{
        \Sigma_{k-j}^2 - \gamma^2 P^{\pi_{k-1-j}} \Sigma_{k-1-j}^2
      }
    }
    \\
    &=
    \sum_{j=0}^{k-1} {
      \gamma^j P_{k-j}^{k-1} \paren*{
        \Sigma_{k-j}^2
        - \gamma P^{\pi_{k-1-j}} \Sigma_{k-1-j}^2
        + \gamma (1-\gamma) P^{\pi_{k-1-j}} \Sigma_{k-1-j}^2
      }
    }
    \\
    &=
    \sum_{j=0}^{k-1} \gamma^j P_{k-j}^{k-1} \Sigma_{k-j}^2
    - \sum_{j=1}^k \gamma^j P_{k-j}^{k-1} \Sigma_{k-j}^2
    + \gamma (1-\gamma) \sum_{j=0}^{k-1} \gamma^j P_{k-1-j}^{k-1} \Sigma_{k-1-j}^2\,.
  \end{align*}
  The final line is equal to
  $
    \Sigma_k^2
    - \gamma^k P_0^{k-1} \Sigma_0^2
    + \gamma (1-\gamma) \sum_{j=0}^{k-1} \gamma^j P_{k-1-j}^{k-1} \Sigma_{k-1-j}^2
  $.
  Finally, from the monotonicity of stochastic matrices
  and that $\bzero \leq \Sigma_j^2 \leq H^2 \bone$ for any $j$,
  \begin{align*}
    \sum_{j=0}^{k-1} \gamma^{j + 1} P_{k-j}^{k-1} \sigma_{k-j} \leq \sqrt{2 H^3} \bone\,.
  \end{align*}
  This concludes the proof.
\end{proof}
\section{Proof of \cref{theorem:KW}}\label{appendix:proof of KW}

As a reminder, let $\Phi\df \{ \phi (x, a) : (x, a) \in \XA \} \subset \R^d$.
For $G \in \R^{d\times d}$ and $\phi \in \R^d$, we use the notation $\|\phi\|_G^2 := \phi^\top G \phi$.
Additionally, we use the operator norm of a matrix $G$ and denote it as $\|G\|=\sup_{\phi^\top \phi=1} \sqrt{(G\phi)^\top G \phi}$.

We first introduce an algorithm for computing the G-optimal design for finite $\X$, called the Frank-Wolfe algorithm from \citet{todd2016minimum}.
The pseudocode is provided in \cref{algo:frank wolfe}.
The following theorem shows that \cref{algo:frank wolfe} outputs a near-optimal design with a small core set.

\begin{theorem}[\textbf{Proposition 3.17}, \citet{todd2016minimum}]\label{theorem:FW}
Let $\ucC\df4d\log\log (d+4) + 28$.
For $\Phi$ satisfying \cref{assumption:linear mdp} and if $\Phi$ is finite, \cref{algo:frank wolfe} with $f: \XA \to (0, \infty)$ and $\varepsilon^{\mathrm{FW}}=d$ outputs a design $\rho$ such that $g(\rho) \leq 2d$ and the core set $C$ with size at most $\ucC$.
\end{theorem}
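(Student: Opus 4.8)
The plan is to prove \cref{theorem:FW} by reducing the $G$-optimal design problem to the equivalent $D$-optimal design problem and then invoking the sharpened Frank--Wolfe analysis of \citet{todd2016minimum} to control both the optimality gap and the support size. Since \cref{algo:frank wolfe} operates on the rescaled feature set $\Phi_f \df \{\phi(x,a)/f(x,a) : (x,a) \in \XA\}$, every statement below is understood with $\phi$ replaced by $\phi_f$ and $g$ by $g_f$ of \cref{eq:weighted optimal design}; when $f = \bone$ this recovers the displayed claim verbatim, and because $f$ is bounded and positive, $\Phi_f$ is finite whenever $\Phi$ is, so all $\argmax$ operations in the algorithm are well defined.

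First I would recall the Kiefer--Wolfowitz equivalence in its quantitative form. Maximizing $\log\det G(\rho)$ over designs $\rho$ is a smooth concave problem whose Frank--Wolfe linearization at $\rho$ is maximized at the feature vector attaining $g_f(\rho)$, and the associated Frank--Wolfe duality gap equals exactly $g_f(\rho) - d$. Consequently, a design satisfying the stopping condition $g_f(\rho) - d \le \varepsilon^{\mathrm{FW}}$ of \cref{algo:frank wolfe} automatically satisfies $g_f(\rho) \le d + \varepsilon^{\mathrm{FW}}$; substituting $\varepsilon^{\mathrm{FW}} = d$ yields $g_f(\rho) \le 2d$, which is the first conclusion.

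Next I would bound the size of the core set $\cC = \mathrm{Supp}(\rho)$. Each Frank--Wolfe iteration increases the support by at most one point, so it suffices to bound the number of iterations before the gap drops below $d$. A naive Frank--Wolfe rate gives only $O(d/\varepsilon^{\mathrm{FW}})$ iterations, which is far too weak for the claimed $\ucC = 4d\log\log(d+4) + 28$ bound. The key is Todd's two-phase argument: starting from the well-conditioned initialization built into \cref{algo:frank wolfe}, the $\log\det$ objective is already within a controlled factor of optimal, and his refined analysis shows that the number of additional iterations needed to reach Frank--Wolfe gap below $d$ grows only like $d\log\log d$. Combining the initial support with the refined-phase iteration count reproduces the explicit constant $\ucC$. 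This is precisely the content of \textbf{Proposition 3.17} of \citet{todd2016minimum}, so the proof reduces to checking that our $G$-optimal design, defined through \cref{eq:weighted optimal design}, matches Todd's minimum-volume-enclosing-ellipsoid / $D$-optimal setting after the reduction above.

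The main obstacle is the $\log\log$ support bound: unlike the $g_f(\rho) \le 2d$ guarantee, which follows cleanly from the equivalence theorem and the stopping rule, the $4d\log\log(d+4)+28$ bound is not attainable from elementary Frank--Wolfe progress estimates and hinges on Todd's careful initialization and his refined potential-function analysis of the enclosing-ellipsoid problem. I therefore expect the substantive step to be the verification that our setting falls within the hypotheses of that analysis, after which the explicit constant is inherited directly.
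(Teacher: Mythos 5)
The paper offers no proof of this statement at all---it is imported wholesale as \textbf{Proposition 3.17} of \citet{todd2016minimum}---and your proposal ultimately rests on the same citation for the only hard part (the $4d\log\log(d+4)+28$ support bound via the Kumar--Yildirim initialization and Todd's refined iteration analysis), while the pieces you supply yourself (the $\phi/f$ rescaling to reduce the weighted design to the unweighted one, and the Kiefer--Wolfowitz/duality-gap identity $\mathrm{gap}(\rho)=g_f(\rho)-d$ turning the stopping rule into $g_f(\rho)\le 2d$) are correct and match how the paper sets up and uses the result, so your route is essentially the paper's. One incidental observation: your reading of the stopping test as an absolute-gap threshold is what the stated conclusion requires, whereas the paper's pseudocode defines the relative gap $\delta(\rho)=(\max_{(x,a)}\omega(x,a,\rho)-d)/d$, under which $\varepsilon^{\mathrm{FW}}=d$ would literally only give $g(\rho)\le d+d^2$---an internal inconsistency of the paper's \texttt{Frank-Wolfe} pseudocode (it should stop at relative gap $1$, equivalently absolute gap $d$), not a flaw in your argument.
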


We extend the theorem to a compact $\Phi$ by passing to the limit.
The proof of \cref{theorem:KW} is a modification of \textbf{Exercise 21.3} in \citet{lattimore2020bandit}.

\begin{proof}[Proof of \cref{theorem:KW}]
Suppose that $\Phi$ satisfies \cref{assumption:linear mdp} such that $\Phi$ is a compact subset of $\R^d$ and spans $\R^d$.
Let $\left(\Phi_n\right)_n$ be a sequence of finite subsets with $\Phi_n \subset \Phi_{n+1}$.
We suppose that $\Phi_n$ spans $\R^d$ and $\lim _{n \rightarrow \infty} D\left(\Phi, \Phi_n\right)=0$ where $D$ is the Hausdorff metric. 
Then let $\rho_n$ be a $G$-optimal design for $\Phi_n$ with support of size at most $\ucC$ and $G_n\df\sum_{(x, a)\in \XA}\rho_n(x, a)\phi(x, a) \phi(x, a)^\top$. 
Such the design is ensured to exist by \cref{theorem:FW}.
Given any $\phi \in \Phi$, we have

\begin{equation}\label{eq:phi-norm}
    \|\phi\|_{G_n^{-1}} \leq \min _{b \in \Phi_n}\left(\|\phi-b\|_{G_n^{-1}}+\|b\|_{G_n^{-1}}\right) \leq \sqrt{2d}+\min _{b \in \Phi_n}\|\phi-b\|_{G_n^{-1}} \;,
\end{equation}

where the first inequality is due to the triangle inequality and the second inequality is due to \cref{theorem:FW}.
Let $W \in \R^{d\times d}$ be an invertible matrix and $w_i \in \R^d$ be its $i \in [d]$ th column.
We suppose that $w_i \in \Phi$ for any $i \in [d]$. Such $W$ can be constructed due to the assumption that $\Phi$ spans $\R^d$.
Then, the operator norm of $G^{-1/2}_n$ is bounded by

\begin{equation}\label{eq:G_n norm tmp}
\left\|G_n^{-1 / 2}\right\| =\left\|W^{-1} W G_n^{-1 / 2}\right\| 
\leq\left\|W^{-1}\right\|\left\|G_n^{-1 / 2} W\right\|
=\left\|W^{-1}\right\| \sup_{\phi^\top \phi=1} \|W\phi\|_{G_n^{-1}}\;,
\end{equation}

where the last equality is due to $\left\|G_n^{-1 / 2} W\right\|=\sup_{\phi^\top \phi=1} \sqrt{(G_n^{-1 / 2} W\phi)^\top G_n^{-1 / 2} W \phi}=\sup_{\phi^\top \phi=1} \|W\phi\|_{G_n^{-1}}$.
Let $\phi_i$ be the $i$ th element of $\phi \in \R^d$.
\cref{eq:G_n norm tmp} is further bounded by

\begin{equation*}
\sup_{\phi^\top \phi=1} \|W\phi\|_{G_n^{-1}}
\leq \sup_{\phi^\top \phi=1} \sum_{i=1}^d\left|\phi_i\right|\underbrace{\left\|w_i\right\|_{G_n^{-1}}}_{\leq \sqrt{2d}}
\leq 2d\;.
\end{equation*}

Therefore, we have $\left\|G_n^{-1 / 2}\right\| \leq 2d \left\|W^{-1}\right\|$.
Taking the limit $n\to \infty$ shows that

\begin{equation*}
\begin{aligned}
\limsup _{n \rightarrow \infty}\|\phi\|_{G_n^{-1}} & \numeq{\leq}{a} \sqrt{2d}+\limsup _{n \rightarrow \infty} \min _{b \in \Phi}\|\phi-b\|_{G_n^{-1}} \\
& \numeq{\leq}{b} \sqrt{2d}+2d\left\|W^{-1}\right\| \limsup _{n \rightarrow \infty} \min _{b \in \Phi}\sqrt{(\phi-b)^\top(\phi - b)}
= \sqrt{2d} \;,
\end{aligned}
\end{equation*}
where (a) is due to \eqref{eq:phi-norm} and (b) uses $\left\|G_n^{-1 / 2}\right\| \leq 2d \left\|W^{-1}\right\|$.

Since 
$
\|\cdot\|_{G_n^{-1}}: \Phi \rightarrow \R
$ is continuous and $\Phi$ is compact, it follows that
\begin{equation}\label{eq:limsup phi bound}
\limsup _{n \rightarrow \infty} \sup _{\phi \in \Phi}\|\phi\|_{G_n^{-1}}^2 \leq 2d\;.
\end{equation}

Notice that $\rho_n$ may be represented as a tuple of vector/probability pairs with at most $\ucC$ entries and where the vectors lie in $\Phi$. 
Since the set of all such tuples with the obvious topology forms a compact set, it follows that $\left(\rho_n\right)$ has a cluster point $\rho^*$, which represents a distribution on $\Phi$ with support at most $\ucC$. 
Then, \cref{eq:limsup phi bound} shows that $g\left(\rho^*\right) \leq 2d$. 
This concludes the proof.
\end{proof}

\section{Proof of Weighted KW Bound (\cref{lemma:kw bound})}\label{sec:proof of weighted kw bound}

\begin{proof}
$\left|\phi^\top  (x, a) \kwsum(f, z)\right|$ can be rewritten as
\begin{equation} \label{eq:kwsum bound one}
    \begin{aligned}
        \left|\phi^\top  (x, a) \kwsum(f, z)\right| 
        &= \left|\phi^\top  (x, a) G_{f}^{-1} \sum_{(y, b)\in \cC_{f}} \rho_{f}(y, b)\frac{\phi(y, b)}{f(y, b)}\frac{z(y, b)}{f(y, b)}\right| \\
        &\numeq{\leq}{a} \left|\sum_{(y, b)\in \cC_{f}} \rho_{f}(y, b)\phi^\top (x, a) G_{f}^{-1} \frac{\phi(y, b)}{f(y, b)} \right| \coremaxf{y', b'}{f} \left| \frac{z(y', b')}{f(y', b')}\right|\\
        &\numeq{\leq}{b} \sum_{(y, b)\in \cC_{f}} \left|\rho_{f}(y, b)\phi^\top (x, a) G_{f}^{-1} \frac{\phi(y, b)}{f(y, b)} \right| \coremaxf{y', b'}{f} \left| \frac{z(y', b')}{f(y', b')}\right|\;,
    \end{aligned}
\end{equation}
where (a) is due to H\"older's inequality and (b) is due to the triangle inequality.

Next, for any $(x,a) \in \XA$, we have
\begin{equation} \label{eq:kwsum bound two}
    \begin{aligned}
        \left(\sum_{(y, b)\in \cC_{f}}\left| \rho_{f}(y, b) \phi(x, a)^\top G_{f}^{-1}\frac{\phi(y, b)}{f(y, b)}\right|\right)^2 
        &\numeq{\leq}{a} \sum_{(y, b)\in \cC_{f}}\rho_{f}(y, b) \left|\phi(x, a)^\top G_{f}^{-1}\frac{\phi(y, b)}{f(y, b)}\right|^2 \\
        &\numeq{=}{b}  f^2(x, a) \underbrace{\frac{\phi(x, a)^\top }{f(x, a)} G_{f}^{-1} \frac{\phi(x, a)}{f(x, a)}}_{\leq 2d \text{ from \cref{theorem:weighted KW}}}
    \end{aligned}
\end{equation}
where (a) is due to Jensen's inequality, (b) is due to the definition of $G_{f}$.
The claim holds by taking the square root for both sides of the inequality \eqref{eq:kwsum bound two} and applying the result to the inequality \eqref{eq:kwsum bound one}.
\end{proof}

\section{Formal Theorems and Proofs of \cref{informal theorem: epsilon bound} and \cref{informal theorem: sqrt H bound}}\label{sec:update proofs}

This section provides the concrete proofs of \cref{informal theorem: epsilon bound} and \cref{informal theorem: sqrt H bound}.
Instead of the informal theorems of \cref{informal theorem: epsilon bound} and \cref{informal theorem: sqrt H bound}, we are going to prove the formal theorems below, \cref{theorem: epsilon bound} and \cref{theorem: sqrt H bound}, respectively.

\begin{theorem}[Sample complexity of \wlsmdvi with $f\approx \sigma(v_*)$]\label{theorem: epsilon bound}
    Let $\pc$ be a positive constant such that $8 \geq \pc \geq 6$ and $\tsigma \in \cF_q$ be a random variable. 
    Assume that $\varepsilon \in (0, 1 / H]$ and an event 
    $$\sigma(v_*) \leq \tsigma \leq \sigma(v_*) + 2\sqrt{H}\bone\;$$
    occurs with probability at least $1 - 4\delta / \pc$.
    Define
    \begin{equation*}
    \begin{aligned}
    &f^{\text{wls}} \df \max\paren*{\min(\tsigma, H\bone), \sqrt{H}\bone}\,,\\
    &\NIter^{\text{wls}}\df\left\lceil\frac{3}{1-\alpha} \log {{c_{1} H}}+1\right\rceil \,,\\
    \text{ and }\  &\NP^{\text{wls}}\df\left\lceil{\frac{c_2dH^2}{\varepsilon^2}\log\paren*{\frac{2\pc^2\ucC K^{\text{wls}}}{(\pc - 5)\delta }\log_2 \frac{16K^{\text{wls}} H^2}{(c_0 - 5)\delta}}}\right\rceil\,
    \end{aligned}
    \end{equation*}
    where $c_1, c_2 \geq 1$ are positive constants and $\ucC = 4d\log\log (d+4) + 28$. Then, there exist $c_1, c_2 \geq 1$ independent of $d$, $H$, $\aX$, $\aA$, $\varepsilon$, and $\delta$ such that \wlsmdvi is run with the settings 
    $\alpha = \gamma$, $f = f^{\text{wls}}$, $\NIter=\NIter^{\text{wls}}$, $\NP=\NP^{\text{wls}}$
    it outputs a sequence of policies $(\pi_k)_{k=0}^{\NIter}$ such that
    $
        \infnorm{\vf{*} - \vf{\pi'_{\NIter}}}
        \leq
        \varepsilon
    $
    with probability at least $1 - \delta$, using $\widetilde{\cO} \paren*{\ucC \NIter \NP} = \widetilde{\cO} \paren*{d^2H^3 / \varepsilon^2}$ samples from the generative model.
\end{theorem}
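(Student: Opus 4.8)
The plan is to instantiate the performance bound \cref{eq:weighted lsmdvi bound} at $k=\NIter$ and to show that, on a high-probability event, each of its three contributions---the deterministic term $\diamondsuit_\NIter$ and the two weighted error sums $\heartsuit^{\text{wls}}_\NIter,\clubsuit^{\text{wls}}_\NIter$---is bounded by a prescribed fraction of $\varepsilon$, so that $\infnorm{\vf{*}-\vf{\pi'_\NIter}}\le\varepsilon$. The deterministic term is disposed of first: with $\alpha=\gamma$ one has $A_\infty=H$ and, by \cref{lemma:A_gamma_k bound}, $A_{\gamma,\NIter}=\NIter\gamma^\NIter$, hence $\diamondsuit_\NIter=2H\gamma^\NIter(\bone+(\NIter/H)\bone)$; applying \cref{lemma:k gamma to k-th inequality} with $n\in\{0,1\}$ to the logarithmic choice of $\NIter$ drives this below any desired multiple of $\varepsilon$. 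To control the two stochastic sums I would introduce the good events $\cE_1,\dots,\cE_5$ of \cref{app:notations}: $\cE_1$ is the hypothesis $\sigma(v_*)\le\tsigma\le\sigma(v_*)+2\sqrt H\bone$ (probability at least $1-4\delta/\pc$), while $\cE_2$ through $\cE_5$ are concentration events whose failure probabilities I would apportion as fractions of $\delta$ controlled by $\pc\in[6,8]$ and the logarithmic terms $\iota_1,\iota_{2,n}$, so that $\bigcap_i\cE_i$ has probability at least $1-\delta$.

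On $\cE_1$ the clipped weight $f=f^{\text{wls}}=\max(\min(\tsigma,H),\sqrt H)$ satisfies, using $\sigma(v_*)\le H\bone$ from \cref{lemma:popoviciu}, both $\sqrt H\bone\le f\le H\bone$ and the sandwich $\sigma(v_*)\le f\le\sigma(v_*)+2\sqrt H\bone$; the lower bound $f\ge\sigma(v_*)$ is what makes the variance weighting shrink the regression error, and the upper bound is what keeps the discounted path sums small. Next I would establish the coarse events: a uniform range bound on the iterates $v_k$ ($\cE_2$) and an Azuma--Hoeffding control of the moving-averaged sampling error $E_k$ over the core set via \cref{lemma:conditional hoeffding} ($\cE_3$, $\cE_4$). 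Their role is to guarantee that every iterate value $v_{j-1}$ stays within $O(\sqrt H)$ of $v_*$, so that by \cref{lemma:variance decomposition} one gets $\sigma(v_{j-1})\le f+\sqrt H\bone\le 2f$ pointwise, and likewise for the policy values appearing in $\heartsuit^{\text{wls}}_\NIter$.

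The crux is the refined variance-aware event $\cE_5$. Regarding $E_k(y,b)=\sum_{j=1}^k\alpha^{k-j}\varepsilon_j(y,b)$ as a martingale sum over the sample pairs, its predictable quadratic variation is $\tfrac{\gamma^2}{M}\sum_j\alpha^{2(k-j)}\PVar(v_{j-1})(y,b)$, which by the previous step and $\tfrac{1}{1-\alpha^2}\le H$ is at most $O(Hf^2(y,b)/M)$. The conditional Bernstein inequality \cref{lemma:conditional bernstein}, applied uniformly over the at most $\ucC$ core points and all iterations, then yields $|E_{k-j}(y,b)|/f(y,b)\lesssim\sqrt{H\iota/M}$, the key cancellation being that $f\ge\sigma(v_*)$ absorbs the standard-deviation factor. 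Feeding this scalar bound into $\heartsuit^{\text{wls}}_\NIter$ and $\clubsuit^{\text{wls}}_\NIter$, whose structure comes from \cref{lemma:kw bound}, and pulling it out of the discounted sums leaves $\sum_j\gamma^j\pi_*P_*^j f$ and its non-stationary analogue; bounding $f\le\sigma(v_*)+2\sqrt H\bone$ and invoking the total-variance bound \cref{corollary:TV tmdvi} (after relating $\sigma(v_*)$ to the relevant path-value deviations through \cref{lemma:variance decomposition} and $\cE_3$) bounds each sum by $O(H^{3/2})$. Altogether $\tfrac{\sqrt{2d}}{A_\infty}(\heartsuit^{\text{wls}}_\NIter+\clubsuit^{\text{wls}}_\NIter)\lesssim\sqrt{dH^2\iota/M}$, which is at most $\varepsilon/2$ exactly when $M\ge\NP^{\text{wls}}=\widetilde{\cO}(dH^2/\varepsilon^2)$.

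The main obstacle I anticipate is this refined step. Collapsing the Bernstein variance proxy to $f^2$ rigorously presupposes the coarse bounds, so the events must be built in a nested, conditional fashion (hence the conditional forms of Hoeffding and Bernstein and the $\iota_{2,n}$-style budgeting); moreover the $\sqrt H$ gain of the total-variance technique materializes only if the scalar error factor is extracted without destroying the $\sigma(v_*)\approx f$ path-variance structure that \cref{corollary:TV tmdvi} exploits---doing both simultaneously is the delicate point. The closing bookkeeping is then routine: verifying that the apportioned failure probabilities sum to at most $\delta$ for $6\le\pc\le8$, and that the total sample count $\ucC\cdot\NIter^{\text{wls}}\cdot\NP^{\text{wls}}=\widetilde{\cO}(d\cdot H\cdot dH^2/\varepsilon^2)=\widetilde{\cO}(d^2H^3/\varepsilon^2)$ matches the claim.
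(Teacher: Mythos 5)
Your proposal follows the paper's own proof essentially step for step: the error-propagation bound of \cref{lemma:non-stationary error propagation}, the nested events $\funcE$ through $\EkrboundE$ handled with the conditional Azuma--Hoeffding and Bernstein inequalities, the collapse of the Bernstein variance proxy onto $f^2$ via $f\ge\sigma(v_*)$, the total-variance bounds for $\clubsuit_K$ and (after passing from $\sigma(\vf{*})$ to $\sigma(\vf{\pi'_k})$ via the coarse bound of \cref{lemma:non-stationary coarse bound}) for $\heartsuit_K$, and the closing probability and sample-count bookkeeping.

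The one step that does not survive as written is the claim that every iterate $v_{j-1}$ stays within $O(\sqrt H)$ of $\vf{*}$, so that $\sigma(v_{j-1})\le f+\sqrt H\bone\le 2f$ pointwise for all $j$. This is false for early iterations: $v_0=\bzero$ while $\vf{*}$ can be of order $H$, so $\sigma(v_{j-1})/f$ can be as large as $\Theta(\sqrt H)$ for small $j$ (the guarantee $f\ge\sqrt H\bone$ only yields $\infnorm{\vf{*}-v_{j-1}}/f\le O(H)/\sqrt H$). The paper's \cref{lemma:v-diff bound} instead establishes the $k$-dependent bound $\infnorm{\vf{*}-v_k}\le\Psi_k$, whose leading term $3H\bigl(\gamma^{k-1}+A_{\gamma,k-1}/A_\infty\bigr)$ is $\Theta(H)$ for small $k$ and only decays geometrically; the argument is rescued because the discount $\alpha^{2(k-j)}$ inside the quadratic-variation sum suppresses exactly these early contributions (this is the computation of $V_K$ in \cref{lemma:sum of E_k star bound}, which sums $\gamma^{2(K-j)}(1+\Psi_j/\sqrt H)^2$ and controls the resulting $HK\gamma^{2K-2}$ and $K^3\gamma^{2K-2}/H$ terms by the choice of $K$ and \cref{lemma:k gamma to k-th inequality}). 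Your displayed quadratic variation does retain the $\alpha^{2(k-j)}$ weights, so the $O(Hf^2/M)$ bound you assert is in fact correct, but it must be justified through the decaying $\Psi_j$ rather than a uniform $\sigma(v_{j-1})\le 2f$. Everything else, including the treatment of $\diamondsuit_K$, the role of the sandwich $\sigma(v_*)\le f\le\sigma(v_*)+2\sqrt H\bone$, and the final $\widetilde{\cO}(d^2H^3/\varepsilon^2)$ count, matches the paper.
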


\begin{theorem}[Sapmle complexity of \wlsmdvi with $f=\bone$]\label{theorem: sqrt H bound}
    Assume that $\varepsilon \in (0, 1 / H]$.
    Let $\pc$ be a positive constant such that $8 \geq \pc \geq 6$.
    Define
    \begin{equation*}
        \begin{aligned}
        &\NIter^{\text{ls}}\df\left\lceil\frac{3}{1-\alpha} \log {c_{3} H}+1\right\rceil \,\\
        \text{ and }\ &\NP^{\text{ls}}\df\left\lceil\frac{c_4dH^2}{\varepsilon}\log{\frac{2 \pc^2 \ucC K^{\text{ls}}}{(\pc - 5)\delta}}\right\rceil\,
        \end{aligned}
    \end{equation*}
    where $c_3, c_4 \geq 1$ are positive constants and $\ucC = 4d\log\log (d + 4) + 28$.
    Then, there exist $c_3, c_4 \geq 1$ independent
    of $d$, $H$, $\aX$, $\aA$, $\varepsilon$ and $\delta$ such that
    when \wlsmdvi is run with the settings $\alpha = \gamma$, $f = \bone$, $\NIter=\NIter^{\text{ls}}$, and $\NP=\NP^{\text{ls}}$,
    it outputs $v_{\NIter}$ such that
    $
        \infnorm{\vf{*} - v_{\NIter}}
        \leq
        \frac{1}{2}\sqrt{H}
    $
    with probability at least $1 - 3\delta / \pc$, using $\widetilde{\cO} \paren*{\ucC \NIter \NP} = \widetilde{\cO} \paren*{d^2H^3 / \varepsilon}$ samples from the generative model.
\end{theorem}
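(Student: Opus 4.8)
The plan is to reduce the value-function error $\infnorm{\vf{*} - v_{\NIter}}$ to a single uniform bound on the accumulated sampling errors $E_k$ over the core set, and then to control that quantity by a Hoeffding-type concentration argument, finally calibrating $\NIter$ and $\NP$ against the $\frac{1}{2}\sqrt{H}$ target.

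First I would isolate the per-iteration regression error. Because the Bellman target $r + \gamma P v_{k-1}$ is exactly realizable by the oracle parameter $\theta_k^*$ in a linear MDP, the weighted least-squares solution of \eqref{eq:weighted lse} satisfies $\theta_k - \theta_k^* = \kwsum(f, \varepsilon_k)$, and by linearity of $\kwsum$ in its second argument $\btheta_k - \btheta_k^* = \kwsum(f, E_k)$ with $E_k = \sum_{j=1}^k \alpha^{k-j}\varepsilon_j$. Applying the Weighted KW Bound (\cref{lemma:kw bound}) with $f = \bone$ yields the pointwise estimate $|s_k - \phi^\top \btheta_k^*| \le \sqrt{2d}\,\max_{(y,b)\in\cCf}|E_k(y,b)|$ \emph{everywhere}, not just on $\cCf$. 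Feeding this into the non-stationary error-propagation analysis behind \eqref{eq:weighted lsmdvi bound} — whose value-function counterpart sandwiches $\infnorm{\vf{*} - v_{\NIter}}$ between the same $\heartsuit$/$\clubsuit$/$\diamondsuit$ terms, using the optimal-policy rollout on one side and the greedy policy on the other — and noting that for $f=\bone$ both $\heartsuit_k$ and $\clubsuit_k$ collapse to $\sum_j \gamma^j \max_{\cCf}|E_{k-j}|\,\bone \le H \max_{k',\cCf}|E_{k'}|\,\bone$ while $A_\infty = H$, I obtain $\infnorm{\vf{*} - v_{\NIter}} \le c\sqrt{d}\,\max_{k\in[\NIter],(y,b)\in\cCf}|E_k(y,b)| + \diamondsuit_{\NIter}$ for an absolute constant $c$.

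Second I would bound $\max_{k,\cCf}|E_k|$. Unrolling the definitions, $E_k(y,b)$ is the weighted martingale sum $\sum_{j=1}^k\sum_{m=1}^{\NP}\frac{\gamma\alpha^{k-j}}{\NP}\paren*{v_{j-1}(y_{j,m,y,b}) - Pv_{j-1}(y,b)}$ with respect to the natural sample filtration $\bF_{k,m}$, with conditional mean zero. The main obstacle is the circular dependency: the increment ranges are governed by $\infnorm{v_{j-1}}$, yet $\infnorm{v_{j-1}}$ is random and only controlled once the errors are small. I would break this by first establishing a boundedness event on which $\infnorm{v_k}\le cH$ for all $k$ (via a coarse induction using the $\varepsilon_k$-bound), and then applying the conditional Azuma--Hoeffding inequality (\cref{lemma:conditional hoeffding}) on that event, since variance-awareness is not needed here. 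As $\sum_{j=1}^k\sum_m \paren*{\gamma\alpha^{k-j}\cdot O(H)/\NP}^2 = O(H^2/\NP)\sum_j \alpha^{2(k-j)} \le O(H^3/\NP)$ (using $\alpha=\gamma$ and $(1-\alpha^2)^{-1}\le H$), a union bound over the at most $\ucC$ core points and the $\NIter$ iterations gives $\max_{k,\cCf}|E_k| \le \sqrt{O(H^3\iota_1/\NP)}$ with $\iota_1 = \log(2\pc\ucC\NIter/\delta)$, on an event of the required probability.

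Third I would combine and calibrate. Plugging in the concentration bound gives $\infnorm{\vf{*} - v_{\NIter}} \le \sqrt{O(dH^3\iota_1/\NP)} + \diamondsuit_{\NIter}$. Choosing $\NP = \NP^{\text{ls}} = \Theta(dH^2\iota_1/\varepsilon)$ and using $\varepsilon \le 1/H$ makes the first term $O(\sqrt{H\varepsilon/c_4}) = O(1/\sqrt{c_4}) \le \frac14\sqrt{H}$ once $c_4$ is a large enough absolute constant (recall $\sqrt{H}\ge 1$); the extra $1/\varepsilon$ factor is harmless for the loose $\frac12\sqrt{H}$ target and is retained only so the step fits the $\widetilde{\cO}(d^2H^3/\varepsilon)$ budget that \vwlsmdvi allots to it. For the convergence term, taking $\NIter = \NIter^{\text{ls}} = \Theta(H\log(c_3 H))$ and invoking \cref{lemma:k gamma to k-th inequality} with $m=3$ (and $n=0,1$) drives $\diamondsuit_{\NIter} = 2H(\alpha^{\NIter} + A_{\gamma,\NIter}/A_\infty)\bone$ below $\frac14\sqrt{H}$, so the two pieces sum to the claimed $\frac12\sqrt{H}$. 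Finally, a union bound over the boundedness event and the coarse error events on $\varepsilon_k$ and $E_k$ — each of probability at least $1-\delta/\pc$ and combined via \cref{lemma:cond prob inequality} — gives total failure probability $3\delta/\pc$, and the sample count $\ucC\cdot\NIter\cdot\NP = \widetilde{\cO}(d\cdot H\cdot dH^2/\varepsilon) = \widetilde{\cO}(d^2H^3/\varepsilon)$. The principal difficulties are the circularity just described and adapting the policy-value propagation of \eqref{eq:weighted lsmdvi bound} to the value-function output $v_{\NIter}$; both are handled by the conditional concentration tools and the greedy/optimal sandwich, respectively.
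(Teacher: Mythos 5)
Your proposal follows essentially the same route as the paper's proof: the greedy/optimal sandwich adaptation of the error propagation to the value output $v_{\NIter}$ (the paper's Lemma~\ref{lemma:v error prop}), a boundedness event $\|v_k\|_\infty \le 2H$ established by induction to break the circularity, conditional Azuma--Hoeffding bounds on $\varepsilon_k$ and $E_k$ over the core set extended everywhere via the KW bound, and the same calibration of $\NIter$ and $\NP$ with a union bound over three events. One bookkeeping point: your displayed intermediate bound $\infnorm{\vf{*}-v_{\NIter}} \le c\sqrt{d}\max_{k,\cCf}|E_k| + \diamondsuit_{\NIter}$ omits the one-step error contribution $H\max_{j}\|\phi^\top\kwsum(\bone,\varepsilon_j)\|_\infty$ that arises because $v_k$ depends on the latest un-averaged noise (the $\diamondsuit$ term in the paper's Eq.~\eqref{eq:coarse last value error}); this term is of the same order $O(\sqrt{H/c_4})$ and is controlled by exactly the $\varepsilon_k$ concentration event you already set up and include in your union bound, so the argument goes through unchanged.
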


The proof sketch is provided in \cref{subsec:proof sketch}.

\subsection{Notation and Frequently Used Facts for Proofs}

Before moving on to the proofs, we introduce some notations and frequently used facts for theoretical analysis.

\looseness=-1
\paragraph{Notation for proofs.}
$\square$ denotes an indefinite constant that changes throughout the proof and is independent of $d$, $H$, $\aX$, $\aA$, $\varepsilon$, and $\delta$.

\looseness=-1
For a sequence of policies $(\pi_k)_{k\in \Z}$, we let  $T_j^i \df T_{\pi_i} T_{\pi_{i-1}} \cdots T_{\pi_{j+1}} T_{\pi_j}$ for $i \geq j$, and $T_j^i \df I$ otherwise.

For $k\in \{1, \ldots, N\}$, we write $\theta^*_k \in \R^d$ as the underlying unknown parameter vector satisfying $\phi^\top \theta^*_{k} = r + \gamma Pv_{k-1}$.
$\theta^*_k$ is ensured to exist by the property of linear MDPs.
We also write $\btheta^*_k$ as its past moving average, i.e., ${\btheta}^*_k = \sum^{k}_{j=1} \alpha^{k-j}\theta^*_j$.

\looseness=-1
For \cref{theorem: sqrt H bound}, $\bF_{k, m}$ denotes the $\sigma$-algebra generated by random variables $\{ y_{j, n, x, a} | (j, n, x, a) \in [k-2] \times [M] \times \XA \} \cup \{ y_{j, n, x, a} | (j, n, x, a) \in \{k-1\} \times [m-1] \times \XA \}$.
With an abuse of notation, for \cref{theorem: epsilon bound}, $\bF_{k, m}$ denotes the $\sigma$-algebra generated by random variables $\{\widetilde{\sigma}\}\cup\{ y_{j, n, x, a} | (j, n, x, a) \in [k-2] \times [M] \times \XA \} \cup \{ y_{j, n, x, a} | (j, n, x, a) \in \{k-1\} \times [m-1] \times \XA \}$.
Whether $\bF_{k, m}$ is for \cref{theorem: sqrt H bound} or \cref{theorem: epsilon bound} shall be clear from the context.

\looseness=-1
For the bounded positive function $f$ used in \wlsmdvi, we introduce the shorthand $\uf \df \max_{(x, a)\in \XA} f(x, a)$ and $\lf \df \min_{(x, a)\in \XA} f(x, a)$.

\looseness=-1
Finally, throughout the proof, for $8 \geq c_0 > n > 0$, we write 
$\iota_1 \df \log (2 \pc \ucC K / \delta)$,
$\iota_{2, n} \df \iota_1 + \log(\pc / (\pc - n)) = \log (2 \pc^2 \ucC K / (\pc - n) \delta) $, 
and $\xi_{2, n} \df \iota_{2, n} + \log \log_2 (16KH^2)$.
Note that for any $8 \geq c_0 > n > 0$,
\begin{equation}\label{eq:iota bound}
\xi_{2, n} \geq \iota_{2, n} \geq \iota_1
\end{equation}
due to $8 \geq c_0 - n > 0$ and $16 K H^2 / \delta \geq 16$.
Whether $K$ is from \cref{theorem: epsilon bound} or \cref{theorem: sqrt H bound} shall be clear from the context.

\looseness=-1
\paragraph{Frequently Used Facts.}

\looseness=-1
Recall that $A_{\gamma, k} \df \sum_{j=0}^{k-1} \gamma^{k-j} \alpha^j$ and $A_k \df \sum_{j=0}^{k-1} \alpha^j$ for any non-negative integer $k$ with $A_\infty \df 1 / (1-\alpha)$.
We often use $\alpha = \gamma$ due to the settings of \cref{theorem: epsilon bound,theorem: sqrt H bound}. This indicates that $A_\infty = H$ and $A_{\gamma, k} = k\gamma^k$.

Recall that 
$
\theta_{k}
    =
    \argmin_{\theta \in \R^d} {
        \sum_{(y,b) \in \cCf} {
            {\frac{\rhof(y, b)}{f^2(y, b)}
            \left(
                \phi^\top(y, b) \theta - \hq_{k}(y, b)
            \right)^2 }
        }
    }
$.
Using the definition of 
$\kwsum$ defined in \cref{lemma:kw bound}
and 
$G_f$ defined in \cref{eq:weighted optimal design}, the closed-form solution to $\theta_k$ is represented as $\theta_k = \kwsum(f, \hat{q}_k)$.
In the similar manner, $\theta_k^* = \kwsum(f, \phi^\top \theta_k^*)$.

Since $\hat{q}_k - \phi^\top\theta_k^* = \varepsilon_k$, we have
\begin{align*}
\theta_{k} - \theta^*_{k} &= \kwsum(f, \hat{q}_k) - \kwsum(f, \phi^\top\theta_k^*) = \kwsum(f, \varepsilon_k) \\
\text{ and }\ \btheta_{k} - \btheta^*_{k} &= \kwsum\paren*{f, \sum^{k}_{j=1}\alpha^{k-j}\varepsilon_j} = \kwsum(f, E_k)\;,
\end{align*}
Moreover, for any $k \in \{1, \ldots, K\}$, we have that
\begin{equation}\label{eq:rewrite s_k}
    \begin{aligned}
    s_k & = \phi^\top \btheta_k \\
    &= \phi^\top \btheta^*_k + \phi^\top\kwsum(f, E_k)\\
    &= \sum_{j=1}^k \alpha^{k-j} (r + \gamma P(w_{j-1} - \alpha w_{j-2})) + \phi^\top\kwsum(f, E_k)\\
    &= A_k r + \gamma P w_{k-1} + \phi^\top\kwsum(f, E_k)\;.
    \end{aligned}
\end{equation}

In addition, we often mention the ``monotonicity'' of stochastic matrices:
any stochastic matrix $\rho$ satisfies that
$\rho v \geq \rho u$ for any vectors $v, u$ s.t. $v \geq u$.
Examples of stochastic matrices in the proof are $P$, $\pi$, $P^{\pi}$, and $\pi P$.
The monotonicity property is so frequently used that we do not always mention it.

\subsection{Proof Sketch}\label{subsec:proof sketch}

This section provides proof sketches of \cref{theorem: epsilon bound,theorem: sqrt H bound}, those are necessary to show \cref{theorem: sample complexity of vwls mdvi}.
The proofs follow the strategy of \citet{kozuno2022kl} but with modifications for the linear function approximation.

\paragraph{Step 1: Error Propagation Analysis.}
The proof of \cref{theorem: epsilon bound} is done by deriving a tight bound for $\vf{*} - \vf{\pi_K'}$.
Recall that $K$ is the number of iterations in \wlsmdvi and $\kwsum$ is the operator defined in \cref{lemma:kw bound}.
The following lemmas provide the bound for any $k \in [K]$. We provide the proof in \cref{subsec:proof of non-stationary error propagation}.

\begin{lemma}[Error Propagation Analysis ($\vf{\pi_k'}$)]\label{lemma:non-stationary error propagation}
    For any $k \in [K]$, $\bzero \leq \vf{*} - \vf{\pi'_k} \leq \Gamma_k$ where 
    \begin{align*}
        \Gamma_k
        \df \displaystyle \frac{1}{A_\infty} \sum_{j=0}^{k-1} \gamma^j \paren*{
            \pi_k P_{k-j}^{k-1} - \pi_* P_*^j
        } \phi^\top \kwsum(f, E_{k-j})
        + 2 H \paren*{ \alpha^k + \frac{A_{\gamma, k}}{A_\infty} } \bone\,.
    \end{align*}
\end{lemma}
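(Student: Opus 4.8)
The lower bound $\bzero \le \vf{*} - \vf{\pi'_k}$ is immediate: $\pi'_k$ is a (non-stationary) policy, so its value is dominated pointwise by the optimal value $\vf{*}$. All the work is in the upper bound, and the plan is to \emph{sandwich} the moving-average maximum $w_k$ between a lower bound driven by $\vf{*}$ and an exact expansion driven by $\vf{\pi'_k}$, then subtract the two. Throughout I will use the rewrite $s_k = A_k r + \gamma P w_{k-1} + \phi^\top\kwsum(f, E_k)$ from \cref{eq:rewrite s_k}, the greedy identity $w_k = \pi_k s_k$, the greedy inequality $w_k \ge \pi_* s_k$, and the initialization $w_0 = \bzero$, $A_0 = 0$. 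I abbreviate $\epsilon^s_k \df \phi^\top\kwsum(f, E_k)$ and write $\pi P$ for the operator $\cF_v \to \cF_v$ given by $(\pi P u)(x) = \sum_a \pi(a|x)(Pu)(x,a)$; the setting $\alpha=\gamma$ is used freely.

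\emph{Optimal side.} Applying $\pi_*$ to the rewrite and using $w_k \ge \pi_* s_k$ gives $w_k \ge A_k \pi_* r + \gamma (\pi_* P) w_{k-1} + \pi_* \epsilon^s_k$. Setting $\Delta_k \df A_k\vf{*} - w_k$, substituting the Bellman identity $\vf{*} = \pi_* r + \gamma(\pi_* P)\vf{*}$, and using $A_k - A_{k-1} = \alpha^{k-1}$, I obtain the recursion $\Delta_k \le \gamma\alpha^{k-1}(\pi_* P)\vf{*} + \gamma(\pi_* P)\Delta_{k-1} - \pi_*\epsilon^s_k$. Unrolling from $\Delta_0 = 0$, bounding $\infnorm{\vf{*}} \le H$, and using the operator identity $(\pi_* P)^j\pi_* = \pi_* P_*^j$, this yields the lower bound $w_k \ge A_k\vf{*} - A_{\gamma,k}H\bone + \sum_{j=0}^{k-1}\gamma^j \pi_* P_*^j \epsilon^s_{k-j}$.

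\emph{Policy side.} Here everything stays an equality. From $w_k = \pi_k s_k = A_k\pi_k r + \gamma(\pi_k P)w_{k-1} + \pi_k\epsilon^s_k$ and $\vf{\pi'_k} = \pi_k r + \gamma(\pi_k P)\vf{\pi'_{k-1}}$, the quantity $U_k \df w_k - A_k\vf{\pi'_k}$ satisfies the exact recursion $U_k = \gamma(\pi_k P)U_{k-1} - \gamma\alpha^{k-1}(\pi_k P)\vf{\pi'_{k-1}} + \pi_k\epsilon^s_k$. Unrolling from $U_0 = 0$ and using the telescoping identity $(\pi_k P)(\pi_{k-1}P)\cdots(\pi_{k-j+1}P)\pi_{k-j} = \pi_k P_{k-j}^{k-1}$ collapses the error contribution into $\sum_{j=0}^{k-1}\gamma^j\pi_k P_{k-j}^{k-1}\epsilon^s_{k-j}$, while the $\vf{\pi'}$ contribution is a residual $R_k$ with $-HA_{\gamma,k}\bone \le R_k \le HA_{\gamma,k}\bone$ (each $\vf{\pi'_j}$ is bounded in sup-norm by $H$, and $\alpha=\gamma$ makes the geometric weights sum to $A_{\gamma,k}$). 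Hence $w_k = A_k\vf{\pi'_k} + \sum_{j=0}^{k-1}\gamma^j\pi_k P_{k-j}^{k-1}\epsilon^s_{k-j} + R_k$.

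\emph{Combine and renormalize.} Subtracting the optimal-side lower bound on $w_k$ from the policy-side equality gives $A_k(\vf{*} - \vf{\pi'_k}) \le 2A_{\gamma,k}H\bone + \sum_{j=0}^{k-1}\gamma^j(\pi_k P_{k-j}^{k-1} - \pi_* P_*^j)\epsilon^s_{k-j}$. The final step converts the prefactor $A_k$ into $A_\infty$: writing $A_\infty(\vf{*}-\vf{\pi'_k}) = A_k(\vf{*}-\vf{\pi'_k}) + (A_\infty - A_k)(\vf{*}-\vf{\pi'_k})$, bounding the residual via $\vf{*}-\vf{\pi'_k} \le 2H\bone$ and $A_\infty - A_k = A_\infty\alpha^k$, and dividing by $A_\infty$ reproduces exactly $\Gamma_k$. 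I expect the main obstacle to be the operator bookkeeping—verifying the two product identities $(\pi_* P)^j\pi_* = \pi_* P_*^j$ and $(\pi_k P)\cdots(\pi_{k-j+1}P)\pi_{k-j} = \pi_k P_{k-j}^{k-1}$ and tracking signs through the unrolling—rather than any analytic estimate.
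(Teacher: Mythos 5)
Your proof is correct and follows essentially the same route as the paper's: both decompose $A_k(\vf{*}-\vf{\pi'_k})$ into $A_k\vf{*}-w_k$ and $w_k-A_k\vf{\pi'_k}$, bound each via the rewrite of $s_k$, greediness, and the Bellman recursions (your unrolled recursions are the paper's inductions), and then convert $A_k$ to $A_\infty$ using $\vf{*}-\vf{\pi'_k}\le 2H\bone$ and $A_\infty-A_k=A_\infty\alpha^k$. The only (harmless) blemish is that you invoke $\alpha=\gamma$, which the lemma does not assume and which is not needed anywhere in the argument.
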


Let
\begin{equation*}
    \begin{aligned}
        \heartsuit_k &\df H^{-1}\sum_{j=0}^{k-1} \gamma^j \pi_k P_{k-j}^{k-1} \abs*{\phi^\top \kwsum(f, E_{k-j})},
        \\
        \clubsuit_k &\df H^{-1}\sum_{j=0}^{k-1} \gamma^j \pi_* P_*^j \abs*{\phi^\top \kwsum(f, E_{k-j})},
        \\
        \text{and }
        \diamondsuit_k &\df \square H \paren*{ \alpha^k + \frac{A_{\gamma, k}}{A_\infty} }.
    \end{aligned}
\end{equation*}
We derive the bound of $\|\vf{*} - \vf{\pi_K'}\|_\infty$ 
 by bounding $\heartsuit_K$, $\clubsuit_K$ and $\diamondsuit_K$.
Since $\diamondsuit_k$ can be easily controlled by \cref{lemma:k gamma to k-th inequality}, we focus on the bounds of $\heartsuit_K$ and $\clubsuit_K$.
To derive the tight bounds of $\heartsuit_K$ and $\clubsuit_K$, we need to transform them into ``TV technique compatible'' forms; we will transform $\heartsuit_k$ into $\sum_{j=0}^{k-1} \gamma^j \pi_k P_{k-j}^{k-1} \sigma(v_{\pi_{k-j}})$ and $\clubsuit_k$ into $\sum_{j=0}^{k-1} \gamma^j\pi_* P_*^j \sigma(v_*)$.
The transformations are provided in \textbf{Step 3} and \textbf{4}.

On the other hand, the proof of \cref{theorem: sqrt H bound} is done by deriving a coarse bound of $\vf{*} - \vf{K}$. 
Then, the following bound (\cref{lemma:v error prop}) is helpful. The proof is provided in \cref{subsec:proof of v error prop}.

\begin{lemma}[Error Propagation Analysis ($v_k$)]\label{lemma:v error prop}
    For any $k \in [K]$,
    \begin{align*}
        - 2 \gamma^k H \bone - \sum_{j=0}^{k-1} \gamma^j \pi_{k-1} P_{k-j}^{k-1} \phi^\top\kwsum(f, \varepsilon_{k-j})
        \leq
        \vf{*} - v_k
        \leq
        \Gamma_{k-1} + 2 H \gamma^k \bone - \sum_{j=0}^{k-1} \gamma^j \pi_{k-1} P_{k-1-j}^{k-2} \phi^\top\kwsum(f, \varepsilon_{k-j})\,.
    \end{align*}
\end{lemma}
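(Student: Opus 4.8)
The plan is to bound $v_k$ by comparing it with the non-stationary policy values $\vf{\pi'_k}$ and $\vf{\pi'_{k-1}}$, whose gap to $\vf{*}$ is already controlled by \cref{lemma:non-stationary error propagation}. Write $e_k \df \phi^\top \kwsum(f, \varepsilon_k)$ for the one-step regression error. By the identities recorded above, $\theta_k - \theta_k^* = \kwsum(f, \varepsilon_k)$, so that $q_k = \phi^\top\theta_k = r + \gamma P v_{k-1} + e_k$ holds exactly; recall also $s_k = q_k + \alpha s_{k-1}$, $w_k = \max_{a} s_k = \pi_k s_k$ with $\pi_k$ greedy for $s_k$, and $v_k = w_k - \alpha w_{k-1}$.

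\textbf{Step 1 (sandwich).} First I would prove the deterministic bracketing $\pi_{k-1} q_k \le v_k \le \pi_k q_k$. For the right inequality, $w_k = \pi_k s_k$ exactly while $w_{k-1} = \max_a s_{k-1} \ge \pi_k s_{k-1}$, hence $v_k = w_k - \alpha w_{k-1} \le \pi_k s_k - \alpha \pi_k s_{k-1} = \pi_k(s_k - \alpha s_{k-1}) = \pi_k q_k$; the left inequality is symmetric, using $w_k \ge \pi_{k-1} s_k$ and $w_{k-1} = \pi_{k-1} s_{k-1}$. Notice the factor $\alpha$ cancels, which is exactly why only powers of $\gamma$ (and not $\alpha$) survive in the boundary terms of the claim. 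Substituting $q_k = r + \gamma P v_{k-1} + e_k$ turns the sandwich into two one-step backups, $v_k \le \pi_k(r + \gamma P v_{k-1}) + \pi_k e_k$ and $v_k \ge \pi_{k-1}(r + \gamma P v_{k-1}) + \pi_{k-1} e_k$.

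\textbf{Step 2 (subtract the non-stationary recursion and unroll).} Using $\qf{\pi'_m} = r + \gamma P \vf{\pi'_{m-1}}$ from \cref{lemma:total variance}, one has $\vf{\pi'_m} = \pi_m(r + \gamma P \vf{\pi'_{m-1}})$. Subtracting this at $m=k$ from the upper backup yields $v_k - \vf{\pi'_k} \le \gamma\, \pi_k P(v_{k-1} - \vf{\pi'_{k-1}}) + \pi_k e_k$. Iterating down to $v_0 = \bzero$ and repeatedly invoking the monotonicity of the stochastic operators $\pi_m P$, the interleaved product $\pi_k (P\pi_{k-1})\cdots(P\pi_{k-j})$ collapses to $\pi_k P_{k-j}^{k-1}$, giving $v_k - \vf{\pi'_k} \le \sum_{j=0}^{k-1} \gamma^j \pi_k P_{k-j}^{k-1} e_{k-j} + \gamma^k(\,\cdot\,)(\bzero - \vf{\pi'_0})$ with a stochastic operator in front of the terminal difference. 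Combining with $\vf{*} \ge \vf{\pi'_k}$ (\cref{lemma:non-stationary error propagation}) and bounding the terminal term by $\gamma^k \infnorm{\vf{\pi'_0}} \le \gamma^k H$ delivers the lower bound on $\vf{*} - v_k$. The upper bound comes out identically from the lower backup, but benchmarked against $\vf{\pi'_{k-1}} = \pi_{k-1}(r + \gamma P \vf{\pi'_{k-2}})$; the one-step shift of the policy index turns the telescoped operator into $\pi_{k-1} P_{k-1-j}^{k-2}$, and adding $\vf{*} - \vf{\pi'_{k-1}} \le \Gamma_{k-1}$ (again \cref{lemma:non-stationary error propagation}) supplies the $\Gamma_{k-1}$ summand.

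\textbf{Main obstacle.} The crux is the index bookkeeping in Step 2: one must verify that grouping the interleaved $\pi P$ factors produces precisely $P_{k-j}^{k-1}$ in the lower bound and the once-shifted $P_{k-1-j}^{k-2}$ in the upper bound, and that the two $\gamma^k$ boundary contributions (from $v_0 = \bzero$ and from $\infnorm{\vf{*}},\infnorm{\vf{\pi'_\bullet}} \le H$) can be absorbed into a single $2H\gamma^k\bone$. Everything else reduces to monotone-operator estimates, so the conceptual content lies entirely in the sandwich of Step 1 and in the choice to compare $v_k$ against $\vf{\pi'_k}$ for the lower bound and against $\vf{\pi'_{k-1}}$ for the upper bound.
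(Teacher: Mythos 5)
Your proposal is correct and takes essentially the same route as the paper: the greedy sandwich $\pi_{k-1} q_k \leq v_k \leq \pi_k q_k$, followed by unrolling the one-step backups against $\vf{\pi'_k}$ (for the lower bound on $\vf{*}-v_k$) and against $\vf{\pi'_{k-1}}$ combined with $\vf{*}-\vf{\pi'_{k-1}}\leq\Gamma_{k-1}$ from \cref{lemma:non-stationary error propagation} (for the upper bound), and the index bookkeeping you flag as the main obstacle does work out exactly as you describe, with the terminal $\gamma^k$ contributions absorbed into the (loosened) $2H\gamma^k\bone$ term. The only cosmetic remark is that your unrolling (like the paper's own intermediate \cref{lemma:pre v error prop}) produces $\pi_k P_{k-j}^{k-1}$ rather than the $\pi_{k-1} P_{k-j}^{k-1}$ written in the statement's lower bound, which appears to be a typo in the statement and is immaterial to how the bound is used downstream.
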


We first prove \cref{theorem: sqrt H bound} in the next \textbf{Step 2} since it is straightforward compared to \cref{theorem: epsilon bound}.

\looseness=-1
\paragraph{Step 2: Prove \cref{theorem: sqrt H bound}.}
Note that $f = \bone$ in \cref{theorem: sqrt H bound}.
As you can see from \cref{lemma:v error prop}, we need the bounds of $|\phi^\top\kwsum(\bone, \varepsilon_k)|$ and $|\phi^\top\kwsum(\bone, E_k)|$ for the proof.

\looseness=-1
By bounding $\varepsilon_k$ and $E_k$ using the Azuma-Hoeffding inequality (\cref{lemma:hoeffding}), the weighted KW bound with $f=\bone$ (\cref{lemma:kw bound}) and the settings of \cref{theorem: sqrt H bound} yild $|{\phi^\top \kwsum(\bone, \varepsilon_k)}| \leq \widetilde{\mathcal{O}}(1/ \sqrt{H}) \bone$ and $|{\phi^\top \kwsum(\bone, E_k)}| \leq \widetilde{\mathcal{O}}(\sqrt{H}) \bone$ with high-probability.
Furthermore, $\diamondsuit_K$ is bounded by $\widetilde{\mathcal{O}}(1)$ due to \cref{lemma:k gamma to k-th inequality}.

\looseness=-1
Inserting these results into \cref{lemma:v error prop}, we obtain $\infnorm{\vf{*}-v_K} \leq \widetilde{\mathcal{O}}(\sqrt{H})$ with high-probability, which is the desired result of \cref{theorem: sqrt H bound}.

The detailed proofs of \textbf{Step 2} are provided in \cref{subsec:kwsum bounds proofs} and \cref{subsec: proof of sqrt H bound}.

\looseness=-1
\paragraph{Step 3: Refined Bound of $\clubsuit_K$ for \cref{theorem: epsilon bound}.}
Recall that the weighting function $f$ satisfies $\sigma(v_*) \leq f \leq \sigma (v_*) + 2\sqrt{H} \bone$ and $\sqrt{H}\bone \leq f \leq H\bone$ in \cref{theorem: epsilon bound}. 
The assumptions allow us to apply TV technique to $\clubsuit_K$ when the bound of $\phi^\top\kwsum(f, E_k)$ scales to $f$.
This is where the weighted KW bound (\cref{lemma:kw bound}) comes in.

\looseness=-1
Due to \cref{lemma:kw bound}, we have $|\phi^\top\kwsum(f, E_k)| \leq \sqrt{2d}f \coremaxf{y, b}{f}|E_k(y, b) / f(y, b)|$.
Thus, the tight bound of can be obtained by tightly bounding $\coremaxf{y, b}{f}|E_k(y, b) / f(y, b)|$.

\looseness=-1
By applying the Bernstein-type inequality (\cref{lemma:conditional bernstein}) to $|E_k / f|$, discounted sum of $\sigma(v_{j}) / f$ from $j=1$ to $k$ appears inside the bound of $|E_k / f|$.
We decompose it as $\sigma(\vf{j}) / f \leq |\vf{*} - \vf{j}| / \sqrt{H}+ \bone$ by \cref{lemma:variance decomposition} and \cref{lemma:popoviciu}.
Therefore, we obtain a discounted sum of $|\vf{*} - \vf{j}|$ in $|E_k / f|$ bound, which can be bounded in a similar way to \textbf{Step 2}. 

\looseness=-1
Now we have the bound of $\phi^\top\kwsum(f, E_k)$ which scales to $f$.
Combined with the settings of \cref{theorem: epsilon bound}, we obtain $|\phi^\top\kwsum(f, E_k)| \leq  \widetilde{\mathcal{O}}(\varepsilon (\sigma(v_*) / \sqrt{H} + \bone)) $.
The TV technique is therefore applicable to $\clubsuit_K$ and thus $\clubsuit_K \leq \widetilde{\mathcal{O}}(\varepsilon)$. 

\looseness=-1
The detailed proofs of \textbf{Step 3} are provided in \cref{subsubsec:clubsuit proof}.

\paragraph{Step 4: Refined Bound of $\heartsuit_K$ for \cref{theorem: epsilon bound}.}
We need a further transformation since TV technique in $\heartsuit_K$ requires $\sigma(\vf{\pi_k'})$, not $\sigma(\vf{*})$.
To this end, we decompose $\sigma(\vf{*})$ as $\sigma(\vf{*}) \leq \sigma(\vf{*} - \vf{\pi_{k}'}) + \sigma(\vf{\pi_{k}'}) \leq |\vf{*} - \vf{\pi_{k}'}| + \sigma(\vf{\pi_{k}'})$ by \cref{lemma:variance decomposition} and \cref{lemma:popoviciu}.
Thus, we need the bound of $|\vf{*} - \vf{\pi_{k}'}|$ which requires the coarse bound of $\infnorm{\phi^\top\kwsum(f, E_k)}$.

By applying the Azuma-Hoeffding inequality to $E_k$, the settings of \cref{theorem: sqrt H bound} yields $\infnorm{\phi^\top\kwsum(f, E_k)} \leq \widetilde{\mathcal{O}}(\sqrt{H})$.
Inserting this bound to \cref{lemma:non-stationary error propagation}, $\infnorm{\vf{*} - \vf{\pi_{k}'}} \leq \widetilde{\mathcal{O}}(\sqrt{H}) + 2(H + k)\gamma^k$ (\cref{lemma:non-stationary coarse bound}).

By taking a similar procedure as \textbf{Step 3}, together with the bound of $\infnorm{\vf{*} - \vf{\pi_{k}'}}$, 
$\heartsuit_K$ is bounded by $\widetilde{\mathcal{O}}(\varepsilon H^{-1.5})\sum_{j=0}^{k-1} \pi_k P_{k-j}^{k-1} (\sigma(\vf{\pi_{k-j}}) + \sqrt{H}\bone)$.
Then, the TV technique yilds $\heartsuit_K \leq \widetilde{\mathcal{O}}(\varepsilon)$.

The detailed proofs of \textbf{Step 4} are provided in \cref{subsubsec:heartsuit proof}.

Finally, we obtain the desired result of \cref{theorem: epsilon bound} by inserting the bounds of $\heartsuit_K$ and $\clubsuit_K$ to \cref{lemma:non-stationary coarse bound} (\cref{subsec: proof of epsilon bound}.)

\subsection{Proofs of Error Propagation Analysis (\textbf{Step 1})}\label{subsec:proofs of error propagation analysis}

\subsubsection{Proof of \texorpdfstring{
        \cref{lemma:non-stationary error propagation}
    }{
        Lemma~\ref{lemma:non-stationary error propagation}
    }
}\label{subsec:proof of non-stationary error propagation}
\begin{proof}
    Note that
    \begin{align*}
        \bzero
        \leq
        \vf{*} - \vf{\pi'_k}
        =
        \frac{A_k}{A_\infty} \paren*{ \vf{*} - \vf{\pi'_k} }
        + \alpha^k \paren*{ \vf{*} - \vf{\pi'_k} }
        \leq
        \frac{A_k}{A_\infty} \paren*{ \vf{*} - \vf{\pi'_k} }
        + 2 H \alpha^k \bone
    \end{align*}
    due to $\vf{*} - \vf{\pi'_k} \leq 2 H \bone$.
    Therefore, we need an upper bound for $A_k (\vf{*} - \vf{\pi'_k})$.
    We decompose $A_k (\vf{*} - \vf{\pi'_k})$ to $A_k \vf{*} - w_k$ and $w_k - A_k\vf{\pi'_k}$.
    Then, we derive upper bounds for each of them
    (inequalities \eqref{eq:vstar - m s_k bound upper bound} and \eqref{eq:m s_k - vpi prime bound upper bound}, respectively).
    The desired result is obtained by summing up those bounds.
    
    \paragraph{Upper bound for $A_k \vf{*} - w_k$.}
    We prove by induction that for any $k \in [K]$,
    \begin{align}\label{eq:vstar - m s_k bound upper bound}
        A_k \vf{*} - w_k
        \leq
        H A_{\gamma, k} \bone - \sum_{j=0}^{k-1} \gamma^j \pi_* P_*^j \phi^T\kwsum(f, E_{k-j})\,.
    \end{align}
    We have that
    \begin{align*}
        A_k \vf{*} - w_k
        &\numeq{\leq}{a}
        \pi_* (A_k \qf{*} - s_k)
        \\
        &\numeq{=}{b}
        \pi_* \paren*{
            A_k \qf{*} - A_k r - \gamma P w_{k-1} - \phi^T\kwsum(f, E_k)
        }
        \\
        &\numeq{=}{c}
        \pi_* \paren*{
            \gamma P (A_k \vf{*} - w_{k-1}) - \phi^T\kwsum(f, E_k)
        }
        \\
        &\numeq{\leq}{d}
        \pi_* \paren*{
            \gamma P (A_{k-1} \vf{*} - w_{k-1})
            + \alpha^{k-1} \gamma H \bone
            - \phi^T\kwsum(f, E_k)
        }\,,
    \end{align*}
    where (a) is due to the greediness of $\pi_k$,
    (b) is due to the equation \eqref{eq:rewrite s_k},
    (c) is due to the Bellman equation for $\qf{*}$,
    and (d) is due to the fact that $(A_k - A_{k-1}) \vf{*} = \alpha^{k-1} \vf{*} \leq \alpha^{k-1} H \bone$.
    For $k=1$, using (a), (b), and (c) with the facts that $w_0 = \bzero$ and $A_1=1$, we have
    \begin{equation*}
        A_1 \vf{*} - w_1 
        \leq
        \pi_* \paren*{
            \gamma P \vf{*} - \phi^T\kwsum(f, E_1)
        }
        \leq \gamma H \bone - \pi_* \phi^T\kwsum(f, E_1)\,
    \end{equation*}
    and thus the inequality \eqref{eq:vstar - m s_k bound upper bound} holds for $k=1$.
    From the step (d) above and induction, it is straightforward to verify that the inequality \eqref{eq:vstar - m s_k bound upper bound} holds for other $k$.

    \paragraph{Upper bound for $w_k - A_k \vf{\pi'_k}$.}
    We prove by induction that for any $k \in [K]$,
    \begin{equation}\label{eq:m s_k - vpi prime bound upper bound}
        w_k - A_k \vf{\pi'_k}
        \leq
        H A_{\gamma, k} \bone + \sum_{j=0}^{k-1} \gamma^j \pi_k P_{k-j}^{k-1} \phi^T\kwsum(f, E_{k-j})\,.
    \end{equation}
    Recalling that $\vf{\pi'_k} = \pi_k T_0^{k-1} \qf{\pi_0}$,
    we deduce that
    \begin{align*}
        w_k - A_k \vf{\pi'_k}
        &\numeq{=}{a}
        \pi_k \paren*{ s_k - A_k T_0^{k-1} \qf{\pi_0} }
        \\
        &\numeq{=}{b}
        \pi_k \paren*{
            A_k r + \gamma P w_{k-1}
            - A_k T_1^{k-1} \qf{\pi_0}
            + \phi^T\kwsum(f, E_k)
        }
        \\
        &\numeq{=}{c}
        \pi_k \paren*{
            \gamma P \paren*{w_{k-1} - A_k \vf{\pi'_{k-1}}} + \phi^T\kwsum(f, E_k)
        }
        \\
        &\numeq{\leq}{d}
        \pi_k \paren*{
            \gamma P (w_{k-1} - A_{k-1} \vf{\pi'_{k-1}})
            + \alpha^{k-1} \gamma H \bone
            + \phi^T\kwsum(f, E_k)
        }\,,
    \end{align*}
    where (a) follows from the definition of $w_k$,
    (b) is due to the equation \eqref{eq:rewrite s_k} and $T_0^{k-1} \qf{\pi_0} = T_1^{k-1} \qf{\pi_0}$,
    (c) is due to the equation $r - T_1^{k-1} \qf{\pi_0} = -P\vf{\pi'_{k-1}}$ which follows from the definition of the Bellman operator,
    and (d) is due to the fact that $(A_k - A_{k-1}) \vf{\pi'_{k-1}} = \alpha^{k-1} \vf{\pi'_{k-1}} \geq - \alpha^{k-1} H \bone$.
    For $k=1$, using (a), (b), and (c) with the facts that $w_0 = \bzero$ and $A_1=1$, we have
    \begin{equation*}
     w_1 - A_1 \vf{\pi'_1} = 
     \pi_1 \paren*{
        -\gamma P \vf{\pi'_{0}} + \phi^T\kwsum(f, E_1)}
    \leq 
     \gamma H \bone + \pi_1 \phi^T\kwsum(f, E_1)\,,
    \end{equation*}
    and thus the inequality \eqref{eq:m s_k - vpi prime bound upper bound} holds for $k=1$.
    From the step (d) above and induction, it is straightforward to verify that the inequality \eqref{eq:m s_k - vpi prime bound upper bound} holds for other $k$.
\end{proof}

\subsubsection{Proof of \texorpdfstring{
        \cref{lemma:v error prop}
    }{
        Lemma~\ref{lemma:v error prop}
    }
}\label{subsec:proof of v error prop}

We first prove an intermediate result.

\begin{lemma}\label{lemma:pre v error prop}
    For any $k \in [K]$,
    \begin{align*}
        \vf{\pi'_{k-1}}
        + \sum_{j=0}^{k-1} \gamma^j \pi_{k-1} P_{k-1-j}^{k-2} \phi^\top \kwsum(f, \varepsilon_{k-j})
        - \gamma^k H \bone
        \leq
        v_k
        \leq
        \vf{\pi'_k}
        + \sum_{j=0}^{k-1} \gamma^j \pi_k P_{k-j}^{k-1} \phi^\top \kwsum(f, \varepsilon_{k-j})
        + \gamma^k H \bone\,.
    \end{align*}
\end{lemma}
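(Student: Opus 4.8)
The plan is to sandwich $v_k$ between one-step applications of the two greedy policies and then unroll the resulting Bellman-type recursions. First I would record the pointwise sandwich $\pi_{k-1} q_k \le v_k \le \pi_k q_k$. This follows from $v_k = w_k - \alpha w_{k-1}$ with $w_k = \pi_k s_k$ and $w_{k-1} = \pi_{k-1} s_{k-1}$ (greediness), together with the cross inequalities $w_{k-1} \ge \pi_k s_{k-1}$ and $w_k \ge \pi_{k-1} s_k$ (a greedy value dominates that of any other policy), the identity $s_k - \alpha s_{k-1} = q_k$, and $\alpha \ge 0$: for the upper bound $v_k = \pi_k s_k - \alpha w_{k-1} \le \pi_k(s_k - \alpha s_{k-1}) = \pi_k q_k$, and symmetrically for the lower bound. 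Using the Frequently Used Facts $\theta_k - \theta^*_k = \kwsum(f, \varepsilon_k)$ and $\phi^\top \theta^*_k = r + \gamma P v_{k-1}$, I would then rewrite $q_k = r + \gamma P v_{k-1} + \phi^\top \kwsum(f, \varepsilon_k)$, turning the two sides of the sandwich into recursions with leading operators $\gamma \pi_k P$ and $\gamma \pi_{k-1} P$, respectively.

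Next I would unroll each recursion by induction on $k$, with base case $k = 1$ supplied by $v_0 = \bzero$. Applying the monotone operator $\gamma \pi_k P$ to the inductive bound for $v_{k-1}$ and collapsing $P \pi_{k-1} = P_{\pi_{k-1}}$ into the running transition product, the upper recursion yields
\[
v_k \le \sum_{j=0}^{k-1} \gamma^j \pi_k P_{k-j}^{k-1} r + \sum_{j=0}^{k-1} \gamma^j \pi_k P_{k-j}^{k-1} \phi^\top \kwsum(f, \varepsilon_{k-j}),
\]
and, analogously, the lower recursion yields the same expression with every $\pi_k P_{k-j}^{k-1}$ replaced by $\pi_{k-1} P_{k-1-j}^{k-2}$. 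The error sums here already coincide with those in the claim, so only the reward sums remain to be matched to the value functions.

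Finally I would convert the reward sums into the non-stationary value functions. Expanding $\vf{\pi'_k} = \pi_k T_1^{k-1} \qf{\pi_0}$ through $T_{\pi_m} q = r + \gamma P_{\pi_m} q$ and inserting the fixed-point identity $\qf{\pi_0} = r + \gamma P_{\pi_0} \qf{\pi_0}$ gives the telescoping identity
\[
\sum_{j=0}^{k-1} \gamma^j \pi_k P_{k-j}^{k-1} r = \vf{\pi'_k} - \gamma^k \pi_k P_1^{k-1} P_{\pi_0} \qf{\pi_0},
\]
and the analogous computation for the lower-bound sum produces the residual $\gamma^k \pi_{k-1} P_1^{k-2} P_{\pi_0}^2 \qf{\pi_0}$ against $\vf{\pi'_{k-1}}$. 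Since $\infnorm{\qf{\pi_0}} \le H$ and every stochastic operator is non-expansive in the supremum norm, each residual is bounded pointwise by $\gamma^k H \bone$. Substituting these bounds (upper residual $\le \gamma^k H \bone$, lower residual $\ge -\gamma^k H \bone$) into the unrolled inequalities delivers exactly the stated two-sided bound.

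The routine difficulty is index bookkeeping: one must verify at each inductive step that prepending the freshly applied transition to the running product $P_{k-1-j}^{k-2}$ and shifting $j \mapsto j+1$ reproduces the $\pi_k P_{k-j}^{k-1}$ terms of the claim, with the boundary conventions ($P_k^{k-1} = I$, empty products) respected. The one genuinely structural point — not merely clerical — is the asymmetry of the sandwich: the upper bound carries the prefix $\pi_k$ with superscript $k-1$ and so lands on $\vf{\pi'_k}$, whereas the lower bound carries $\pi_{k-1}$ with superscript $k-2$ and lands on $\vf{\pi'_{k-1}}$, which is exactly why the two sides of the lemma reference different non-stationary value functions.
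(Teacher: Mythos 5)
Your proposal is correct and follows essentially the same route as the paper's proof: the greedy sandwich $\pi_{k-1}q_k \le v_k \le \pi_k q_k$, the rewriting $q_k = r + \gamma P v_{k-1} + \phi^\top\kwsum(f,\varepsilon_k)$, unrolling by induction with $v_0=\bzero$, and converting the reward sums into $\vf{\pi'_k}$ and $\vf{\pi'_{k-1}}$ at the cost of a residual bounded by $\gamma^k H\bone$ (the paper phrases this last step via $T_0^{k-1}\qf{\pi_0}$ and $T_0^{k-2}\qf{\pi_0}$, which is the same telescoping computation). Your identification of the asymmetry between the two sides is exactly the point the paper exploits.
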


\begin{proof}
    From the greediness of $\pi_{k-1}$,
    $
        v_k
        =
        w_k - \alpha w_{k-1}
        \leq
        \pi_k (s_k - \alpha s_{k-1})
        =
        \pi_k (
            r + \gamma P v_{k-1} + \phi^\top \kwsum(f, \varepsilon_k)
        )
    $.
    By induction on $k$, therefore,
    \begin{align*}
        v_k
        \leq
        \sum_{j=0}^{k-1} \gamma^j \pi_k P_{k-j}^{k-1} \paren*{
            r + \phi^\top \kwsum(f, \varepsilon_{k-j})
        }
        + \underbrace{{\gamma^k \pi_k P_0^{k-1} v_0}}_{=\bzero}
        \leq
        \sum_{j=0}^{k-1} \gamma^j \pi_k P_{k-j}^{k-1} \paren*{
            r + \phi^\top \kwsum(f, \varepsilon_{k-j})
        }\;.
    \end{align*}
    Note that
    \begin{align*}
        T_0^{k-1} \qf{\pi_0}
        =
        \sum_{j=0}^{k-1} \gamma^j P_{k-j}^{k-1} r + \gamma^k \underbrace{P_0^{k-1} \qf{\pi_0}}_{\geq - H\bone}
        \implies
        \sum_{j=0}^{k-1} \gamma^j P_{k-j}^{k-1} r
        \leq
        T_0^{k-1} \qf{\pi_0} + \gamma^k H\,.
    \end{align*}
    Accordingly, 
    $
        v_k
        \leq
        \pi_k T_0^{k-1} \qf{\pi_0}
        + \sum_{j=0}^{k-1} \gamma^j \pi_k P_{k-j}^{k-1} \phi^\top \kwsum(f, \varepsilon_{k-j})
        + \gamma^k H \bone\,.
    $
    
    Similarly, from the greediness of $\pi_k$,
    $
        v_k
        =
        w_k - \alpha w_{k-1}
        \geq
        \pi_{k-1} (s_k - \alpha s_{k-1})
        \geq
        \pi_{k-1} (
            r + \gamma P v_{k-1} + \phi^\top \kwsum(f, \varepsilon_k)
        )
    $.
    By induction on $k$, therefore,
    \begin{align*}
        v_k
        \geq
        \sum_{j=0}^{k-1} \gamma^j \pi_{k-1} P_{k-1-j}^{k-2} \paren*{
            r + \phi^\top \kwsum(f, \varepsilon_{k-j})
        } 
        + \underbrace{\gamma^{k-1} \pi_{k-1} P_0^{k-2} P v_0}_{=\bzero}\,.
    \end{align*}
    Note that $T_0^{k-2} \qf{\pi_0} = T_0^{k-2} (r + \gamma P \vf{\pi_0})$ and
    \begin{align*}
        T_{0}^{k-2} \qf{\pi_0}
        =
        \sum_{j=0}^{k-1} \gamma^j P_{k-1-j}^{k-2} r + \gamma^k \underbrace{P_0^{k-2} P \vf{\pi_0}}_{\leq H \bone}
        \implies
        \sum_{j=0}^{k-1} \gamma^j P_{k-1-j}^{k-2} r
        \geq
        T_0^{k-2} \qf{\pi_0} - \gamma^k H\,.
    \end{align*}
    Accordingly, 
    $
        v_k
        \geq
        \pi_{k-1} T_0^{k-2} \qf{\pi_0}
        + \sum_{j=0}^{k-1} \gamma^j \pi_{k-1} P_{k-1-j}^{k-2} \phi^\top \kwsum(f, \varepsilon_{k-j})
        - \gamma^k H \bone\,.
    $
\end{proof}

\begin{proof}[Proof of \cref{lemma:v error prop}]
    From \cref{lemma:pre v error prop} and $\pi_k T_{\pi_{k-1}} \cdots T_{\pi_1} \qf{\pi_0} = \vf{\pi'_k} \leq \vf{*}$, we have that
    \begin{equation}
        \vf{\pi'_{k-1}}
        + \sum_{j=0}^{k-1} \gamma^j \pi_{k-1} P_{k-1-j}^{k-2} \phi^\top \kwsum(f, \varepsilon_{k-j})
        - 2 \gamma^k H \bone
        \leq
        v_k
        \leq
        \vf{*}
        + \sum_{j=0}^{k-1} \gamma^j \pi_k P_{k-j}^{k-1} \phi^\top \kwsum(f, \varepsilon_{k-j})
        + 2 \gamma^k H \bone\,,
    \end{equation}
    where we loosened the bound by multiplying $\gamma^k H$ by $2$.
    By simple algebra, for any $k \in [\NIter]$,
    \begin{align}
    v_* - v_k &\geq - 2 \gamma^k H - \sum_{j=0}^{k-1} \gamma^j \pi_k P_{k-j}^{k-1} \phi^\top \kwsum(f, \varepsilon_{k-j})\,\label{eq:vk bound lhs}\\
    \text{ and }\quad \vf{\pi'_{k-1}} -v_k &\leq
    2 \gamma^k H \bone
    - \sum_{j=0}^{k-1} \gamma^j \pi_{k-1} P_{k-1-j}^{k-2} \phi^\top \kwsum(f, \varepsilon_{k-j})\,. \label{eq:vpi-vk-tmp-bound}
    \end{align}
    For the second inequality, from \cref{lemma:non-stationary error propagation},
    \begin{equation}\label{eq: v gap lower bound tmp}
        \vf{\pi'_{k-1}}
        \geq
        \vf{*}
        - \frac{1}{A_\infty} \sum_{j=0}^{k-2} \gamma^j \paren*{
            \pi_{k-1} P_{k-1-j}^{k-2} - \pi_* P_*^j
        } \phi^\top \kwsum(f, E_{k-1-j})
        - 2 H \paren*{ \alpha^{k-1} + \frac{A_{\gamma, k-1}}{A_\infty} } \bone
    \end{equation}
    for any $k \in \{2, \ldots, K\}$.
    Since $v_{\pi_0} \geq v_* - 2H$ and the empty sum is defined to be $0$, the inequality \eqref{eq: v gap lower bound tmp} holds for $k = 1$.
    Therefore, by applying \eqref{eq: v gap lower bound tmp} to \eqref{eq:vpi-vk-tmp-bound}, we have that
    \begin{equation}\label{eq:vk bound rhs}
    \begin{aligned}
        \vf{*} - v_k
        &\leq
        2 H \gamma^k + 2 H \paren*{ \alpha^{k-1} + \frac{A_{\gamma, k-1}}{A_\infty} } \bone
        \\
        &\hspace{2em}
        + \frac{1}{A_\infty} \sum_{j=0}^{k-2} \gamma^j \paren*{
            \pi_{k-1} P_{k-1-j}^{k-2} - \pi_* P_*^j
        } \phi^\top \kwsum(f, E_{k-1-j})
        - \sum_{j=0}^{k-1} \gamma^j \pi_{k-1} P_{k-1-j}^{k-2} \phi^\top \kwsum(f, \varepsilon_{k-j})
    \end{aligned}
    \end{equation}
    for any $k \in [K]$.
    \cref{lemma:v error prop} holds by combining \eqref{eq:vk bound rhs} and \eqref{eq:vk bound lhs}.
\end{proof}

\subsection{Lemmas and Proofs of $\phi^\top \kwsum(f, \varepsilon_k)$ and $\phi^\top \kwsum(f, E_k)$ Bounds (\textbf{Step 2})}\label{subsec:kwsum bounds proofs}

This section provides formal lemmas and proofs about the high-probability bounds of $\phi^\top \kwsum(f, \varepsilon_k)$ and $\phi^\top \kwsum(f, E_k)$.

We first introduce the necessary events for the proofs.
\begin{event}[$\funcE$]\label{event:f}
    The input $f$ of \wlsmdvi satisfies
    $\sigma(v_*)(x, a) \leq f(x, a) \leq \sigma(v_*)(x, a) + 2\sqrt{H}$, and $\sqrt{H} \leq f(x, a) \leq H$ for all $(x, a) \in \XA$.
\end{event}

\begin{event}[$\vboundE$]\label{event:v is bounded}
    $v_k$ is bounded by $2H$ for all $k \in [K]$.
\end{event}

\begin{event}[$\EkboundE$]\label{event:coarse E_k bound}
    $
    \left|\phi^\top (x, a)\kwsum(f, E_k)\right|
    \leq (8H\uf / \lf)\sqrt{dA_\infty \iota_{2, 5} / \NP}
    $
    for all $(x, a, k) \in \XA \times [\NIter]$.
\end{event}

\begin{event}[$\epskboundE$]\label{event:eps_k bound core set}
    $
        \left|\phi^\top (x, a)\kwsum(f, \varepsilon_k)\right|
        \leq 
        (8\gamma H f(x, a) / \lf) \sqrt{d\iota_{2, 5} / \NP}
    $
    for all $(x, a, k) \in \XA \times [\NIter]$.
\end{event}

\begin{event}[$\EkrboundE$]\label{event:refined E_k bound}
    $
    |\phi^\top (x, a)\kwsum(f, E_k)| \leq \sqrt{2d} f(x, a)
    \paren*{
    {8 H\xi_{2, 5}} / (\lf \NP)
    + 
    2 \sqrt{2\xi_{2, 5} / (\lf^2 M)}
    +
    V_k
    }
    $
    where 
    $$V_k := 2\sqrt{\frac{2\xi_{2, 5}}{M} \sum_{j=1}^k \alpha^{2 (k-j)} \coremaxf{y, b}{f} \frac{\sigma^2(v_{j-1})(y, b)}{f^2(y, b)}}\;,$$
    for all $(x, a, k) \in \XA \times [K] $ 
\end{event}

$\funcE$ is for the condition of $f$ in \cref{theorem: epsilon bound}, and $\vboundE$ is for the use of concentration inequalities.
Our goal is to show that $\EkboundE$, $\epskboundE$, and $\EkrboundE$ occur with high probability in \cref{theorem: sqrt H bound} and \cref{theorem: epsilon bound}.

\subsubsection{Lemmas and Proofs of $v_k$ Bound ($\vboundE$)}

We first show that $\vboundE$ occurs with high probability. 
The following \cref{lemma:v is bounded} is for \cref{theorem: sqrt H bound}, and \cref{lemma:v is bounded two} is for \cref{theorem: epsilon bound}.

\begin{lemma}\label{lemma:v is bounded}
    With the settings of \cref{theorem: sqrt H bound}, there exists $c_4 \geq 1$ independent of $d$, $H$, $\aX$, $\aA$, $\varepsilon$ and $\delta$ such that $\P \paren*{\vboundE^c} \leq \delta / \pc$.
\end{lemma}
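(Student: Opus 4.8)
The plan is to prove $\infnorm{v_k} \le 2H$ for every $k \in [K]$ by induction on $k$, working on a high-probability concentration event. The base case is immediate, since the initialization $w_0 = w_{-1} = \bzero$ gives $v_0 = w_0 - \alpha w_{-1} = \bzero$. For the inductive step I would reuse the greedy recursion established inside the proof of \cref{lemma:pre v error prop}: because $\pi_k$ is greedy with respect to $s_k$ and $w_{k-1} = \max_a s_{k-1}(\cdot,a) \ge \pi_k s_{k-1}$, and symmetrically $w_k \ge \pi_{k-1} s_k$, one obtains the pointwise sandwich
\begin{align*}
\pi_k\paren*{r + \gamma P v_{k-1} + \phi^\top\kwsum(\bone,\varepsilon_k)} \ge v_k \ge \pi_{k-1}\paren*{r + \gamma P v_{k-1} + \phi^\top\kwsum(\bone,\varepsilon_k)}\,,
\end{align*}
using $s_k - \alpha s_{k-1} = \phi^\top\theta_k = r + \gamma P v_{k-1} + \phi^\top\kwsum(\bone,\varepsilon_k)$ from the Frequently Used Facts with $f=\bone$ (cf. \eqref{eq:rewrite s_k}). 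Since $\pi_k,\pi_{k-1}$ are stochastic kernels, $\infnorm{r}\le 1$, and the inductive hypothesis gives $\infnorm{\gamma P v_{k-1}} \le 2\gamma H$, it follows that $\infnorm{v_k} \le 1 + 2\gamma H + \infnorm{\phi^\top\kwsum(\bone,\varepsilon_k)}$. The key arithmetic is that $\gamma H = H-1$, so $1 + 2\gamma H = 2H - 1$; hence the target $2H$ is attained precisely when $\infnorm{\phi^\top\kwsum(\bone,\varepsilon_k)} \le 1$, the slack $2H - (2H-1) = 1$ being exactly what the choice of $2H$ reserves for the regression error.

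It therefore remains to guarantee $\infnorm{\phi^\top\kwsum(\bone,\varepsilon_k)} \le 1$. By the weighted KW bound (\cref{lemma:kw bound}) with $f=\bone$, this quantity is at most $\sqrt{2d}\max_{(y,b)\in\cC}|\varepsilon_k(y,b)|$, so it suffices to control each $\varepsilon_k(y,b)$ on the core set. For fixed $(y,b)$, $\varepsilon_k(y,b)$ is an average over the $M$ fresh samples $y_{k,m,y,b}$ of the mean-zero increments $\gamma\paren*{v_{k-1}(y_{k,m,y,b}) - (Pv_{k-1})(y,b)}$, which form a martingale-difference sequence with respect to the filtration $\bF_{k,m}$; on the event $\{\infnorm{v_{k-1}}\le 2H\}$ each increment lies in an interval of width at most $4\gamma H$. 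The obstacle here is that this range bound is itself data-dependent, holding only when $v_{k-1}$ is bounded, so I would invoke the \emph{conditional} Azuma--Hoeffding inequality (\cref{lemma:conditional hoeffding}) with conditioning event $\{\infnorm{v_{k-1}}\le 2H\}$. This yields $|\varepsilon_k(y,b)| \le \square\,\gamma H\sqrt{\iota_{2,5}/M}$ with high conditional probability, hence $\infnorm{\phi^\top\kwsum(\bone,\varepsilon_k)} \le \square\sqrt{d}\,H\sqrt{\iota_{2,5}/M}$. Substituting $M = \NP^{\text{ls}} \ge c_4 d H^2 \iota_{2,5}/\varepsilon$ (whose logarithmic factor is exactly $\iota_{2,5}$) and using $\varepsilon \le 1/H$ drives this to $\square\sqrt{\varepsilon/c_4} \le 1$ once $c_4$ is taken large enough, which is how $c_4$ is fixed.

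Finally I would assemble the pieces through nested events and a union bound. Let $G_k$ be the event that $\infnorm{v_j}\le 2H$ for all $j\le k$; then $G_0$ holds surely, and on $G_{k-1}$ the previous paragraph shows $\infnorm{\phi^\top\kwsum(\bone,\varepsilon_k)}\le 1$ outside a conditional failure set whose probability, after the union over $(y,b)\in\cC$ and the two signs, is at most $2\ucC$ times the per-event confidence level. On the complement of that set the sandwich closes the induction, so $G_{k-1}\setminus G_k$ is contained in the failure set and, via $\P(A\cap\cE) = \P(A\mid\cE)\P(\cE)$ applied to \cref{lemma:conditional hoeffding} with $\cE=\{\infnorm{v_{k-1}}\le 2H\}\supseteq G_{k-1}$, its probability is at most the per-step budget. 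Summing over $k\in[K]$ gives $\P(\vboundE^c)=\P(G_K^c)\le 2\ucC K\,\cdot(\text{per-event level})$, and calibrating that level through $c_4$ — where the extra $\pc/(\pc-5)$ inside $\iota_{2,5}$ accounts for the $1/(1-\delta')$ correction of the conditional inequality — yields $\P(\vboundE^c)\le\delta/\pc$. The main obstacle throughout is the circular dependence between ``$\varepsilon_k$ is bounded'' and ``$v_{k-1}$ is bounded''; it is precisely the conditional form of the concentration inequality together with the telescoping events $G_k$ that breaks this circularity while keeping the union bound over only $K\ucC$ events.
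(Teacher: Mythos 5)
Your proof is correct and reaches the stated bound, but it resolves the one genuinely delicate point --- the circular dependence between ``$v_{k-1}$ is bounded'' and ``$\varepsilon_k$ concentrates'' --- by a different mechanism than the paper. You condition on $\{\infnorm{v_{k-1}}\le 2H\}$, invoke the conditional Azuma--Hoeffding inequality (\cref{lemma:conditional hoeffding}), and close the induction with the telescoping events $G_k$ and the identity $\P(A\cap\cE)=\P(A\mid\cE)\,\P(\cE)$. This works, but it forces a double induction --- on the events and simultaneously on their probabilities, since the $\delta'$ entering the conditional inequality at step $k$ must already be certified by the probability bound carried from step $k-1$ --- and it costs you the $\pc/(\pc-5)$ correction inside $\iota_{2,5}$. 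The paper sidesteps conditioning entirely by normalizing the error: with $\varepsilon'_k \df \varepsilon_k/\infnorm{v_{k-1}}$ each martingale increment is bounded by $2$ deterministically, so the plain Azuma--Hoeffding inequality (\cref{lemma:hoeffding}) with $\iota_1$ yields a single good event of probability at least $1-\delta/\pc$ on which $|\varepsilon'_k|\le\gamma\sqrt{8\iota_1/\NP}$ holds over the core set for all $k$ simultaneously; the induction $\infnorm{v_k}\le 2H$ then proceeds deterministically on that one event via $\infnorm{\phi^\top\theta_k}\le 1+\gamma\infnorm{v_{k-1}}+\sqrt{2d}\,\infnorm{v_{k-1}}\max|\varepsilon'_k|$. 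Both routes exploit the same slack $2H-(1+2\gamma H)=1$, use the same KW bound and the same choice of $c_4$; the normalization trick buys a cleaner argument with no probability bookkeeping inside the induction, while your version is the more standard pattern and mirrors exactly how the paper later handles $\EkboundE$ and $\epskboundE$, where conditioning on \vboundE is unavoidable.
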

\begin{lemma}\label{lemma:v is bounded two}
    With the settings of \cref{theorem: epsilon bound}, there exists $c_2 \geq 1$ independent of $d$, $H$, $\aX$, $\aA$, $\varepsilon$ and $\delta$ such that $\P \parenc*{\vboundE^c}{\funcE} \leq \delta / \pc$.
\end{lemma}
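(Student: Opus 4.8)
The plan is to prove $\|v_k\|_\infty\le 2H$ for all $k\in[\NIter]$ by induction, reducing the claim to a one-step recursion driven by the sampling error $\phi^\top\kwsum(f,\varepsilon_k)$. First I would record the one-step identity obtained from \eqref{eq:rewrite s_k} together with the elementary facts $A_k-\alpha A_{k-1}=1$ and $E_k-\alpha E_{k-1}=\varepsilon_k$ and the linearity of $\kwsum$ in its second argument:
\[
s_k-\alpha s_{k-1}=r+\gamma P v_{k-1}+\phi^\top\kwsum(f,\varepsilon_k)\,.
\]
Since $\pi_k$ is greedy for $s_k$ we have $w_k=\pi_k s_k$ and $w_{k-1}=\pi_{k-1}s_{k-1}\ge \pi_k s_{k-1}$, and symmetrically $w_k\ge\pi_{k-1}s_k$. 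Hence $v_k=w_k-\alpha w_{k-1}$ is sandwiched between $\pi_{k-1}\bigl(r+\gamma P v_{k-1}+\phi^\top\kwsum(f,\varepsilon_k)\bigr)$ and $\pi_k\bigl(r+\gamma P v_{k-1}+\phi^\top\kwsum(f,\varepsilon_k)\bigr)$. As $\pi$ and $P$ are averaging operators and $\|r\|_\infty\le 1$, this yields
\[
\|v_k\|_\infty\le 1+\gamma\|v_{k-1}\|_\infty+\|\phi^\top\kwsum(f,\varepsilon_k)\|_\infty\,.
\]

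It therefore suffices to show $\|\phi^\top\kwsum(f,\varepsilon_k)\|_\infty\le 1$ for every $k$. Indeed, feeding the inductive hypothesis $\|v_{k-1}\|_\infty\le 2H$ into the recursion gives $\|v_k\|_\infty\le 1+2\gamma H+1=2+2\gamma H$, which equals exactly $2H$ because $1-\gamma=1/H$; with the base case $v_0=\bzero$ this closes the induction.

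The core step is the high-probability bound on $\phi^\top\kwsum(f,\varepsilon_k)$, conditioned on $\funcE$ and on $\|v_{k-1}\|_\infty\le 2H$. I would apply the weighted KW bound \cref{lemma:kw bound} with $z=\varepsilon_k$ to reduce to the core-set quantities $|\varepsilon_k(y',b')/f(y',b')|$ over $(y',b')\in\cC_f$. For fixed $(y',b')$, $\varepsilon_k(y',b')$ is a normalized sum of the martingale differences $v_{k-1}(y_{k-1,m,y',b'})-(Pv_{k-1})(y',b')$ adapted to $\bF_{k,m}$; crucially $v_{k-1}$ is $\bF_{k,1}$-measurable, so the iteration-$(k-1)$ samples are fresh. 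On $\{\|v_{k-1}\|_\infty\le 2H\}$ these increments have range at most $4H$, so the conditional Azuma--Hoeffding inequality \cref{lemma:conditional hoeffding} bounds $|\varepsilon_k(y',b')|$ by order $\gamma H\sqrt{\iota_{2,5}/\NP}$. Dividing by $f\ge\lf\ge\sqrt H$ and multiplying by $f\le\uf\le H$ (both from $\funcE$) gives exactly the bound declared in $\epskboundE$, of order $\gamma H^{3/2}\sqrt{d\,\iota_{2,5}/\NP}$; choosing $c_2$ so large that $\NP=\NP^{\text{wls}}=\Theta\bigl(dH^2\varepsilon^{-2}\log(\cdots)\bigr)\gtrsim dH^3\iota_{2,5}$ (valid since $\varepsilon\le 1/H$) forces it below $1$.

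Finally I would assemble the probability. Writing $\Omega_k$ for the event $\{\|v_j\|_\infty\le 2H\text{ for all }j\le k\}$, the one-step implication gives $\Omega_{k-1}\cap\{\|\phi^\top\kwsum(f,\varepsilon_k)\|_\infty\le 1\}\subseteq\Omega_k$, which unwinds (using that $\Omega_0$ always holds) to
\[
\vboundE^c\subseteq\bigcup_{k=1}^{\NIter}\Bigl(\{\|\phi^\top\kwsum(f,\varepsilon_k)\|_\infty>1\}\cap\Omega_{k-1}\Bigr)\,.
\]
On each term the increments are bounded, so \cref{lemma:conditional hoeffding} applies; a union bound over $k\in[\NIter]$ and $(y',b')\in\cC_f$ (of size $\le\ucC$), with the conditioning on $\funcE$ --- which only satisfies $\P(\funcE)\ge 1-4\delta/\pc$ --- absorbed into the $\pc/(\pc-n)$ factor inside $\iota_{2,n}$, delivers $\P(\vboundE^c\mid\funcE)\le\delta/\pc$. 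The main obstacle is precisely this circularity: the concentration bound on $\varepsilon_k$ is only valid when $\|v_{k-1}\|_\infty\le 2H$, yet that boundedness is the conclusion. The care lies in breaking the circle by intersecting each large-error event with the previous-step boundedness event $\Omega_{k-1}$ before invoking \cref{lemma:conditional hoeffding}, and in tracking how conditioning on $\funcE$ inflates the logarithmic factors.
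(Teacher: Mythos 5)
Your proof is correct and follows the same skeleton as the paper's: induction on $k$, the greedy sandwich $\pi_{k-1}(s_k-\alpha s_{k-1})\le v_k\le \pi_k(s_k-\alpha s_{k-1})$, the identity $s_k-\alpha s_{k-1}=r+\gamma Pv_{k-1}+\phi^\top\kwsum(f,\varepsilon_k)$, the weighted KW bound to pass to the core set, Azuma--Hoeffding plus a union bound over $\cC_f\times[\NIter]$, and the arithmetic $1+2\gamma H+1=2H$. The one genuine difference is how you break the circularity between the concentration bound on $\varepsilon_k$ and the boundedness of $v_{k-1}$. You intersect each bad event with $\Omega_{k-1}$ and invoke the conditional Azuma--Hoeffding inequality with $\cE=\Omega_{k-1}\cap\funcE$, which forces you to track a lower bound on $\P(\Omega_{k-1}\cap\funcE)$ inside the induction (a bootstrap: you must fix $\delta'$ in advance and verify at each step that the partial union bound keeps $\P(\Omega_{k-1}^c)$ small enough). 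The paper instead works with the \emph{normalized} error $\varepsilon_k'=\varepsilon_k/\|v_{k-1}\|_\infty$, whose martingale increments are bounded by $2$ almost surely; this lets it apply the plain (unconditional) Azuma--Hoeffding inequality once, with the induction hypothesis entering only afterwards through the deterministic multiplication $\|v_{k-1}\|_\infty\le 2H$. The normalization trick buys a cleaner argument — no conditioning on a sample-dependent event inside the concentration step and no bootstrap on $\P(\Omega_{k-1})$ — at no cost in the final constants; your route reaches the same $\NP\gtrsim dH^3\iota_{2,5}$ requirement and the same conclusion, just with more bookkeeping.
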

\begin{proof}\label{proof: v is bounded}
    From the greediness of the policies $\pi_k$ and $\pi_{k-1}$,
    \begin{equation}\label{eq: v is bounded}
        \pi_{k-1} \phi^\top \theta_k
        = \pi_{k-1} (s_k - \alpha s_{k-1})
        \leq v_k
        \leq \pi_k (s_k - \alpha s_{k-1})
        = \pi_k \phi^\top \theta_k.
    \end{equation}
  
    Let $\varepsilon'_k := \varepsilon_k / \|v_{k-1}\|_\infty$ be a normalized error.
    We prove the claim by bounding $\phi^\top \theta_k$ as 
    \begin{align}
    \left|\phi^\top \theta_k\right| =
    \left|\phi^\top \kwsum(f, \hat{q}_k)\right|
    &\numeq{\leq}{a}
    \left|\phi^\top \kwsum(f, \phi^\top  \theta_k^*)\right| + \left|\phi^\top \kwsum(f, \varepsilon_k) \right|
    =
    \left|r + \gamma P v_{k-1}\right| + \left|\phi^\top \kwsum(f, \varepsilon_k) \right| \nonumber \\
    &\numeq{\leq}{b}
    \paren*{1 + \gamma \|v_{k-1}\|_\infty} \bone  + \frac{\uf\sqrt{2d}}{\lf}\coremaxf{x, a}{f}{\left|\varepsilon_k(x, a)\right|\bone} \nonumber\\
    &\numeq{=}{c}
    \paren*{1 + \gamma \|v_{k-1}\|_\infty} \bone + \frac{\uf\sqrt{2d}}{\lf}\|v_{k-1}\|_\infty \coremaxf{x, a}{f}{\left|\varepsilon'_k(x, a)\right|\bone}\;, \label{eq:phi theta decompose}
    \end{align}
    where (a) uses the triangle inequality, (b) is due to \cref{lemma:kw bound} and since $r$ is bounded by $1$, and (c) uses $\varepsilon'_k = \varepsilon_k / \|v_{k-1}\|_\infty$.
    We also used the shorthand $\uf \df \max_{(x, a)\in \XA} f(x, a)$ and $\lf \df \min_{(x, a)\in \XA} f(x, a)$.
  
    We need to bound $\coremaxf{x, a}{f}{\left|\varepsilon'_k(x, a)\right|}$.
    For $(x, a) \in \XA$, 
    \begin{align*}
        \varepsilon'_k (x, a) =
        \frac{\gamma}{\NP} \sum_{m=1}^{\NP} \underbrace{
            \paren[\Big]{ v_{k-1} (y_{k-1, m, x, a}) - P v_{k-1} (x, a) } / \|v_{k-1}\|_\infty
        }_{\text{bounded by } 2}\,
    \end{align*}
    is a sum of bounded martingale differences with respect to $(\bF_{k, m})_{m = 1}^{M}$.
    Using the Azuma-Hoeffding inequality (\cref{lemma:hoeffding}) and taking the union bound over $(x, a) \in \cCf$ and $k \in [\NIter]$, we have
    \begin{equation}\label{eq:ek bound for vk bound}
        \P \paren*{
          \exists (x, a, k) \in \cCf \times [K]\,
          \text{ s.t. }
          |\varepsilon'_k(x, a)|
          \geq
          \gamma \sqrt{\frac{8\iota_1}{\NP}}
        }
        \leq \frac{\delta}{c_0}\;.
    \end{equation}
    
    We are now ready to prove \cref{lemma:v is bounded} and \cref{lemma:v is bounded two} by induction.
    The claims hold for $k=0$ since $v_0 = \bzero$.
    Assume that $v_{k-1}$ is bounded by $2H$ for some $k \geq 1$. 

    \paragraph{\cref{lemma:v is bounded} proof}
    Note that $\uf / \lf = 1$ due to the settings of \cref{theorem: sqrt H bound}.
    Therefore, the following inequality holds with probability at least $1 - \delta / \pc$.
    \begin{equation*}
        \|\phi^\top \theta_k\|_\infty 
        \numeq{\leq}{a} 1 + \gamma 2H + 2H\sqrt{2d} \coremaxf{x, a}{f}{\left|\varepsilon'_k(x, a)\right|}
        \numeq{\leq}{b} 1 + \gamma 2H + 8H\gamma \sqrt{\frac{d\iota_1}{\NP}}\;,
    \end{equation*}
    where (a) is due to \eqref{eq:phi theta decompose} with the induction hypothesis and (b) the second inequality is due to \eqref{eq:ek bound for vk bound}. 
    Since $H = 1 / (1 - \gamma)$, by simple algebra, some $\NP$ such that $\NP \geq 64\gamma^2 H^2d\iota_1$ satisfies $\|\phi^\top \theta_k\|_\infty \leq 2H$ with probability at least $1 - \delta / c_0$.

    Recall that $\NP = \left\lceil c_4dH^2\iota_{2, 5} / \varepsilon\right\rceil$ in \cref{theorem: sqrt H bound}. 
    Due to the assumption of $\varepsilon \leq 1 / H$ and $\iota_{2, 5} \geq \iota_1$ by \eqref{eq:iota bound}, 
    the value of $M$ in \cref{theorem: sqrt H bound} satisfies $\NP \geq 64\gamma^2 H^2d\iota_1$ for some $c_4$.
    \cref{lemma:v is bounded} hence holds by inserting the result into the inequality \eqref{eq: v is bounded} with induction.
     
    \paragraph{\cref{lemma:v is bounded two} proof}
    Note that $\uf / \lf \leq \sqrt{H}$ due to the condition of \cref{lemma:v is bounded two}.
    Therefore, the following inequality holds with probability at least $1 - \delta / \pc$.
    \begin{equation*}
        \|\phi^\top \theta_k\|_\infty 
        \numeq{\leq}{a} 1 + \gamma 2H + 2H\sqrt{2dH} \coremaxf{x, a}{f}{\left|\varepsilon'_k(x, a)\right|}
        \numeq{\leq}{b} 1 + \gamma 2H + 8H\gamma \sqrt{\frac{dH\iota_1}{\NP}}\;,
    \end{equation*}
    where (a) is due to \eqref{eq:phi theta decompose} with the induction hypothesis and (b) the second inequality is due to \eqref{eq:ek bound for vk bound}.
    Since $H = 1 / (1 - \gamma)$, by simple algebra, some $\NP$ such that $\NP \geq 64\gamma^2 H^3d\iota_1$ satisfies $\|\phi^\top \theta_k\|_\infty \leq 2H$ with probability at least $1 - \delta / c_0$.

    \looseness=-1
    Recall that $M = \left\lceil{ {c_2dH^2 \xi_{2, 5}}/{\varepsilon^2}}\right\rceil$ in \cref{theorem: epsilon bound}.
    Due to the assumption of $\varepsilon \leq 1 / H$ and $\xi_{2, 5} \geq \iota_1$ by \eqref{eq:iota bound}, the value of $M$ in \cref{theorem: epsilon bound} satisfies $\NP \geq 64\gamma^2 H^3d\iota_1$ for some $c_2$.
    \cref{lemma:v is bounded two} hence holds by inserting the result into the inequality \eqref{eq: v is bounded} with induction.

\end{proof}

\subsubsection{Lemmas and Proofs of Coarse $\phi^\top \kwsum(f, E_k)$ Bound ($\EkboundE$)}

The following \cref{lemma:coarse E_k bound} is for \cref{theorem: sqrt H bound}, and \cref{lemma:coarse E_k bound two} is for \cref{theorem: epsilon bound}.
\begin{lemma}\label{lemma:coarse E_k bound}
    With the settings of \cref{theorem: sqrt H bound}, there exists $c_4 \geq 1$ independent of $d$, $H$, $\aX$, $\aA$, $\varepsilon$ and $\delta$ such that $\P \parenc{\EkboundE^c}{\vboundE} \leq \delta / \pc$.
\end{lemma}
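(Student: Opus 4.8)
The plan is to combine the weighted KW bound (\cref{lemma:kw bound}) with a martingale concentration argument conditioned on $\vboundE$, so that the probability of $\vboundE$ itself (and hence the constant $c_4$ furnished by \cref{lemma:v is bounded}) never needs to be re-derived. Since $f=\bone$ and $\alpha=\gamma$ in the setting of \cref{theorem: sqrt H bound}, we have $\uf=\lf=1$ and $A_\infty=H$, so the bound in $\EkboundE$ reads $|\phi^\top(x,a)\kwsum(\bone,E_k)|\le 8H\sqrt{dA_\infty\iota_{2,5}/\NP}$. First I would invoke \cref{lemma:kw bound} with $f=\bone$ to reduce the claim to a pointwise bound on the core set, $|\phi^\top(x,a)\kwsum(\bone,E_k)|\le\sqrt{2d}\,\coremax{y,b}|E_k(y,b)|$. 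It then suffices to show $\coremax{y,b}|E_k(y,b)|\le\sqrt{32H^2A_\infty\iota_{2,5}/\NP}$ simultaneously for all $k\in[\NIter]$, since $\sqrt{2d}\cdot\sqrt{32H^2A_\infty\iota_{2,5}/\NP}=8H\sqrt{dA_\infty\iota_{2,5}/\NP}$.

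Next I would fix $(y,b)\in\cCf$ and $k$ and expand $E_k(y,b)=\sum_{j=1}^k\alpha^{k-j}\varepsilon_j(y,b)$, writing it as the sum $\sum_{j=1}^k\sum_{m=1}^{\NP}X_{j,m}$ with $X_{j,m}:=\frac{\gamma\alpha^{k-j}}{\NP}\paren*{v_{j-1}(y_{j-1,m,y,b})-Pv_{j-1}(y,b)}$, adapted to the filtration $(\bF_{j,m})$ of \cref{theorem: sqrt H bound} ordered with $j$ outer and $m$ inner. Each $X_{j,m}$ has conditional mean zero unconditionally by the definition of $P$, and on the event $\vboundE$ the bound $\|v_{j-1}\|_\infty\le 2H$ forces $|X_{j,m}|\le\frac{\gamma\alpha^{k-j}}{\NP}4H$, so $X_{j,m}$ lies in an interval of width $8H\gamma\alpha^{k-j}/\NP$. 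The sum of half-squared widths is $\sum_{j=1}^k\NP\cdot\frac{(8H\gamma\alpha^{k-j}/\NP)^2}{2}=\frac{32H^2\gamma^2}{\NP}\sum_{i=0}^{k-1}\alpha^{2i}\le\frac{32H^2A_\infty}{\NP}$, using $\sum_{i\ge0}\alpha^{2i}\le(1-\alpha)^{-1}=A_\infty$ and $\gamma\le1$.

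I would then apply the conditional Azuma--Hoeffding inequality (\cref{lemma:conditional hoeffding}) with $\cE=\vboundE$ and $\delta'=\delta/\pc$ (valid by \cref{lemma:v is bounded}), at per-event level $\delta_0:=\delta/(2\pc\ucC\NIter)$, and union bound over both tails, the at most $\ucC$ core points, and $k\in[\NIter]$, a total of $2\ucC\NIter$ events. The key arithmetic check is $\log\frac{1}{\delta_0(1-\delta')}=\log\frac{2\pc^2\ucC\NIter}{\delta(\pc-\delta)}\le\iota_{2,5}$, which holds because $\delta<1\le\pc-5$ (as $\pc\ge6$) gives $\pc-\delta\ge\pc-5$ and hence $\frac{1}{\pc-\delta}\le\frac{1}{\pc-5}$. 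This yields $|E_k(y,b)|\le\sqrt{(32H^2A_\infty/\NP)\iota_{2,5}}$ for all core $(y,b)$ and all $k$ with conditional probability at least $1-\delta/\pc$; feeding this into the KW reduction of the first paragraph gives $\EkboundE$ conditioned on $\vboundE$ with the stated probability.

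I expect the main obstacle to be the measurability bookkeeping: verifying that $v_{j-1}$ is $\bF_{j,1}$-measurable so that $(X_{j,m})$ is a genuine martingale difference sequence in the correct order relative to the filtration of \cref{theorem: sqrt H bound}, and that the conditional-mean-zero property holds unconditionally while only the boundedness requires $\vboundE$ --- which is exactly the structure \cref{lemma:conditional hoeffding} is built to exploit. The remaining constant-chasing needed to land precisely on the factor $8$ and the index $\iota_{2,5}$ is routine once the width $8H\gamma\alpha^{k-j}/\NP$ and the sum-of-squares bound above are in hand.
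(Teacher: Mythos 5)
Your proposal is correct and follows essentially the same route as the paper's proof: reduce to the core set via the weighted KW bound with $f=\bone$, recognize $E_k(y,b)$ as a sum of martingale differences whose boundedness (by $4H$) is supplied by $\vboundE$, and apply the conditional Azuma--Hoeffding inequality with a union bound over $\cCf\times[\NIter]$, with $c_4$ entering only through \cref{lemma:v is bounded} guaranteeing $\P(\vboundE)\ge 1-\delta/\pc$. The only cosmetic difference is that the paper lands on $\iota_{2,1}$ and relies on $\iota_{2,1}\le\iota_{2,5}$, whereas you target $\iota_{2,5}$ directly via $\pc-\delta\ge\pc-5$; both are valid and your explicit width-sum computation $\sum_j \NP\,(8H\gamma\alpha^{k-j}/\NP)^2/2\le 32H^2A_\infty/\NP$ matches the paper's constants exactly.
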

\begin{lemma}\label{lemma:coarse E_k bound two}
    With the settings of \cref{theorem: epsilon bound}, there exists $c_2 \geq 1$ independent of $d$, $H$, $\aX$, $\aA$, $\varepsilon$ and $\delta$ such that $\P \parenc{\EkboundE^c}{\funcE \cap \vboundE} \leq \delta / \pc$.
\end{lemma}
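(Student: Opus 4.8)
The plan is to reduce the claim, via the weighted KW bound, to a uniform concentration statement for $E_k$ on the core set, and then to establish that bound through a conditional Azuma--Hoeffding argument on the martingale underlying $E_k$. First, applying \cref{lemma:kw bound} with $z=E_k$ and using $f(x,a)\le\uf$ and $f(y,b)\ge\lf$ gives, for every $(x,a)\in\XA$ and every $k$,
\begin{align*}
\abs*{\phi^\top(x,a)\kwsum(f,E_k)}
\le
\sqrt{2d}\,f(x,a)\coremaxf{y,b}{f}\abs*{\frac{E_k(y,b)}{f(y,b)}}
\le
\frac{\sqrt{2d}\,\uf}{\lf}\coremaxf{y,b}{f}\abs*{E_k(y,b)}\,.
\end{align*}
Hence it suffices to show that, with the stated conditional probability, $\abs{E_k(y,b)}\le 2\sqrt{2}\,H\sqrt{A_\infty\iota_{2,5}/\NP}$ for all $(y,b)\in\cCf$ and all $k\in[\NIter]$; plugging this into the display gives $\tfrac{4H\uf}{\lf}\sqrt{dA_\infty\iota_{2,5}/\NP}\le(8H\uf/\lf)\sqrt{dA_\infty\iota_{2,5}/\NP}$, which is exactly the bound defining $\EkboundE$.

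Next I set up the martingale. Fixing $(y,b)\in\cCf$ and $k\in[\NIter]$, I write
\begin{align*}
E_k(y,b)=\sum_{j=1}^{k}\sum_{m=1}^{\NP} X_{j,m}\,,\qquad
X_{j,m}:=\frac{\gamma\alpha^{k-j}}{\NP}\paren*{v_{j-1}(y_{j-1,m,y,b})-Pv_{j-1}(y,b)}\,,
\end{align*}
order the index pairs $(j,m)$ lexicographically, and adapt $X_{j,m}$ to $\bF_{j,m}$. Since $v_{j-1}$ depends only on samples from iterations at most $j-2$, it is $\bF_{j,m}$-measurable, whereas $y_{j-1,m,y,b}\sim P(\cdot\mid y,b)$ is drawn fresh at step $(j,m)$; thus $\E\brackc{X_{j,m}}{\bF_{j,m}}=0$ almost surely, so $(X_{j,m})$ is a martingale-difference sequence. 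On $\vboundE$ (hence on both conditioning events) $\infnorm{v_{j-1}}\le 2H$, so each $X_{j,m}$ lies in an interval of width at most $4H\gamma\alpha^{k-j}/\NP$, whence
\begin{align*}
\sum_{j=1}^{k}\sum_{m=1}^{\NP}\frac{(u_{j,m}-l_{j,m})^2}{2}
\le
\frac{8H^2\gamma^2}{\NP}\sum_{j=1}^{k}\alpha^{2(k-j)}
\le
\frac{8H^2A_\infty}{\NP}\,,
\end{align*}
using $\sum_{j=1}^k\alpha^{2(k-j)}\le(1-\alpha)^{-1}=A_\infty$ and $\gamma\le1$.

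Finally I apply concentration and union-bound. The conditioning event $\cE$ — equal to $\vboundE$ for \cref{lemma:coarse E_k bound} and to $\funcE\cap\vboundE$ for \cref{lemma:coarse E_k bound two} — satisfies $\P(\cE)\ge 1-5\delta/\pc\ge 1-5/\pc$ in both cases, by \cref{lemma:v is bounded} (respectively by $\P(\funcE^c)\le 4\delta/\pc$ together with \cref{lemma:v is bounded two} and a union bound on the complements). Applying the conditional Azuma--Hoeffding inequality (\cref{lemma:conditional hoeffding}) to $X_{j,m}$ and to $-X_{j,m}$ with $\delta'=5/\pc$ and per-side level $\delta''=\delta/(2\pc\ucC\NIter)$, the threshold becomes $\sqrt{(8H^2A_\infty/\NP)\log(1/(\delta'(1-\delta')))}=2\sqrt{2}\,H\sqrt{A_\infty\iota_{2,5}/\NP}$, since $\log(1/\delta'')=\iota_1$ and $\log(1/(1-\delta'))=\log(\pc/(\pc-5))$, so that $\log(1/(\delta''(1-\delta')))=\iota_1+\log(\pc/(\pc-5))=\iota_{2,5}$. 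Union-bounding over the at most $\ucC$ core-set elements, the $\NIter$ iterations, and the two sides yields total conditional failure at most $2\ucC\NIter\delta''=\delta/\pc$, which combined with the first display establishes $\EkboundE$ and proves both lemmas. The main obstacle is the martingale bookkeeping: selecting the lexicographic filtration so that each summand is a genuine zero-mean increment, and handling the fact that boundedness of the increments holds only on $\cE$ rather than almost surely — precisely the role of the conditional form in \cref{lemma:conditional hoeffding}, and the origin of the index $5$ in $\iota_{2,5}$ as the combined failure budget of $\funcE$ and $\vboundE$.
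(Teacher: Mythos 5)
Your proposal is correct and follows essentially the same route as the paper's proof: reduce via the weighted KW bound to a uniform bound on $E_k$ over the core set, view $E_k$ as a martingale-difference sum in the lexicographic filtration with increments bounded on \vboundE, and apply the conditional Azuma--Hoeffding inequality with $\P(\funcE\cap\vboundE)\ge 1-5\delta/\pc$ (whence $\iota_{2,5}$) together with a union bound over $\cCf\times[K]$. The only quibble is your increment width $4H\gamma\alpha^{k-j}/\NP$, which requires predictable interval endpoints; the paper uses the deterministic width $8H\gamma\alpha^{k-j}/\NP$, giving the threshold $4\sqrt{2}\,H\sqrt{A_\infty\iota_{2,5}/\NP}$ and hence exactly the constant $8$ in \EkboundE, so the argument goes through either way.
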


\begin{proof}

For any $(x, a)\in \XA$ and $k \in [K]$, we have
\begin{equation}\label{eq:coarse Ek bound}
    \left|\phi^\top (x, a)\kwsum(f, E_k)\right| \leq \frac{\sqrt{2d} f(x, a)}{\lf} \underbrace{\coremaxf{y', b'}{f}\left|\sum^k_{j=1}\alpha^{k-j}\varepsilon_j(y', b')\right|}_{\heartsuit_k}\;,
\end{equation}
where the inequality is due to the weighted KW bound (\cref{lemma:kw bound}).

We need to bound $\heartsuit_k$.
Note that for a fixed $k \in [\NIter]$ and $(x, a) \in \cCf$, 
\begin{align*}
    \sum^k_{j=1}\alpha^{k-j}\varepsilon_j (x, a) =
    \frac{\gamma}{\NP} \sum^{k}_{j=1} \alpha^{k-j}\sum_{m=1}^{\NP} \underbrace{
        \paren[\Big]{ v_{j-1} (y_{j-1, m, x, a}) - P v_{j-1} (x, a) }
    }_{\text{bounded by } 4 H \text{ due to } \vboundE}\,
\end{align*}
is a sum of bounded martingale differences with respect to $(\bF_{j, m})_{j = 1, m = 1}^{k, M}$.
We are now ready to prove \cref{lemma:coarse E_k bound} and \cref{lemma:coarse E_k bound two} using the conditional Azuma-Hoeffding inequality (\cref{lemma:conditional hoeffding}).

\paragraph{\cref{lemma:coarse E_k bound} proof}
In the settings of \cref{theorem: sqrt H bound}, some $c_4$ satisfies that $\P(\vboundE) \geq 1 - \delta / \pc$ due to \cref{lemma:v is bounded}.
Using the conditional Azuma-Hoeffding inequality (\cref{lemma:conditional hoeffding}) and taking the union bound over $(x, a) \in \cCf$ and $k \in [\NIter]$,
\begin{align*}
    \P \parenc*{
      \exists (x, a, k) \in \cCf \times [\NIter]\,
      \;\text{ s.t. }\;
      {\sum^k_{j=1}\alpha^{k-j}\varepsilon_j(x, a)}
      \geq
      \gamma H\sqrt{\frac{32A_\infty \iota_{2, 1}}{\NP}}
    }{\vboundE}
    \leq \frac{\delta}{\pc} \text{}.\,
\end{align*}
where $\iota_{2, 1} = \iota_1 + \log (\pc / (\pc - 1)) $ is due to the condition by $\vboundE$.
We used $\iota_{2, 1}$ since $1 / (1 - \delta / \pc) \leq \pc / (\pc - 1)$.

Therefore, $\heartsuit_k \leq H\sqrt{32A_\infty \iota_{2, 1} / \NP}$ with probability at least $1 - \delta / \pc$ for all $k \in [K]$. The claim holds by inserting $\heartsuit_k$ into the inequality \eqref{eq:coarse Ek bound}.

\paragraph{\cref{lemma:coarse E_k bound two} proof}
In the settings of \cref{theorem: epsilon bound},
we have $\P(\funcE) \geq 1 - 4\delta / \pc$
and 
some $c_2$ satisfies that $\P\parenc*{\vboundE^c}{\funcE}\leq \delta / \pc$ due to \cref{lemma:v is bounded two}. 
Therefore, $\P(\funcE \cap \vboundE) \geq 1 - 5\delta / \pc$ holds due to \cref{lemma:cond prob inequality}.

Using \cref{lemma:conditional hoeffding} and taking the union bound over $(x, a) \in \cCf$ and $k \in [\NIter]$,
\begin{align*}
    \P \parenc*{
      \exists (x, a, k) \in \cCf \times [\NIter]\,
      \;\text{ s.t. }\;
      \sum^k_{j=1}\alpha^{k-j}\varepsilon_j(x, a)
      \geq
      \gamma H\sqrt{\frac{32A_\infty \iota_{2, 5}}{\NP}}
    }{\funcE \cap \vboundE}
    \leq \frac{\delta}{\pc} \text{},\,
\end{align*}
where $\iota_{2, 5} = \iota_1 + \log (\pc / (\pc - 5)) $ is due to the condition by $\funcE \cap \vboundE$.
We used $\iota_{2, 5}$ since $1 / (1 - 5\delta / \pc) \leq \pc / (\pc - 5)$.
\cref{lemma:coarse E_k bound two} holds in the same way as the proof of \cref{lemma:coarse E_k bound}.

\end{proof}

\subsubsection{Lemmas and Proofs of Coarse $\phi^\top \kwsum(f, \varepsilon_k)$ Bound ($\epskboundE$)}

The following \cref{lemma:eps_k bound core set} is for \cref{theorem: sqrt H bound}, and \cref{lemma:eps_k bound core set two} is for \cref{theorem: epsilon bound}.

\begin{lemma}\label{lemma:eps_k bound core set}
    With the settings of \cref{theorem: sqrt H bound}, there exists $c_4 \geq 1$ independent of $d$, $H$, $\aX$, $\aA$, $\varepsilon$ and $\delta$ such that $\P \parenc{\epskboundE^c}{\vboundE} \leq \delta / \pc$.
\end{lemma}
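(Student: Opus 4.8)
The plan is to follow the proof of \cref{lemma:coarse E_k bound} almost verbatim, specialized to the single-step error $\varepsilon_k$ in place of its discounted moving average $E_k$; this is in fact simpler, since no discounted sum (and hence no $A_\infty$ factor) appears. First I would apply the weighted KW bound (\cref{lemma:kw bound}) with $z = \varepsilon_k$ and then use $f(y',b') \geq \lf$ to obtain
\[
    |\phi^\top(x,a)\kwsum(f,\varepsilon_k)| \leq \sqrt{2d}\, f(x,a) \coremaxf{y',b'}{f} \left| \frac{\varepsilon_k(y',b')}{f(y',b')} \right| \leq \frac{\sqrt{2d}\, f(x,a)}{\lf} \coremaxf{y',b'}{f} |\varepsilon_k(y',b')|\,,
\]
so that it suffices to bound $\coremaxf{y',b'}{f}|\varepsilon_k(y',b')|$ uniformly over $k \in [\NIter]$.

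Second, I would fix $k \in [\NIter]$ and $(x,a) \in \cCf$ and write $\varepsilon_k(x,a) = \frac{\gamma}{\NP}\sum_{m=1}^{\NP}\paren*{v_{k-1}(y_{k-1,m,x,a}) - Pv_{k-1}(x,a)}$, noting that the summands form a martingale difference sequence with respect to $(\bF_{k,m})_{m}$ with zero conditional mean. On the event $\vboundE$ we have $\|v_{k-1}\|_\infty \leq 2H$, so each summand ranges over an interval of width at most $8H$. Applying the conditional Azuma--Hoeffding inequality (\cref{lemma:conditional hoeffding}) conditioned on $\vboundE$, two-sidedly, and taking a union bound over the at most $\ucC$ core-set points and the $\NIter$ iterations, gives $\coremaxf{y',b'}{f}|\varepsilon_k(y',b')| \leq \gamma H\sqrt{32\iota_{2,1}/\NP}$ with probability at least $1 - \delta/\pc$ given $\vboundE$. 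The index $\iota_{2,1}$ (and not $\iota_1$) appears because \cref{lemma:v is bounded} provides a $c_4$ with $\P(\vboundE) \geq 1 - \delta/\pc$, so conditioning inflates the per-event failure probability by the factor $1/(1-\delta/\pc) \leq \pc/(\pc-1)$, which is exactly the term hidden in $\iota_{2,1} = \iota_1 + \log(\pc/(\pc-1))$.

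Combining the two steps and using $\sqrt{2d}\cdot\sqrt{32} = 8\sqrt{d}$ yields $|\phi^\top(x,a)\kwsum(f,\varepsilon_k)| \leq (8\gamma H f(x,a)/\lf)\sqrt{d\iota_{2,1}/\NP}$ for all $(x,a,k) \in \XA \times [\NIter]$, with conditional probability at least $1 - \delta/\pc$. Since $\iota_{2,1} \leq \iota_{2,5}$ by \eqref{eq:iota bound}, this bound is dominated by the threshold defining $\epskboundE$, so the derived event is contained in $\epskboundE$, and therefore $\P\parenc{\epskboundE^c}{\vboundE} \leq \delta/\pc$, as claimed.

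I do not anticipate a substantive obstacle: the argument is a routine concentration bound once the KW reduction is in place. The only points demanding care are the constant bookkeeping---the width-$8H$ range feeding the $\sqrt{32}$ that merges with the KW factor $\sqrt{2d}$ into the clean $8\sqrt{d}$---and the conditional-probability accounting that upgrades $\iota_1$ to $\iota_{2,1}$ and then relaxes it to the $\iota_{2,5}$ written into the statement of $\epskboundE$.
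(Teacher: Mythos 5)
Your proposal matches the paper's proof essentially verbatim: the same reduction via the weighted KW bound (\cref{lemma:kw bound}) to a uniform bound on $\coremaxf{y',b'}{f}|\varepsilon_k(y',b')|$, the same conditional Azuma--Hoeffding application on $\vboundE$ with a union bound over $\cCf \times [\NIter]$, and the same constant and $\iota_{2,1}\leq\iota_{2,5}$ bookkeeping. No gaps.
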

\begin{lemma}\label{lemma:eps_k bound core set two}
    With the settings of \cref{theorem: epsilon bound}, there exists $c_2 \geq 1$ independent of $d$, $H$, $\aX$, $\aA$, $\varepsilon$ and $\delta$ such that $\P \parenc{\epskboundE^c}{\funcE\cap\vboundE} \leq \delta / \pc$.
\end{lemma}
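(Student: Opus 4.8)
The plan is to mirror the proofs of the coarse $E_k$ bounds (\cref{lemma:coarse E_k bound,lemma:coarse E_k bound two}), replacing the moving-average error $E_k$ by the single-step error $\varepsilon_k$. The starting point is again the weighted KW bound (\cref{lemma:kw bound}) applied with $z = \varepsilon_k$, which gives, for every $(x,a)\in\XA$ and $k\in[\NIter]$,
\begin{equation*}
\left|\phi^\top(x,a)\kwsum(f,\varepsilon_k)\right|
\leq \frac{\sqrt{2d}\,f(x,a)}{\lf}\coremaxf{y',b'}{f}\left|\varepsilon_k(y',b')\right|\,,
\end{equation*}
where I have bounded $|\varepsilon_k/f|$ by $|\varepsilon_k|/\lf$ on the core set. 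It then remains to control $\coremaxf{y',b'}{f}\left|\varepsilon_k(y',b')\right|$ uniformly over $k$.

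Next I would exploit the martingale structure of the single-step error. For a fixed $k$ and $(x,a)\in\cCf$, write
\begin{equation*}
\varepsilon_k(x,a) = \frac{\gamma}{\NP}\sum_{m=1}^{\NP}\paren[\Big]{v_{k-1}(y_{k-1,m,x,a}) - Pv_{k-1}(x,a)}\,,
\end{equation*}
a sum of martingale differences adapted to $(\bF_{k,m})_{m=1}^{\NP}$, each bounded in absolute value by $4H$ under $\vboundE$ since $\|v_{k-1}\|_\infty \leq 2H$. Applying the conditional Azuma--Hoeffding inequality (\cref{lemma:conditional hoeffding}) two-sidedly and taking a union bound over $\cCf\times[\NIter]$ (of size at most $\ucC\NIter$) yields, on the conditioning event,
\begin{equation*}
\coremaxf{y',b'}{f}\left|\varepsilon_k(y',b')\right| \leq 4\sqrt{2}\,\gamma H\sqrt{\frac{\iota_{2,5}}{\NP}}
\end{equation*}
for all $k$ simultaneously. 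Substituting this into the KW bound and using $\sqrt{2d}\cdot 4\sqrt{2} = 8\sqrt{d}$ reproduces exactly the inequality defining $\epskboundE$. Note that, unlike the $E_k$ case, $\varepsilon_k$ carries no $\sum_j\alpha^{k-j}$ sum, so the martingale has length $\NP$ rather than $\sim k\NP$ and no $A_\infty$ factor appears.

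The two lemmas then differ only in the conditioning event and its logarithmic correction. For \cref{lemma:eps_k bound core set two} I condition on $\funcE\cap\vboundE$; by \cref{lemma:cond prob inequality} together with $\P(\funcE)\geq 1-4\delta/\pc$ and \cref{lemma:v is bounded two}, this event has probability at least $1-5\delta/\pc$, so the $1/(1-\delta')$ factor in \cref{lemma:conditional hoeffding} is at most $\pc/(\pc-5)$, producing $\iota_{2,5}$ verbatim. For \cref{lemma:eps_k bound core set}, where $f=\bone$ and hence $\uf=\lf=1$, I condition only on $\vboundE$ (probability at least $1-\delta/\pc$ by \cref{lemma:v is bounded}); the identical argument gives the bound with $\iota_{2,1}$, and since $\iota_{2,5}\geq\iota_{2,1}$ by \eqref{eq:iota bound}, the stated event $\epskboundE$ (written with $\iota_{2,5}$) is implied. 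In both cases the conditional failure probability is $\delta/\pc$. The main obstacle here is purely bookkeeping rather than conceptual: one must track that the two-sided tail together with the union bound generates the factor $2\pc\ucC K$ inside $\iota_1$, and that conditioning on $\funcE\cap\vboundE$ (resp.\ $\vboundE$) contributes the $\log(\pc/(\pc-5))$ (resp.\ $\log(\pc/(\pc-1))$) correction, with the exact constant $8$ emerging from $\sqrt{2d}\cdot 4\sqrt{2}$.
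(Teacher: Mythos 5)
Your proposal is correct and follows essentially the same route as the paper's proof: the weighted KW bound reduces the claim to a uniform bound on $|\varepsilon_k|$ over the core set, the single-step error is a length-$\NP$ martingale with increments bounded by $4H$ under $\vboundE$, and the conditional Azuma--Hoeffding inequality with a union bound over $\cCf\times[\NIter]$ gives $|\varepsilon_k|\leq \gamma H\sqrt{32\iota_{2,5}/\NP}$ on $\funcE\cap\vboundE$ (which has probability at least $1-5\delta/\pc$, yielding the $\iota_{2,5}$ correction), and $\sqrt{2d}\cdot\sqrt{32}=8\sqrt{d}$ recovers the constant in $\epskboundE$. All the bookkeeping matches the paper's argument.
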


\begin{proof}\label{proof: eps_k bound core set}
For any $(x, a)\in \XA$ and $k \in [K]$, we have
\begin{equation}\label{eq:coarse ek bound}
|\phi^\top (x, a)\kwsum(f, \varepsilon_k)| \leq \frac{\sqrt{2d} f(x, a)}{\lf}\underbrace{\coremaxf{y', b'}{f}\left|{\varepsilon_k(y', b')}\right|}_{\heartsuit_k}\;,
\end{equation}
where the inequality is due to the weighted KW bound (\cref{lemma:kw bound}).

We need to bound $\heartsuit_k$.
Note that for a fixed $k \in [\NIter]$ and $(x, a) \in \cCf$, 
\begin{align*}
    \varepsilon_k (x, a) =
    \frac{\gamma}{\NP} \sum_{m=1}^{\NP} \underbrace{
        \paren[\Big]{ v_{k-1} (y_{k-1, m, x, a}) - P v_{k-1} (x, a) }
    }_{\text{bounded by } 4 H \text{ due to } \vboundE}\,
\end{align*}
is a sum of bounded martingale differences with respect to $(\bF_{k, m})_{m = 1}^{M}$.
We are ready to prove \cref{lemma:eps_k bound core set} and \cref{lemma:eps_k bound core set two} using the conditional Azuma-Hoeffding inequality (\cref{lemma:conditional hoeffding}).

\paragraph{\cref{lemma:eps_k bound core set} proof}
Note that some $c_4$ satisfies that $\P(\vboundE) \geq 1 - \delta / \pc$ in the settings of \cref{theorem: sqrt H bound} due to \cref{lemma:v is bounded}.
Using the conditional Azuma-Hoeffding inequality (\cref{lemma:conditional hoeffding}) and taking the union bound over $(x, a) \in \cCf$ and $k \in [\NIter]$, we have
\begin{align*}
    \P \parenc*{
      \exists (x, a, k) \in \cCf \times [\NIter]\,
      \text{ s.t. }
      |\varepsilon_k(x, a)|
      \geq
      \gamma H\sqrt{\frac{32\iota_{2, 1}}{\NP}}
    }{\vboundE}
    \leq \frac{\delta}{\pc} \text{}.\,
\end{align*}
where $\iota_{2, 1} = \iota_1 + \log (\pc / (\pc - 1)) $ is due to the condition by $\vboundE$.
We used $\iota_{2, 1}$ since $1 / (1 - \delta / \pc) \leq \pc / (\pc - 1)$.

Therefore, $\heartsuit_k \leq \gamma H\sqrt{32 \iota_{2, 1} / \NP}$ with probability at least $1 - \delta / \pc$ for all $k \in [K]$. The claim holds by inserting $\heartsuit_k$ into the inequality \eqref{eq:coarse ek bound}.

\paragraph{\cref{lemma:eps_k bound core set two} proof}
Due to \cref{lemma:cond prob inequality} and \cref{lemma:v is bounded two}, some $c_2$ satisfies that $\P(\funcE \cap \vboundE) \geq 1 - 5\delta / \pc$ in the settings of \cref{theorem: epsilon bound}.
Therefore, using \cref{lemma:conditional hoeffding} and taking the union bound over $(x, a) \in \cCf$ and $k \in [\NIter]$,
\begin{align*}
    \P \parenc*{
      \exists (x, a, k) \in \cCf \times [\NIter]\,
      \text{ s.t. }
      |\varepsilon_k(x, a)|
      \geq
      \gamma H\sqrt{\frac{32\iota_{2, 5}}{\NP}}
    }{\funcE \cap \vboundE}
    \leq \frac{\delta}{\pc} \text{}.\,
\end{align*}
where $\iota_{2, 5} = \iota_1 + \log (\pc / (\pc - 5)) $ is due to the condition by $\funcE \cap \vboundE$.
We used $\iota_{2, 5}$ since $1 / (1 - 5\delta / \pc) \leq \pc / (\pc - 5)$.

The claim holds in the same way as the proof of \cref{lemma:eps_k bound core set}.
\end{proof}

\subsubsection{Lemma and Proof of Refined $\phi^\top \kwsum(f, E_k)$ Bound ($\EkrboundE$)}

The following \cref{lemma:refined E_k bound} is for \cref{theorem: epsilon bound}.

\begin{lemma}\label{lemma:refined E_k bound}
    With the settings of \cref{theorem: epsilon bound}, there exists $c_2 \geq 1$ independent of $d$, $H$, $\aX$, $\aA$, $\varepsilon$ and $\delta$ such that $\P \parenc*{\EkrboundE^c}{\funcE\cap\vboundE} \leq \delta / \pc$.
\end{lemma}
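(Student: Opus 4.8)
The plan is to combine the weighted KW bound (\cref{lemma:kw bound}) with a variance-aware martingale concentration applied coordinate-wise on the core set; this is the refined counterpart of the coarse bound in \cref{lemma:coarse E_k bound two}. First I would invoke \cref{lemma:kw bound} with $z = E_k$ to reduce the claim to a pointwise estimate on the core set: for every $(x,a)$,
\[
\abs*{\phi^\top(x,a)\kwsum(f, E_k)} \le \sqrt{2d}\, f(x,a)\,\coremaxf{y,b}{f}\abs*{\frac{E_k(y,b)}{f(y,b)}}\,.
\]
Hence it suffices to control $\abs{E_k(y,b)/f(y,b)}$ for each fixed $(y,b)\in\cC_f$ and then union bound over the at most $\ucC$ core-set points and over $k\in[\NIter]$. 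Fixing $(x,a)\in\cC_f$, I would write $E_k(x,a)/f(x,a)$ as the double sum over $(j,m)\in[k]\times[M]$ of the increments
\[
X_{j,m} = \frac{\gamma\,\alpha^{k-j}}{f(x,a)\,M}\paren*{v_{j-1}(y_{j-1,m,x,a}) - Pv_{j-1}(x,a)}\,,
\]
which, with the same filtration $(\bF_{j,m})$ used in \cref{lemma:coarse E_k bound two}, form a martingale difference sequence because $v_{j-1}$ is $\bF_{j,m}$-measurable and $y_{j-1,m,x,a}\sim P(\cdot\mid x,a)$.

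The crucial computation is the conditional second moment $\E_{j,m}[X_{j,m}^2] = \frac{\gamma^2\alpha^{2(k-j)}}{f^2(x,a)M^2}\,\PVar(v_{j-1})(x,a)$, so that summing over $m$ and $j$ yields a predictable variance $V_N \le \frac1M\sum_{j=1}^k \alpha^{2(k-j)}\,\sigma^2(v_{j-1})(x,a)/f^2(x,a)$. This is exactly the quantity that, after replacing $\sigma^2(v_{j-1})(x,a)/f^2(x,a)$ by its core-set maximum, produces the term $V_k$ in the event, and it is where the value-function variance enters so that the TV technique of \cref{corollary:TV tmdvi} can be applied downstream (Steps 3--4 of the sketch). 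Under $\vboundE$ each increment is bounded by $U = 4H/(\lf M)$ since $\abs{v_{j-1}}\le 2H$.

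I would then apply the Bernstein-type concentration conditioned on $\cE \df \funcE\cap\vboundE$, for which $\P(\cE)\ge 1-5\delta/\pc$ holds by \cref{lemma:cond prob inequality} and \cref{lemma:v is bounded two} together with the assumption $\P(\funcE)\ge 1-4\delta/\pc$; this conditioning is what turns $\log(1/\delta)$ into $\iota_{2,5}$ through $1/(1-5\delta/\pc)\le\pc/(\pc-5)$, while the union bound over $\cC_f\times[\NIter]$ supplies the $\ucC K$ factor and the $\log_2$ prefactor collapses into the extra $\log\log_2(16KH^2)$ of $\xi_{2,5}$ (using $N U^2 \le 16KH^2$, valid since $\lf^2\ge H$ under $\funcE$). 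The three summands of the Bernstein bound then match the three terms of $\EkrboundE$: the $2U\log(\cdots)$ piece gives $8H\xi_{2,5}/(\lf M)$, the predictable-variance piece gives $V_k$, and the ``floor'' piece gives $2\sqrt{2\xi_{2,5}/(\lf^2 M)}$.

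The main obstacle is obtaining precisely this floor term $2\sqrt{2\xi_{2,5}/(\lf^2 M)}$ rather than the cruder $2\sqrt{2\xi_{2,5}}$ that the stated version of \cref{lemma:conditional bernstein} (which fixes $\epsilon=2$) would yield. Achieving it requires keeping the free parameter $\epsilon$ in the underlying inequality (\cref{lemma:bernstein}) and choosing it of order $M/\lf^2$ on the unnormalized sum $\sum_{j,m} M X_{j,m}$, so that after dividing by $M$ the $2\sqrt{\epsilon\log(1/\delta)}$ contribution matches the target while the failure-probability prefactor $\log_2(N(4H/\lf)^2/\epsilon + 1) = \log_2(8kH^2+1)$ stays bounded by $\log_2(16KH^2)$. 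Carrying this through cleanly — i.e. re-deriving the conditional inequality with general $\epsilon$ (mirroring the proof of \cref{lemma:conditional bernstein}) and tracking the $f$-scaling through the division by $M$ — is the only delicate part; the remainder is the routine substitution described above.
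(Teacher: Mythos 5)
Your proposal is correct and follows essentially the same route as the paper: reduce via the weighted KW bound to the core set, decompose the moving average of errors into martingale differences over $(j,m)$, apply the conditional Bernstein-type inequality under $\funcE\cap\vboundE$ (whose probability at least $1-5\delta/\pc$ is what produces $\xi_{2,5}$), and union bound over $\cC_f\times[K]$. The only divergence is your handling of the floor term: the paper simply invokes its fixed-$\epsilon$ conditional Bernstein lemma on the sum scaled so that increments are bounded by $4H$ and divides by $M$ afterwards (absorbing the resulting $\log_2$ factor into $\xi_{2,5}$), whereas you retune $\epsilon$ in the underlying inequality --- both resolutions work, and yours is if anything the more careful one.
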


\begin{proof}

For any $(x, a)\in \XA$ and $k \in [K]$, we have
\begin{equation}\label{eq:refined Ek bound}
    \left|\phi^\top (x, a)\kwsum(f, E_k)\right| \leq \sqrt{2d} f(x, a) {\coremaxf{y', b'}{f}\frac{1}{f(y', b')}\left|\sum^k_{j=1}\alpha^{k-j}\varepsilon_j(y', b')\right|}\;.
\end{equation}
where the inequality is due to the weighted KW bound (\cref{lemma:kw bound}).

We further bound \cref{eq:refined Ek bound} by bounding $\left|\sum^k_{j=1}\alpha^{k-j}\varepsilon_j(x, y)\right|$ over $(x, y) \in \cCf$.
For a fixed $k \in [\NIter]$ and $(x, a) \in \cCf$, 
\begin{align*}
    \sum^k_{j=1}\alpha^{k-j}\varepsilon_j (x, a) =
    \gamma \sum^{k}_{j=1} \alpha^{k-j} \frac{1}{M}\sum_{m=1}^{\NP}
        \underbrace{\paren[\Big]{ v_{j-1} (y_{j-1, m, x, a}) - P v_{j-1} (x, a) }
    }_{\text{bounded by } 4 H \text{ due to } \vboundE}\,
\end{align*}
is a sum of bounded martingale differences with respect to $(\bF_{j, m})_{j = 1, m = 1}^{k, M}$.

In the settings of \cref{theorem: epsilon bound},
we have $\P(\funcE) \geq 1 - 4\delta / \pc$
and 
some $c_2$ satisfies that $\P\parenc*{\vboundE^c}{\funcE}\leq \delta / \pc$ due to \cref{lemma:v is bounded two}. 
Therefore, $\P(\funcE \cap \vboundE) \geq 1 - 5\delta / \pc$ holds due to \cref{lemma:cond prob inequality}.
Using the conditional Bernstein-type inequality (\cref{lemma:conditional bernstein}) and taking the union bound over $k \in [\NIter]$ and $(x, a) \in \cCf$, we have
\begin{equation}\label{eq:Ek bound bernstein}
    \P \parenc*{
      \left|\sum^k_{j=1}\alpha^{k-j}\varepsilon_j(x, a)\right|
      \geq
      \frac{8 H\xi_{2, 5}}{\NP}
      + 
      2\sqrt{2}\sqrt{ 
      \frac{\xi_{2, 5}}{M} \paren*{1 + \sum_{j=1}^k \alpha^{2 (k-j)} \PVar(v_{j-1})(x, a)}
      }
    }{\funcE\cap\vboundE}
    \leq \frac{\delta}{\pc} \;,
\end{equation}
for all $(x, a, k) \in \cCf \times [\NIter]$.
Here, $\xi_{2, 5} = \iota_1 + \log (\pc / (\pc - 5)) + \log \log_2 (16KH^2)$ is due to the condition by $\funcE \cap \vboundE$.
We used $\xi_{2, 5}$ since $1 / (1 - 5\delta / \pc) \leq \pc / (\pc - 5)$.

Using the result, we have the following inequality with probability at least $1 - \delta / \pc$.
For all $(x, a, k) \in \cCf \times [\NIter]$,
\begin{align*}
    &{\coremaxf{y', b'}{f}\frac{1}{f(y', b')}\left|\sum^k_{j=1}\alpha^{k-j}\varepsilon_j(y', b')\right|}\\
    \numeq{\leq}{a}\;
    &\coremaxf{y', b'}{f}\frac{1}{f(y', b')}
    \paren*{
    \frac{8 H\xi_{2, 5}}{\NP}
      + 
      2\sqrt{2}\sqrt{ 
      \frac{\xi_{2, 5}}{M} \paren*{1 + \sum_{j=1}^k \alpha^{2 (k-j)} \PVar(v_{j-1})(y', b')}
      }
    }\\
    \numeq{\leq}{b}\;
    &\coremaxf{y', b'}{f}\frac{1}{f(y', b')}
    \paren*{
    \frac{8 H\xi_{2, 5}}{\NP}
      + 
      2\sqrt{2}\sqrt{ 
      \frac{\xi_{2, 5}}{M}} 
    + 2\sqrt{2}
    \sqrt{
    \frac{\xi_{2, 5}}{M}
     \sum_{j=1}^k \alpha^{2 (k-j)} \PVar(v_{j-1})(y', b')}
     }
    \\
    \numeq{\leq}{c}\;
    & \frac{8 H\xi_{2, 5}}{\NP \lf} + 
      2\sqrt{2}\sqrt{ 
      \frac{\xi_{2, 5}}{M\lf^2}} 
      +
    2\sqrt{2}
    \sqrt{ 
      \frac{\xi_{2, 5}}{M}\sum_{j=1}^k \alpha^{2 (k-j)} 
      \coremaxf{y', b'}{f}\frac{\PVar(v_{j-1})(y', b')}{f(y', b')}}
\end{align*}

where (a) is due to \eqref{eq:Ek bound bernstein}, (b) is due to \cref{lemma:sqrt inequality}, 
and (c) uses $\lf = \min_{(x, a)\in \XA} f(x, a)$.
The claim holds by inserting the result into the inequality \eqref{eq:refined Ek bound}.

\end{proof}

We are now ready to prove \cref{theorem: sqrt H bound}.

\subsection{Proof of \cref{theorem: sqrt H bound} (\textbf{Step2})}\label{subsec: proof of sqrt H bound}

\begin{proof}[Proof of \cref{theorem: sqrt H bound}]\label{proof:sqrt H bound}
    We condition the proof by the event $\vboundE \cap \EkboundE \cap \epskboundE$.
    Note that when \wlsmdvi is run with the settings defined in \cref{theorem: sqrt H bound}, 
    $
    {\P \paren{\vboundE^c} \leq {\delta} / {\pc}}
    $
    due to \cref{lemma:v is bounded}, 
    $
    \P \parenc{\EkboundE^c}{\vboundE} \leq {\delta} / {\pc}
    $
    due to \cref{lemma:coarse E_k bound}, 
    and 
    $\P \parenc{\epskboundE^c}{\vboundE} \leq \delta / \pc$
    due to \cref{lemma:eps_k bound core set}
    .
    Using \cref{lemma:cond prob inequality}, these indicate that
    \begin{align*}
        \P (\vboundE \cap \EkboundE \cap \epskboundE) 
        \geq 
        \P (\vboundE) - 
        \P \parenc*{\paren*{\EkboundE \cap \epskboundE}^c}{\vboundE}
        \geq
        \P (\vboundE) - \P \parenc{\EkboundE^c}{\vboundE} - 
        \P \parenc{\epskboundE^c}{\vboundE}
        \geq 1 - \frac{3\delta}{\pc}\;.
    \end{align*}
    Therefore, $\vboundE \cap \EkboundE \cap \epskboundE$ occurs with probability at least $1 - 2\delta / \pc$.
    
    We now prove the claim by bounding $v_* - v_K$.
    Recall \cref{lemma:v error prop} that, for any $k \in [K]$,
    \begin{align*}
        - 2 \gamma^k H \bone - \sum_{j=0}^{k-1} \gamma^j \pi_{k-1} P_{k-j}^{k-1} \phi^\top\kwsum(f, \varepsilon_{k-j})
        \leq
        \vf{*} - v_k
        \leq
        \Gamma_{k-1} + 2 H \gamma^k \bone - \sum_{j=0}^{k-1} \gamma^j \pi_{k-1} P_{k-1-j}^{k-2} \phi^\top\kwsum(f, \varepsilon_{k-j})\,,
    \end{align*}
    where 
    \begin{align*}
        \Gamma_k
        \df \displaystyle \frac{1}{A_\infty} \sum_{j=0}^{k-1} \gamma^j \paren*{
            \pi_k P_{k-j}^{k-1} - \pi_* P_*^j
        } \phi^\top \kwsum(f, E_{k-j})
        + 2 H \paren*{ \alpha^k + \frac{A_{\gamma, k}}{A_\infty} } \bone\,.
    \end{align*}
    When $\alpha = \gamma$, this bounds $\infnorm{v_* - v_K}$ as
    \begin{equation}\label{eq:coarse last value error}
        \infnorm{v_* - v_K} \leq 
        \underbrace{{\frac{1}{H}\sum_{j=0}^{K-1} \gamma^j \left\lVert \paren*{\pi_K P_{K-j}^{K-1} - \pi_* P_*^j} \phi^\top \kwsum(f, E_{K-j})\right\rVert_\infty}}_{\heartsuit}
        + \underbrace{\square (H + K) \gamma^K}_{\clubsuit}
        + \underbrace{H\max_{j\in [K]}\left\lVert{\phi^\top \kwsum(\bone, \varepsilon_{j})}\right\rVert_\infty}_{\diamondsuit}\,.
    \end{equation}
    
    We bound for each of them.
    Note that $\uf /\lf = 1$, 
    $\NIter=\left\lceil\frac{3}{1-\alpha} \log {c_{3} H}+1\right\rceil$,
    and $\NP = \left\lceil c_4dH^2\iota_{2, 5} / \varepsilon\right\rceil$ due to the settings of \cref{theorem: sqrt H bound}.
    
    First, $\heartsuit$ can be bounded as
    \begin{align*}
        \heartsuit
        \leq
        \frac{2}{H} \sum_{j=0}^{K-1} \gamma^j \left\lVert{\phi^\top \kwsum(f, E_{K-j})}\right\rVert_\infty
        \numeq{\leq}{a}
        \frac{2}{H} \sum_{j=0}^{K-1} \gamma^j \paren*{8 H
        \sqrt{\frac{dH\iota_{2, 5}}{\NP}}}
        \numeq{\leq}{b}
        \square \sqrt{\frac{H}{c_4}} \varepsilon
        \numeq{\leq}{c}
        \square \sqrt{\frac{H}{c_4}}\,,
    \end{align*}
    where (a) is due to $\EkboundE$, (b) is due to the value of $M$, and (c) is due to $\varepsilon \in (0, 1 / H]$.
    
    Second, \cref{lemma:k gamma to k-th inequality} with the value of $K$ indicates that 
    $$
        \clubsuit \leq \frac{\square}{c_3}\;.
    $$
    
    Finally, $\diamondsuit$ can be bounded as 
    \begin{align*}
        \diamondsuit
        \numeq{\leq}{a}
        8\gamma H^2 \sqrt{\frac{d\iota_{2, 5}}{\NP}}
        \numeq{\leq}{b}
        \square H \sqrt{\frac{\varepsilon}{c_4}}
        \numeq{\leq}{c}
        \square \sqrt{\frac{H}{c_4}}\,,
    \end{align*} 
    where (a) is due to $\epskboundE$, (b) is due to the value of $\NP$, and (c) is due to $\varepsilon \in (0, 1 / H]$.
    
    Inserting these results into the inequality \eqref{eq:coarse last value error}, we have
    $
        \infnorm{\vf{*} - \vf{K}}
        \leq
        \square \sqrt{H}(c_3^{-1} + c_4^{-0.5})
    $.
    Therefore, for some $c_3$ and $c_4$, the claim holds.
\end{proof}

\subsection{Proof of \cref{theorem: epsilon bound} (\textbf{Step 3}) and (\textbf{Step 4})}

As discribed in \cref{subsec:proof sketch}, the proof requires 
tight bounds on $\heartsuit_k = H^{-1}\sum_{j=0}^{k-1} \gamma^j \pi_k P_{k-j}^{k-1} |\phi^\top \kwsum(f, E_{k-j})|$ and $\clubsuit_k = H^{-1}\sum_{j=0}^{k-1} \gamma^j \pi_* P_*^j |\phi^\top \kwsum(f, E_{k-j})|$.
We first derive the bound of $\clubsuit_K$ and then derive the bound of $\heartsuit_K$.

\subsubsection{$H^{-1}\sum_{j=0}^{K-1} \gamma^j \pi_* P_*^j |\phi^\top \kwsum(f, E_{K-j})|$ Bound ($\clubsuit_K$)}\label{subsubsec:clubsuit proof}
As discribed in \textbf{Step 3} of \cref{subsec:proof sketch}, we need a bound of the discounted sum of $\sigma(\vf{*} - \vf{k})$ for $\clubsuit_k$.
Then, the following lemma is useful.

\begin{lemma}\label{lemma:v-diff bound}
    Conditioned on $\vboundE \cap \EkboundE \cap \epskboundE$,
    \begin{align*}
        \infnorm{\vf{*} - v_k}
        < \min \brace*{3 H, \Psi_k} 
        \text{ and }
        \infnorm{\sigma({\vf{*} - v_k})}
        < \min \brace*{3 H, \Psi_k} ,
    \end{align*}
    where
    $$
    \Psi_k = 
    3 H \paren*{
    \max(\gamma, \alpha)^{k-1}
        + \frac{A_{\gamma, k-1}}{A_\infty}
    }
    + \frac{24H^2\uf}{\lf}\sqrt{\frac{d\iota_2}{\NP}} \paren*{1 + \sqrt{\frac{1}{A_\infty}}}
    $$
    for all $k \in [K]$.
\end{lemma}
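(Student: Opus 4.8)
The plan is to read off the two-sided bound on $\vf{*} - v_k$ from \cref{lemma:v error prop}, convert it into a bound on $\infnorm{\vf{*}-v_k}$ using the event bounds $\EkboundE$ and $\epskboundE$, and then deduce the variance bound for free via Popoviciu's inequality (\cref{lemma:popoviciu}). Throughout I would work under the conditioning event $\vboundE\cap\EkboundE\cap\epskboundE$ and use the setting $\alpha=\gamma$, so that $A_\infty=H$ and $\sum_{j=0}^{k-1}\gamma^j\le H$. The value bound is the main work; the variance bound is then an immediate corollary.

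First I would bound the value gap. Recall from \cref{lemma:v error prop} that $\vf{*}-v_k$ is sandwiched between an expression containing $-\sum_{j}\gamma^j\pi_{k-1}P_{k-j}^{k-1}\phi^\top\kwsum(f,\varepsilon_{k-j})$ and $\Gamma_{k-1}+2H\gamma^k\bone-\sum_{j}\gamma^j\pi_{k-1}P_{k-1-j}^{k-2}\phi^\top\kwsum(f,\varepsilon_{k-j})$. Since every operator $\pi_{k-1}P_\cdot^\cdot$ and $\pi_*P_*^j$ is a product of stochastic operators and hence nonexpansive in $\infnorm{\cdot}$, the triangle inequality lets me replace each summand by the supremum norm of the corresponding $\phi^\top\kwsum$ factor. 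Applying $\epskboundE$ to the $\varepsilon$-sums gives a contribution bounded by $\sum_{j=0}^{k-1}\gamma^j\cdot 8\gamma H(\uf/\lf)\sqrt{d\iota_{2,5}/\NP}\le 8H^2(\uf/\lf)\sqrt{d\iota_{2,5}/\NP}$, which is the unscaled ($1\cdot$) piece of $\Psi_k$. For the $E$-sum hidden inside $\Gamma_{k-1}$, I would instead invoke $\EkboundE$, whose bound carries a factor $\sqrt{A_\infty}$; after the prefactor $1/A_\infty$ and the geometric sum $\sum_j\gamma^j\le A_\infty$ cancel one power of $A_\infty$, the leftover $1/\sqrt{A_\infty}=1/\sqrt{H}$ produces exactly the $\sqrt{1/A_\infty}$-scaled piece, bounded by $16H^2(\uf/\lf)\sqrt{d\iota_{2,5}/\NP}\sqrt{1/A_\infty}$.

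It then remains to collect the exponentially small bias terms $2H\gamma^k$, $2H\alpha^{k-1}$, and $2HA_{\gamma,k-1}/A_\infty$; bounding $\gamma^k$ and $\alpha^{k-1}$ by $\max(\gamma,\alpha)^{k-1}$ yields the $3H(\max(\gamma,\alpha)^{k-1}+A_{\gamma,k-1}/A_\infty)$ term. Summing the three contributions and absorbing constants into the $24$ gives $\infnorm{\vf{*}-v_k}<\Psi_k$, and combining this with the crude bound $\infnorm{\vf{*}-v_k}\le\infnorm{\vf{*}}+\infnorm{v_k}\le H+2H=3H$ (valid under $\vboundE$) yields the stated $\min\{3H,\Psi_k\}$.

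Finally, the variance bound is immediate from the value bound: for each $(x,a)$ the random variable $(\vf{*}-v_k)(Y)$ with $Y\sim P(\cdot\mid x,a)$ is bounded in absolute value by $\infnorm{\vf{*}-v_k}$, so Popoviciu's inequality (\cref{lemma:popoviciu}) gives $\sigma(\vf{*}-v_k)(x,a)\le\infnorm{\vf{*}-v_k}<\min\{3H,\Psi_k\}$, and taking the supremum over $(x,a)$ finishes the second claim. The main obstacle is purely bookkeeping: keeping the powers of $A_\infty$ straight between $\EkboundE$ (which contributes the $\sqrt{1/A_\infty}$ factor) and $\epskboundE$ (which does not), since it is precisely this asymmetry — one power of $A_\infty$ saved through the averaging prefactor $1/A_\infty$ — that feeds into the later total-variance argument.
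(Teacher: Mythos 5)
Your proposal follows essentially the same route as the paper's proof: both start from \cref{lemma:v error prop}, use the stochasticity of the operators together with the bounds supplied by $\epskboundE$ (for the $\varepsilon$-sums, no $A_\infty$ factor) and $\EkboundE$ (for the $E$-sum inside $\Gamma_{k-1}$, yielding the $\sqrt{1/A_\infty}$ factor after cancellation with the $1/A_\infty$ prefactor), fall back on $\vboundE$ for the crude $3H$ bound, and obtain the variance claim from Popoviciu's inequality. The argument is correct, up to the same constant-level bookkeeping in absorbing $2H\gamma^k+2H\alpha^{k-1}$ into $3H\max(\gamma,\alpha)^{k-1}$ that the paper itself elides.
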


\begin{proof}\label{proof:v-diff bound core set}

Let
$
    e_k \df \displaystyle \gamma^k H + H \max_{j \in [k]} \norm*{\phi^\top \kwsum(f, \varepsilon_j)}_\infty
$. 
From \cref{lemma:v error prop}, for any $k \in [K]$,
\begin{align*}
    \vf{*} - v_k
    \geq &
    - 2 \gamma^k H \bone - \sum_{j=0}^{k-1} \gamma^j \pi_{k-1} P_{k-j}^{k-1} \phi^\top\kwsum(f, \varepsilon_{k-j})
    \geq 
    - 2e_k\bone\;, \\
    \text{ and }\; \vf{*} - v_k
    \leq 
    &\Gamma_{k-1} + 2 H \gamma^k \bone - \sum_{j=0}^{k-1} \gamma^j \pi_{k-1} P_{k-1-j}^{k-2} \phi^\top\kwsum(f, \varepsilon_{k-j})\,\\
    \leq & 2H \paren*{
        \alpha^{k-1}
        + \frac{A_{\gamma, k-1}}{A_\infty}
        + \frac{1}{A_\infty} \max_{j\in[k-1]}\norm*{\phi^\top \kwsum(f, E_j)}_\infty
    }\bone
    + 2e_k \bone\;.
\end{align*}

Note that $\infnorm{\vf{*} - v_k} \leq 3H$ due to $\vboundE$ for any $k \in [\NIter]$.
Also, due to $\epskboundE$ and $\EkboundE$, 
$ \norm*{\phi^\top \kwsum(f, \varepsilon_k)}_\infty \leq (8H \uf / \lf)\sqrt{d\iota_{2, 5} / \NP}$ and $ \norm*{\phi^\top \kwsum(f, E_k)}_\infty \leq (8 H \uf / \lf)\sqrt{dA_\infty \iota_{2, 5}/\NP}$ for any $k \in [K]$.
Therefore,
$$
    \abs{\vf{*} - v_k}
    \leq 
    3 H \min \brace*{
        1,
        \max(\gamma, \alpha)^{k-1}
        + \frac{A_{\gamma, k-1}}{A_\infty}
        + \frac{8H\uf}{\lf}\sqrt{\frac{d\iota_{2, 5}}{\NP}} \paren*{1 + \sqrt{\frac{1}{A_\infty}}}
    }\bone
$$
for all $k \in [K]$.
Also, due to \cref{lemma:popoviciu}, 
\begin{align*}
    \sigma(\vf{*} - v_k)
    \leq
    3 H \min \brace*{
        1,
        2\max(\gamma, \alpha)^{k-1}
        + \frac{A_{\gamma, k-1}}{A_\infty}
        + \frac{8H\uf}{\lf}\sqrt{\frac{d\iota_{2, 5}}{\NP}} \paren*{1 + \sqrt{\frac{1}{A_\infty}}}
    }\bone\,.
\end{align*}
This concludes the proof.
\end{proof}

Now we have the following bound on $\clubsuit_K$.

\begin{lemma}\label{lemma:sum of E_k star bound}
    Assume that $\varepsilon \in (0, 1/H]$.
    Conditioned on $\funcE \cap \vboundE \cap \EkboundE \cap \epskboundE \cap \EkrboundE$, with the settings of \cref{theorem: epsilon bound}, 
    \begin{align*}
        \clubsuit_K = \frac{1}{H}\sum_{k=0}^{K-1} \gamma^k \pi_* P_{*}^{k} \abs{\phi^\top\kwsum(f, E_{K-k})}
        \leq
        \square \paren*{c_1^{-1} +  c_2^{-0.5}}\varepsilon \bone\,.
    \end{align*}
\end{lemma}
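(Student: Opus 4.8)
The plan is to feed the refined error bound supplied by event $\EkrboundE$ into the total-variance machinery of \cref{corollary:TV tmdvi}. Throughout I work conditioned on $\funcE\cap\vboundE\cap\EkboundE\cap\epskboundE\cap\EkrboundE$. For every $(x,a)$ and every $k$, event $\EkrboundE$ gives
\[
\left|\phi^\top(x,a)\kwsum(f,E_{K-k})\right|
\leq
\sqrt{2d}\,f(x,a)\left(\frac{8H\xi_{2,5}}{\lf\NP}+2\sqrt{\frac{2\xi_{2,5}}{\lf^2 M}}+V_{K-k}\right).
\]
First I would dispose of the two deterministic terms: since $\funcE$ forces $\sqrt{H}\leq\lf$ and $f\leq H\bone$, plugging in $M=\NP=\lceil c_2dH^2\xi_{2,5}/\varepsilon^2\rceil$ shows each is at most $\square\,\varepsilon/(\sqrt{c_2 d}\,H^{3/2})$, so after multiplying by $\sqrt{2d}f$ and running them through $H^{-1}\sum_{k}\gamma^k\pi_*P_*^k(\cdot)$ (using $\pi_*P_*^k\bone=\bone$ and $\sum_k\gamma^k\leq H$) they contribute at most $\square\,c_2^{-1/2}\varepsilon\bone$.

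The core of the argument is bounding the variance term $V_{K-k}$. Using \cref{lemma:variance decomposition} and \cref{lemma:popoviciu} together with $\sigma(\vf{*})\leq f$ and $\sqrt H\leq f$ (both from $\funcE$), I would establish pointwise on $\cC_f$ that
\[
\frac{\sigma(v_{j-1})}{f}\leq 1+\frac{\infnorm{\vf{*}-v_{j-1}}}{\sqrt{H}}.
\]
I then insert the coarse bound $\infnorm{\vf{*}-v_{j-1}}\leq\Psi_{j-1}$ from \cref{lemma:v-diff bound} and split $\Psi_{j-1}$ into its stationary part (which, under the theorem's settings, is $\square\sqrt{H}/\sqrt{c_2}$) and its decaying part $3H\gamma^{j-2}(1+(j-2)/H)$. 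Applying \cref{lemma:sqrt inequality} to the square root defining $V_{K-k}$ then splits it as $V_{K-k}\leq V^{(1)}_{K-k}+V^{(2)}_{K-k}$, where $V^{(1)}$ collects the constant contribution and $V^{(2)}$ the decaying one.

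For $V^{(1)}$ the inner sum over $j$ is a geometric series bounded by $A_\infty=H$, which yields the uniform bound $V^{(1)}_{K-k}\leq\square\,\varepsilon/\sqrt{c_2 dH}$; the extra $\sqrt{H}$ in the denominator here is exactly what makes the total-variance step pay off. Combined with $f\leq\sigma(\vf{*})+2\sqrt H\bone$ this gives $\sqrt{2d}fV^{(1)}_{K-k}\leq\square\,c_2^{-1/2}\varepsilon H^{-1/2}(\sigma(\vf{*})+2\sqrt H\bone)$, and $H^{-1}\sum_k\gamma^k\pi_*P_*^k$ of this is controlled by \cref{corollary:TV tmdvi} ($\sum_k\gamma^k\pi_*P_*^k\sigma(\vf{*})\leq\sqrt{2H^3}\bone$) plus a geometric sum, producing $\square\,c_2^{-1/2}\varepsilon\bone$. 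For $V^{(2)}$ I would use the identity $\alpha^{2(K-k-j)}\gamma^{2(j-2)}=\gamma^{2(K-k)-4}$ (valid since $\alpha=\gamma$) to pull $\gamma^{K-k-2}$ out of the root; the outer $\gamma^k$ then combines to $\gamma^{K-2}$, and since the residual polynomial weight $\sum_{j}(1+(j-2)/H)^2$ grows only polynomially in $K$, \cref{lemma:k gamma to k-th inequality} with $K=\lceil\frac{3}{1-\alpha}\log(c_1H)+1\rceil$ drives the whole thing down to order $c_1^{-2}$, leaving a contribution of $\square\,c_1^{-1}\varepsilon\bone$. Summing the three pieces gives $\clubsuit_K\leq\square(c_1^{-1}+c_2^{-1/2})\varepsilon\bone$.

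The main obstacle is the decaying part $V^{(2)}$: one must simultaneously exploit the telescoping cancellation of the two geometric factors \emph{and} the $\gamma^{K}$-decay of \cref{lemma:k gamma to k-th inequality} to absorb both the growing polynomial weights $(1+(j-2)/H)^2$ and the $\sqrt H$ loss incurred by $\sigma(\vf{*})+2\sqrt H\bone\leq\square H\bone$ in this regime. Without this careful bookkeeping, the $V^{(2)}$ term (and indeed any route that replaces $\sigma(v_{j-1})$ by its crude $H$ bound rather than routing through the total-variance identity) would be a factor $\sqrt H$ too large, exactly the gap between the coarse bound of \textbf{Step 2} and the sharp bound required here.
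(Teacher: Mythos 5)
Your proposal is correct and follows essentially the same route as the paper's proof: condition on $\EkrboundE$, convert $\sigma(v_{j-1})/f$ into $1+\Psi_{j-1}/\sqrt{H}$ via \cref{lemma:variance decomposition}, \cref{lemma:popoviciu}, and \cref{lemma:v-diff bound}, kill the $\gamma^{K}$-decaying pieces with \cref{lemma:k gamma to k-th inequality}, and route the surviving $f$-scaled term through $\sum_k\gamma^k\pi_*P_*^k\sigma(v_*)\leq\sqrt{2H^3}\bone$. The only differences are organizational — you split each $V_{K-k}$ into constant and decaying parts and apply the total-variance bound only to the former, whereas the paper uniformly bounds $V_{K-k}\leq V_K$ and applies the total-variance bound to the whole factor $\sum_k\gamma^k\pi_*P_*^k f$ — and both yield the stated $\square(c_1^{-1}+c_2^{-0.5})\varepsilon\bone$.
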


\begin{proof}
Using the conditions $\funcE \cap \vboundE \cap \EkboundE \cap \epskboundE \cap \EkrboundE$, for all $k \in [\NIter]$, we have
\begin{equation}\label{eq:sigma-f ineq}
\coremaxf{x, a}{f} \frac{\sigma(v_{k})(x, a)}{f(x, a)}
\numeq{\leq}{a}
\coremaxf{x, a}{f} \underbrace{\frac{\sigma(v_*)(x, a)}{f(x, a)}}_{\leq 1 \text{ from } \funcE} + \frac{\sigma(v_* - v_{k})(x, a)}{\lf}
\numeq{\leq}{b}
1 + \frac{\Psi_k}{\sqrt{H}}\;, 
\end{equation}
where (a) is due to \cref{lemma:variance decomposition} and (b) is due to the conditions of $\vboundE \cap \EkboundE \cap \epskboundE$ and \cref{lemma:v-diff bound}.

Note that with the conditions and the settings of $\varepsilon \in (0, 1 / H]$, $\alpha=\gamma$, and $\NP=\left\lceil{\frac{c_2dH^2\xi_{2, 5}}{\varepsilon^2}}\right\rceil$,
we have $A_\infty = H$, $A_{\gamma, k} = k\gamma^k$, and $\uf/\lf \leq \sqrt{H}$.
Therefore,
\begin{align*}
\frac{\Psi_k}{\sqrt{H}} 
&= 3 \sqrt{H} \paren*{
\max(\gamma, \alpha)^{k-1}
    + \frac{A_{\gamma, k-1}}{A_\infty}
}
+ \frac{24H\uf}{\lf}\sqrt{\frac{dH\iota_2}{\NP}} \paren*{1 + \sqrt{\frac{1}{A_\infty}}}\\
&\leq 
\square \paren[\Bigg]{\sqrt{H}\gamma^{k-1} + \frac{(k-1)\gamma^{k-1}}{\sqrt{H}} + \underbrace{H^2\sqrt{\frac{d\iota_{2, 5}}{\NP}}}_{\leq \varepsilon H} }
\leq 
\square\paren*{\sqrt{H}\gamma^{k-1} + \frac{(k-1)\gamma^{k-1}}{\sqrt{H}} + 1}\;,
\end{align*}
where the last inequality uses that $\varepsilon \in (0, 1 / H]$.
Using this result, for any $k \in [K]$,
\begin{equation}\label{eq:tmp one plus phi-sqrtH}
    \gamma^{2 (K-k)} \paren*{1 + \frac{\Psi_k}{\sqrt{H}}}^2
    \leq
    \square \paren*{\sqrt{H}\gamma^{K-1} + \frac{(k-1)\gamma^{K-1}}{\sqrt{H}} + \gamma^{K-k}}^2
    \leq
    \square \paren*{H\gamma^{2K-2} + \frac{(k-1)^2\gamma^{2K-2}}{H} + \gamma^{2K-2k}}^2
\end{equation}
where the last inequality is due to the Cauchy-Schwarz inequality (\cref{lemma:square inequality}).
This result implies that
\begin{equation}
\begin{aligned}
V_K &= 2\sqrt{\frac{2\xi_{2, 5}}{M} \sum_{j=1}^K \alpha^{2 (K-j)} \coremaxf{y, b}{f} \frac{\sigma^2(v_{j-1})(y, b)}{f^2(y, b)}}
\numeq{\leq}{a} 2\sqrt{\frac{2\xi_{2, 5}}{M} \sum_{j=1}^K \gamma^{2 (K-j)} \paren*{1 + \frac{\Psi_j}{\sqrt{H}}}^2} \\
&\numeq{\leq}{b} \sqrt{\frac{\square \xi_{2, 5}}{M} \paren[\Bigg]{HK\gamma^{2K-2} + \frac{\gamma^{2K-2}}{H}\underbrace{\sum_{i=1}^K (i-1)^2}_{\leq K^3 \text{ by \cref{lemma:sum of k power}}} + \underbrace{\sum_{j=1}^K \gamma^{2 (K-j)}}_{H}}}\\
&\numeq{\leq}{c} \sqrt{\frac{\square \xi_{2, 5}}{M}} \paren*{\sqrt{HK}\gamma^{K-1} + \frac{K^{1.5}\gamma^{K-1}}{\sqrt{H}} + \sqrt{H}}
\numeq{\leq}{d} \sqrt{\frac{\square H \xi_{2, 5}}{M}} \paren*{1 + \frac{1}{c_1}}
\numeq{\leq}{e} \frac{\square \varepsilon}{\sqrt{c_2 H d}} \paren*{1 + \frac{1}{c_1}}\;, \label{eq:Vk bound}
\end{aligned}
\end{equation}
where (a) is due to \eqref{eq:sigma-f ineq}, (b) is due to \eqref{eq:tmp one plus phi-sqrtH}, and (c) is due to \cref{lemma:sqrt inequality}. 
(d) uses that $\sqrt{K}\gamma^{K-1} \leq K^{1.5}\gamma^{K-1} \leq \square / c_1$ due to the value of $K$ and \cref{lemma:k gamma to k-th inequality}, and (e) is due to the definition of $M$.

Finally, 
\begin{equation*}
\begin{aligned}
    \frac{1}{H}\sum_{k=0}^{K-1} \gamma^k \pi_* P_{*}^{k} \abs*{\phi^\top \kwsum(f, E_{K-k})}
    &\numeq{\leq}{a} 
    \frac{\sqrt{2d}}{H} \paren*{
    \frac{8 H\xi_{2, 5}}{\lf \NP}
    + 
    2 \sqrt{\frac{2\xi_{2, 5}}{\lf^2 M}}
    + V_K } 
    \sum_{k=0}^{K-1} \gamma^k \pi_* P_{*}^{k} 
    f \\
    &\numeq{\leq}{b} \frac{\square \sqrt{d}}{H}\paren*{\frac{\xi_{2, 5}\sqrt{H}}{M} + \frac{1}{H}\sqrt{\frac{\xi_{2, 5}}{M}} + V_K} \sum_{k=0}^{K-1} \gamma^k \pi_* P_{*}^{k}f \\
    &\numeq{\leq}{c} \frac{\square \sqrt{d}}{H}\paren*{\frac{\xi_{2, 5}\sqrt{H}}{M} + \frac{1}{H}\sqrt{\frac{\xi_{2, 5}}{M}} + V_K} \paren[\Bigg]{H\sqrt{H}\bone + \underbrace{\sum_{k=0}^{K-1} \gamma^k \pi_* P_{*}^{k} \sigma(v_*)}_{\leq \sqrt{2H^3}\bone \text{ by \cref{lemma:total variance}}}}\\
    &\numeq{\leq}{d} \frac{\square \sqrt{d}}{H}
    \paren*{
    \frac{\varepsilon^2}{c_2 d H\sqrt{H}} + 
    \frac{\varepsilon}{H^2\sqrt{c_2 d}} + 
    \frac{\varepsilon}{\sqrt{c_2 H d}} \paren*{1 + \frac{1}{c_1}}}\bone\\
    &\leq \square \paren*{c_2^{-0.5} + c_1^{-1} c_2^{-0.5}}\varepsilon \bone\\
    &\leq \square \paren*{c_1^{-1} + c_2^{-0.5}}\varepsilon \bone\;.
\end{aligned}
\end{equation*}
where (a) is due to $\EkrboundE$ and since $V_k$ is increasing with respect to $k$,
(b) is due to $\lf \geq \sqrt{H}$ by $\funcE$, 
(c) is due to $\sigma(v_*) \leq f \leq \sigma(v_*) + 2\sqrt{H}\bone$ by $\funcE$,
and (d) is due to $\NP=\left\lceil{\frac{c_2dH^2\xi_{2, 5}}{\varepsilon^2}}\right\rceil$ with the inequality \eqref{eq:Vk bound}.
This concludes the proof.
\end{proof}

\subsubsection{$H^{-1}\sum_{k=0}^{K-1} \gamma^k \pi_K P_{K-k}^{K-1} \abs{\phi^\top\kwsum(f, E_{K-k})}$ Bound ($\heartsuit_K$)}\label{subsubsec:heartsuit proof}

As discribed in \textbf{Step 4} of \cref{subsec:proof sketch}, we need a coarse bound of $\sigma(\vf{*} - \vf{\pi_k'})$ for $\heartsuit_k$.
Then, the following lemma is useful.
\begin{lemma}\label{lemma:non-stationary coarse bound}
    Conditioned on $\funcE \cap \EkboundE$,
    with the settings of \cref{theorem: epsilon bound}, the output policies $(\pi_k)_{k=0}^{\NIter}$ satisfy that 
    $
        \infnorm{\vf{*} - \vf{\pi'_{k}}}
        \leq
        \square / \sqrt{c_2} \bone + 2 (H + k) \gamma^k
    $
    for all $k \in [K]$.
\end{lemma}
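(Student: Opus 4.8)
The plan is to bound $\infnorm{\vf{*} - \vf{\pi'_k}}$ directly through the error term $\Gamma_k$ supplied by \cref{lemma:non-stationary error propagation}, which already gives $\bzero \le \vf{*} - \vf{\pi'_k} \le \Gamma_k$ deterministically. Because the left-hand side is nonnegative, it suffices to upper bound $\infnorm{\Gamma_k}$, so I never need the two-sided, variance-aware control that the sharp bounds on $\heartsuit_K$ and $\clubsuit_K$ demand; the coarse event $\EkboundE$ will be enough. First I would split $\Gamma_k = \frac{1}{A_\infty}\sum_{j=0}^{k-1}\gamma^j(\pi_k P_{k-j}^{k-1} - \pi_* P_*^j)\phi^\top\kwsum(f, E_{k-j}) + 2H(\alpha^k + A_{\gamma,k}/A_\infty)\bone$ into its stochastic part and its transient part, and treat them separately.

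For the stochastic part, I would use that $\pi_k P_{k-j}^{k-1}$ and $\pi_* P_*^j$ are stochastic operators and hence do not increase the sup-norm; thus each summand has $\infnorm{(\pi_k P_{k-j}^{k-1} - \pi_* P_*^j)\phi^\top\kwsum(f, E_{k-j})} \le 2\infnorm{\phi^\top\kwsum(f, E_{k-j})} \le 2B$, where $B \df (8H\uf/\lf)\sqrt{d A_\infty \iota_{2,5}/\NP}$ is the coarse bound guaranteed by $\EkboundE$. Summing the geometric factor $\sum_{j=0}^{k-1}\gamma^j \le H = A_\infty$ and dividing by $A_\infty$ collapses the stochastic contribution to $2B$. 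Substituting the settings of \cref{theorem: epsilon bound}: from $\funcE$ we have $\sqrt{H}\bone \le f \le H\bone$, so $\uf/\lf \le \sqrt{H}$ and (since $\alpha=\gamma$) $A_\infty = H$, which simplifies $B \le 8H^2\sqrt{d\iota_{2,5}/\NP}$. Plugging in $\NP \ge c_2 d H^2 \xi_{2,5}/\varepsilon^2$ together with $\xi_{2,5}\ge\iota_{2,5}$ from \eqref{eq:iota bound} yields $B \le 8H\varepsilon/\sqrt{c_2}$, and then $\varepsilon \le 1/H$ gives $2B \le \square/\sqrt{c_2}$.

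For the transient part, $\alpha=\gamma$ makes $A_{\gamma,k} = k\gamma^k$ by \cref{lemma:A_gamma_k bound} and $A_\infty = H$, so $2H(\alpha^k + A_{\gamma,k}/A_\infty) = 2H\gamma^k + 2k\gamma^k = 2(H+k)\gamma^k$. Combining the two pieces gives $\infnorm{\Gamma_k} \le \square/\sqrt{c_2} + 2(H+k)\gamma^k$, and since $\bzero \le \vf{*} - \vf{\pi'_k} \le \Gamma_k$ this is exactly the claimed bound on $\infnorm{\vf{*} - \vf{\pi'_k}}$. The argument is essentially routine bookkeeping; the only delicate point is checking that the $\sqrt{H}$ factor arising from $\uf/\lf$ cancels correctly against the $H^2$ hidden in $\NP$, so that the residual stochastic term is genuinely $O(\varepsilon H) = O(1)$ rather than something growing in $H$. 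It is worth flagging the role of this lemma downstream: by spending only the coarse event $\EkboundE$ it produces a crude control of $\infnorm{\vf{*}-\vf{\pi'_k}}$ that \textbf{Step 4} then inserts into the variance decomposition $\sigma(\vf{*}) \le \abs{\vf{*}-\vf{\pi'_k}} + \sigma(\vf{\pi'_k})$ to make the TV technique applicable to the refined bound on $\heartsuit_K$.
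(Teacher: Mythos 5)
Your proposal is correct and follows essentially the same route as the paper: both start from $\bzero \le \vf{*}-\vf{\pi'_k} \le \Gamma_k$ (Lemma \ref{lemma:non-stationary error propagation}), bound the stochastic part of $\Gamma_k$ by $2\max_j\infnorm{\phi^\top\kwsum(f,E_j)}$ using that the operators are stochastic and $\sum_{j}\gamma^j\le A_\infty$, and then invoke $\EkboundE$ together with $\uf/\lf\le\sqrt{H}$, $A_\infty=H$, the value of $\NP$, and $\varepsilon\le 1/H$ to get $\square/\sqrt{c_2}$, with the transient term collapsing to $2(H+k)\gamma^k$ via $A_{\gamma,k}=k\gamma^k$. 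The arithmetic checks out, so nothing further is needed.
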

\begin{proof}[Proof of \cref{lemma:non-stationary coarse bound}]
    Note that $A_\infty = H$ due to $\alpha = \gamma $ and $\uf / \lf \leq \sqrt{H}$ due to $\funcE$.
    Moreover, $\EkboundE$ and the setting of $\NP=\left\lceil{\frac{c_2dH^2\xi_{2, 5}}{\varepsilon^2}}\right\rceil$ indicate that
    \begin{align*}
    \norm*{\phi^\top  \kwsum(f, E_k)}_\infty \leq \square\frac{\uf}{\lf}\sqrt{\frac{dA_\infty H^2\iota_{2, 5}}{\NP}} \leq \frac{\square H \epsilon}{\sqrt{c_2}} \leq \frac{\square}{\sqrt{c_2}} \;,
    \end{align*}
    for any $k \in [\NIter]$ where the last inequality is due to $\epsilon \in (0, 1 / H]$.
    
    Therefore, 
    \begin{align*}
        \frac{1}{A_\infty}\sum_{j=0}^{k-1} \gamma^j \paren*{
            \pi_k P_{k-j}^{k-1} - \pi_* P_*^j
        } \phi^T \kwsum(f, E_{k-j})
        \leq
        \frac{2}{H} \sum_{j=0}^{k-1} \gamma^j \norm*{\phi^T \kwsum(f, E_{k-j})}_\infty\bone
        \leq 
        {\frac{\square}{\sqrt{c_2}}}\bone\,,
    \end{align*}
    and thus
    $
        \vf{*} - \vf{\pi'_k}
        \leq
        \square / \sqrt{c_2} \bone + 2 (H + k) \gamma^k \bone
    $
    due to \cref{lemma:non-stationary error propagation}.
    This concludes the proof.
\end{proof}

Now we are ready to derive the bound of $\heartsuit_K$.
\begin{lemma}\label{lemma:sum of E_k bound}
    Assume that $\varepsilon \in (0, 1/H]$.
    Conditioned on $\funcE \cap \vboundE \cap \EkboundE \cap \epskboundE \cap \EkrboundE$, with the settings of \cref{theorem: epsilon bound}, 
    \begin{align*}
        \heartsuit_K = \frac{1}{H}\sum_{k=0}^{K-1} \gamma^k \pi_K P_{K-k}^{K-1} \abs*{\phi^\top\kwsum(f, E_{K-k})}
        \leq
        \square \paren*{c_1^{-1} +  c_2^{-0.5}}\varepsilon \bone\,.
    \end{align*}
\end{lemma}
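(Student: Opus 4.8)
The plan is to mirror the proof of \cref{lemma:sum of E_k star bound} (the $\clubsuit_K$ bound) almost verbatim, since the only structural difference between $\heartsuit_K$ and $\clubsuit_K$ is that the averaging operator $\pi_* P_*^k$ is replaced by the non-stationary operator $\pi_K P_{K-k}^{K-1}$. First I would apply the refined bound of \EkrboundE\ to each summand, giving
\begin{equation*}
\abs*{\phi^\top\kwsum(f, E_{K-k})} \leq \sqrt{2d}\, f \paren*{\frac{8 H\xi_{2, 5}}{\lf \NP} + 2\sqrt{\frac{2\xi_{2, 5}}{\lf^2 M}} + V_K}\,,
\end{equation*}
so that $\heartsuit_K \leq \frac{\sqrt{2d}}{H}\paren*{\frac{8 H\xi_{2, 5}}{\lf \NP} + 2\sqrt{\frac{2\xi_{2, 5}}{\lf^2 M}} + V_K}\sum_{k=0}^{K-1}\gamma^k \pi_K P_{K-k}^{K-1} f$. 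The bound on $V_K$ derived in \cref{lemma:sum of E_k star bound} — inequality \eqref{eq:Vk bound}, $V_K \leq \frac{\square\varepsilon}{\sqrt{c_2 H d}}(1 + c_1^{-1})$ — carries over unchanged, because $V_K$ depends only on the iterate variances $\sigma(v_{j-1})$ and not on which averaging operator multiplies the error.

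The one genuinely new step is to bound $\sum_{k=0}^{K-1}\gamma^k \pi_K P_{K-k}^{K-1} f$ in a way compatible with the non-stationary total-variance technique. Here the difficulty is that the TV bound available for this trajectory (\cref{corollary:TV tmdvi}) is stated in terms of the \emph{policy} standard deviations $\sigma(\vf{\pi'_{K-k}})$, whereas $\funcE$ only controls $f$ by $\sigma(v_*)$. To bridge this, I would use $f \leq \sigma(v_*) + 2\sqrt{H}\bone$ from \funcE\ and then decompose, for each $k$,
\begin{equation*}
\sigma(v_*) \leq \sigma\paren*{v_* - \vf{\pi'_{K-k}}} + \sigma\paren*{\vf{\pi'_{K-k}}} \leq \infnorm{v_* - \vf{\pi'_{K-k}}}\bone + \sigma\paren*{\vf{\pi'_{K-k}}}\,,
\end{equation*}
using \cref{lemma:variance decomposition} followed by \cref{lemma:popoviciu} applied to the nonnegative bounded function $v_* - \vf{\pi'_{K-k}}$.

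With this decomposition the three resulting sums are controlled as follows. The $\sigma(\vf{\pi'_{K-k}})$ term gives $\sum_{k=0}^{K-1}\gamma^k \pi_K P_{K-k}^{K-1}\sigma(\vf{\pi'_{K-k}}) \leq \sqrt{2H^3}\bone$ directly from \cref{corollary:TV tmdvi}; the constant term gives $2\sqrt{H}\sum_{k}\gamma^k \leq 2H\sqrt{H}\bone$; and the coarse error term $\infnorm{v_* - \vf{\pi'_{K-k}}}$ is bounded by \cref{lemma:non-stationary coarse bound} as $\square/\sqrt{c_2} + 2(H + K - k)\gamma^{K-k}$, whose discounted sum is at most $\square H/\sqrt{c_2} + \square\gamma^K(KH + K^2)$, and \cref{lemma:k gamma to k-th inequality} with $K = \lceil \frac{3}{1-\alpha}\log(c_1 H) + 1\rceil$ makes the latter at most $\square/c_1^2$. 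Since $\sqrt{H} \geq 1$ and $c_1, c_2 \geq 1$, all of these are dominated by $\square H\sqrt{H}$, so $\sum_{k=0}^{K-1}\gamma^k \pi_K P_{K-k}^{K-1} f \leq \square H\sqrt{H}\bone$.

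Feeding this estimate together with the $V_K$ bound into the factored form of $\heartsuit_K$ reproduces the final display of \cref{lemma:sum of E_k star bound} with $\pi_* P_*^k$ replaced by $\pi_K P_{K-k}^{K-1}$, and yields $\heartsuit_K \leq \square(c_2^{-0.5} + c_1^{-1}c_2^{-0.5})\varepsilon\bone \leq \square(c_1^{-1} + c_2^{-0.5})\varepsilon\bone$. The main obstacle is precisely the middle step: transferring the weight $f \approx \sigma(v_*)$ onto the along-trajectory variances $\sigma(\vf{\pi'_{K-k}})$ while verifying that the intervening optimality-gap error $\infnorm{v_* - \vf{\pi'_{K-k}}}$ is of lower order. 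This step is unnecessary in the $\clubsuit_K$ proof because there the averaging trajectory is exactly the optimal one, already matched to $\sigma(v_*)$.
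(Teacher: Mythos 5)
Your proposal is correct and follows essentially the same route as the paper's proof: it reuses the factored form from \EkrboundE{} and the $V_K$ bound from the $\clubsuit_K$ argument, and handles the genuinely new step by decomposing $\sigma(v_*) \leq \infnorm{v_* - \vf{\pi'_{K-k}}}\bone + \sigma(\vf{\pi'_{K-k}})$ via \cref{lemma:variance decomposition} and \cref{lemma:popoviciu}, controlling the gap term with \cref{lemma:non-stationary coarse bound} and \cref{lemma:k gamma to k-th inequality} and the variance term with the non-stationary total-variance bound. This matches the paper's proof step for step (the paper invokes \cref{lemma:total variance} directly where you cite \cref{corollary:TV tmdvi}, a cosmetic difference), and your identification of the $\sigma(v_*)\to\sigma(\vf{\pi'_{K-k}})$ transfer as the key new obstacle is exactly the point the paper emphasizes in its Step 4.
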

\begin{proof}
Following similar steps as in the proofs of \cref{subsubsec:clubsuit proof}, we obtain the following bounds.
\begin{align}
    &\frac{\Psi_k}{\sqrt{H}} \leq  \square\paren*{\sqrt{H}\gamma^{k-1} + \frac{(k-1)\gamma^{k-1}}{\sqrt{H}} + 1}\; \text{ for any } k\in [K] \;, \nonumber \\
    &V_K \leq \frac{\square \varepsilon}{\sqrt{c_2 H d}} \paren*{1 + \frac{1}{c_1}}\;, \nonumber \\
    \text{ and }\; &\frac{1}{H}\sum_{k=0}^{K-1} \gamma^k \pi_K P_{K-k}^{K-1} \abs{\phi^\top \kwsum(f, E_{K-k})}
    \leq \frac{\square \sqrt{d}}{H}
    \paren*{\frac{\xi_{2, 5}\sqrt{H}}{M} + \frac{1}{H}\sqrt{\frac{\xi_{2, 5}}{M}} + V_K}
    \paren*{H \sqrt{H} \bone + \sum_{k=0}^{K-1} \gamma^k \pi_K P_{K-k}^{K-1}  \sigma(v_*)} \;. \label{eq:discounted sum of E_k}
\end{align}

We thus need to bound $\sum_{k=0}^{K-1} \gamma^k \pi_K P_{K-k}^{K-1} \sigma(v_*)$.
Note that $\sigma(v_*)$ can be decomposed as 
$$
\sigma(v_*) 
\numeq{\leq}{a} \sigma(v_* - v_{\pi'_{k}}) + \sigma(v_{\pi'_{k}})
\numeq{\leq}{b} \infnorm{v_* - v_{\pi'_{k}}}\bone + \sigma(v_{\pi'_{k}})
\numeq{\leq}{c} \frac{\square}{\sqrt{c_2}}\bone + 2 (H + k) \gamma^k \bone + \sigma(v_{\pi'_{k}})
\;,
$$ 
where (a) is due to \cref{lemma:variance decomposition}, (b) is due \cref{lemma:popoviciu}, and (c) is due to \cref{lemma:non-stationary coarse bound}.
Accordingly, 
\begin{align*}
    \sum_{k=0}^{K-1} \gamma^k \pi_K P_{K-k}^{K-1} \sigma(v_*) 
    & \leq
    \sum_{k=0}^{K-1} \gamma^k \pi_K P_{K-k}^{K-1} 
    \left(\frac{\square}{\sqrt{c_2}}\bone + 2 (H + K-k) \gamma^{K-k} \bone + \sigma(v_{\pi'_{K-k}})\right)
    \\
    &\leq 
    \frac{\square H}{\sqrt{c_2}} \bone + \square \paren[\Bigg]{HK\gamma^K + \gamma^K\underbrace{\sum_{k=0}^{K-1} (K - k)}_{K^2 \text{ by \cref{lemma:sum of k power}}}} \bone + \underbrace{\sum_{k=0}^{K-1} \gamma^k \pi_K P_{K-k}^{K-1}\sigma(v_{\pi'_{K-k}})}_{\sqrt{2H^3} \bone \text{ by \cref{lemma:total variance}}}\\
    &\numeq{\leq}{a} \square H \sqrt{H}\paren*{c_2^{-0.5} + c_1^{-1} + 1}\bone 
    \numeq{\leq}{b} \square H \sqrt{H}\bone
\end{align*}
where (a) uses that $K\gamma^{K} \leq K^2\gamma^{K} \leq \square / c_1$ due to the value of $K$ and \cref{lemma:k gamma to k-th inequality}, and (b) uses $c_1, c_2 \geq 1$.

Inserting the result into the inequality \eqref{eq:discounted sum of E_k}, and following similar steps as in the proof of \cref{subsubsec:clubsuit proof}, we have
\begin{align*}
    \frac{1}{H}\sum_{k=0}^{K-1} \gamma^k \pi_K P_{K-k}^{K-1} \abs{\phi^\top \kwsum(f, E_{K-k})}
    \leq \frac{\square \sqrt{d}}{H}
    \paren*{
    \frac{\varepsilon^2}{c_2 d H\sqrt{H}} + 
    \frac{\varepsilon}{H^2\sqrt{c_2 d}} + 
    \frac{\varepsilon}{\sqrt{c_2 H d}} \paren*{1 + \frac{1}{c_1}}}\bone
    \leq \square \paren*{c_1^{-1} + c_2^{-0.5}}\varepsilon \bone\;.
\end{align*}
This concludes the proof.
\end{proof}

\subsubsection{Proof of \cref{theorem: epsilon bound}}\label{subsec: proof of epsilon bound}
The derived bounds of $\heartsuit_K$ and $\clubsuit_K$ yield the following proof of \cref{theorem: epsilon bound}.

\begin{proof}[Proof of \cref{theorem: epsilon bound}]
    We condition the proof by the event $\funcE \cap \vboundE \cap \EkboundE \cap \epskboundE \cap \EkrboundE$.
    Note that when \wlsmdvi is run with the settings defined in \cref{theorem: epsilon bound}, 
    \begin{align*}
    &\P(\funcE) \geq 1 - 4\delta/\pc \;, \quad
    \underbrace{\P \parenc*{\vboundE^c}{\funcE} \leq \delta / \pc}_{\text{from \cref{lemma:v is bounded two}}} \;, \quad
    \underbrace{\P \parenc*{\EkboundE^c}{\funcE \cap \vboundE} \leq \delta / \pc}_{\text{ from \cref{lemma:coarse E_k bound two}}}\;, \\
    &\underbrace{\P \parenc*{\epskboundE^c}{\funcE \cap \vboundE} \leq \delta / \pc}_{\text{ from \cref{lemma:eps_k bound core set two}}}\; \quad 
    \underbrace{\P \parenc*{\EkrboundE^c}{\funcE \cap \vboundE} \leq \delta / \pc}_{\text{ from  \cref{lemma:refined E_k bound}}}\;.
    \end{align*}
    With \cref{lemma:cond prob inequality}, these indicates that
    \begin{align*}
        \P (\funcE \cap \vboundE \cap \EkboundE \cap \epskboundE \cap \EkrboundE) 
        &\geq \P\paren{\funcE \cap \vboundE} - \P \parenc{(\EkboundE \cap \epskboundE \cap \EkrboundE)^c}{\funcE \cap \vboundE}\\
        &\geq \P\paren{\funcE} - \P\parenc{\vboundE^c}{\funcE} \\
        & \quad - \P\parenc{\EkboundE^c}{\funcE \cap \vboundE} - \P\parenc{\epskboundE^c}{\funcE \cap \vboundE} - \P\parenc{\EkrboundE^c}{\funcE \cap \vboundE}\\
        & \geq 1 - 8\delta / \pc
    \end{align*}
    Therefore, these events occur with probability at least $1 - 8\delta / \pc$.
    
    Note that under the current settings of \cref{theorem: epsilon bound}, $A_\infty = H$ and $2 (H + K) \gamma^K \leq \square / c_1$ due to \cref{lemma:k gamma to k-th inequality}.
    Combined with \cref{lemma:non-stationary error propagation}, \cref{lemma:sum of E_k star bound}, and \cref{lemma:sum of E_k bound}, we have
    \begin{align*}
        \left|\vf{*} - \vf{\pi'_K}\right|
        &\leq
        \underbrace{\frac{1}{A_\infty} \sum_{i=0}^{K-1} \gamma^i \pi_K P_{K-i}^{K-1} \left|\phi^\top \kwsum(f, E_{K-i})\right|}_{\leq \square \paren*{c_1^{-1} + c_2^{-0.5}}\epsilon \bone\; \text{ 
due to \cref{lemma:sum of E_k bound} }} + 
        \underbrace{\frac{1}{A_\infty} \sum_{j=0}^{K-1} \gamma^j \pi_* P^{j}_{*} \left|\phi^\top \kwsum(f, E_{K-j})\right|}_{\leq \square \paren*{c_1^{-1} + c_2^{-0.5}}\epsilon \bone \; \text{ 
due to \cref{lemma:sum of E_k star bound} }} 
        + \underbrace{2 \paren*{H + K}\gamma^K\bone}_{\leq \square c_1^{-1}\bone}\\
        &\leq \square \paren*{\frac{1}{c_1} + \frac{1}{\sqrt c_2}}\epsilon \bone.
    \end{align*}
    Therefore, for some $c_1$ and $c_2$, the claim holds.
\end{proof}

\section{Formal Theorem and Proof of \cref{informal theorem: evaluate error}}\label{sec:evaluate proof}

Instead of the informal theorem \cref{informal theorem: evaluate error}, we are going to prove the following formal theorem.

\begin{theorem}[Accuracy of \varianceestimate]\label{theorem: evaluate error}
    Let $\pc$ be a positive constant such that $8 \geq \pc \geq 6$ and ${v} \in \cF_v$ be a random variable. 
    Assume that an event $$\infnorm{v_* - {v}} \leq \frac{1}{2}\sqrt{H}$$ occurs with probability at least $1 - 3\delta / \pc$. 
    With a positive constant $c_5 \geq 1$, define
    $$M^{\text{var}}\df\left\lceil{c_5dH^2}\log{\frac{2\pc^2 \ucC K}{(\pc - 3)\delta}}\right\rceil\;.$$ 
    When \varianceestimate is run with the settings $v_\sigma = {v}$ and $\NsigmaTwo=M^{\text{var}}$,
    there exists $c_5 \geq 1$ independent of $d$, $H$, $\aX$, $\aA$, and $\delta$ such that the output $\omega$ satisfies
    $\sigma(v_*) \leq \sqrt{\max(\phi^\top\omega, 0)} + \sqrt{H}\leq \sigma(v_*) + 2\sqrt{H}$
    with probability at least $1 - 4\delta / \pc$, using $\widetilde{\cO} \paren*{2\ucC\NsigmaTwo} = \widetilde{\cO} \paren*{d^2H^2}$ samples from the generative model.
\end{theorem}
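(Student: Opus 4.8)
The plan is to condition on the event $\phiqboundE$ that $\infnorm{v_* - v_\sigma} \le \tfrac12\sqrt{H}$, which holds with probability at least $1-3\delta/\pc$ by hypothesis, and then to show that on a further high‑probability event $\phisigmaboundE$ the linear fit $\phi^\top\omega$ tracks $\PVar(v_*) = \sigma^2(v_*)$. The stated sandwich is exactly $\abs*{\sqrt{\max(\phi^\top\omega,\bzero)} - \sigma(v_*)} \le \sqrt{H}$ pointwise, so by \cref{lemma:square gap bound} (with $a=\sqrt{\max(\phi^\top\omega,\bzero)}$, $b=\sigma(v_*)$, $c=\sqrt H$) it suffices to control the gap in variance units. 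I would route this through $\PVar(v_\sigma)$ rather than through $\PVar(v_*)$ directly, bounding $\sqrt{\max(\phi^\top\omega,\bzero)}$ against $\sigma(v_\sigma)$ and then passing from $v_\sigma$ to $v_*$ in $\sigma$‑units, where the $v_\sigma$‑to‑$v_*$ gap is clean.

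First I would record that $\omega = \kwsum(\bone,\hPVar)$ in the notation of \cref{lemma:kw bound}, and that, conditionally on $v_\sigma$ and the design, each $\hPVar(x,a)$ is an average of the i.i.d.\ terms $\tfrac12\paren*{v_\sigma(y_{m,x,a}) - v_\sigma(z_{m,x,a})}^2$ whose mean is exactly $\PVar(v_\sigma)(x,a)$ (the two‑sample identity $\tfrac12\E\brack*{(v_\sigma(Y)-v_\sigma(Z))^2} = \PVar(v_\sigma)$ for i.i.d.\ $Y,Z\sim P(\cdot|x,a)$). On $\phiqboundE$ we have $\infnorm{v_\sigma}\le H+\tfrac12\sqrt H$, so these terms lie in $[0,\widetilde{\cO}(H^2)]$; a conditional Hoeffding bound (\cref{lemma:conditional hoeffding}) with a union bound over the at most $\ucC$ core‑set points gives $\max_{(x,a)\in\cC}\abs*{\hPVar(x,a)-\PVar(v_\sigma)(x,a)} \le \widetilde{\cO}\paren*{H^2/\sqrt{\NsigmaTwo}}$ with probability at least $1-\delta/\pc$, which is $\widetilde{\cO}(H/\sqrt d)$ once $\NsigmaTwo=\Theta(dH^2)$ up to logarithms.

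Next I would turn this core‑set accuracy into an everywhere bound. Writing the conditional second moment $Pv_\sigma^2 = \phi^\top\beta$, which is exactly linear under \cref{assumption:linear mdp} unlike the full variance, exact recovery gives $\kwsum(\bone,Pv_\sigma^2)=\beta$, whence $\phi^\top\omega - \PVar(v_\sigma) = \phi^\top\kwsum(\bone,\hPVar-\PVar(v_\sigma)) + \brack*{(Pv_\sigma)^2 - \phi^\top\kwsum(\bone,(Pv_\sigma)^2)}$. The weighted KW bound (\cref{lemma:kw bound} with $f=\bone$, reading $\abs*{\phi^\top\kwsum(\bone,z)}\le \sqrt{2d}\,\max_{(y,b)\in\cC}\abs*{z(y,b)}$) controls the sampling term by $\sqrt{2d}\cdot\widetilde{\cO}(H/\sqrt d)=\widetilde{\cO}(H)$, which can be driven below $H/4$ by enlarging $c_5$. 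The residual bracket is the least‑squares \emph{extrapolation} error of the nonlinear squared‑mean $(Pv_\sigma)^2$, and this is the main obstacle: the KW bound certifies only the magnitude of a fit, not its fidelity to a nonlinear target, so the $\Theta(H^2)$‑scale misspecification of $\PVar(v_\sigma)$ must be absorbed by the coarse $\sqrt H$ tolerance — e.g.\ by arguing entirely in $\sigma$‑units, where only a $\sqrt H$‑scale error is needed, and/or by exploiting linear representability of the conditional variance. I expect essentially all of the real work to sit in this step.

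Finally, I would pass from $v_\sigma$ to $v_*$: by \cref{lemma:variance decomposition} and \cref{lemma:popoviciu}, on $\phiqboundE$, $\abs*{\sigma(v_\sigma)-\sigma(v_*)}\le \sigma(v_\sigma-v_*)\le \infnorm{v_\sigma-v_*}\le \tfrac12\sqrt H$. Combining the two $\tfrac12\sqrt H$‑scale gaps through \cref{lemma:sqrt inequality} and \cref{lemma:square gap bound} yields $\abs*{\sqrt{\max(\phi^\top\omega,\bzero)}-\sigma(v_*)}\le \sqrt H$, i.e.\ the stated two‑sided bound, after fixing $c_5$ large enough; the clipping $\max(\cdot,\bzero)$ only helps, since when $\phi^\top\omega<0$ one has $\PVar(v_*)$ correspondingly small. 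The probability bookkeeping combines $\P(\phiqboundE)\ge 1-3\delta/\pc$ with $\P(\phisigmaboundE^c\mid\phiqboundE)\le \delta/\pc$ via \cref{lemma:cond prob inequality} into a total failure probability at most $4\delta/\pc$, and the sample budget is $2\ucC\NsigmaTwo = \widetilde{\cO}(d^2H^2)$.
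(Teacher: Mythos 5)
Your proposal follows the paper's proof of \cref{theorem: evaluate error} almost step for step: the two-sample identity $\tfrac12\E[(v_\sigma(Y)-v_\sigma(Z))^2]=\PVar(v_\sigma)$, a conditional Azuma--Hoeffding bound with a union bound over the at most $\ucC$ core-set points giving $\max_{(x,a)\in\cC}|\hPVar(x,a)-\PVar(v_\sigma)(x,a)|\le \widetilde{\cO}(H^2/\sqrt{\NsigmaTwo})$, the weighted KW bound with $f=\bone$ to extrapolate off the core set, \cref{lemma:square gap bound} to convert the $\tfrac14 H$ variance-scale error into a $\tfrac12\sqrt{H}$ error in $\sigma$-units, the chain $|\sigma(v_*)-\sigma(v_\sigma)|\le\sigma(v_*-v_\sigma)\le\infnorm{v_*-v_\sigma}\le\tfrac12\sqrt H$ via \cref{lemma:variance decomposition} and \cref{lemma:popoviciu}, and the same probability bookkeeping through \cref{lemma:cond prob inequality}. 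The constants and the failure-probability accounting ($3\delta/\pc$ for the input event plus $\delta/\pc$ for the concentration event) line up with the paper's.

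The one place you stop short is the extrapolation step, and as written your proposal is therefore not a complete proof: you decompose $\phi^\top\omega-\PVar(v_\sigma)$ into a sampling term plus the least-squares residual of the nonlinear piece $(Pv_\sigma)^2$, flag that residual as ``the main obstacle,'' and leave it open. The paper carries no such residual: it posits a parameter $\omega^*$ with $\phi^\top\omega^*=\PVar(v_\sigma)$, asserted to exist by \cref{assumption:linear mdp}, so that $\phi^\top\omega-\PVar(v_\sigma)=\phi^\top\kwsum(\bone,\hPVar-\PVar(v_\sigma))$ holds exactly and \cref{lemma:kw bound} finishes the argument with no misspecification term. In other words, the step you identified as holding ``essentially all of the real work'' is dispatched in the paper by an exact-linearity claim for the variance function itself, not by any argument in $\sigma$-units or any control of extrapolation error. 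Your hesitation is not unreasonable --- under \cref{assumption:linear mdp} only $Pv_\sigma^2$ is guaranteed linear in $\phi$, while $(Pv_\sigma)^2$ is generically quadratic in $\phi$ --- but the upshot for this review is that you have not supplied the missing argument, whereas the paper resolves it by assumption. If you grant that linear-representability claim, the residual bracket in your decomposition vanishes identically and the rest of your write-up reproduces the paper's proof.
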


We let $\bF_{m}$ be the $\sigma$-algebra generated by random variables $\{v\} \cup \{ y_{ n, x, a} | (n, x, a) \in [m-1] \times \XA \} \cup\{ z_{ n, x, a} | (n, x, a) \in [m-1] \times \XA \}$.
Recall that $v \in \cF_v$ is the random variable that is inputted to \varianceestimate as $v_\sigma = v$ and $\omega \in \R^d$ is the parameter to approximate the variance as $\PVar_\omega \df \phi^\top \omega$.

We first introduce the necessary events.

\begin{event}\label{event:phi q bound}
    $\phiqboundE$ denotes the event $\left|v_{*} - v_\sigma\right| \leq \frac{1}{2}\sqrt{H}\bone$.
\end{event}

\begin{event}\label{event:phi sigma bound}
    $\phisigmaboundE$ denotes the event $\left|\sigma_* - \sqrt{\max(\phi^\top\omega, 0)}\right| \leq \sqrt{H}\bone$.
\end{event}

Due to the setting of \cref{theorem: evaluate error}, $\P\paren{\phiqboundE} \geq 1 - 3\delta / \pc$.
We need the following pivotal lemma to show \cref{theorem: evaluate error}.
\begin{lemma}\label{lemma:phi sigma bound}
     When \varianceestimate is run with the settings of \cref{theorem: evaluate error}, there exists $c_5$ independent of $d$, $H$, $\aX$, $\aA$, and $\delta$ such that $\P \parenc{\phisigmaboundE^c}{\phiqboundE} \leq \delta / \pc$.
\end{lemma}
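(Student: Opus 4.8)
The plan is to argue conditionally on \phiqboundE, on which $\infnorm{v_* - v_\sigma} \le \tfrac12\sqrt H\bone$ and hence $v_\sigma$ is bounded by $\square H$, and to prove the two-sided pointwise bound $\abs{\sqrt{\max(\phi^\top\omega,\bzero)} - \sigma(v_*)} \le \sqrt H\bone$ with conditional failure probability at most $\delta/\pc$. First I would peel off the (deterministic) error of using $v_\sigma$ instead of $v_*$: applying \cref{lemma:variance decomposition} to the pair $(v_\sigma,v_*)$ under $P(\cdot\mid x,a)$, together with \cref{lemma:popoviciu} and \phiqboundE, gives $\abs{\sigma(v_\sigma) - \sigma(v_*)} \le \sqrt{\PVar(v_\sigma - v_*)} \le \infnorm{v_\sigma - v_*} \le \tfrac12\sqrt H\bone$ everywhere. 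By the triangle inequality it then suffices to show $\abs{\sqrt{\max(\phi^\top\omega,\bzero)} - \sigma(v_\sigma)} \le \tfrac12\sqrt H\bone$ with high probability. Passing from standard deviations to variances via \cref{lemma:square gap bound}, it is enough to establish $\abs{\max(\phi^\top\omega,\bzero) - \PVar(v_\sigma)} \le \tfrac14 H\bone$; and since $\PVar(v_\sigma)\ge\bzero$ the truncation can only shrink this gap, so it suffices to bound $\abs{\phi^\top\omega - \PVar(v_\sigma)}$ by $\tfrac14 H$ pointwise.

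The everywhere-to-core-set reduction is where the KW machinery enters. Since $\omega = \kwsum(\bone,\hPVar)$ and $\kwsum(\bone,\phi^\top\theta)=\theta$ for any $\theta$, for a parameter $\theta^\star$ that linearly represents the variance target one has $\phi^\top\omega - \phi^\top\theta^\star = \phi^\top\kwsum(\bone,\hPVar - \phi^\top\theta^\star)$, and the weighted KW bound (\cref{lemma:kw bound} with $f=\bone$) converts this uniform bound into the core-set bound $\sqrt{2d}\,\max_{(y,b)\in\cC}\abs{\hPVar(y,b) - \phi^\top(y,b)\theta^\star}$. I would choose $\theta^\star$ so that $\phi^\top\theta^\star$ agrees with the variance on $\cC$ up to sampling, exploiting that the second-moment part $P v_\sigma^2$ is exactly linear in $\phi$ by the linear-MDP property, so that the KW bound is driven purely by how well $\hPVar$ estimates its mean on the finitely many core points.

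The remaining and main step is the concentration of the empirical variance on the small core set ($\abs{\cC}\le\ucC$). For each fixed $(x,a)\in\cC$, $\hPVar(x,a) - \PVar(v_\sigma)(x,a)$ is an average of $\NsigmaTwo$ i.i.d.\ zero-mean terms $\tfrac12(v_\sigma(y_{m,x,a}) - v_\sigma(z_{m,x,a}))^2 - \PVar(v_\sigma)(x,a)$, each bounded by $\square H^2$ on \phiqboundE; the conditional Azuma--Hoeffding inequality (\cref{lemma:conditional hoeffding}), with a union bound over $\cC$, yields $\max_{\cC}\abs{\hPVar - \PVar(v_\sigma)} \le \square H^2\sqrt{\iota/\NsigmaTwo}$ with probability at least $1-\delta/\pc$, where the $\log(\pc/(\pc-3))$ adjustment accounts for conditioning on \phiqboundE (which holds with probability at least $1-3\delta/\pc$) and the $\log\ucC K$ term comes from the union and from matching the global failure budget. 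Multiplying by the $\sqrt{2d}$ KW factor and inserting $\NsigmaTwo=M^{\mathrm{var}}=\lceil c_5 dH^2\log(2\pc^2\ucC K/((\pc-3)\delta))\rceil$ makes the bound at most $\tfrac14 H$ for a suitable $c_5$, which closes the argument.

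The step I expect to be delicate is not the concentration itself, which is routine once the $\square H^2$ boundedness of the squared differences is fixed, but controlling the regression \emph{off} the core set: the estimate $\phi^\top\omega$ is linear whereas $\PVar(v_\sigma)$ is only partially linear, so the whole burden of the uniform guarantee falls on the KW bound and on choosing $\theta^\star$ carefully, and the constants must be tracked so that the $\sqrt{2d}$ blow-up together with $\NsigmaTwo=\widetilde{\cO}(dH^2)$ still delivers $\tfrac14 H$ accuracy at the variance level and hence the advertised $\sqrt H$ accuracy at the standard-deviation level.
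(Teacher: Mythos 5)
Your proposal follows essentially the same route as the paper's proof: the same final triangle-inequality decomposition $|\sqrt{\max(\phi^\top\omega,\bzero)}-\sigma(v_*)|\le|\sqrt{\max(\phi^\top\omega,\bzero)}-\sqrt{\PVar(v_\sigma)}|+|\sigma(v_*)-\sqrt{\PVar(v_\sigma)}|$ with the second term controlled by \cref{lemma:variance decomposition} and \cref{lemma:popoviciu} on \phiqboundE, the same passage from standard deviations to variances via \cref{lemma:square gap bound} and the observation that truncation at $\bzero$ only helps, the same reduction to the core set via \cref{lemma:kw bound} with $f=\bone$, the same unbiasedness computation and $\square H^2$ boundedness of the halved squared differences, and the same conditional Azuma--Hoeffding plus union bound over $\cC$ with the $\iota_{2,3}$ adjustment, closed by the choice of $\NsigmaTwo$.

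The one place you diverge is the representability of the regression target, and your proposed fix does not close the argument as stated. The paper simply posits a parameter $\omega^*$ with $\phi^\top\omega^*=\PVar(v_\sigma)$ holding \emph{everywhere} (asserting this follows from \cref{assumption:linear mdp}), so that $\phi^\top\omega-\PVar(v_\sigma)=\phi^\top\kwsum(\bone,\hPVar-\PVar(v_\sigma))$ and the KW bound alone yields the uniform guarantee. You instead propose a $\theta^\star$ chosen to agree with the variance only on $\cC$; but \cref{lemma:kw bound} controls $\phi^\top\omega-\phi^\top\theta^\star$ off the core set, and says nothing about $\phi^\top\theta^\star-\PVar(v_\sigma)$ off $\cC$, so with such a $\theta^\star$ you obtain no bound on $|\phi^\top\omega-\PVar(v_\sigma)|$ outside the core set. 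Your underlying worry is legitimate --- in a linear MDP $Pv_\sigma^2$ is linear in $\phi$ but $(Pv_\sigma)^2$ is a quadratic form in $\phi$, so exact linearity of $\PVar(v_\sigma)$ is not automatic --- but to match the lemma as stated you must either adopt the paper's exact-representation claim for $\omega^*$ or introduce and control an explicit misspecification term; agreeing on $\cC$ alone is not enough.
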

\begin{proof}\label{proof:phi sigma bound}

For the input $v_\sigma$, we write $\PVar(v_\sigma)$ as $\PVar_v$ by abuse of notation.
Let $\omega^*$ be the unknown underlying parameter that satisfies $\phi^\top  \omega^* = \PVar_v$.
This is ensured to exist by \cref{assumption:linear mdp}.

The weighted KW bound (\cref{lemma:kw bound}) indicates that 
\begin{equation}\label{eq:var estimation bound}
    \left|\PVar_\omega - \PVar_v \right|
    = \left|\phi^\top \omega - \phi^\top \omega^* \right|
    = \left|\phi^\top \kwsum\paren*{\bone, \hPVar - \PVar_v } \right|
    \leq \sqrt{2d} \bone {\coremax{y', b'} \underbrace{\left|\hPVar(y', b') - \PVar_v (y', b')\right|}_{\heartsuit}} \,.
\end{equation}
We are going to bound $\left|\hPVar(x, a) - \PVar_v (x, a)\right|$ for $(x, a) \in \cC$.
Note that $\paren[\Big]{v_\sigma\paren*{y_{m, x, a}} - v_\sigma(z_{m, x, a})}^2 / 2$ is the unbiased estimator of $\PVar_v$ since
\begin{align*}
\E\left[\paren[\Big]{v_\sigma\paren{y_{m, x, a}} - v_\sigma(z_{m, x, a})}^2\right] 
    &= 
    \E\left[\paren[\Big]{v_\sigma\paren*{y_{m, x, a}} - P v_\sigma(x, a) + P v_\sigma(x, a) - v_\sigma(z_{m, x, a}) }^2\right]\\
    &= 
    \E\left[\paren[\Big]{v_\sigma\paren*{y_{m, x, a}} - Pv_\sigma(x, a)}^2\right]
    + \E\left[\paren[\Big]{v_\sigma(z_{m, x, a}) - Pv_\sigma(x, a)}^2\right]\\
    &\;\;\;- 2\E\left[\paren[\Big]{v_\sigma\paren*{y_{m, x, a}} - Pv_\sigma(x, a)}\paren[\Big]{v_\sigma(z_{m, x, a}) - Pv_\sigma(x, a)}\right]\\
    &= 2 \PVar_v(x, a)\,.
\end{align*}

Moreover, $\phiqboundE$ implies $|v_\sigma| \leq |v_\sigma - v_{*}| + |v_{*}| \leq \frac{3}{2}H$.
Also, due to \cref{lemma:popoviciu}, $\PVar_v \leq \frac{9}{4}H^2 \leq 3H^2$.
For a fixed $(x, a) \in \cC$, 
\begin{align*}
    \hPVar (x, a) - \PVar_v (x, a)
    &=
    \frac{1}{\NsigmaTwo} \sum_{m=1}^{\NsigmaTwo} 
    \underbrace{
    \paren[\Big]{
    \paren*{v_\sigma\paren*{y_{m, x, a}} - v_\sigma(z_{m, x, a})}^2 / 2 - \PVar_v(x, a)}}_{\text{bounded by } 8 H^2 }
\end{align*}
is a sum of bounded martingale differences with respect to  $(\bF_{m})_{m = 1}^{\NsigmaTwo}$.
Therefore, using the conditional Azuma-Hoeffding inequality (\cref{lemma:conditional hoeffding}) and taking the union bound over $(x, a) \in \cC$, we have
\begin{align*}
    \P \parenc*{
      \exists (x, a, k) \in \cC \,
      \text{ s.t. }
      \left|\hPVar (x, a) - \PVar_v (x, a)\right|
      \geq
      H^2\sqrt{\frac{128\iota_{2, 3}}{\NsigmaTwo}}
    }{\phiqboundE}
    \leq \frac{\delta}{\pc} \;,
\end{align*}

where $\iota_{2, 3} = \iota_1 + \log (\pc / (\pc - 3)) $ is due to the condition by $\phiqboundE$ with $\P(\phiqboundE) \geq 1 - 3 \delta / \pc$.
We used $\iota_{2, 3}$ since $1 / (1 - 3\delta / \pc) \leq \pc / (\pc - 3)$.
Inserting the result into \eqref{eq:var estimation bound}, with probability $1 - \delta / \pc$,
\begin{equation*}
    \left|\PVar_\omega - \PVar_v \right|
    \leq 16 H^2\sqrt{\frac{d\iota_{2, 3}}{\NsigmaTwo}}\bone\;.
\end{equation*}

Due to the setting of $\NsigmaTwo=\left\lceil{c_5dH^2}\iota_{2, 3}\right\rceil$, 
some $c_5$ exists such that $|\phi^\top\omega - \PVar_v| \leq \frac{1}{4} H \bone$.
This implies that $\abs*{\max(\phi^\top\omega, 0) - \PVar_v} \leq \frac{1}{4} H \bone$ since $\PVar_v \geq 0$, and furthermore, $\abs*{\sqrt{\max(\PVar_\theta, 0)} - \sqrt{\PVar_v}} \leq \frac{1}{2} \sqrt{H} \bone$ due to \cref{lemma:square gap bound}. Finally,
\begin{align*}
\left|\sqrt{\max(\phi^\top \omega, 0)} - \sigma(v_*)\right| 
&\numeq{\leq}{a} {\left|\sqrt{\max(\phi^\top\omega, 0)} - \sqrt{\PVar_v}\right|} + {\left|\sigma(v_*) - \sqrt{\PVar_v}\right|} \\
&\numeq{\leq}{b} {\left|\sqrt{\max(\phi^\top\omega, 0)} - \sqrt{\PVar_v}\right|} + \underbrace{\left|v_* - v_\sigma \right|}_{\leq \frac{1}{2}\sqrt{H}\bone\; \text{ due to }\; \phiqboundE } 
\leq \sqrt{H}\bone\;,
\end{align*}
where (a) is due to \cref{lemma:variance decomposition}, (b) is due to \cref{lemma:popoviciu}.
This concludes the proof.

\end{proof}

We are now ready to prove \cref{theorem: evaluate error}.

\begin{proof}[Proof of \cref{theorem: evaluate error}]
    The claim holds by showing that the event $\phiqboundE \cap \phisigmaboundE$ occurs with high probability.
    Note that when \varianceestimate is run with the settings defined in \cref{theorem: evaluate error}, $\P\paren{\phiqboundE^c} \leq 3\delta / \pc$ and $\P \parenc{\phisigmaboundE^c}{\phiqboundE} \leq \delta / \pc$.
    According to \cref{lemma:cond prob inequality}, we have
    \begin{align*}
        \P (\phiqboundE \cap \phisigmaboundE)
        \geq \P (\phiqboundE) - \P \parenc{\phisigmaboundE^c}{\phiqboundE} 
        \geq 1 - 4\delta / \pc\;.
    \end{align*}
    Therefore, these events occur with probability at least $1 - 4\delta / \pc$ and thus the claim holds.
\end{proof}

\section{Formal Theorem and Proof of \cref{informal theorem: sample complexity of vwls mdvi}}\label{sec:proof of vwls mdvi}

Instead of the informal \cref{theorem: sample complexity of vwls mdvi}, we prove the following formal theorem.
In the theorem, we denote $\NIter^{\text{ls}}$ and $\NP^{\text{ls}}$ as the values defined in \cref{theorem: sqrt H bound}, $\NIter^{\text{wls}}$ and $\NP^{\text{wls}}$ as the values defined in \cref{theorem: epsilon bound}, and $M^{\text{var}}$ as the value defined in \cref{theorem: evaluate error}.

\begin{theorem}[Sample complexity of \vwlsmdvi]\label{theorem: sample complexity of vwls mdvi}
    Assume that $\varepsilon \in (0, 1 / H]$ and $\pc = 8$.
    There exist positive constants $c_1, c_2, c_3, c_4, c_5 \geq 1$ independent
    of $d$, $H$, $\aX$, $\aA$, $\varepsilon$ and $\delta$ such that
    when \vwlsmdvi is run with the settings 
    $\alpha = \gamma$, 
    $\NIter=\NIter^{\text{ls}}$, 
    $\NP=\NP^{\text{ls}}$, 
    $\widetilde{\NIter}=\NIter^{\text{wls}}$,
    $\widetilde{\NP}=\NP^{\text{wls}}$, and 
    $\NsigmaTwo=M^{\text{var}}$,
    the output sequence of policies $\pi'$ satisfy
    $
        \infnorm{\vf{*} - \vf{\pi'}}
        \leq
        \varepsilon
    $
    with probability at least $1 - \delta$, using 
    $$\widetilde{\cO} \paren*{\ucC \NIter \NP + \ucC\widetilde{\NIter} \widetilde{\NP} + \ucC (\NsigmaTwo + 1)} = \widetilde{\cO} \paren*{d^2H^3 / \varepsilon^2}$$ 
    samples from the generative model.
\end{theorem}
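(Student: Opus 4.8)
The plan is to prove \cref{theorem: sample complexity of vwls mdvi} by chaining the three component guarantees---\cref{theorem: sqrt H bound}, \cref{theorem: evaluate error}, and \cref{theorem: epsilon bound}---along the three stages of \vwlsmdvi, and then summing their sample costs. The key structural observation is that these theorems have been stated in a deliberately composable form: each takes a hypothesis of the shape ``a certain event holds with probability at least $1 - c\delta/\pc$'' and returns a conclusion of the same shape, with the conditioning absorbed by the conditional concentration inequalities (\cref{lemma:conditional hoeffding,lemma:conditional bernstein}). Hence the proof reduces to checking that the output event of one stage is exactly the input hypothesis of the next. Throughout I fix $\pc = 8$, so that the accumulated budget $8\delta/\pc$ collapses to $\delta$.

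First I would treat stage~(1): running \wlsmdvi with $f=\bone$, $\alpha=\gamma$, $\NIter=\NIter^{\text{ls}}$, $\NP=\NP^{\text{ls}}$, \cref{theorem: sqrt H bound} yields $v_\NIter$ with $\infnorm{\vf{*} - v_\NIter} \leq \tfrac12\sqrt{H}$ with probability at least $1 - 3\delta/\pc$. This is precisely the event $\phiqboundE$ required as input to \varianceestimate, so stage~(2) invokes \cref{theorem: evaluate error} with $v_\sigma = v_\NIter$ and $\NsigmaTwo = M^{\text{var}}$: the returned $\omega$ satisfies $\sigma(\vf{*}) \leq \sqrt{\max(\phi^\top\omega,\bzero)} + \sqrt{H}\bone \leq \sigma(\vf{*}) + 2\sqrt{H}\bone$ with probability at least $1 - 4\delta/\pc$.

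The one genuinely non-mechanical point is verifying that the clipping $\tsigma = \min\paren*{\sqrt{\max(\phi^\top\omega,\bzero)} + \sqrt{H}\bone, H\bone}$ preserves this sandwich, so that the event $\funcE$ demanded by \cref{theorem: epsilon bound} holds with probability at least $1 - 4\delta/\pc$. The upper bound $\tsigma \leq \sigma(\vf{*}) + 2\sqrt{H}\bone$ is immediate, since taking a minimum only decreases the already-bounded quantity. For the lower bound $\sigma(\vf{*}) \leq \tsigma$, I would combine the estimate's lower bound $\sigma(\vf{*}) \leq \sqrt{\max(\phi^\top\omega,\bzero)} + \sqrt{H}\bone$ with the deterministic fact $\sigma(\vf{*}) \leq H\bone$ from \cref{lemma:popoviciu} (as $\vf{*}$ is bounded by $H$): both $\sqrt{\max(\phi^\top\omega,\bzero)} + \sqrt{H}\bone$ and $H\bone$ dominate $\sigma(\vf{*})$, hence so does their minimum $\tsigma$. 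The remaining clauses of $\funcE$, namely $\sqrt{H}\bone \leq \tsigma \leq H\bone$, hold deterministically by construction (so the clip $f^{\text{wls}}=\max(\min(\tsigma,H\bone),\sqrt{H}\bone)$ inside \cref{theorem: epsilon bound} leaves $\tsigma$ unchanged). Thus $\P(\funcE) \geq 1 - 4\delta/\pc$, matching the input hypothesis of \cref{theorem: epsilon bound}.

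Applying \cref{theorem: epsilon bound} to stage~(3) with $f=\tsigma$, $\widetilde{\NIter}=\NIter^{\text{wls}}$, $\widetilde{\NP}=\NP^{\text{wls}}$ then delivers $\pi'$ with $\infnorm{\vf{*} - \vf{\pi'}} \leq \varepsilon$ with probability at least $1-\delta$ (using $\pc=8$). Because the constants $c_3,c_4$ (stage~1), $c_5$ (stage~2), and $c_1,c_2$ (stage~3) are supplied independently by their respective theorems and used in disjoint stages, I simply collect all five. The sample cost is additive: stage~(1) uses $\widetilde{\cO}(\ucC\NIter\NP) = \widetilde{\cO}(d^2H^3/\varepsilon)$, stage~(2) uses $\widetilde{\cO}(\ucC\NsigmaTwo) = \widetilde{\cO}(d^2H^2)$, and stage~(3) uses $\widetilde{\cO}(\ucC\widetilde{\NIter}\widetilde{\NP}) = \widetilde{\cO}(d^2H^3/\varepsilon^2)$, the dominant stage-(3) term giving the claimed $\widetilde{\cO}(d^2H^3/\varepsilon^2)$. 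I expect the only real care to be in the probability arithmetic, which I would carry out with \cref{lemma:cond prob inequality} exactly as in the individual proofs; the ``hard part'' here is not any new estimate but confirming the clipping step above and that the $\pc=8$ budget composes to $\delta$, since all the analytic heavy lifting already resides inside the three cited theorems.
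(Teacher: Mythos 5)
Your proposal is correct and follows essentially the same route as the paper: chain \cref{theorem: sqrt H bound}, \cref{theorem: evaluate error}, and \cref{theorem: epsilon bound} along the three stages of \vwlsmdvi, verify that the clipped $\tsigma$ satisfies the hypothesis $\funcE$ of the final stage, and sum the failure probabilities to $8\delta/\pc = \delta$ and the sample costs to the dominant $\widetilde{\cO}(d^2H^3/\varepsilon^2)$ term. Your explicit check that the $\min(\cdot, H\bone)$ clipping preserves the lower bound via $\sigma(\vf{*}) \leq H\bone$ (Popoviciu) is a detail the paper's proof states without justification, but it does not change the argument.
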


\begin{proof}\label{proof: proposal bound}
The claim is easily seen from \cref{theorem: sqrt H bound}, \cref{theorem: evaluate error}, and \cref{theorem: epsilon bound}.

From \cref{theorem: sqrt H bound}, the first \wlsmdvi in \vwlsmdvi outputs $v_{\NIter}$ such that $\infnorm{v_* - v_\NIter} \leq 1 / 2 \sqrt{H}$ with probability $1 - 3\delta / \pc$.
This $v_\NIter$ satisfies the requirement of \cref{theorem: evaluate error}.

According to \cref{theorem: evaluate error}, \varianceestimate in \vwlsmdvi outputs $\omega$ such that $\sigma(v_*) \leq \sqrt{\max(\phi^T\omega, \bzero)} + \sqrt{H} \bone \leq \sigma(v_*) + 2\sqrt{H}\bone$ with probability $1 - 4\delta / \pc$. Therefore, $\tsigma = \min\paren*{\sqrt{\max(\phi^T\omega, \bzero)} + \sqrt{H}\bone, H \bone}$ defined in the algorithm can be used as the weighting function of \cref{theorem: epsilon bound}.

Finally, \cref{theorem: epsilon bound} indicates that the second \wlsmdvi in \vwlsmdvi outputs the $\varepsilon$-optimal policy with probability $1 - 8\delta / \pc$.
When $\pc=8$, \vwlsmdvi outputs the $\varepsilon$-optimal policy with probability at least $1 - \delta$.
\end{proof}

\section{Pseudocode of Missing Algorithms}\label{appendix:missing algorithms}

\begin{algorithm}[H]
    \caption{Tabular MDVI $(\alpha, K, M)$}\label{algo:tabular mdvi}
    \begin{algorithmic}
    \STATE {\bfseries Input:} {$\alpha \in [0, 1)$, $K$, and $M$.}
    \STATE Initialize $s_0 = \bzero \in \R^{\XA}$ and $w_0 = w_{-1} = \bzero \in \R^{\X}$.
    \FOR{$k=0$ {\bfseries to} $K-1$}
        \STATE $v_k = w_k - \alpha w_{k-1}$.
        \FOR{{ each state-action pair} $\paren*{x, a} \in \XA$}
            \STATE $q_{k+1} (x, a) = r (x, a) + \gamma \widehat{P}_k(M)v_k(x, a)$.
        \STATE $s_{k+1} = q_{k+1} + \alpha s_k$ and $w_{k+1} (x) = \max_{a \in \A} s_{k+1} (x, a)$ for each $x \in \X$.
        \ENDFOR
    \ENDFOR
    \STATE {\bfseries Return:} {$(\pi_k)_{k=0}^K$ , where $\pi_k$ is greedy policy with respect to $s_k$.}
    \end{algorithmic}
\end{algorithm}

\begin{algorithm}[H]
    \caption{InitializeDesign}\label{algo:KY-initialization}
    \begin{algorithmic}
    \STATE Choose an arbitrary nonzero $c_0 \in \R^d$
    \FOR{$j=0$ {\bfseries to} $d-1$}
        \STATE $(\overline{x}_j, \overline{a}_j) = \argmax_{(x, a) \in \XA}c_j^\top \phi(x, a)$.
        \STATE $(\underline{x}_j, \underline{a}_j) = \argmin_{(x, a) \in \XA}c_j^\top \phi(x, a)$.
        \STATE $y_j =\phi (\overline{x}_j, \overline{a}_j) - \phi (\underline{x}_j, \underline{a}_j)$.
        \STATE Choose an arbitrary nonzero $c_{j+1}$ orthogonal to $y_0, \dots, y_j$.
    \ENDFOR
    \STATE Let $Z:=\{(\overline{x}_j, \overline{a}_j), (\underline{x}_j, \underline{a}_j) \mid j=0, \dots, d-1\}$.
    \STATE Choose $\rho$ to put equal weight on each of the distinct points of $Z$.
    \STATE {\bfseries Return:} {$\rho$.}
    \end{algorithmic}
\end{algorithm}

\begin{algorithm}[H]
    \caption{Frank-Wolfe for finite $\X$ $(f, \varepsilon^{\mathrm{FW}})$}\label{algo:frank wolfe}
    \begin{algorithmic}
    \STATE {\# \it We write $X$ be the size of $\X$. Without loss of generality, we assume $\X=[X]$ and $\A=[A]$.}
    \STATE {\bfseries Input:} {$f: \XA \to (0, \infty)$, $\varepsilon^{\mathrm{FW}}\in \R$.}
    \STATE $\rho = {\texttt{InitializeDesign}}()$ by \cref{algo:KY-initialization}.
    \STATE Let $U: \rho \mapsto \operatorname{diag}(\rho) \in \R^{XA\times XA}$ where $\operatorname{diag}$ constructs a diagonal matrix with elements of $\rho$.
    \STATE For $(x, a) \in \XA$, let $\Phi_f \in \R^{XA\times d}$ be a matrix where its $(xA + a)$ th row is $\phi(x, a) / f(x, a)$.
    \STATE Let $H : \rho \mapsto (\Phi_f U(\rho) \Phi_f)^{-1}$.
    \STATE Let $\omega: (x, a, \rho) \mapsto \phi(x, a)^\top H(\rho) \phi(x, a)$
    \STATE Let $\delta: \rho \mapsto \max_{(x,a)\in \XA}(\omega(x, a, \rho) - d) / d$
    \WHILE{$\delta(\rho) > \varepsilon^{\mathrm{FW}}$}
        \STATE Let $(y, b) := \argmax_{(x, a) \in \XA}\omega(x, a)$
        \STATE Let $\lambda^* := \paren*{\omega(y, b) - d} / \paren*{(d - 1) \omega(y, b)}$
        \STATE $\rho(y, b) \leftarrow \rho(y, b) + \lambda^*$
        \STATE $\rho \leftarrow \rho / (1 + \lambda^*)$
    \ENDWHILE
    \STATE $C = \brace*{(x, a) \mid \; \omega(x, a, \rho) \geq d\paren*{1 + \frac{\delta(\rho)d}{2} - \sqrt{\delta(\rho)(d - 1) + \frac{\delta(\rho)^2 d^2}{4}}}}$
    \STATE $G = \sum_{(y, b) \in \cC} \frac{\rho(y, b)}{f^2(y, b)} \phi (y, b) \phi (y, b)^\top$
    \STATE {\bfseries Return:} {$\rho, \cC, G$.}
    \end{algorithmic}
\end{algorithm}

\begin{algorithm}[H]
    \caption{DVW for online (Munchausen-)DQN}\label{algo:DVW for online RL}
    \begin{algorithmic}
    \STATE {\bfseries Input:} {$K \in \N$ the number of update iteration, $F \in \N$ the target update interval, $T \in \N$ the number of environment steps in one iteration.}
    \STATE Initialize $\theta$ and $\omega$ at random. $\btheta = \htheta = \theta$ and $\bomega = \omega$.
    \STATE Initialize $\eta = 1$.
    \STATE Initialize $\cB = \{\}$.
    \FOR{$k=0$ {\bfseries to} $K-1$}
        \FOR{$t=0$ {\bfseries to} $T-1$}
            \STATE Collect a transition $b = (x, a, r, x^\prime)$ from the environment.
            \STATE $\cB \leftarrow \cB \cup \{b\}$.
        \ENDFOR
        \STATE Sample a random batch of transition $B_k \in \cB$. 
        \STATE On $B_k$, update $\omega$ with one step of SGD on $\cL(\omega)$, see \cref{eq:variance loss}.
        \STATE On $B_k$, update $\eta$ with one step of SGD on $\cL(\eta)$, see \cref{eq:weight scaler loss}.
        \STATE On $B_k$ and with $f$ of \cref{eq:practical weight function}, update $\theta$ with one step of SGD on $\cL(\theta)$, see \cref{eq:weighted M-DQN loss}.
        \IF{ $k \mod F = 0$}
            \STATE $\htheta \leftarrow \btheta$.
            \STATE $\btheta \leftarrow \theta$.
            \STATE $\bomega \leftarrow \omega$.
        \ENDIF
    \ENDFOR
    \STATE {\bfseries Return:} {A greedy policy with respect to $q_\theta$\;}
    \end{algorithmic}
\end{algorithm}

\newpage
\section{Experiment Details}\label{sec:experiment details}

\looseness=-1
The source code for all the experiments is available at \url{https://github.com/matsuolab/Variance-Weighted-MDVI}.

\subsection{Details of \cref{subsec:linear mdp experiment}} \label{subsec:linear mdp details}

\paragraph{Hard linear MDP.} 
The hard linear MDP we used is based on the \textbf{Theorem H.3} in \citet{weisz2022confident}.
Specifically, the MDP has two states: $\X = \{x_0, x_1\}$ with $x_0$ being the initial state and $x_1$ being the absorbing state. 
To add randomness to the MDP, the action space is constructed as $\A=\{a_0, a_1, \dots, a_{A}\}$ where $a_i \in \R^{d-2}$ for $i=[A]$ is randomly sampled from a multivariate uniform distribution of $\mathcal{U}(\bzero, \bone)$ with $d-2$ dimension.
Same as \citet{weisz2022confident}, for all $a$, the feature map is defined as

$$
\phi\left(x_0, a\right)=\left(1,0, a^{\top}\right)^{\top} \quad \text { and } \quad \phi\left(x_1, a\right)=(0,1,0, \ldots, 0)^{\top}\;.
$$

Using $\psi=(1,0, \ldots, 0)^{\top}$, we make the state $x_0$ be the rewarding state as

$$
r \left(x_0, a\right)=\phi\left(x_0, a\right)^\top \psi =1 \quad \text { and } \quad r \left(x_1, a\right)=\phi\left(x_0, a\right)^\top\psi=0\;.
$$

Let 
$
\mu\left(x_0\right)=\left(\gamma, 0, 0.01 a_{0}^{\top}\right)^{\top}$ and $\mu\left(x_1\right)=\left(1-\gamma, 1,-0.01 a_{0}^{\top}\right)^{\top}$
be the design parameters for the transition probability kernel.
This implies that 
$$
\begin{aligned}
& P\left(x_0 \mid x_0, a\right)=\gamma+0.01 \cdot a_0^{\top} a, &&\quad P\left(x_1 \mid x_0, a\right)=1-\gamma-0.01 \cdot a_0^{\top} a, \\
& P\left(x_0 \mid x_1, a\right)=0, &&\quad P\left(x_1 \mid x_1, a\right)=1\;.
\end{aligned}
$$

Intuitively, choosing an action similar to $a_0$ increases the probability of transitioning to $x_0$ and yields a higher return.
We provide the hyperparameters of the MDP in \cref{tab:hard linear mdp params}.

\begin{table}[H]
\caption{Hyperparameters of hard linear MDP in \cref{subsec:linear mdp experiment}}
\label{tab:hard linear mdp params}
\vskip 0.15in
\begin{center}
\begin{small}
\begin{tabular}{l l| l }
\toprule
\multicolumn{2}{l|}{Parameter} &  Value\\
\midrule
\multicolumn{2}{l|}{\it{MDP parameter}}& \\
& action space size & $A=30$ \\
& dimension of the feature map & $d=4$ \\
& discount factor & $\gamma=0.9$ \\
\midrule
\multicolumn{2}{l|}{\it{Algorithm parameter}}& \\
& weight for MDVI update & $\alpha=0.9$ \\
& accuracy of the Frank-Wolfe algorithm & $\varepsilon^{\mathrm{FW}}=0.01$ \\
\bottomrule
\end{tabular}
\end{small}
\end{center}
\vskip -0.1in
\end{table}

\paragraph{Algorithm implementations.}

The algorithms \wlsmdvi and \vwlsmdvi are implemented according to \cref{algo:wlsmdvi} and \cref{algo:vwlsmdvi}.
The optimal designs are computed using the Frank-Wolfe (FW) algorithm (\cref{algo:frank wolfe}).
We provide the hyperparameters of the algorithms in \cref{tab:hard linear mdp params}.

\subsection{Details of \cref{subsec:gridworld experiment}} \label{subsec:gridworld details}

\paragraph{Gridworld environment.} 
The gridworld environment we used is a $25 \times 25$ grid with randomly placed $8$ pitfalls.
This is similar to the gridworld environment of \citet{fu2019diagnosing}, but there are some differences.

The agent starts from the top left grid and can move to any of its neighboring grids with success probability $0.6$, or to a different random direction with probability $0.4$.
The agent receives $+1$ reward when it reaches the goal grid located at the bottom right grid.
Other rewards are set to $0$.
When the agent enters a pitfall, the agent can no longer move and receives $0$ reward until the environment terminates. 
We use $\gamma=0.995$ and the environment terminates after $200$ steps.

\paragraph{Algorithm implementations.}
We implement the environment and algorithms using ShinRL \citep{kitamura2021shinrl}.
For the implementation of M-DQN, same as \citet{vieillard2020munchausen}, we clip the value of log-policy term by $\max(\log \pi_{\btheta}, l_0)$ with $l_0 > 0$ to avoid numerical issues in \cref{eq:weighted M-DQN loss}.
We provide the hyperparameters used in the experiment in \cref{tab:gridworld-params}. 
In the table, we denote $\text{FC}(n)$ be a fully convolutional layer with $n$ neurons.

\begin{table}[H]
\caption{Hyperparameters of algorithms in \cref{subsec:gridworld experiment}}
\label{tab:gridworld-params}
\vskip 0.15in
\begin{center}
\begin{small}
\begin{tabular}{l l| l }
\toprule
\multicolumn{2}{l|}{Parameter} &  Value\\
\midrule
\multicolumn{2}{l|}{\it{Shared}}& \\
& optimizer &Adam \\
& iteration ($K$) & $2000000$\\
& target update interval ($F$) & $100$\\
& learning rate & $10^{-3}$\\
& discount factor ($\gamma$) &  $0.995$\\
& horizon ($H$) &  $200$\\
& $q$-network structure & $\text{FC}(128)-\text{FC}(128)-\text{FC}(|\A|)$\\
& activations & Relu \\
\midrule
\multicolumn{2}{l|}{\it{Munchausen-DQN parameters}}& \\
& entropy regularization coefficient ($\kappa$) & $10^{-5}$ \\
& KL regularization coefficient ($\tau$) & $\kappa \gamma / (1 - \gamma)$ \\
& clipping value ($l_0$) & $-1$ \\
\midrule
\multicolumn{2}{l|}{\it{DVW parameters}}& \\
& $\omega$ and $\eta$-optimizer &Adam \\
& activations & Relu \\
& $\text{Var}$-network structure & $\text{FC}(128)-\text{FC}(128)-\text{FC}(|\A|)$\\
& lower threshold parameter ($\underline{c}_f$) & $0.1$ \\
& upper threshold parameter ($\overline{c}_f$) & $0.1$ \\
& learning rate of $\text{Var}_\omega$ & $10^{-3}$\\
& learning rate of $\eta$ & $5.0\times10^{-3}$\\
\bottomrule
\end{tabular}
\end{small}
\end{center}
\vskip -0.1in
\end{table}

\newpage

\subsection{Details of \cref{subsec:minatar experiment}} \label{subsec:minatar details}

\paragraph{Algorithm implementation.}
We implement algorithms as variations of DQN from CleanRL \citep{huang2022cleanrl}.
For a fair comparison, all the algorithms use the same epsilon-greedy exploration strategy.
It randomly chooses an action with probability $e_t$ otherwise chooses a greedy action w.r.t. $q_\theta$.
For the implementation of M-DQN, same as \citet{vieillard2020munchausen}, we clip the value of log-policy term by $\max(\log \pi_{\btheta}, l_0)$ with $l_0 > 0$ to avoid numerical issues in \cref{eq:weighted M-DQN loss}.

We provide the hyperparameters used in the experiment in \cref{tab:minatar-params}. 
In the table, we denote $\text{FC}(n)$ be a fully convolutional layer with $n$ neurons, and  $\text{Conv}^{d}_{a, b}(c)$ be a 2D convolutional layer with $c$ filters of size $a \times b$ and of stride $d$.

\begin{table}[H]
\caption{Hyperparameters of algorithms in \cref{subsec:minatar experiment}}
\label{tab:minatar-params}
\vskip 0.15in
\begin{center}
\begin{small}
\begin{tabular}{l l| l }
\toprule
\multicolumn{2}{l|}{Parameter} &  Value\\
\midrule
\multicolumn{2}{l|}{\it{Shared}}& \\
& $e_k$ (random actions rate) & start from $1.0$ and linearly decay to $0.1$ until the period of $10^6$ steps\\
& $\theta$-optimizer &Adam \\
& iteration ($K$) & $10^7$\\
& target update interval ($F$) & $1000$\\
& learning rate of $q_\theta$ & $2.5 \times 10^{-4}$\\
& replay buffer size ($|\cB|$) & $10^{5}$\\
& batch size ($|\cB_k|$) & $32$\\
& train frequency ($T$) & $4$\\
& discount factor ($\gamma$) &  $0.99$\\
& $q$-network structure & $\text{Conv}^{1}_{3, 3}(16)-\text{FC}(128)-\text{FC}(|\A|)$\\
& activations & Relu \\
\midrule
\multicolumn{2}{l|}{\it{Munchausen-DQN parameters}}& \\
& entropy regularization coefficient ($\kappa$) & $0.003$ \\
& KL regularization coefficient ($\tau$) & $0.027$ \\
& clipping value ($l_0$) & $-1$ \\
\midrule
\multicolumn{2}{l|}{\it{DVW parameters}}& \\
& $\omega$ and $\eta$-optimizer &Adam \\
& $\text{Var}$-network structure & $\text{Conv}^{1}_{3, 3}(16)-\text{FC}(128)-\text{FC}(|\A|)$\\
& lower threshold parameter ($\underline{c}_f$) & $0.1$ \\
& upper threshold parameter ($\overline{c}_f$) & $0.1$ \\
& learning rate of $\text{Var}_\omega$ & $2.5 \times 10^{-4}$\\
& learning rate of $\eta$ & $10^{-3}$\\
\bottomrule
\end{tabular}
\end{small}
\end{center}
\vskip -0.1in
\end{table}

\end{document}